\documentclass[opre]{informs3}



\usepackage{amsmath,amssymb,mathrsfs,enumerate,bm,xcolor,multirow,pbox}
\usepackage{graphicx,color,framed,tikz,caption,subcaption}
\usepackage{enumitem}
\usepackage{algorithm}
\usepackage[noend]{algpseudocode}
\usepackage{natbib}
\usepackage[colorlinks=true,linkcolor=blue,citecolor=blue,urlcolor=blue]{hyperref}
\usepackage{booktabs}
\usepackage{tabularray}
\usetikzlibrary{arrows.meta,decorations.pathreplacing,positioning,shapes.geometric,calc,fit,backgrounds}
\usepackage{makecell}
\usepackage[table]{xcolor}
\usepackage{pgfplots}
\usepgfplotslibrary{fillbetween}

\definecolor{SparseColor}{RGB}{245,249,255}   
\definecolor{LinearColor}{RGB}{248,245,255}   
\definecolor{HolderColor}{RGB}{245,252,245}   
\definecolor{IIDColor}{RGB}{238,234,248}

\newcolumntype{L}[1]{>{\raggedright\arraybackslash}m{#1}}
\newcolumntype{C}[1]{>{\centering\arraybackslash}m{#1}}
\setlist{leftmargin=9mm}

\bibpunct[, ]{(}{)}{,}{a}{}{,}%
\def\bibfont{\small}%

\TheoremsNumberedThrough
\ECRepeatTheorems
\EquationsNumberedThrough
\MANUSCRIPTNO{}

\allowdisplaybreaks[4]

\newcommand{\R}{\mathbb{R}}

\newcommand{\pnorm}[2]{\lVert #1\rVert_{#2}}

\newcommand{\abs}[1]{\lvert#1\rvert}

\newcommand{\biggabs}[1]{\bigg\lvert#1\bigg\rvert}

\renewcommand{\epsilon}{\varepsilon}

\renewcommand{\d}[1]{\mathrm{d}#1}
\newcommand{\dist}[2]{\mathrm{dist}\left(#1,#2\right)}
\newcommand{\floor}[1]{\left\lfloor #1 \right\rfloor}

\newcommand{\bco}{\mathsf{bco}}
\renewcommand{\tilde}{\widetilde}

\DeclareMathOperator{\E}{\mathbb{E}}
\DeclareMathOperator{\Prob}{\mathbb{P}}

\DeclareMathOperator{\proj}{\mathsf{P}}

\DeclareMathOperator{\Ber}{Ber}
\DeclareMathOperator{\KL}{KL}
\DeclareMathOperator{\kl}{kl}

\renewcommand{\mc}{\mathsf{c}}

\newcommand{\Regret}{\mathrm{Regret}}

\newcommand{\cO}{\mathcal{O}}

\newcommand{\cA}{\mathcal{A}}
\newcommand{\cB}{\mathcal{B}}
\newcommand{\cG}{\mathcal{G}}
\newcommand{\cU}{\mathcal{U}}
\newcommand{\cF}{\mathcal{F}}
\newcommand{\cH}{\mathcal{H}}

\newcommand{\cE}{\mathcal{E}}
\newcommand{\cD}{\mathcal{D}}
\newcommand{\cT}{\mathcal{T}}
\newcommand{\cQ}{\mathcal{Q}}
\newcommand{\cR}{\mathcal{R}}
\newcommand{\cC}{\mathcal{C}}

\newcommand{\bA}{\bm{A}}
\newcommand{\bI}{\bm{I}}

\newcommand{\frakq}{\mathfrak{q}}
\newcommand{\refine}{\mathsf{refine}}

\newcommand{\coarse}{\mathsf{coarse}}
\newcommand{\ORBIT}{\mathsf{ORBIT}}
\newcommand{\cV}{\mathcal{V}}

\AtBeginDocument{%
	\def\MR#1{}
}
\renewcommand*{\theassumption}{\Alph{assumption}}
\renewcommand{\qquad}{\quad}
\renewcommand{\subseteq}{\subset}

\RUNTITLE{Oracle Price Map Learning}
\TITLE{
Harnessing Unimodality in Semiparametric Contextual Pricing via Oracle Price Map Learning\thanks{The current version is an extension of an earlier draft available at \url{https://papers.ssrn.com/sol3/papers.cfm?abstract_id=6572802}, with
the same framework extended to general context distributions and broader utility
classes.}}

\RUNAUTHOR{Fan, Han, Lv, Xu, and Zhou}

\ARTICLEAUTHORS{
\AUTHOR{Yingying Fan}
\AFF{Data Sciences and Operations Department, University of Southern California, Los Angeles, California 90089, USA
\EMAIL{fanyingy@marshall.usc.edu}}
\AUTHOR{Yuxuan Han}
\AFF{Stern School of Business, New York University, New York, NY 10003, USA\\
\EMAIL{yh6061@stern.nyu.edu}}
\AUTHOR{Jinchi Lv}
\AFF{Data Sciences and Operations Department, University of Southern California, Los Angeles, California 90089, USA
\EMAIL{jinchilv@marshall.usc.edu}}
\AUTHOR{Xiaocong Xu}
\AFF{Data Sciences and Operations Department, University of Southern California, Los Angeles, California 90089, USA
\EMAIL{xuxiaoco@marshall.usc.edu}}
\AUTHOR{Zhengyuan Zhou}
\AFF{Stern School of Business, New York University, New York, NY 10003, USA\\
\EMAIL{zz26@stern.nyu.edu}}
}

\ABSTRACT{We study contextual dynamic pricing in a semiparametric scalar-index valuation model where the latent value is $v_t=\mu_\ast(\mc_t)+\xi_t$, with an unknown utility map $\mu_\ast$ and an unknown additive noise distribution. The key decision object is the one-dimensional oracle price map $u\mapsto p^\ast(u)$ induced by the scalar index $u=\mu_\ast(\mc)$ and the noise tail. Under the $\beta$-H\"older smoothness of the tail function for $\beta\geq 2$ and a revenue-geometry condition that gives a unique, stable, interior maximizer, this oracle map is itself $(\beta-1)$-smooth. We exploit such structure through $\ORBIT$, a modular coarse-to-fine policy that takes a scalar pilot index as input, localizes a benchmark price in each active bin, and learns a local polynomial approximation of the oracle map inside a trust region via bandit convex optimization. For the baseline linear utility model $\mu_\ast(\mc)=\mc^\top\theta_\ast$, an adaptive elliptical exploration scheme constructs the required scalar pilot online without distributional assumptions on the contexts. The resulting policy achieves regret $\widetilde{\cO}\big(T^{\frac{2\beta-1}{4\beta-3}}+\sqrt{dT}\big)$. For fixed $d$, we establish a matching lower bound in the horizon dependence, unveiling that the nonparametric oracle-map learning term is minimax sharp. The same scalar-pilot interface also yields extensions to sparse high-dimensional linear utility and nonparametric H\"older utility.}

\KEYWORDS{Contextual dynamic pricing, semiparametric valuation model, bandit convex optimization}

\begin{document}
\maketitle


\sloppy

\section{Introduction}\label{sec:intro}

Dynamic pricing with demand learning is a central problem in revenue management. In many digital and service markets, the seller observes customer- or product-side covariates before quoting a price, but the demand system is revealed only through realized purchases. Prices thus shape both current revenue and the information available for future decisions, creating the classical exploration-exploitation tradeoff \citep{kleinberg2003value,denboer2015dynamic,broder2012dynamic,lobel2020algorithmic,chenhu2023datadriven}.

In this paper, we study contextual dynamic pricing with binary feedback. In period $t$, the arriving customer has latent valuation
\begin{align}\label{intro-eq: valuation}
    v_t = u_t + \xi_t, \qquad u_t = \mu_\ast(\mc_t),
\end{align}
where $\mc_t \in \mathbb{R}^d$ is an observed covariate vector, $\mu_\ast(\cdot):\mathbb{R}^d \mapsto \mathbb{R}$ is an \textit{unknown} utility function, $u_t$ is the realized utility, and $\xi_t$ is an unobserved noise term with fixed but \textit{unknown} distribution. After observing $\mc_t$, the seller posts a price $p_t$ and observes only the binary purchase indicator
\begin{align*}
    y_t = \mathbf{1}\{v_t \geq p_t\}.
\end{align*}
Such specification allows flexible modeling of contextual heterogeneity through $\mu_\ast(\cdot)$ while leaving the noise law fully nonparametric. It thus lies between known-link contextual demand models, where the demand link is specified in advance, and fully nonparametric contextual demand models, where learning must proceed over the full covariate space \citep{javanmard2019dynamic,cohen2020feature,ban2021personalized,chen2021nonparametric}. When $\mu_\ast(\cdot)$ is specialized to the linear utility, this framework recovers the \textit{semiparametric} demand model in \citet{luo2022contextual,fan2024policy,tullii2024improved,wang2025tight,han2026general}; when $\mu_\ast(\cdot)$ is specialized to the nonparametric utility, the framework recovers the \textit{doubly nonparametric} demand model in \citet{chen2024dynamic}.

A central object for policy optimization is the utility-associated revenue function
\begin{align}\label{intro-eq: revenue}
    r(u,p) := p\, g(p-u), \quad u,\, p \; \in \mathbb{R},
\end{align}
where $g$ denotes the unknown tail function of $\xi_t$. The corresponding utility-based oracle price is
\begin{align*}
    p^\ast(u) \in \argmax_{p \geq 0} r(u,p).
\end{align*}
If both $\mu_\ast(\cdot)$ and $p^\ast(\cdot)$ were known, the seller would simply post $p^\ast(\mu_\ast(\mc_t))$ in each period. This observation suggests that the optimal pricing policy is governed primarily by the oracle price rule $u \mapsto p^\ast(u)$. The central idea of our paper is to make such observation operational: instead of recovering the full demand system, we show that under the structural conditions studied here, it suffices to learn the oracle price map itself. This reduction is, however, not automatic. Since $p^\ast(u)$ is defined only \textit{implicitly} through an optimization problem, it does not need to be unique, stable, or sufficiently regular to support direct estimation without additional structure.

In this work, we show that under the standard $\beta$-H\"older smoothness and unimodality conditions commonly imposed in contextual pricing models \citep{javanmard2019dynamic,chen2021nonparametric,fan2024policy,chen2024dynamic,wang2025tight,han2026general}, the oracle price map $p^\ast(\cdot)$ enjoys favorable regularity. Strong unimodality does more than simplify a one-dimensional optimization: it guarantees a unique interior maximizer, converts local price error into quadratic revenue loss, and along with smoothness of the unknown tail, endows $p^\ast(\cdot)$ with usable regularity. These properties allow the pricing problem to be organized around direct learning of the one-dimensional map $u \mapsto p^\ast(u)$, as opposed to the global recovery of the unknown tail function $g$. Such perspective is the key conceptual novelty of our work, and  differs from the CDF-estimation-based approaches adopted in prior studies \citep{xu2022towards,luo2022contextual,luo2024distribution,chen2024dynamic,tullii2024improved}.

The main technical challenges appear when the H\"older smoothness exponent $\beta$ exceeds strictly $2$. 
Strong unimodality controls the curvature in the price variable, while the $\beta$-H\"older condition is imposed on the unknown tail function.
For $\beta = 2$, the interaction is essentially second-order. For $\beta > 2$, the additional regularity is hidden inside the implicit maximization that defines $p^\ast(u)$. To benefit from such extra smoothness, one needs to first show that the oracle price map is itself $(\beta-1)$-smooth, and then exploit that regularity without leaving the neighborhood in which the curvature remains reliable. In other words, higher-order smoothness becomes useful only after the problem has been localized sharply enough.

\subsection{Our contributions}\label{sec:contrib}

The major contributions of our paper are summarized as follows.

\vspace{0.1cm}\noindent\textbf{Utility-agnostic oracle price map learning algorithm.} Building on the oracle-price-map perspective, we propose a modular policy, $\ORBIT$ (\emph{oracle-price learning with binning and trust-region refinement}), that exploits only the one-dimensional revenue landscape in \eqref{intro-eq: revenue}. $\ORBIT$ is agnostic to the specific utility class and the context distribution once it is supplied with a sequence of pilot utility estimates $\{\tilde u_s\}$ satisfying an appropriate approximation guarantee for the latent utilities $u_s = \mu_\ast(\mc_s)$. Such pilot sequences can be obtained by existing methods for various utility classes \citep{fan2024policy,tullii2024improved,chen2024dynamic,gong2025minimax}.

\begin{figure}[t]
\centering
\begin{tikzpicture}
\begin{axis}[
    width=10.8cm,
    height=7.8cm,
    domain=2:12,
    samples=500,
    xmin=2, xmax=12,
    ymin=0.497, ymax=0.69,
    axis x line*=bottom,
    axis y line*=left,
    axis line style={-{Latex[length=3mm,width=2mm]}, line width=1pt, black},
    xlabel={Smoothness parameter $\beta$},
    xlabel style={at={(axis description cs:0.55,-0.10)}, anchor=north},
    ylabel={},
    xtick={2,4,6,12},
    xticklabels={$2$,$4$,$6$,$+\infty$},
    ytick={0.505,0.6},
    yticklabels={$\tfrac{1}{2}$,$\tfrac{3}{5}$},
    tick label style={font=\footnotesize},
    tick style={black, line width=0.8pt},
    xmajorgrids=false,
    ymajorgrids=false,
    clip=false,
    enlargelimits=false,
    legend style={
        draw=none,
        fill=white,
        fill opacity=0.9,
        text opacity=1,
        font=\footnotesize,
        at={(0.98,0.96)},
        anchor=north east,
        row sep=1pt
    },
    legend cell align=left,
]

\node[anchor=south, align=center, font=\small] at (axis description cs:-0.07,1.02) {Exponent\\in $T$};

\addplot[
    only marks,
    mark=*,
    mark size=2.5pt,
    color=black
] coordinates {(2,3/5)};
\addlegendentry{\citet{wang2025tight}: $\tilde{\Theta}(T^{3/5})$}

\addplot[
    only marks,
    mark=o,
    mark size=2.5pt,
    thick,
    color=black,
    fill=white
] coordinates {(12,0.505)};
\addlegendentry{\citet{javanmard2019dynamic}: $\tilde{\Theta}(\sqrt{T})$}


\addplot[black, densely dotted, line width=0.9pt] coordinates {(2,0.505) (12,0.505)};
\addplot[gray!70, densely dotted, line width=0.9pt] coordinates {(2,0.6) (12,0.6)};

\addplot[name path=ourpath, draw=none] {(2*x-1)/(4*x-3)};
\addplot[red!10, forget plot] fill between[of=ourpath and ourpath, soft clip={domain=2:12}];

\addplot[blue!80!black, thick] {(2*x+1)/(4*x-1)};

\addplot[green!55!black, thick] {(x+1)/(2*x+1)};

\addplot[
    red!80!black,
    very thick,
    preaction={draw=red!15, line width=7pt, opacity=0.9}
] {(2*x-1)/(4*x-3)};

\node[
    anchor=west,
    rotate=-12,
    text=blue!80!black,
    font=\fontsize{7.5pt}{8.5pt}\selectfont,
    inner sep=1pt
] at (axis cs:5.6,0.581)
{\textbf{Fan et al. (2024): $\tilde{\cO}(\bm{T}^\frac{\bm{2\beta+1}}{\bm{4\beta-1}})$}};

\node[
    anchor=west,
    rotate=-15,
    text=green!45!black,
    font=\fontsize{7.5pt}{8.5pt}\selectfont,
    inner sep=1pt
] at (axis cs:3.2,0.5745)
{\textbf{Han et al. (2026)}: $\tilde{\cO}(\bm{T}^\frac{\bm{\beta+1}}{\bm{2\beta+1}})$};

\node[
    anchor=west,
    rotate=0,
    text=red!80!black,
    font=\fontsize{7.5pt}{8.5pt}\selectfont,
    inner sep=1pt
] at (axis cs:2.5,0.515)
{\textbf{This work:} $\tilde{\Theta}(\bm{T}^\frac{\bm{2\beta-1}}{\bm{4\beta-3}})$};

\end{axis}
\end{tikzpicture}
\caption{Comparison of regret exponents in $T$ as a function of smoothness parameter $\beta$.}
\label{fig: beta-curve}
\end{figure}

Given the pilot utility sequence, $\ORBIT$ follows a coarse-to-fine design with two interacting components. The coarse stage creates the region where the curvature is informative and the refinement stage exploits higher-order smoothness through the local polynomial approximation. Concretely, $\ORBIT$ first partitions the one-dimensional surrogate utility space into active bins. Within each active bin, a logarithmic-length coarse stage identifies a candidate price region in which the revenue is locally concave as a function of price. The subsequent refinement stage then approximates the oracle price map on this bin by a low-degree polynomial, and takes the polynomial coefficients as the new optimization variable. The remaining online problem then reduces to a \textit{bandit convex optimization} (BCO) problem over polynomial coefficients, instead of a globally nonconvex problem over arbitrary decision rules. Such reduction allows us to leverage recent developments from the BCO literature \citep{hazan2014bandit,fokkema2024online,lattimore2021improved,lattimore2024bandit} to obtain improved regret guarantees while relaxing the context-distribution requirements imposed by prior works \citep{fan2024policy,chen2024dynamic,wang2025tight,han2026general}.

\vspace{0.1cm}\noindent\textbf{Optimal rates under linear utility.} In the widely used semiparametric demand model with linear utility $\mu_\ast(\mc)=\mc^\top\theta_\ast$, we show that under the $\beta$-H\"older smoothness of the tail function with $\beta\geq 2$, combining $\ORBIT$ with the adaptive exploration scheme of \citet{tullii2024improved} achieves regret rate
\begin{align}\label{intro-eq:regret}
    \widetilde{\cO}\Bigg(\underbrace{T^{\frac{2\beta-1}{4\beta-3}}}_{\substack{\text{nonparametric}\\ \text{oracle-map learning}}}+ \underbrace{\sqrt{dT}}_{\substack{\text{parametric}\\ \text{utility-index estimation}}}\Bigg).
\end{align}
Compared to previous results for the same semiparametric model \citep{fan2024policy,wang2025tight,han2026general}, the bound in \eqref{intro-eq:regret} improves the state of the art in three important aspects.

First, the horizon exponent is \textit{sharper}. When $\beta=2$, \eqref{intro-eq:regret} recovers the optimal $\widetilde{\cO}(T^{3/5})$ rate of \citet{wang2025tight}. When $\beta>2$, the oracle-map term $\widetilde{\cO}(T^{\frac{2\beta-1}{4\beta-3}})$ improves over the best previously known rate $\widetilde{\cO}(T^{\frac{\beta+1}{2\beta+1}})$ of \citet{han2026general}; see Figure~\ref{fig: beta-curve} for an illustration. This also shows that the rate conjectured by \citet{wang2025tight}, $\widetilde{\cO}(T^{ \frac{\beta+1}{2\beta+1}})$, is not minimax optimal under the strong-unimodality structure considered here. That conjecture was natural since the same exponent is optimal in many online learning problems with $\beta$-smoothness but without unimodality \citep{hu2020smooth,wang2021multimodal, liu2021smooth,gur2022smoothness}. Our results reveal that strong unimodality changes the smoothness--regret tradeoff: it permits a faster interpolation from the nonparametric $\widetilde{\cO}(T^{3/5})$ rate at $\beta=2$ toward the parametric $\widetilde{\cO}(\sqrt T)$ benchmark \citep{javanmard2019dynamic}. In Section~\ref{sec:lower}, we establish a matching lower bound for fixed $d$, so the exponent $(2\beta-1)/(4\beta-3)$ is sharp for the linear model considered here. The hard instances exploit finite-support context distributions, admissible under our distribution-free setting. The upper and lower bounds together provide a complete characterization of the effect of strong unimodality on the regret rate for general $\beta$.

Second, the dependence on the context dimensionality $d$ is \textit{separated} from the nonparametric learning term. The rate in \eqref{intro-eq:regret} decomposes the semiparametric pricing problem into a $\beta$-dependent nonparametric oracle-map learning term and a $d$-dependent parametric utility-index estimation term. Such separation is \textit{not} automatic. The model contains an unknown infinite-dimensional tail function, and the seller observes only binary purchase feedback under adaptively chosen prices, so utility-index estimation and price-response learning are statistically intertwined. Once the latent utility index $u=\mc^\top\theta_\ast$ is known, the nonparametric object relevant for pricing is, however, not a $d$-dimensional demand surface, but the one-dimensional oracle price map $u\mapsto p^\ast(u)$. Thus, the nonparametric difficulty should be governed by the smoothness and local curvature of this scalar map, while the ambient dimensionality should enter through parametric utility-index estimation. Existing analyses do \textit{not} fully separate these two sources of complexity: their leading nonparametric terms carry polynomial factors in $d$; see Table~\ref{tab:results-summary} for a summary. The additive form in \eqref{intro-eq:regret} realizes the desired separation: all polynomial dependence on $d$ is confined to the parametric term $\sqrt{dT}$, while the nonparametric oracle-map term $T^{\frac{2\beta-1}{4\beta-3}}$ is dimension-free. In this sense, $\ORBIT$ \textit{decouples} the parametric and nonparametric rates in semiparametric contextual pricing.

Third, the regret guarantee in~\eqref{intro-eq:regret} does \textit{not} require any context-distribution assumption. Existing analyses of the same semiparametric model estimate typically the utility index from fixed or epoch-wise exploration samples. Their guarantees thus require the context distribution to cover all directions well enough, so that a fixed exploration sample is informative in all directions. For example, \citet{fan2024policy} assumed a covariance eigenvalue lower bound, while \citet{wang2025tight,han2026general} assumed independent and identically distributed (i.i.d.) contexts together with the same type of covariance nondegeneracy; see Table~\ref{tab:results-summary} for more details. We avoid such requirement by combining $\ORBIT$ with the adaptive exploration design of \citet{tullii2024improved}. Such design explores only when the current context is not yet well covered by previous exploration samples, and sends a context to $\ORBIT$ only after its utility index is estimated accurately enough. This argument is pathwise over the realized context sequence, and therefore applies to \textit{discrete, continuous, and mixed} context distributions. Since the refinement stage is analyzed as an adversarial BCO problem, the resulting scalar sequence does not need to be i.i.d. or homogeneous. Consequently, the regret bound in \eqref{intro-eq:regret} avoids the non-degeneracy, homogeneity, and i.i.d. context assumptions required in previous smooth-regime analyses.

\vspace{0.1cm}
\noindent\textbf{General utilities through an offline pilot interface.}
$\ORBIT$ also extends beyond linear utilities to broader utility classes. In Section~\ref{sec:extensions}, we replace the linear-utility assumption with the \emph{realizable} setting: the true utility $\mu_\ast$ is assumed to belong to a known function class $\cF$. To handle this case, we equip $\ORBIT$ with an offline pilot interface: given $n$ randomized exploration observations, an estimation oracle for $\cF$ returns an estimate $\widehat\mu_n$ that, with high probability, satisfies
\begin{equation}\label{intro-eq:oracle-condition}
    \|\widehat\mu_n - \mu_\ast\|_\infty \lesssim \cV_T(\cF)\, n^{-\alpha},
\end{equation}
for some $\alpha > 0$. Here $\cV_T(\cF)$ measures the complexity of the class $\cF$; its dependence on $T$ enters only through the failure probability.
Compared with the adaptive pilot used in the linear case, this offline interface trades context-distribution flexibility for function-class generality: it can be combined with any utility class admitting such an estimation oracle, in line with the offline-oracle viewpoint in general contextual decision making \citep{foster2018practical,simchi2022bypassing,gong2025minimax}, while the
required context-distribution assumptions are absorbed implicitly into the
oracle condition~\eqref{intro-eq:oracle-condition}.

With a suitable choice of exploration length, the resulting explore-then-ORBIT
policy achieves regret rate
\begin{equation}\label{intro-eq:general-valuation-regret}
    \widetilde{\cO}\Big(
        T^{\frac{2\beta-1}{4\beta-3}}
        + T^{\frac{1}{1+2\alpha}}\, \cV_T(\cF)^{\frac{2}{1+2\alpha}}
    \Big).
\end{equation}
This is the smooth regime $(\beta \geq 2)$ counterpart of the Lipschitz
setting result of \citet{gong2025minimax}, which in matched notation reads
$\widetilde{\cO}\big( ( T\cV_T(\cF))^{\frac{2}{3}}+( T\cV_T(\cF))^{\frac{1}{1+\alpha}} \big)$.
As in the linear case, the bound \textit{decouples} cleanly the two sources of
complexity: the first term depends only on the smoothness and
strong-unimodality structure governing the oracle-map learning, while the second
one depends only on the hardness of pilot estimation over $\cF$.

\begin{table}[t]
\setlength{\extrarowheight}{8pt}
\resizebox{\textwidth}{!}{%
\begin{tabular}{llll}
\toprule
\textbf{Utility} & \textbf{Work} & \textbf{Regret} & \textbf{Context Assumption} \\
\midrule

\rowcolor{LinearColor}
  & \citet{fan2024policy} 
  & $\tilde{\cO}((dT)^{\frac{2\beta+1}{4\beta-1}})$ 
  & $\E[\mc_t\mc_t^\top ] \succ 0$ \\

\rowcolor{LinearColor}
  & \citet{wang2025tight} ($\beta = 2$) 
  & $\tilde{\cO}(d^{3/2}T^{\frac{3}{5}})$ 
  & \cellcolor{IIDColor} \\

\rowcolor{LinearColor}
  & \citet{han2026general} 
  & $\tilde{\cO}(d^4T^{\frac{\beta+1}{2\beta+1}} + \text{poly}(d^\beta))$ 
  & \cellcolor{IIDColor}
    \multirow{-2}{*}{\makecell[l]{i.i.d. context, $\E[\mc_t\mc_t^\top] \succ 0$}} \\

\rowcolor{LinearColor}
\multirow{-4}{*}{\cellcolor{LinearColor}\textbf{Linear}}
  & \textbf{This work, Corollary~\ref{cor:upper}}
  & $\tilde{\cO}(T^{\frac{2\beta-1}{4\beta-3}} + \sqrt{dT})$ 
  & -- \\

\addlinespace[3pt]

\rowcolor{SparseColor}
  & \citet{javanmard2019dynamic}$^*$ 
  & $\tilde\cO(s\sqrt{T})$ 
  & \cellcolor{SparseColor} \\

\rowcolor{SparseColor}
\multirow{-2}{*}{\cellcolor{SparseColor}\textbf{Sparse}}
  & \textbf{This work, Corollary~\ref{thm:regret-sparse}}  
  & $\tilde{\cO}(T^{\frac{2\beta-1}{4\beta-3}} + s\sqrt{T})$ 
  & \cellcolor{SparseColor}
    \multirow{-2}{*}{\makecell[l]{Compatibility condition (Assumption~\ref{assumption:compatibility})}} \\

\addlinespace[3pt]

\rowcolor{HolderColor}
  & \citet{chen2024dynamic} ($\gamma = 2,4$)$^\dagger$ 
  & $\tilde{\cO}(dT^{\frac{2\beta+1}{4\beta-1}} + T^{\frac{d+2\gamma}{d+4\gamma}})$ 
  & \cellcolor{HolderColor} \\

\rowcolor{HolderColor}
\multirow{-2}{*}{\cellcolor{HolderColor}\textbf{$\gamma$-H\"{o}lder}}
  & \textbf{This work, Corollary~\ref{thm:holder-regret}} ($\gamma > 0$) 
  & $\tilde{\cO}(T^{\frac{2\beta-1}{4\beta-3}} + T^{\frac{d+2\gamma}{d+4\gamma}})$ 
  & \cellcolor{HolderColor}
    \multirow{-2}{*}{\makecell[l]{Density condition (Assumption~\ref{assumption:density})}} \\
\bottomrule
\end{tabular}}
\caption{Summary of regret results under different utility structures and assumptions. All mentioned works have additional assumptions on the shape of $F_\Xi$ that are stronger than or equal to Assumption~\ref{assump:rho}.}
{\footnotesize $^*$ \citet{javanmard2019dynamic} assumed a parametric form of $F_\Xi$.\\
$^\dagger$ When $\gamma=4,$ \citet{chen2024dynamic} had additional additive $T^{7/13}$ term.\\
}
\label{tab:results-summary}
\end{table}

We instantiate this result for two popular utility classes outside the linear case analysis, in each case under context assumptions that are no
stronger than those used in the closest prior work. For $s$-sparse
high-dimensional linear utilities, a Lasso oracle under the standard
compatibility condition gives that 
$$
    \widetilde{\cO}\Big( T^{\frac{2\beta-1}{4\beta-3}} + s\sqrt{T} \Big),
$$
extending the linear-utility analysis of \citet{javanmard2019dynamic} to
nonparametric valuation noise. For $\gamma$-H\"older nonparametric utilities,
a local polynomial oracle under a bounded-density condition yields that 
$$
\widetilde{\cO}\Big(
        T^{\frac{2\beta-1}{4\beta-3}}
        + T^{\frac{d+2\gamma}{d+4\gamma}}
    \Big),
$$
sharpening the tail-smoothness-dependent term in \citet{chen2024dynamic} and
extending their analysis from $\gamma \in \{2,4\}$ to arbitrary $\gamma > 0$.
These consequences are summarized in Table~\ref{tab:results-summary}.

\subsection{Related works}\label{sec:literature}

The literature is best viewed along a spectrum. At one end are known-link contextual pricing models, where the demand curve is specified up to a finite-dimensional parameter; see, e.g., \cite{javanmard2019dynamic,cohen2020feature,ban2021personalized,xu2021logarithmic}, among others. Once the link is fixed, the remaining learning problem is largely parametric. At the other end are fully nonparametric demand models; see, e.g., \cite{chen2021nonparametric,tullii2024improved}. Those formulations offer greater modeling freedom, but learning must then proceed in the full contextual space or over a much broader valuation class. Our model sits \textit{between} these two extremes: the latent index compresses contextual heterogeneity, while the unknown noise law keeps the pricing problem genuinely nonparametric.

This flexible modeling has been extensively studied in \cite{xu2022towards,fan2024policy,luo2022contextual,luo2024distribution,chen2024dynamic,bracale2025dynamic,gong2025minimax,wang2025tight,bracale2025revenue,han2026general}, as well as the linear-valuation branch of \cite{tullii2024improved}; in a different semiparametric pricing model, \cite{shah2019semiparametric} earlier obtained an $\widetilde{\cO}(\sqrt{T})$ regret guarantee. These works show that semiparametric structure can improve substantially over fully nonparametric contextual learning, but also make clear how difficult it is to combine online exploration with an unknown link function.

Among these works, \cite{wang2025tight} and \cite{han2026general} are the closest to ours. \cite{wang2025tight} obtained the sharp $\widetilde{\cO}(T^{3/5})$ benchmark in the twice-smooth regime using a \textit{very different} reduction based on contextual successive elimination, generalized least squares, and active-learning ideas. \cite{han2026general} \textit{extended} that line to general smoothness by combining the stationary subroutine of \cite{wang2025tight} with local polynomial regression under a unimodality condition. In contrast, our algorithm and proof do \textit{not} follow that route. We first localize the relevant price region and then treat the post-localization problem as a sequence of trust-region adversarial bandit convex optimization tasks. In the model class studied here, such reduction is what enables the \textit{curvature and higher-order smoothness} to work together cleanly and what yields \textit{minimax-sharp} horizon dependence for fixed $d$.

A further point of contact is \cite{fan2024policy}, who also reduce pricing to learning the oracle price map, but through a two-stage route: estimating the noise CDF and then reading off the oracle price via its analytical expression in terms of the CDF. This closed-form step imposes structural restrictions on the demand model that are stronger than what $\ORBIT$ requires. $\ORBIT$ instead works directly in policy space, approximating the oracle price map by a low-degree polynomial and reducing the residual problem to a bandit convex optimization over the polynomial coefficients. 

Finally, our analysis is methodologically connected to bandit convex optimization, an area extensively studied by \citet{flaxman2005online,agarwal2011stochastic,saha2011improved,hazan2014bandit,bubeck2016multi,bubeck2021kernel,lattimore2021improved,lattimore2024bandit,fokkema2024online}. Once the refinement stage of $\ORBIT$ is set up, its inner subroutine can be instantiated with any adversarial BCO algorithm. We use \citet{fokkema2024online} in our theoretical analysis for its sharpest known regret rate $\tilde{\cO}(\sqrt{T})$. The reduction itself, however, is not off-the-shelf: revenue is not globally concave in the local policy parameters, and the observations collected within a bin are \textit{endogenously} selected. The coarse-localization step and the conditioning argument bridge that gap.

\vspace{0.1cm}
\noindent\textit{Organization.}
The rest of the paper follows this modular structure. Section~\ref{sec:model} defines the general scalar-index pricing framework and proves the oracle-map regularity properties used throughout. Section~\ref{sec:oplcb} presents $\ORBIT$ under an abstract scalar-pilot interface. Section~\ref{sec:linear-main} specializes the framework to the baseline linear utility model, constructs the scalar pilot adaptively, derives the fully online upper bound, and establishes the matching lower bound. Section~\ref{sec:extensions} shows how other pilot estimators can be plugged into the same $\ORBIT$ interface for sparse linear and nonparametric utilities.  Section~\ref{sec:experiments} provides simulation results that illustrate the empirical performance of $\ORBIT$ and compare it to relevant benchmarks. Section~\ref{sec:conclusion} concludes with some discussions.

\vspace{0.1cm}\noindent\textit{Notation.} For any closed interval $I=[a,b]\subset\R$, we write $\proj_I(x):=\min\{\max\{x,a\},b\}$ for the Euclidean projection of $x$ onto $I$. Given $d\in \mathbb{Z}_+$, $1\leq p\leq \infty$, and $M > 0$, denote by $\mathbb{B}^d_p(M)$ the $d$-dimensional $\ell_p$-ball with radius $M$.

\section{Problem formulation and assumptions}\label{sec:model}

\subsection{Dynamic pricing with binary feedback}

In each period $t\in[T]$, a customer arrives with an observable covariate vector $\mc_t\in\cC$. The context sequence may be random, but is generated exogenously, and independent of the demand-noise sequence and the seller's internal randomization. Equivalently, throughout the analysis we may condition on the entire context sequence; conditional on it, the noises below remain independent draws from their common law. With some deterministic utility map $\mu_\ast:\cC\to\R$ and $u_t=\mu_\ast(\mc_t)$, the latent valuation is generated by
\begin{align*}
v_t:=u_t+\xi_t,
\end{align*}
where $\{\xi_t\}_{t\geq 1}$ are i.i.d. draws from an unknown zero-mean distribution $P_\Xi$ with cumulative distribution function $F_\Xi$. After observing $\mc_t$, the seller posts a price $p_t\in[0,p_{\max}]$ and observes the binary purchase indicator
\begin{align*}
y_t = \bm 1\{v_t\geq p_t\}.
\end{align*}
We work throughout with the normalized model class in which $v_t\in[0,p_{\max}]$. Accordingly, the seller restricts attention to the same price interval $[0,p_{\max}]$; prices above $p_{\max}$ induce zero demand and can thus be excluded from the benchmark without loss.

Denote by $g(z):=1-F_\Xi(z)$ the tail function of the noise distribution. Under the product-law assumption, conditional on $(\mc_t,p_t)$ the only remaining randomness is $\xi_t$. Hence, the conditional purchase probability and conditional revenue are given by
\begin{align*}
\E[y_t\mid \mc_t,p_t] = \Prob\big(\xi_t\geq p_t-u_t \mid \mc_t,p_t\big) = g(p_t-u_t), \quad R(\mc_t,p_t)=p_t g(p_t-u_t),
\end{align*}
respectively. The performance of a pricing policy $\pi := \{ \pi_t\}_{t\in [T]}$ is measured by the regret
\begin{align}\label{eq:regret-def}
\Regret_{\pi}(T):= \E\bigg[ \sum_{t \in [T]} \max_{p\in[0,p_{\max}]}R(\mc_t,p)-R(\mc_t,\pi_t(\mc_t)) \bigg].
\end{align}
When the policy is clear from the context, we suppress the subscript and write $\Regret(T)$ instead of $\Regret_\pi(T)$ for simplicity.

For notational convenience, for each real number $u$, we define the one-dimensional revenue function as 
\begin{align*}
r(u,p):=p g(p-u), \quad p\in[0,p_{\max}].
\end{align*}
The oracle benchmark in \eqref{eq:regret-def} at each $t$ is thus $\max_{p\in[0,p_{\max}]} r(u_t,p).$
If both $\mu_\ast$ and $g$ were known, the seller would post $p^\ast(\mu_\ast(\mc_t))$ in each period. The goal is to learn such oracle pricing rule when \textit{neither} component is known. 

At a high level, the contextual structure enters the decision problem only through the one-dimensional index $u_t$. This is what makes the model \textit{semiparametric} instead of fully nonparametric: although the noise law is unknown, the decision-relevant heterogeneity is compressed into a scalar latent state.  

\subsection{Smoothness and revenue curvature}

In this section, we impose some structural assumptions on the tail function $g(\cdot)$ and revenue function $r(\cdot)$ as a function of utility $u$ and price $p$, where no specific form of $u=\mu_\ast(\mc)$ is imposed beyond the following boundedness assumption.

\begin{assumption}[Boundedness]\label{assump:bounded}
There exist 
some $u_{\min}<u_{\max}$ such that $\mu_\ast(\mc)\in\cU:=[u_{\min},u_{\max}]$ for all $\mc\in\cC$. Denote by $\Delta_{\cU}:=u_{\max}-u_{\min}$ and $V:=p_{\max}+\max\{\abs{u_{\min}},\abs{u_{\max}}\}$.
\end{assumption}

Assumption~\ref{assump:bounded} above bounds the latent index instead of the noise directly. Since $p_t\in[0,p_{\max}]$ and $u_t=\mu_\ast(\mc_t)\in \cU$, the price-index gap $p_t-u_t$ always lies in a compact interval contained in $[-V,V]$. Consequently, the unknown tail function is evaluated only on a compact interval. For the common nonnegative-index case, one can take $u_{\min}=0$ and $u_{\max}=U$, but the results below are translation-invariant in the scalar index.

For the nonparametric component, we will assume that $g$ is $\beta$-H\"older smooth.
\begin{assumption}[$\beta$-H\"older smoothness of $g$]\label{assump:holder}
There exist constants $L_g>0$ and $\beta\geq 2$ such that $g:[-V,V]\to [0,1]$ is
$\lfloor\beta\rfloor$ times continuously differentiable, $\max_{1\leq k\leq \lfloor\beta\rfloor}\|g^{(k)}\|_\infty\leq L_g$, and for all $u,u'\in[-V,V]$,
\begin{align*}
\biggabs{ g(u') - \sum_{k=0}^{\lfloor\beta\rfloor} \frac{(u'-u)^k}{k!} g^{(k)}(u)} \leq L_g \abs{u'-u}^{\beta}.
\end{align*}
\end{assumption}

Assumption \ref{assump:holder} above is a standard smoothness condition in nonparametric estimation; see, e.g., \cite{gyorfi2002distribution,TsybakovNon}. It ensures that the revenue function, and thus the oracle price map $p^{\ast}(u)$, is locally regular and well approximated by a polynomial function, which serves as the foundation of the local refinement step in our algorithm.  The H\"older notation also encompasses the Lipschitz and second-order smooth settings studied in \cite{tullii2024improved,luo2024distribution,wang2025tight,luo2022contextual}, but the technical analysis in the current paper focuses on the regime of $\beta\geq 2$. The next assumption imposes curvature on the pricing side of the problem.

\begin{assumption}[Strong unimodality]\label{assump:rho}
There exist constants $0<\sigma_r\leq L_r 
$ such that for each $u\in\cU$,
\begin{enumerate}
    \item[1)] the maximizer $p^\ast(u)$ is unique and lies in the strict interior $(0,p_{\max})$;
    \item[2)] the global quadratic growth bounds hold that for all $p\in[0,p_{\max}]$, 
    \begin{align*}
        \frac{\sigma_r}{2}\abs{p-p^\ast(u)}^2 \leq r(u,p^\ast(u))-r(u,p) \leq \frac{L_r}{2}\abs{p-p^\ast(u)}^2.
    \end{align*}
\end{enumerate}
\end{assumption}

Assumption~\ref{assump:rho} above states that the price error translates into the quadratic revenue loss around the oracle price, uniformly over the scalar-index domain. Such condition has appeared in various pricing models \citep{broder2012dynamic,wang2014close,chen2021nonparametric,wang2025tight,han2026general}. In semiparametric models with a differentiable noise density, the quadratic upper and lower growth bounds in Assumption~\ref{assump:rho} can be derived from CDF-level shape conditions analogous to those in \citet{fan2024policy,javanmard2019dynamic}: a bounded density with bounded derivative and a monotone virtual-valuation map imply the same revenue geometry once the stationary price exists uniformly in the interior of the admissible price interval. Appendix~\ref{appenidx: compare-assumption} gives the precise derivations.

We conclude this section by discussing two important implications of Assumptions~\ref{assump:bounded}--\ref{assump:rho}. Both play important roles in our algorithm design and theoretical analysis later.

\medskip

\noindent\textbf{Local concavity.} Under Assumption~\ref{assump:bounded}--\ref{assump:rho}, the following local concave property holds around the optimal price $p^{\ast}(u)$ for each $u \in \cU$.

\begin{lemma}\label{lem:rho-exists}
Under Assumptions~\ref{assump:bounded}--\ref{assump:rho}, there exists a constant $\rho_0>0$ so that for each $u\in\cU$,
\begin{align*}
    [p^\ast(u)-\rho_0,p^\ast(u)+\rho_0]\subset (0,p_{\max})
\end{align*}
and
\begin{align*}
    -\frac{\partial^2}{\partial p^2} r (u,p)\geq \sigma_r/2 \quad \text{whenever } \abs{p-p^\ast(u)}\leq \rho_0.
\end{align*}
\end{lemma}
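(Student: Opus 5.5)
Two things have to be shown: a uniform interior margin, and uniform curvature near the maximizer. The plan is to get both from three ingredients. The first is compactness of $\cU$. The second is the quadratic growth bounds of Assumption~\ref{assump:rho}. The third is a uniform Lipschitz bound on $\partial_p^2 r$ coming from Assumption~\ref{assump:holder}.

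\textbf{Step 1: interior margin.} Write $r(u,p)=p\,g(p-u)$. Since $g$ is $C^2$ with $\|g'\|_\infty,\|g''\|_\infty\le L_g$ and $|p|\le p_{\max}$, the map $r$ is continuous on $\cU\times[0,p_{\max}]$. Uniqueness of the maximizer (Assumption~\ref{assump:rho}, part 1) then gives continuity of $u\mapsto p^\ast(u)$, by the standard Berge argument: any subsequential limit of $p^\ast(u_n)$ is a maximizer of $r(u,\cdot)$, hence equals $p^\ast(u)$. Because $\cU$ is compact and $p^\ast(u)\in(0,p_{\max})$ for every $u$, the continuous function $\min\{p^\ast(u),p_{\max}-p^\ast(u)\}$ attains a positive minimum $m$. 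Any $\rho_0<m$ then satisfies the first claim.

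\textbf{Step 2: curvature at the maximizer.} Since $p^\ast(u)$ is an interior maximizer of a $C^2$ function, $\partial_p r(u,p^\ast(u))=0$. A second-order Taylor expansion around $p^\ast(u)$ gives
\[
r(u,p^\ast)-r(u,p)=-\tfrac12\,\partial_p^2 r(u,\tilde p)\,(p-p^\ast)^2
\]
for some $\tilde p$ between $p$ and $p^\ast$. Combining this with the lower quadratic growth bound yields $-\partial_p^2 r(u,\tilde p)\ge\sigma_r$. Letting $p\to p^\ast(u)$ and using continuity of $\partial_p^2 r$, we get $-\partial_p^2 r(u,p^\ast(u))\ge\sigma_r$ for every $u\in\cU$.

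\textbf{Step 3: propagation to a neighborhood.} This step is the main obstacle, because it needs a modulus of continuity for $\partial_p^2 r$ that is uniform in $u$. We have
\[
\partial_p^2 r(u,p)=2g'(p-u)+p\,g''(p-u).
\]
I would handle two cases.
\begin{itemize}
\item If $\beta\ge3$, then $g'''$ exists and $\|g'''\|_\infty\le L_g$. Hence $\partial_p^2 r$ is Lipschitz in $p$ with constant $C:=3L_g+p_{\max}L_g$.
\item If $2\le\beta<3$, then $g''$ is continuous on the compact set $[-V,V]$, hence uniformly continuous. This gives a uniform modulus of continuity $\omega$ for $\partial_p^2 r$ that does not depend on $u$, since $u$ enters only through the shift $p-u$.
\end{itemize}

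Now choose $\rho_0>0$ small enough that $\rho_0<m$ and $\omega(\rho_0)\le\sigma_r/2$ (in the first case, $C\rho_0\le\sigma_r/2$). Then for $|p-p^\ast(u)|\le\rho_0$,
\[
-\partial_p^2 r(u,p)\ \ge\ \sigma_r-\sigma_r/2\ =\ \sigma_r/2,
\]
uniformly over $u\in\cU$, which is the second claim.
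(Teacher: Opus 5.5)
Your proposal is correct and follows the same structure as the paper's proof: continuity of $p^\ast$ via Berge plus compactness of $\cU$ for the interior margin, then Taylor expansion with the lower quadratic-growth bound to get $-\partial_p^2 r(u,p^\ast(u))\geq\sigma_r$, then extension of this bound to a uniform neighborhood. The only difference is in that last step: the paper argues by contradiction via sequential compactness, using joint continuity of $-\partial_p^2 r$ on $\cU\times[0,p_{\max}]$ together with continuity of $p^\ast$, whereas you use a direct, $u$-uniform modulus of continuity of $\partial_p^2 r(u,p)=2g'(p-u)+p\,g''(p-u)$ in $p$; for $\beta\geq 3$ this even yields the explicit curvature radius $\sigma_r/\bigl(2(3+p_{\max})L_g\bigr)$, which you then intersect with the interior margin.
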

Such local concavity allows us to connect revenue maximization to bandit convex optimization, and plays an important role in the localization-then-refine design in Section~\ref{sec:oplcb}.

\medskip

\noindent\textbf{Oracle price map.} Assumption~\ref{assump:rho} ensures that the maximizer of $p\mapsto r(u,p)$ is unique for each $u\in\cU$. We therefore define the \textit{oracle price map} as 
\begin{align*}
p^\ast(u):=\argmax_{p\in[0,p_{\max}]} r(u,p), \quad u\in\cU.
\end{align*}
Assumptions~\ref{assump:bounded}--\ref{assump:rho} together entail the following regularity property of this map.

\begin{lemma}\label{lem:pstar-regularity}
Under Assumptions~\ref{assump:bounded}--\ref{assump:rho}, the oracle price map $p^\ast:\cU\to(0,p_{\max})$ belongs to class $C^{\beta-1}$. In particular, there exists a constant $L_p>0$ such that
\begin{align*}
    \abs{p^\ast(u)-p^\ast(v)}\leq L_p\abs{u-v},
    \quad \forall u,v\in\cU.
\end{align*}
\end{lemma}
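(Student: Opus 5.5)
The proof applies the implicit function theorem to the first-order condition, using the local concavity of Lemma~\ref{lem:rho-exists} for nondegeneracy. Define
\begin{align*}
\Phi(u,p):=\frac{\partial}{\partial p}r(u,p)=g(p-u)+p\,g'(p-u),
\end{align*}
which is well defined on $\cU\times[0,p_{\max}]$ because $p-u\in[-V,V]$ by Assumption~\ref{assump:bounded}. Since $g\in C^{\lfloor\beta\rfloor}$ with $\lfloor\beta\rfloor\geq 2$, the map $\Phi$ is of class $C^{\lfloor\beta\rfloor-1}$. Its top derivatives inherit a H\"older modulus of order $\beta-\lfloor\beta\rfloor$ from $g^{(\lfloor\beta\rfloor)}$. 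This modulus should follow from Assumption~\ref{assump:holder} by the standard equivalence between the Taylor-remainder form and H\"older continuity of the top derivative on a compact interval. Hence $\Phi$ is $(\beta-1)$-H\"older smooth jointly in $(u,p)$.

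By Assumption~\ref{assump:rho}, $p^\ast(u)$ is an interior maximizer, so $\Phi(u,p^\ast(u))=0$. Lemma~\ref{lem:rho-exists} gives
\begin{align*}
\partial_p\Phi(u,p^\ast(u))=\partial_p^2 r(u,p^\ast(u))\leq-\sigma_r/2<0 .
\end{align*}
The implicit function theorem therefore yields, near each $u_0\in\cU$, a unique $C^{\lfloor\beta\rfloor-1}$ function $\tilde p$ with $\Phi(u,\tilde p(u))=0$ and $\tilde p(u_0)=p^\ast(u_0)$.

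To identify $\tilde p$ with $p^\ast$, first show that $p^\ast$ is continuous. Take $u_n\to u$ and any limit point $\bar p$ of $p^\ast(u_n)$. Passing to the limit in $r(u_n,p^\ast(u_n))\geq r(u_n,p)$, using joint continuity of $r$, gives that $\bar p$ maximizes $r(u,\cdot)$, so $\bar p=p^\ast(u)$ by uniqueness. Alternatively, combine the quadratic growth bound with the bound $|r(u,p)-r(v,p)|\leq p_{\max}L_g|u-v|$ to get a $\tfrac12$-H\"older estimate directly. Continuity means $p^\ast(u)$ lies in the uniqueness neighborhood of the implicit function theorem for $u$ near $u_0$, so $p^\ast=\tilde p$ locally. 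Covering the compact interval $\cU$ shows $p^\ast\in C^{\lfloor\beta\rfloor-1}$. The Lipschitz constant comes from the explicit derivative
\begin{align*}
(p^\ast)'(u)=-\frac{\partial_u\Phi}{\partial_p\Phi}.
\end{align*}
Here $|\partial_u\Phi|=|g'+p\,g''|\leq L_g(1+p_{\max})$ and $|\partial_p\Phi|\geq\sigma_r/2$, so one can take $L_p=2L_g(1+p_{\max})/\sigma_r$.

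The main obstacle is the fractional part of the smoothness, namely showing that $(p^\ast)^{(\lfloor\beta\rfloor-1)}$ is $(\beta-\lfloor\beta\rfloor)$-H\"older. The plan is to differentiate the identity $\Phi(u,p^\ast(u))=0$ repeatedly using Fa\`a di Bruno's formula. The top derivative $(p^\ast)^{(k)}$, with $k=\lfloor\beta\rfloor-1$, is then a polynomial in lower derivatives of $p^\ast$ and in partial derivatives of $\Phi$ up to order $k$, divided by a power of $\partial_p\Phi(u,p^\ast(u))$. The lower derivatives of $p^\ast$ are Lipschitz because they are $C^1$ with bounded derivative. The order-$k$ partials of $\Phi$ involve $g^{(\lfloor\beta\rfloor)}$, which is $(\beta-\lfloor\beta\rfloor)$-H\"older, composed with the Lipschitz map $u\mapsto(u,p^\ast(u))$. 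The denominator is bounded below by $\sigma_r/2$. Products and quotients of bounded H\"older functions with denominators bounded away from zero remain H\"older of the minimal order. This yields the claim when $\beta\notin\mathbb{Z}$. When $\beta$ is an integer the exponent $\beta-\lfloor\beta\rfloor$ is $0$, so the argument stops at $C^{\beta-1}$, with the top derivative merely continuous, which matches the statement.
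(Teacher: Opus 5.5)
Your argument follows the paper's route. You apply the implicit function theorem to the first-order condition $\Phi=\partial_p r=0$. You use continuity of the argmax to identify $p^\ast$ with the local implicit solution; the paper cites Berge's maximum theorem, and your sequential-compactness argument establishes the same fact. You finish with a finite cover of $\cU$.

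One step must be repaired. You take $\partial_p\Phi(u,p^\ast(u))\leq-\sigma_r/2$ from Lemma~\ref{lem:rho-exists}, but in the paper that lemma is proved \emph{from} this one. Its proof uses continuity of $p^\ast$ from Lemma~\ref{lem:pstar-regularity} and the curvature bound at $p^\ast(u)$ obtained inside this proof, so the citation is circular. You only need nondegeneracy at the maximizer itself, and that follows directly from Assumption~\ref{assump:rho}-(2). Since $r(u,\cdot)$ is $C^2$ and $\partial_p r(u,p^\ast(u))=0$, one has $r(u,p^\ast(u))-r(u,p)=-\tfrac12 r_{pp}(u,p^\ast(u))(p-p^\ast(u))^2+o((p-p^\ast(u))^2)$. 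Comparing with the quadratic growth bounds gives $\sigma_r\leq -r_{pp}(u,p^\ast(u))\leq L_r$. This is what the paper does, and it also sharpens your constant to $L_p=L_g(1+p_{\max})/\sigma_r$. A small point the paper records and you skip: the classical implicit function theorem needs $\Phi$ on an open neighborhood of $(u_0,p^\ast(u_0))$, including at the endpoints of $\cU$. This is available because $p^\ast(u_0)-u_0\in(-V,V)$.

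The genuine difference is in the fractional part of the smoothness. The paper invokes an implicit function theorem in H\"older spaces as a black box. You instead differentiate $\Phi(u,p^\ast(u))=0$ with Fa\`a di Bruno's formula and push the $(\beta-\lfloor\beta\rfloor)$-H\"older modulus of $g^{(\lfloor\beta\rfloor)}$ through products, compositions with Lipschitz maps, and the quotient by $\partial_p\Phi$. This is more self-contained and gives an explicit $L_p$, at the cost of some bookkeeping.

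Both versions use, without proof, that the Taylor-remainder form of Assumption~\ref{assump:holder} yields a H\"older top derivative of $g$ for non-integer $\beta$. This is true and has a short proof. For $x<y$, the degree-$\lfloor\beta\rfloor$ Taylor polynomials of $g$ at $x$ and at $y$ differ by at most $2L_g\abs{x-y}^\beta$ on $[x,y]$. Markov's inequality applied to their difference then bounds $\abs{g^{(\lfloor\beta\rfloor)}(x)-g^{(\lfloor\beta\rfloor)}(y)}$ by a constant multiple of $\abs{x-y}^{\beta-\lfloor\beta\rfloor}$.
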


Whenever a Taylor expansion of $p^\ast$ is used below, we use the following standard consequence of Lemma~\ref{lem:pstar-regularity}: after possibly increasing $L_p$, for all $x,u\in\cU$,
\begin{align*}
    \biggabs{p^\ast(u)-\sum_{k=0}^{\lceil \beta-1\rceil-1}\frac{(p^\ast)^{(k)}(x)}{k!}(u-x)^k} \leq L_p\abs{u-x}^{\beta-1}.
\end{align*}

Such regularity ensures that the oracle price map can be well approximated by standard nonparametric function classes (e.g., the local polynomials class), which makes direct learning of $p^{\ast}$ feasible. This observation drives the design of the policy space in Section~\ref{sec:oplcb}.

\section{$\ORBIT$ under a scalar pilot interface}\label{sec:oplcb}

In this section, we will present $\ORBIT$ as a conditional, model-agnostic module. The full pricing policy observes context $\mc_t$ in each period; $\ORBIT$ describes the pricing step after an outer pilot-construction mechanism has produced a scalar state $\tilde u_t$. The module itself does not estimate the utility map. It takes $\tilde u_t$ as its direct input, while the latent index $u_t=\mu_\ast(\mc_t)$ remains unobserved and is used only for analysis. Given these pilot states, $\ORBIT$ learns the oracle price map $u\mapsto p^\ast(u)$ on the one-dimensional scalar-index space. The model-specific question of how to construct $\tilde u_t$ is addressed in Section~\ref{sec:linear-main} for the baseline linear model, and in Section~\ref{sec:extensions} for additional utility classes.

The design has two local phases. First, each active bin runs a short grid-based search over prices and stores an anchor price. The purpose of this anchor is to put the bin in a safe local price range. Once the anchor is close to the oracle prices in the bin, we consider only prices in a small band around that anchor. Lemma~\ref{lem:rho-exists} ensures that in such a local range, the revenue curve is concave in price, or equivalently that negative revenue is convex in price. Second, the bin learns a local polynomial price rule around the anchor. For any fixed customer assigned to the bin, the price prescribed by such polynomial rule is linear in the polynomial coefficients. Hence, after the coarse search has localized the price range, choosing the polynomial coefficients becomes a convex bandit learning problem: the raw refinement generator selects coefficients, the policy posts the corresponding price, and the observed purchase outcome provides noisy feedback. Figure~\ref{fig:oplcb-flow} summarizes this flow.

\begin{assumption}[Scalar pilot interface]\label{assump:orbit-interface}
For some $0<\eta\leq 1/2$, $\ORBIT$ receives a realized stream of $N$ scalar pilot states
$\{\tilde u_s\}_{s=1}^N$ generated by an outer pricing policy, where $N$ may be any value not exceeding the upper pilot-input budget used to initialize the subroutine. The pilot stream and the stopping rule that determines $N$ may use exogenous contexts, observations from non-$\ORBIT$ exploration rounds, and pilot-side randomization independent of $\ORBIT$. They may not use $\ORBIT$-posted prices, $\ORBIT$ purchase outcomes, $\ORBIT$'s internal refinement randomization, or the fresh demand noises generated on $\ORBIT$ calls. Let $\mc_s$ be the context associated with the $s$th $\ORBIT$ call after any re-indexing, and set
$u_s=\mu_\ast(\mc_s)$ as the corresponding latent scalar index. Assume that the
associated index--pilot sequence $\{(u_s,\tilde u_s)\}_{s=1}^N$ satisfies that 
$u_s,\tilde u_s\in\cU$ for each $s\in[N]$, and
\begin{enumerate}
    \item[1)] \textbf{Independence:} the random object $(N,\{(u_s,\tilde u_s)\}_{s=1}^N)$ is independent of both the re-indexed demand noises $\{\xi_s\}_{s=1}^N$ used in these $\ORBIT$ calls and the internal randomization used by $\ORBIT$'s refinement generators. Equivalently, conditional on the realized pilot stream and its length, the $\ORBIT$-call noises remain independent draws from the original noise law, and $\ORBIT$'s internal randomization remains fresh and independent of the pilot stream.
    \item[2)] \textbf{Pilot accuracy:} $\sup_{s\in[N]} \abs{\tilde u_s-u_s}\leq \eta$.
\end{enumerate}
\end{assumption}

The feasibility requirement $\tilde u_s\in\cU$ gives each pilot state a unique bin assignment; all pilot constructions below enforce it by projection onto $\cU$. The strengthened independence condition is an interface restriction on the outer pilot: it rules out pilots that adapt future scalar states or the $\ORBIT$ stopping rule using $\ORBIT$ prices, $\ORBIT$ feedback, or $\ORBIT$ refinement randomization. This is the regime analyzed by the induced fixed-loss BCO reduction below. Throughout this section, the analysis is conditional on the associated index--pilot sequence and on its realized length. $\ORBIT$ itself uses only $\tilde u_s$ for binning and price selection; the latent index $u_s$ appears only in the analysis.

\begin{figure}[t]
\centering

\resizebox{\textwidth}{!}{%
\begin{tikzpicture}[>=Latex, line cap=round, line join=round, font=\small]
\colorlet{myblue}{blue!65!black}
\colorlet{myorange}{orange!85!black}
\colorlet{mygreen}{green!55!black}
\colorlet{mypurple}{purple!70!black}
\colorlet{mygray}{black!55}
\tikzset{
  title/.style={font=\bfseries\large, anchor=west},
  context/.style={
    circle,
    draw=mypurple,
    fill=mypurple!15,
    minimum size=6.5mm,
    inner sep=0pt,
    font=\small
  },
  pilotbox/.style={
    rounded corners=2mm, draw=myblue, fill=myblue!6, thick,
    align=center, minimum width=3.6cm, minimum height=1.7cm, text width=3.4cm
  },
  localbox/.style={
    rounded corners=2mm, draw=mygreen, fill=mygreen!10, thick,
    align=center, minimum width=2.7cm, minimum height=2.0cm, text width=2.6cm
  },
  bluephase/.style={
    rounded corners=2mm, draw=myblue!60, fill=myblue!3, thick
  },
  orangephase/.style={
    rounded corners=2mm, draw=myorange!60, fill=myorange!3, thick
  },
  bluebox/.style={
    rounded corners=2mm, draw=myblue, fill=myblue!10, thick,
    align=center, text width=2.9cm, minimum height=1.7cm, inner sep=4pt, font=\footnotesize
  },
  orangebox/.style={
    rounded corners=2mm, draw=myorange, fill=myorange!10, thick,
    align=center, text width=4cm, minimum height=1.9cm, inner sep=4pt, font=\footnotesize
  },
  noteBlue/.style={
    rounded corners=2mm, draw=myblue, dashed, fill=myblue!3,
    align=center, inner sep=5pt, font=\footnotesize
  },
  noteOrange/.style={
    rounded corners=2mm, draw=myorange, dashed, fill=myorange!4,
    align=left, inner sep=5pt, font=\footnotesize
  },
  zoomlabel/.style={
    rounded corners=1.5mm, draw=black!50, fill=black!5, inner sep=6pt, align=center, font=\bfseries
  },
  legendbox/.style={
    rounded corners=2mm, draw=black!55, dashed, fill=white, inner sep=6pt
  }
}
\node[title] at (0,10.5) {(a) Global routing workflow (at ORBIT local time $s$)};
\node[font=\bfseries] at (2.4,9.2) {Arriving context};
\node[context] (ct) at (3.2,8.2) {$\mc_s$};

\draw[thin] (ct.east) -- (6.0,8.2);
\node[pilotbox] (pilot) at (6.5,8.2) {%
\textbf{Pilot compression}\\[1mm]
$\tilde u_s \in \mathcal U$
};
\draw[->, thick] (pilot.east) -- (9.7,8.2);
\node[font=\bfseries] at (13.2,9.9) {Bin allocation over $\mathcal U$ (line~8)};
\def\binA{10.0}
\def\binB{11.1}
\def\binC{12.2}
\def\binE{13.4}  
\def\binF{14.6}  
\def\binH{15.8}
\def\binI{16.7}

\fill[mygreen!18] (\binE,7.75) rectangle (\binF,8.55);
\draw[thick] (\binA,7.75) rectangle (\binI,8.55);
\foreach \x in {\binB,\binC,\binE,\binF,\binH}{
    \draw[thick] (\x,7.75) -- (\x,8.55);
}
\foreach \x in {\binA,\binB,\binC,\binE,\binF,\binH,\binI}{
    \draw[thick] (\x,7.75) -- (\x,7.6);
}
\node at (10.55,8.15) {$B_1$};
\node at (11.70,8.15) {$B_2$};
\node at (12.85,8.15) {$\cdots$};
\node[font=\small] at (14.05,8.15) {$B_{j}$};
\node at (15.25,8.15) {$\cdots$};
\node at (16.2,8.15) {$B_M$};
\draw[->, thick] (\binI,7.75) -- (17.3,7.75);
\node[font=\small] at (17.6,7.75) {$\mathcal U$};
\node[font=\scriptsize] at (10.55,7.38) {$\bar u_1$};
\node[font=\scriptsize] at (11.70,7.38) {$\bar u_2$};
\node[font=\scriptsize, text=mygreen] at (14.10,7.38) {$\bar u_{j}$};
\node[font=\scriptsize] at (16.2,7.38) {$\bar u_M$};

\fill[blue] (13.6,9.15) circle (2.6pt);
\node[blue, above=0pt, font=\small] at (13.6,9.20) {$\tilde u_s$};
\draw[blue, dashed, ->, thick] (13.6,9.1) -- (13.6,8.6);
\draw[mygreen, very thick, ->] (14.1,7.2) -- (14.1,6.70);
\node[mygreen, font=\small]      at (14.1,6.6) {$j_s:\ \tilde u_s\in B_{j_s}$};
\node[mygreen, font=\bfseries\small] at (14.1,6.15) {assigned bin $B_{j_s}$};
\node[localbox] (lp) at (20.6,8.2) {%
\textbf{Local learner}\\
\textbf{in $B_{j_s}$}\\
\textbf{(Line~8--17)}
};
\draw[->, thick] (16.8,8.2) -- (lp.west);
\coordinate (P1) at ($(lp.east)+(0,0)$);
\coordinate (P2) at ($(lp.east)+(1.9,0)$);
\draw[->, thick] (P1) -- (P2);
\node[font=\footnotesize, above=1pt] at ($(P1)!0.5!(P2)$) {post price};
\node at ($(P2)+(0.2,0)$) {$p_s$};
\node[title] at (0,4.95) {(b) Local two-phased learner inside a representative bin $B_j$ (with surrogate utility $\tilde{u}_s$)};
\node[align=center, text=mygreen, font=\bfseries, anchor=east] at (1.6,3.85)
    {Local clock\\[-1mm]\scriptsize (visits to $B_j$)};
\node[anchor=east, font=\small] at (2.4,3.85) {$\tau_j=$};
\draw[myblue, thick] (2.45,3.85) -- (10.3,3.85);
\draw[myorange, thick] (10.3,3.85) -- (24.8,3.85);
\draw[->, thick, myorange] (24.8,3.85) -- (25.7,3.85);
\foreach \x in {2.95,3.65,9.6}{
    \draw[myblue, thick] (\x,3.72) -- (\x,3.98);
}
\node[myblue, font=\small, above=1pt] at (2.95,4.0) {$1$};
\node[myblue, font=\small, above=1pt] at (3.65,4.0) {$2$};
\node[myblue, font=\small, above=1pt] at (5.6,4.0) {$\cdots$};
\node[myblue, font=\small, above=1pt] at (7.5,4.0) {$\cdots$};
\node[myblue, font=\scriptsize, above=1pt] at (9.6,4.0) {$m_{\coarse}|\mathcal G|$};
\foreach \x in {11.7,13.7,25.0}{
    \draw[myorange, thick] (\x,3.72) -- (\x,3.98);
}
\node[myorange, font=\scriptsize, above=1pt] at (11.7,4.0) {$m_{\coarse}|\mathcal G|+1$};
\node[myorange, font=\scriptsize, above=1pt] at (13.7,4.0) {$m_{\coarse}|\mathcal G|+2$};
\node[myorange, font=\scriptsize, above=1pt] at (18.1,4.0) {$\cdots$};
\node[myorange, font=\small, above=1pt] at (22.2,4.0) {$\cdots$};
\node[myorange, font=\small, above=1pt] at (25.0,4.0) {$n_j$};
\draw[dashed, thick] (10.3,4.8) -- (10.3,0.85);
\draw[bluephase] (1.5,0.95) rectangle (10.25,3.42);
\draw[orangephase] (10.35,0.95) rectangle (27.4,3.42);
\node[myblue, font=\bfseries] at (5.62,3.15) {Phase 1: Coarse localization (line~8--10)};
\node[myorange, font=\bfseries] at (18.82,3.15) {Phase 2: Local refinement (line~11--17)};
\node[bluebox] (b1) at (3.5,2.00)
    {Uniformly explore each $p\in\mathcal G$ for $m_{\coarse}$ times};
\node[bluebox] (b2) at (7.1,2.00)
    {Find anchor $\tilde{p}_j$};
\draw[->, thick] (b1.east) -- (b2.west);
\node[orangebox] (o2) at (13.2,2.00) {%
Construct policy space $\mathcal A_j$ and price map $\frakq^j(\cdot;\cdot)$ from $\tilde p_j$};
\node[orangebox] (o3) at (18.3,2.00) {%
Call BCO oracle $\mathcal B^{\mathrm{bco}}$, obtain BCO action $a_{\tau_j}^j\in\mathcal A_j$};
\node[orangebox] (o4) at (23.4,2.00) {%
Convert action $a_{\tau_j}$ to price $p_{s}=\mathfrak q^j(\tilde u_{s}; a_{\tau_j}^j)$,\\
post $p_s$};
\draw[->, thick] (b2.east) -- (o2.west);
\draw[->, thick, myorange] (o2.east) -- (o3.west);
\draw[->, thick, myorange] (o3.east) -- (o4.west);
\end{tikzpicture}%
}
\caption{Overview of $\ORBIT$ under the pilot-estimator interface. At each global time $s$, the arriving context $\mc_s$ is compressed into a one-dimensional surrogate utility $\tilde u_s\in\mathcal U$ and assigned to a
bin $B_{j_s}$ in the partition of $\mathcal U$. Each bin $B_j$ maintains its own local clock $\tau_j$ and runs a two-phase learner: a coarse localization phase based on grid exploration, followed by a local refinement phase driven by a BCO oracle.
}
\label{fig:oplcb-flow}
\end{figure}

\subsection{Bins, price grid, and coarse localization}\label{subsec:bin}

Recall that $\cU=[u_{\min},u_{\max}]$ and $\Delta_{\cU}=u_{\max}-u_{\min}$.  Given a target bin width $h\in(0,1]$, $\ORBIT$ sets
\begin{align}\label{eq:bin-construction}
    M:=\lceil \Delta_{\cU}/h\rceil, \qquad \bar h:=\Delta_{\cU}/M,
\end{align}
and partitions $\cU$ into equal-width bins
\begin{align*}
    B_j=[u_{\min}+(j-1)\bar h,u_{\min}+j\bar h),\quad j<M, \qquad  B_M=[u_{\min}+(M-1)\bar h,u_{\max}].
\end{align*}
We denote the closure of $B_j$ as $\overline B_j=[u_{\min}+(j-1)\bar h,u_{\min}+j\bar h]$, and its midpoint as $\bar u_j=u_{\min}+(j-1/2)\bar h$. Since $\bar h\leq h$ and $M\leq C_{\cU}/h$ for a constant $C_{\cU}$ depending only on the length of $\cU$, all rates below are stated in terms of $h$.

The coarse phase employs the price grid
\begin{equation}\label{eq:grid-price-set}
\cG:=\{k\eta_{\mathrm{grid}}:k=0,1,\ldots,\floor{p_{\max}/\eta_{\mathrm{grid}}}\}\cup\{p_{\max}\},
\end{equation}
with duplicate elements removed and ordered increasingly. We also define the coarse localization scale $\rho_{\rm loc}:=\sqrt{\eta_{\mathrm{grid}}}$. Such scale is used first as the target accuracy for the coarse anchor. Later, after the anchor is constructed, the same scale determines the width of the local polynomial search region.

When a bin $j$ is visited, $\ORBIT$ first cycles through grid $\cG$ and samples each grid price for $m_{\coarse}:=\lceil m_0\log(eH)\rceil$ visits to that bin, where $H$ is the upper pilot-input budget supplied to Algorithm~\ref{alg:oplcb}. Let $\cT_j^{\coarse}$ be the set of $\ORBIT$ call indices assigned to bin $j$ during this coarse phase. For each $p\in\cG$, the empirical revenue mean in bin $j$ is denoted as $\widehat r_j(p)$. The first step in the analysis is a uniform concentration statement: despite the fact that a bin contains a range of true indices, its empirical grid means are close to the revenue curve at each index in the bin, up to the stochastic error and the deterministic bin/pilot bias. The probability bound below is written in terms of $H$, since the logarithmic schedule and union bounds use this upper budget as opposed to the eventual realized value of $N$.

\begin{lemma}\label{lem:mean-concentration}
Assume that Algorithm~\ref{alg:oplcb} is run with upper pilot-input budget $H$ on a realized stream of $N\leq H$ pilot states $\{\tilde u_s\}_{s=1}^N$, and the associated index--pilot sequence $\{(u_s,\tilde u_s)\}_{s=1}^N$ satisfies Assumption~\ref{assump:orbit-interface}. Then there exists a universal constant $C_{\rm mean}>0$ such that conditional on this associated index--pilot sequence, with probability at least $1-CM\abs{\cG}H^{-5}$, for each $j\in[M]$ that completes its coarse phase,
\begin{align}
    \label{eq:grid-mean-concentration} \abs{\widehat r_j(p) - r(u, p)} \leq C_{\rm mean}p_{\max}\big(m_0^{-1/2} + L_g(h+\eta)\big), \quad \forall p \in \cG,\ u\in \overline B_j .
\end{align}
\end{lemma}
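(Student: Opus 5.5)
Fix a bin $j$ that completes its coarse phase and a grid price $p\in\cG$. Let $S_{j,p}\subset\cT_j^{\coarse}$ be the $m_{\coarse}$ call indices in which bin $j$ posted $p$ during the coarse phase. The empirical mean is $\widehat r_j(p)=m_{\coarse}^{-1}\sum_{s\in S_{j,p}} p\,y_s$. I will split the error at a fixed $u\in\overline B_j$ as
\begin{align*}
\widehat r_j(p)-r(u,p)=\underbrace{\frac{1}{m_{\coarse}}\sum_{s\in S_{j,p}}\big(p y_s-r(u_s,p)\big)}_{\text{stochastic}}+\underbrace{\frac{1}{m_{\coarse}}\sum_{s\in S_{j,p}}\big(r(u_s,p)-r(u,p)\big)}_{\text{bias}}.
\end{align*}
The two terms are bounded separately.

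\textbf{Bias term.} For $s\in S_{j,p}$ we have $\tilde u_s\in B_j$. By Assumption~\ref{assump:orbit-interface}, $\abs{u_s-\tilde u_s}\le\eta$, so $\abs{u_s-u}\le \bar h+\eta\le h+\eta$ for every $u\in\overline B_j$. Since $r(u,p)=p\,g(p-u)$ and $\abs{g'}\le L_g$ by Assumption~\ref{assump:holder} (with $\beta\ge2$), the mean value theorem gives $\abs{r(u_s,p)-r(u,p)}\le p_{\max}L_g(h+\eta)$. This bound holds deterministically, uniformly in $p$, $u$ and $j$.

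\textbf{Stochastic term.} I condition on the index--pilot sequence and its length. By the independence clause of Assumption~\ref{assump:orbit-interface}, the $\ORBIT$-call noises $\xi_s$ are then i.i.d.\ from $P_\Xi$ and independent of everything that determines which calls land in $S_{j,p}$. Bin membership depends only on $\tilde u_s$. During the coarse phase the posted price is a deterministic function of the local visit counter, because the cycling through $\cG$ is deterministic. Hence $S_{j,p}$ is a deterministic function of the conditioned pilot stream, and the variables $p y_s-r(u_s,p)=p(\mathbf 1\{\xi_s\ge p-u_s\}-g(p-u_s))$, $s\in S_{j,p}$, are independent, mean zero, and bounded in $[-p_{\max},p_{\max}]$. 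Hoeffding's inequality then gives, with probability at least $1-2H^{-5}$,
\begin{align*}
\Big|\frac{1}{m_{\coarse}}\sum_{s\in S_{j,p}}(p y_s-r(u_s,p))\Big|\le p_{\max}\sqrt{\frac{10\log H}{m_{\coarse}}}\le p_{\max}\sqrt{10/m_0},
\end{align*}
using $m_{\coarse}\ge m_0\log(eH)\ge m_0\log H$. A union bound over $j\in[M]$ and $p\in\cG$ yields failure probability $2M\abs{\cG}H^{-5}$. This stochastic bound does not involve $u$, so it holds simultaneously for all $u\in\overline B_j$.

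Adding the two bounds gives \eqref{eq:grid-mean-concentration} with $C_{\rm mean}=\sqrt{10}+1$. The main obstacle is the conditioning argument in the stochastic step. One must check that the index set $S_{j,p}$ and the summand indices are not selected using the noises themselves, so that Hoeffding's inequality applies to a fixed index set. I expect to handle this by the observation above: under the conditioning, coarse-phase assignments are measurable with respect to the pilot stream alone.

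If one preferred not to rely on that determinism, a fallback is available. Index the noises by (bin, local visit number) and apply a martingale Azuma--Hoeffding bound over the local clock, with a union bound over the at most $H$ possible stopping lengths. This only changes constants.
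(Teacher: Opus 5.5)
Your proposal is correct and follows essentially the paper's route. You use the same split into a stochastic term and a bin/pilot bias term, bound the bias via the $L_g$-Lipschitzness of $g$ with $\abs{u-u_s}\leq h+\eta$, and justify concentration by noting that the coarse sampling times are deterministic given the pilot stream, followed by a union bound over the $M\abs{\cG}$ pairs. The only cosmetic difference is that the paper phrases the concentration step as Azuma--Hoeffding on a martingale filtration, whereas you apply plain Hoeffding to independent summands, which is equally valid under that conditioning.
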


Once the coarse phase in bin $j$ is complete, $\ORBIT$ stores the deterministic-tie-breaking grid maximizer
\begin{align*}
    \tilde p_j:=\min\bigl(\argmax_{p\in\cG}\widehat r_j(p)\bigr).
\end{align*}
This anchor is then fixed for the rest of the bin's lifetime. The desired localization event is
\begin{equation}\label{eq:coarse-event}
    \cE_{j,\mathsf{coarse}} := \left\{\abs{\tilde{p}_j - p^\ast(u)} \leq \rho_{\rm loc}/8,\quad \forall u \in \overline B_j \right\} .
\end{equation}

\begin{lemma}\label{lem:main-upper-coarse-compressed}
Assume that there exists an absolute constant $c_0\in(0,1)$ such that 
\begin{align*}
    \eta_{\rm grid} \leq c_0\min\bigg\{1,\, \rho_0^2,\, \frac{\sigma_r}{L_r}\bigg\},\quad 
    h,\eta \leq \frac{c_0\sigma_r\rho_{\rm loc}^{2}}{L_g p_{\max}},\quad
    m_0 \geq \frac{p_{\max}^{2}}{c_0\sigma_r^{2}\rho_{\rm loc}^{4}},
\end{align*}
where $\rho_{\rm loc}:=\sqrt{\eta_{\rm grid}}$. Then under~\eqref{eq:grid-mean-concentration},  event $\cE_{j,\coarse}$ holds for each bin $j$ that has completed its coarse phase; in particular, it holds for each bin that enters refinement.
\end{lemma}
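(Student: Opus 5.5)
We fix a bin $j$ that has completed its coarse phase and work on the event \eqref{eq:grid-mean-concentration}. Write $\epsilon:=C_{\rm mean}p_{\max}(m_0^{-1/2}+L_g(h+\eta))$ for the uniform error in Lemma~\ref{lem:mean-concentration}. The argument is a standard ``approximate argmax of a quadratically growing function'' argument. For each fixed $u\in\overline B_j$, we compare the empirical maximizer $\tilde p_j$ with the grid point nearest $p^\ast(u)$, and then use the quadratic lower growth in Assumption~\ref{assump:rho} to convert a small revenue gap into a small price gap.

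\textbf{Step 1: the parameter conditions make $\epsilon$ small.} The condition on $m_0$ gives $p_{\max}m_0^{-1/2}\leq \sqrt{c_0}\,\sigma_r\rho_{\rm loc}^2$. The condition on $h,\eta$ gives $p_{\max}L_g(h+\eta)\leq 2c_0\sigma_r\rho_{\rm loc}^2$. Hence $\epsilon\leq C_{\rm mean}(\sqrt{c_0}+2c_0)\sigma_r\rho_{\rm loc}^2$, which is a small multiple of $\sigma_r\rho_{\rm loc}^2$ once $c_0$ is chosen small relative to the universal constant $C_{\rm mean}$.

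\textbf{Step 2: a near-optimal grid point exists.} Fix $u\in\overline B_j$. Since $p^\ast(u)\in(0,p_{\max})$, there is $q\in\cG$ with $\abs{q-p^\ast(u)}\leq\eta_{\rm grid}$. The grid includes $p_{\max}$, so consecutive spacing is at most $\eta_{\rm grid}$. By the upper bound in Assumption~\ref{assump:rho},
\[
r(u,p^\ast(u))-r(u,q)\leq \tfrac{L_r}{2}\eta_{\rm grid}^2 .
\]
Since $\eta_{\rm grid}\leq c_0\sigma_r/L_r$ and $\eta_{\rm grid}=\rho_{\rm loc}^2$, this is at most $\tfrac{c_0}{2}\sigma_r\rho_{\rm loc}^2$.

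\textbf{Step 3: chaining through the empirical maximizer.} Because $\tilde p_j$ maximizes $\widehat r_j$ over $\cG$, applying \eqref{eq:grid-mean-concentration} at both $\tilde p_j$ and $q$ for the same $u$ gives
\[
r(u,\tilde p_j)\geq \widehat r_j(\tilde p_j)-\epsilon\geq \widehat r_j(q)-\epsilon\geq r(u,q)-2\epsilon .
\]
Therefore
\[
r(u,p^\ast(u))-r(u,\tilde p_j)\leq 2\epsilon+\tfrac{c_0}{2}\sigma_r\rho_{\rm loc}^2 .
\]
The lower quadratic growth bound then yields
\[
\abs{\tilde p_j-p^\ast(u)}^2\leq \frac{2}{\sigma_r}\Big(2\epsilon+\tfrac{c_0}{2}\sigma_r\rho_{\rm loc}^2\Big)\leq C'(\sqrt{c_0}+c_0)\rho_{\rm loc}^2 .
\]
Choosing $c_0$ small enough, depending only on $C_{\rm mean}$, makes the right-hand side at most $\rho_{\rm loc}^2/64$. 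This gives $\abs{\tilde p_j-p^\ast(u)}\leq\rho_{\rm loc}/8$.

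\textbf{Step 4: uniformity in $u$.} The bound in Step 3 holds for every $u\in\overline B_j$ simultaneously, because \eqref{eq:grid-mean-concentration} is uniform over $u\in\overline B_j$ and $\tilde p_j$ does not depend on $u$. This is exactly the event $\cE_{j,\coarse}$. Any bin that enters refinement has by definition completed its coarse phase, so the event holds for it as well.

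\textbf{Main obstacle.} The one place needing care is the constant bookkeeping in Step 3. We must check that every contribution scales as $\sigma_r\rho_{\rm loc}^2$ with a factor that $c_0$ alone can make small, uniformly over $C_{\rm mean}$. The $m_0$ term enters only through a square root, giving $\sqrt{c_0}$ rather than $c_0$, but this is still fine. The condition $\eta_{\rm grid}\leq\rho_0^2$ is not needed here; it is reserved for later use of Lemma~\ref{lem:rho-exists}.
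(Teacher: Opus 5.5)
Your proof is correct and is essentially the paper's argument: the paper phrases it as a separation (every grid point farther than $\rho_{\rm loc}/8$ from $p^\ast(u)$ has empirical revenue strictly below that of the grid point nearest $p^\ast(u)$, so it cannot be the argmax), whereas you chain directly through the empirical maximizer and then invert the lower quadratic growth. The ingredients are identical: uniform concentration at two grid points, upper growth at the nearest grid point, lower growth to turn a revenue gap into a price gap, and $c_0$ small relative to $C_{\rm mean}$. Your remark that the condition $\eta_{\rm grid}\le c_0\rho_0^2$ is not used in this lemma is also accurate.
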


\subsection{Local polynomial refinement}\label{subsec:refinement}

Only after a bin has completed coarse localization do we define its local search region. The anchor $\tilde p_j$ identifies the price level around which refinement is safe. The local refinement step then learns how the oracle price varies with the scalar index inside the bin.

For $q := \floor{\beta-1}$, define the degree-$q$ monomial basis
\begin{align*}
\psi(z) := (1,z,\dots,z^q)^\top \in \R^{q+1}, \qquad z\in[-1,1].
\end{align*}
When $\beta-1$ is an integer, this degree is one higher than the Taylor degree used by the comparator in Lemma~\ref{lem:local-policy-approximation}; the extra coefficient is harmless and set to zero in the approximation argument.
For each bin $j$ and coefficient vector $a\in\R^{q+1}$, let us define the local polynomial price map
\begin{equation}\label{eq:polynomial-price}
\frakq^j(u;a) = a^\top \psi\left(\frac{2(u-\bar{u}_j)}{\bar h}\right), \qquad u \in \overline B_j.
\end{equation}
When bin $j$ first enters refinement, $\ORBIT$ constructs the coefficient set
\begin{align}\label{eq:trust-region-main}
\cA_j := \Big\{a \in \R^{q+1} : \sup_{\abs{z}\leq 1}\abs{a^\top\psi(z)-\tilde p_j}\leq \rho_{\rm loc}/4\Big\},
\qquad
\cQ_j^{\mathsf{local}} := \{\frakq^j(\cdot;a): a \in \cA_j\}.
\end{align}
We call $\cA_j$ the bin's trust region. It is centered at the coarse anchor $\tilde p_j$ and restricts all candidate polynomial prices to lie within $\rho_{\rm loc}/4$ of that anchor throughout the bin. Once the anchor is localized, this band keeps the refinement prices inside the local-concavity neighborhood from Lemma~\ref{lem:rho-exists}. Consequently, negative revenue is convex in price in the relevant local range. Since $a\mapsto \frakq^j(\tilde u;a)$ is linear, the coefficient loss $a\mapsto -r(u,\frakq^j(\tilde u;a))$ is convex over $\cA_j$. The same localization also makes the projection onto $[0,p_{\max}]$ inactive on the good event.

The same local polynomial class also has the right approximation power. The next lemma follows from the $(\beta-1)$-smoothness of $p^\ast$ established in Lemma~\ref{lem:pstar-regularity}.

\begin{lemma}\label{lem:local-policy-approximation}
Under event $\cE_{j,\coarse}$, assume that $\rho_{\rm loc}\leq\rho_0$ and $h^{\beta-1}\leq \rho_{\rm loc}/(8L_p)$.  Then there exist $a^\mathrm{cmp}_j \in \cA_j$ and constant $C_{\mathrm{approx}}>0$ depending only on $\beta,L_p,L_r$ such that for any $\tilde{u},u \in \cU$ with $\tilde{u} \in B_j$ and $\abs{u -\tilde{u}} \leq \eta\leq 1$,
\begin{align*}
    r(u, p^\ast(u)) - r\big({u} ,\frakq^j(\tilde{u};{a^\mathrm{cmp}_j})\big) \leq C_{\mathrm{approx}} \big(\eta^2 +h^{2\beta -2}\big).
\end{align*}
\end{lemma}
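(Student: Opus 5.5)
The comparator will be the Taylor polynomial of $p^\ast$ at the bin midpoint $\bar u_j$, rewritten in the rescaled variable $z=2(u-\bar u_j)/\bar h\in[-1,1]$. Concretely, set $k_0:=\lceil\beta-1\rceil-1\le q$, define $a^{\rm cmp}_j$ by $(a^{\rm cmp}_j)_k=(p^\ast)^{(k)}(\bar u_j)(\bar h/2)^k/k!$ for $k\le k_0$, and set the remaining coordinates to zero. With this choice, $\frakq^j(u;a^{\rm cmp}_j)$ is exactly the order-$k_0$ Taylor expansion of $p^\ast$ at $\bar u_j$. The Taylor remainder bound stated after Lemma~\ref{lem:pstar-regularity} then gives, for all $u\in\overline B_j$,
\begin{align*}
\abs{\frakq^j(u;a^{\rm cmp}_j)-p^\ast(u)}\le L_p(\bar h/2)^{\beta-1}\le L_p h^{\beta-1}\le \rho_{\rm loc}/8 .
\end{align*}

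\textbf{Membership in $\cA_j$.} For any $z\in[-1,1]$, let $u=\bar u_j+z\bar h/2\in\overline B_j$. On $\cE_{j,\coarse}$ we have $\abs{\tilde p_j-p^\ast(u)}\le\rho_{\rm loc}/8$. Combining this with the display above gives $\abs{(a^{\rm cmp}_j)^\top\psi(z)-\tilde p_j}\le\rho_{\rm loc}/4$, so $a^{\rm cmp}_j\in\cA_j$.

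\textbf{Revenue gap.} Fix $\tilde u\in B_j$ and $u\in\cU$ with $\abs{u-\tilde u}\le\eta$, and write $\hat p:=\frakq^j(\tilde u;a^{\rm cmp}_j)$. The upper quadratic growth bound of Assumption~\ref{assump:rho} gives
\begin{align*}
r(u,p^\ast(u))-r(u,\hat p)\le \tfrac{L_r}{2}\abs{\hat p-p^\ast(u)}^2 .
\end{align*}
This requires $\hat p\in[0,p_{\max}]$. That holds because $\hat p$ lies within $\rho_{\rm loc}/4$ of $\tilde p_j$, which lies within $\rho_{\rm loc}/8$ of $p^\ast(\tilde u)$, and Lemma~\ref{lem:rho-exists} with $\rho_{\rm loc}\le\rho_0$ keeps the resulting $\rho_0$-neighborhood inside $(0,p_{\max})$. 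Next, decompose
\begin{align*}
\abs{\hat p-p^\ast(u)}\le\abs{\hat p-p^\ast(\tilde u)}+\abs{p^\ast(\tilde u)-p^\ast(u)}\le L_ph^{\beta-1}+L_p\eta .
\end{align*}
Here the first term uses the Taylor bound at $\tilde u\in\overline B_j$, and the second uses the Lipschitz part of Lemma~\ref{lem:pstar-regularity}. Squaring with $(x+y)^2\le 2x^2+2y^2$ yields the claim with $C_{\rm approx}=L_rL_p^2$.

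\textbf{Main obstacle.} The only real care is matching the degree conventions. The remainder bound uses Taylor degree $\lceil\beta-1\rceil-1$, which equals $q$ when $\beta-1$ is not an integer and $q-1$ when it is; this is why the top coefficient is padded with zero in the integer case. One must also keep track of the rescaling $(\bar h/2)^k$ so that the polynomial in $z$ reproduces the Taylor expansion in $u$ exactly. Since $C_{\rm approx}$ is allowed to depend on $L_p$ and $L_r$, no further constant bookkeeping is needed.
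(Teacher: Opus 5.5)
Your proposal is correct and follows the paper's proof essentially step for step. It uses the same midpoint Taylor comparator with the $(\bar h/2)^k$ rescaling and zero padding in the integer case, the same triangle-inequality check of membership in $\cA_j$ on $\cE_{j,\coarse}$, and the same split $L_p h^{\beta-1}+L_p\eta$ fed into the upper quadratic-growth bound after Lemma~\ref{lem:rho-exists} certifies that the comparator price is interior (your remainder $L_p(\bar h/2)^{\beta-1}$ is slightly sharper than the paper's $L_p\bar h^{\beta-1}$, which is immaterial).
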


With the bin partition, price grid, and trust-region class now in place, Algorithm~\ref{alg:oplcb} provides the operational $\ORBIT$ policy. The algorithm keeps a separate local state for each bin. During coarse localization, this state consists of a visit counter and empirical revenue means over the price grid. When a bin first enters refinement, $\ORBIT$ stores the selected anchor price, constructs the trust region $\cA_j$, and initializes a fresh bin-specific raw refinement generator with an empty ordered action-feedback history. Such bin-specific generator is an online stateful procedure: after receiving $\cA_j$, it returns one coefficient vector at each later visit to bin $j$ based only on that bin's ordered raw history and fresh internal randomization. It is never given the eventual number $n_j$ of refinement visits. We thus use \emph{anytime} to mean a prefix-valid, stoppable generator at the bin level. The realized value of $n_j$ is used below only as an ex-post analytical horizon. Appendix~\ref{app:raw-bco-wrapper} then explains how to build such raw anytime generator from horizon-dependent normalized BCO primitives by normalization and the doubling trick. Consequently, the refinement stage remains local to each bin, and each active bin can be analyzed as a separate coefficient-learning problem over its own trust region.

Operationally, Algorithm~\ref{alg:oplcb} is a stateful pricing module that owns the pricing and update step on each $\ORBIT$ call. On an outer round $t$, after the outer policy observes $\mc_t$ and computes a scalar pilot $\tilde u_t$, a call to $\ORBIT$ executes the appropriate branch of Algorithm~\ref{alg:oplcb}: the $\ORBIT$-selected price is posted, the resulting purchase outcome is fed back to the same $\ORBIT$ copy, and only $\ORBIT$'s local bin state is updated. In Algorithm~\ref{alg:oplcb}, index $s$ represents the re-indexed count of calls to this module. No $\ORBIT$-phase price, outcome, or internal refinement randomization is used by the pilot estimators in the constructions below.

\begin{algorithm}[t]
\caption{\textbf{OR}acle-price learning with \textbf{B}inning and trust-reg\textbf{I}on refinemen\textbf{T} ($\ORBIT$)}\label{alg:oplcb}
\begin{algorithmic}[1]
\State \textbf{Input:} structural interval $\cU$, smoothness order $\beta$, target bin width $h$, upper pilot-input budget $H$, coarse sampling constant $m_0>0$, and grid spacing $\eta_{\mathrm{grid}}>0$.
\Statex \hspace{\algorithmicindent}Here, $H$ is an upper bound used only for logarithmic tuning; the subroutine may be stopped after any number of scalar-pilot calls.
\Statex \hspace{\algorithmicindent}\textbf{Refinement generator:} $\ORBIT$-compatible raw anytime refinement generator $\cB_{\rm raw}^\bco(H)$; each bin receives a fresh copy that is not told its eventual number of refinement visits.
\State Construct bins $\{B_j\}_{j=1}^M$ by~\eqref{eq:bin-construction}, price grid $\cG$ by~\eqref{eq:grid-price-set}, localization scale $\rho_{\rm loc}:=\sqrt{\eta_{\mathrm{grid}}}$, and $m_{\coarse}:=\lceil m_0\log(eH)\rceil$.
\State Initialize, for each $j\in[M]$, the local variables
\begin{align*}
    \cD_j^\mathsf{bco}=\varnothing,\quad \tau_j=0,\quad N_j(p)=\widehat r_j(p)=0\quad \forall p\in\cG.
\end{align*}
\State \textbf{Subroutine input (arriving over time):} scalar pilot states generated by an outer policy after observing contexts.
\For{each scalar-pilot call $s=1,2,\ldots$ up to $H$ calls, before the subroutine is stopped}
    \State Receive the scalar pilot state $\tilde u_s$.
    \State Let $j_s$ be the unique index such that $\tilde u_s\in B_{j_s}$; set $\tau_{j_s}\leftarrow \tau_{j_s}+1$.
    \If{$\tau_{j_s}\leq \abs{\cG}m_{\coarse}$} \Comment{Coarse localization phase in bin $j_s$.}
        \State Set $m:=1+\lfloor(\tau_{j_s}-1)/m_{\coarse}\rfloor$; post $p_s:=$ the $m$th element of $\cG$ and observe $y_s$.
        \State Update $N_{j_s}(p_s)\leftarrow N_{j_s}(p_s)+1$ and
        \begin{align*}
        \widehat r_{j_s}(p_s)\leftarrow
        \frac{1}{N_{j_s}(p_s)}p_s y_s+\frac{N_{j_s}(p_s)-1}{N_{j_s}(p_s)}\widehat r_{j_s}(p_s).
        \end{align*}
    \Else \Comment{Local refinement phase in bin $j_s$.}
        \State Set $k_s:=\tau_{j_s}-\abs{\cG}m_{\coarse}$ for the local refinement counter in bin $j_s$.
        \If{$k_s=1$}
            \State Set and store $\tilde p_{j_s}:=\min\bigl(\argmax_{p\in\cG}\widehat r_{j_s}(p)\bigr)$ with deterministic tie-breaking; construct and store $\cA_{j_s}$ according to~\eqref{eq:trust-region-main}; initialize a fresh copy $\cB_{j_s}^{\bco}$ of $\cB_{\rm raw}^\bco(H)$ on action set $\cA_{j_s}$ with fresh internal randomization.
        \EndIf
        \State Feed $\cD_{j_s}^{\bco}$ to the bin-specific generator copy $\cB_{j_s}^{\bco}$ to obtain action $a_{k_s}^{j_s}\in\cA_{j_s}$.
        \State Let $\bar p_s:=\frakq^{j_s}(\tilde u_s;a_{k_s}^{j_s})$ according to~\eqref{eq:polynomial-price}; post $p_s:=\proj_{[0,p_{\max}]}(\bar p_s)$ and observe $y_s$.
        \State Construct the raw feedback $\ell_{k_s}^{j_s}:=-p_s y_s$ and update the ordered history by appending the new pair: $\cD_{j_s}^{\bco}\leftarrow\big(\cD_{j_s}^{\bco},(a_{k_s}^{j_s},\ell_{k_s}^{j_s})\big)$.
    \EndIf
\EndFor
\end{algorithmic}
\end{algorithm}

\subsection{Approximation and raw refinement analysis}\label{subsec:refinement-analysis}

For each bin $j$, let $\cT_j^{\refine} = \{s_{j,1}<\cdots<s_{j,n_j}\}\subset[N]$ be the random $\ORBIT$-call index set of the refinement phase in this bin; if the bin never enters refinement, set $\cT_j^{\refine}=\varnothing$ and $n_j=0$. On the good event where projection is inactive, any binwise refinement policy that chooses coefficients $a_s\in\cA_j$ satisfies the following learning--approximation decomposition
\begin{equation}\label{eq:refined-regret-decomposition}
\begin{aligned}
    \cR^\refine_j &:= \sum\nolimits_{s \in \cT_j^\refine} r(u_s,p^\ast(u_s)) - r(u_s,\frakq^j(\tilde{u}_s;a_s)) \\
    &\leq \underbrace{ \min_{a\in \cA_j}\sum\nolimits_{s \in \cT_j^\refine} r(u_s,p^\ast(u_s)) - r(u_s,\frakq^j(\tilde{u}_s;a))}_{:= \cR_j^\mathrm{approx}}\\
    &\quad  +\underbrace{\max_{a\in \cA_j}\sum\nolimits_{s \in \cT_j^\refine} r(u_s,\frakq^j(\tilde{u}_s; a)) - r(u_s,\frakq^j(\tilde{u}_s;a_s))}_{:= \cR_j^\mathrm{learn}}.
\end{aligned}
\end{equation}
The approximation term above can be controlled by Lemma~\ref{lem:local-policy-approximation}.

\begin{proposition}\label{prop:regret-approx-j} Assume that Algorithm~\ref{alg:oplcb} is run with upper pilot-input budget $H$ on a realized stream of $N\leq H$ pilot states $\{\tilde u_s\}_{s=1}^N$, and the associated index--pilot sequence $\{(u_s,\tilde u_s)\}_{s=1}^N$ satisfies Assumption~\ref{assump:orbit-interface}. Then under event $\cE_{j,\coarse}$ and the conditions of Lemma~\ref{lem:local-policy-approximation}, we have that for each bin $j$ with nonempty $\cT_{j}^\refine$, 
\begin{align*}
    \cR^\mathrm{approx}_j \leq C_{\mathrm{approx}} n_j \big(\eta^2+ h^{2\beta-2}\big),
\end{align*}
where $C_{\mathrm{approx}}>0$ is the constant from Lemma~\ref{lem:local-policy-approximation}.
\end{proposition}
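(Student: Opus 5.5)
The plan is to bound the minimum in the definition of $\cR_j^{\mathrm{approx}}$ by its value at one fixed feasible comparator. Lemma~\ref{lem:local-policy-approximation} supplies exactly such a comparator: the vector $a^{\mathrm{cmp}}_j\in\cA_j$. The proof then reduces to a termwise application of that lemma.

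\textbf{Step 1 (choose the comparator).} Work on $\cE_{j,\coarse}$ and under the conditions of Lemma~\ref{lem:local-policy-approximation}, namely $\rho_{\rm loc}\le\rho_0$, $h^{\beta-1}\le\rho_{\rm loc}/(8L_p)$ and $\eta\le 1$. That lemma then gives a vector $a^{\mathrm{cmp}}_j\in\cA_j$. Two features matter here:
\begin{itemize}
\item $a^{\mathrm{cmp}}_j$ depends only on the bin, through $\tilde p_j$ and $p^\ast$ on $\overline B_j$;
\item it does not depend on the particular refinement call.
\end{itemize}
Since $a^{\mathrm{cmp}}_j\in\cA_j$, we have
\begin{align*}
\cR_j^{\mathrm{approx}}\le \sum\nolimits_{s\in\cT_j^\refine}\Big[r(u_s,p^\ast(u_s))-r\big(u_s,\frakq^j(\tilde u_s;a^{\mathrm{cmp}}_j)\big)\Big].
\end{align*}

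\textbf{Step 2 (termwise bound).} Fix $s\in\cT_j^\refine$. By the bin assignment in Algorithm~\ref{alg:oplcb}, $\tilde u_s\in B_{j_s}=B_j$. By Assumption~\ref{assump:orbit-interface}, $u_s,\tilde u_s\in\cU$ and $\abs{u_s-\tilde u_s}\le\eta\le 1/2\le1$. So the hypotheses of Lemma~\ref{lem:local-policy-approximation} hold with $(\tilde u,u)=(\tilde u_s,u_s)$, and each summand is at most $C_{\mathrm{approx}}(\eta^2+h^{2\beta-2})$. Summing over the $n_j=\abs{\cT_j^\refine}$ indices gives the claim.

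\textbf{Main obstacle.} The only delicate point is measurability. The set $\cT_j^\refine$, the count $n_j$ and the anchor $\tilde p_j$ are random, and the comparator is defined through $\tilde p_j$. The bound is nonetheless deterministic on the event $\cE_{j,\coarse}$, because Lemma~\ref{lem:local-policy-approximation} holds for every $(\tilde u,u)$ pair at once once the event holds, and we apply it pathwise. No concentration or independence argument is needed beyond the pilot-accuracy part of the interface. The minimum is attained, or at least bounded by any feasible point, since $\cA_j$ is nonempty (it contains $a^{\mathrm{cmp}}_j$). Projection onto $[0,p_{\max}]$ plays no role: $\cR^{\mathrm{approx}}_j$ is defined through the unprojected prices $\frakq^j$, and these stay in $(0,p_{\max})$ on the good event anyway.
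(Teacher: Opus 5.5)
Your proposal is correct and follows the paper's proof: evaluate the minimum defining $\cR_j^{\mathrm{approx}}$ at the bin-level comparator $a_j^{\mathrm{cmp}}\in\cA_j$ from Lemma~\ref{lem:local-policy-approximation}, apply that lemma termwise to each pair $(\tilde u_s,u_s)$ with $\tilde u_s\in B_j$ and $\abs{u_s-\tilde u_s}\le\eta$, and sum over the $n_j$ refinement rounds. Your check of the lemma's hypotheses via the bin assignment and the pilot-accuracy part of the interface, together with the observation that the bound holds pathwise on $\cE_{j,\coarse}$, is exactly what is needed.
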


\textit{Proof of Proposition~\ref{prop:regret-approx-j}}. 
On event $\cE_{j,\coarse}$, Lemma~\ref{lem:local-policy-approximation} gives a comparator $a_j^{\rm cmp}\in\cA_j$ whose per-round approximation loss is at most $C_{\rm approx}(\eta^2+h^{2\beta-2})$ for each refinement round in bin $j$. Since $\cR_j^{\rm approx}$ is the minimum over $a\in\cA_j$, it is no larger than the value at this comparator. Summing over the $n_j$ refinement rounds establishes the claim. This completes the proof of Proposition~\ref{prop:regret-approx-j}.

It remains to control $\cR_j^{\mathrm{learn}}$. After coarse localization, each bin can be viewed as a raw convex-bandit problem over the coefficient vector. At the $k$th refinement visit of bin $j$, the raw feedback passed to the refinement generator is $\ell_k^j:=-p_{s_{j,k}}y_{s_{j,k}}$. On the good coarse-localization event, the projection in Algorithm~\ref{alg:oplcb} is inactive, so we have that $p_{s_{j,k}}=\frakq^j(\tilde u_{s_{j,k}};a_k^j)$.

\subsubsection{The induced raw refinement problem.}
For clarity, we use the following generic convex-bandit notation only to express the localized coefficient-learning problem induced by a bin.

\begin{definition}[BCO instance]\label{def:bco-instance}
Fix an ambient dimensionality $d_b$, a background $\sigma$-algebra $\cH$, a compact convex action set $\cA\subset\R^{d_b}$ with nonempty interior, and a sequence of bounded convex loss functions $\{L_k\}_{k=1}^n$, where each $L_k:\cA\to\R$ is $\cH$-measurable. The associated $n$-step BCO instance is the following interaction between a learner and an environment.

Starting from the empty ordered history $\cD_0^{\bco}=\varnothing$, at each step $k\in[n]$:
\begin{enumerate}
    \item[1)] The learner selects an action $a_k\in\cA$ based on the action set, the ordered history $\cD_{k-1}^{\bco}$, and fresh internal randomization independent of the environment. The background $\sigma$-algebra $\cH$ is used to define the conditional loss sequence and the unbiasedness requirement; it is not assumed to reveal the loss functions to the learner.
    \item[2)] The environment incurs the loss $L_k(a_k)$.
    \item[3)] The learner observes a noisy bandit feedback $\ell_k$ satisfying that 
    \begin{equation}\label{eq:bco-noise-requirement}
    \E\left[\ell_k \mid \sigma\bigl(\cH,a_1,\ell_1,\dots,a_{k-1},\ell_{k-1},a_k\bigr)\right] = L_k(a_k),
    \end{equation}
    and updates the ordered history to $\cD_k^{\bco}:=((a_1,\ell_1),\ldots,(a_k,\ell_k))$.
\end{enumerate}
The regret of a BCO policy $\pi^{\bco}$ over this instance is given by 
\begin{align*}
\cR^{\bco}(n) := \sum_{k=1}^n L_k(a_k) - \min_{a\in\cA}\sum_{k=1}^n L_k(a).
\end{align*}
\end{definition}

The following $\sigma$-algebra is an analytical conditioning device; it is not information revealed to the $\ORBIT$ learner. For a refinement bin $j$ with $\cT_j^{\refine}=\{s_{j,1}<\cdots<s_{j,n_j}\}$, let us define
\begin{equation}\label{eq:def-Hj}
\cH_j:= \sigma\Big( \{(u_s,\tilde u_s)\}_{s=1}^N, \{(p_s,y_s)\}_{s\in \cT_j^{\mathrm{coarse}}} \Big)
\end{equation}
and for each $k\in[n_j]$,
\begin{equation}\label{eq:induced-bco-loss}
L_k^j(a) := -r\bigl(u_{s_{j,k}},\frakq^j(\tilde u_{s_{j,k}};a)\bigr),
\qquad a\in\cA_j.
\end{equation}
By construction, $\cA_j$ is compact, convex, and has nonempty interior. On the good coarse-localization event, the trust region keeps the candidate polynomial prices in the local-concavity neighborhood of Lemma~\ref{lem:rho-exists}, so the projection step is inactive and each $L_k^j$ is convex in the coefficient vector. The sequence $\{L_k^j\}_{k=1}^{n_j}$ is $\cH_j$-measurable by definition, and the raw feedback $\ell_k^j$ is conditionally unbiased since the action is chosen before the current demand noise is observed and that noise is independent of the conditioning history.

\begin{proposition}[Binwise refinement as a raw BCO instance]\label{prop:induced-bco-equivalence}
Assume that Algorithm~\ref{alg:oplcb} is run with upper pilot-input budget $H$ on a realized stream of $N\leq H$ pilot states $\{\tilde u_s\}_{s=1}^N$, and the associated index--pilot sequence $\{(u_s,\tilde u_s)\}_{s=1}^N$ satisfies Assumption~\ref{assump:orbit-interface}. Assume further that event $\cE_{j,\coarse}$ holds for a bin $j$ with nonempty $\cT_j^{\refine}$, $ \rho_{\rm loc}\leq \rho_0$, and $L_p\eta\leq \rho_{\rm loc}/8$.
Then conditional on $\cH_j$ defined in~\eqref{eq:def-Hj}, the bin-$j$ refinement rounds form an ex-post $n_j$-step raw BCO instance with dimensionality $q+1$, action set $\cA_j$, and loss sequence $\{L_k^j\}_{k=1}^{n_j}$ defined in~\eqref{eq:induced-bco-loss}. The raw feedback $\ell_k^j=-p_{s_{j,k}}y_{s_{j,k}}$ is conditionally unbiased for $L_k^j(a_k^j)$, and under the identification $a\leftrightarrow\frakq^j(\cdot;a)$, we have that 
\begin{align*}
    \cR_j^{\mathrm{learn}}=\cR_j^{\bco}(n_j).
\end{align*}
In particular, the bin-specific raw refinement generator used by Algorithm~\ref{alg:oplcb}, viewed over its realized $n_j$ visits, is simply a policy for this ex-post induced instance. Here, we do not require the generator to know $n_j$ when it is initialized.
\end{proposition}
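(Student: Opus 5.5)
I would prove Proposition~\ref{prop:induced-bco-equivalence} by checking, one at a time, the ingredients of Definition~\ref{def:bco-instance} for the bin-$j$ refinement rounds, all conditionally on $\cH_j$. The last step is to identify the two regret quantities.

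\textbf{Structural properties.}
\begin{itemize}
\item[1)] The set $\cA_j$ is convex and compact: it is an intersection, over $\abs{z}\leq 1$, of slabs $\{a:\abs{a^\top\psi(z)-\tilde p_j}\leq\rho_{\rm loc}/4\}$, and it is bounded because the monomials are linearly independent on $[-1,1]$.
\item[2)] $\cA_j$ has nonempty interior, since it contains a small ball around $(\tilde p_j,0,\dots,0)$.
\item[3)] $\tilde p_j$ is a function of the coarse-phase data in bin $j$, so it is $\cH_j$-measurable. The same holds for $\cA_j$ and for each loss $L_k^j$ in~\eqref{eq:induced-bco-loss}, because these depend only on $(u_s,\tilde u_s)$ and $\tilde p_j$.
\end{itemize}

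\textbf{Convexity and inactive projection.} For $a\in\cA_j$ and $\tilde u_{s_{j,k}}\in B_j$, the trust-region definition gives $\abs{\frakq^j(\tilde u_{s_{j,k}};a)-\tilde p_j}\leq\rho_{\rm loc}/4$. On $\cE_{j,\coarse}$, taking $u'$ as the point of $\overline B_j$ nearest to $u_{s_{j,k}}$, we have $\abs{\tilde p_j-p^\ast(u')}\leq\rho_{\rm loc}/8$. Lemma~\ref{lem:pstar-regularity} together with $\abs{u_{s_{j,k}}-u'}\leq\eta$ gives $\abs{p^\ast(u')-p^\ast(u_{s_{j,k}})}\leq L_p\eta\leq\rho_{\rm loc}/8$. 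Chaining these,
\[
\abs{\frakq^j(\tilde u_{s_{j,k}};a)-p^\ast(u_{s_{j,k}})}\leq \rho_{\rm loc}/2\leq\rho_0 .
\]
Lemma~\ref{lem:rho-exists} then places the price in $(0,p_{\max})$, so the projection is inactive. The same lemma shows that $p\mapsto -r(u_{s_{j,k}},p)$ is convex on this interval. Since $a\mapsto\frakq^j(\tilde u;a)$ is affine, each $L_k^j$ is convex on $\cA_j$. Each $L_k^j$ is also bounded by $p_{\max}$.

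\textbf{Unbiased feedback (the main obstacle).} I must show that $\E[\ell_k^j\mid\sigma(\cH_j,a_1,\ell_1,\dots,a_k)]=L_k^j(a_k^j)$. By Assumption~\ref{assump:orbit-interface}, conditional on the pilot stream, the noises in the $\ORBIT$ calls are i.i.d. and independent of the internal randomization. The conditioning $\sigma$-algebra is generated by three things:
\begin{itemize}
\item[1)] the pilot stream;
\item[2)] the noises of bin-$j$ coarse rounds;
\item[3)] the earlier refinement noises $\xi_{s_{j,1}},\dots,\xi_{s_{j,k-1}}$ of bin $j$, together with the generator's randomization.
\end{itemize}
The subtle point is that the call indices $s_{j,k}$ are random. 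I would handle this by noting that the bin assignment depends only on $\tilde u_s$, which is pilot measurable. Hence $\cT_j^{\rm coarse}$, $\cT_j^{\refine}$ and $n_j$ are all determined by the pilot stream, and so is the index $s_{j,k}$. The noise $\xi_{s_{j,k}}$ is therefore a fixed-index fresh draw, independent of everything in the conditioning set; other bins' data are not conditioned on and are independent anyway.

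It follows that
\[
\E[p_s y_s\mid\cdot]=p_s\,g(p_s-u_s)=r(u_s,p_s),
\]
with $p_s=\frakq^j(\tilde u_s;a_k^j)$ because the projection is inactive. This gives $\E[\ell_k^j\mid\cdot]=L_k^j(a_k^j)$. The learner requirement also holds: $a_k^j$ depends only on $\cA_j$, the history $\cD_{k-1}^{\bco}$, and fresh randomization, exactly as in Algorithm~\ref{alg:oplcb}. That is, the generator is a valid policy that does not need to know $n_j$.

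\textbf{Regret identity.} Substitute $L_k^j$ into $\cR^{\bco}(n_j)$. The term $-\min_a\sum_k L_k^j(a)$ equals $\max_a\sum_k r(u_{s_{j,k}},\frakq^j(\tilde u_{s_{j,k}};a))$. Subtracting $\sum_k r(u_{s_{j,k}},\frakq^j(\tilde u_{s_{j,k}};a_k^j))$ then reproduces $\cR_j^{\rm learn}$ term by term.
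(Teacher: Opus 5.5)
Your proposal is correct and follows essentially the same route as the paper's proof: you check that $\cA_j$ is convex, compact, and has nonempty interior; you use the triangle-inequality chain on $\cE_{j,\coarse}$ to place every trust-region price within $\rho_{\rm loc}/2\leq\rho_0$ of $p^\ast(u_{s_{j,k}})$, so that Lemma~\ref{lem:rho-exists} gives both convexity of $L_k^j$ and an inactive projection; you derive conditional unbiasedness from the freshness of $\xi_{s_{j,k}}$ given the pilot-measurable refinement indices; and you obtain $\cR_j^{\rm learn}=\cR_j^{\bco}(n_j)$ by substitution. The only cosmetic difference is that you route the chain through the point of $\overline B_j$ nearest to $u_{s_{j,k}}$ rather than through $\tilde u_{s_{j,k}}$ itself, which yields the same bound.
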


Proposition~\ref{prop:induced-bco-equivalence} above reduces the local refinement stage to a raw convex-bandit problem in the coefficient space used by $\ORBIT$. The horizon $n_j$ in the proposition is an ex-post analytical horizon, not an input to Algorithm~\ref{alg:oplcb} or the binwise generator copy. Such distinction matters since the number of future visits to a bin is unknown when its refinement generator is initialized. The main algorithm interacts with the binwise refinement generator only through coefficient vectors $a_k^j\in\cA_j$ and raw feedback $\ell_k^j=-p_{s_{j,k}}y_{s_{j,k}}$; model objects are used only in the analysis. The regret analysis thus uses the following \textit{raw anytime generator} property. Appendix~\ref{app:raw-bco-wrapper} provides a concrete implementation of this property.

\begin{definition}[$\ORBIT$-compatible raw anytime refinement generator]\label{def:raw-refinement-generator}
A family of binwise raw refinement generators $\{\cB_{\rm raw}^\bco(H_{\rm alg}):H_{\rm alg}\geq1\}$ is called $\ORBIT$-compatible if it satisfies the following raw anytime contract. Consider any induced bin instance satisfying the conditions of Proposition~\ref{prop:induced-bco-equivalence} and that the realized number of refinement visits obeys $n_j\leq H_{\rm alg}$. A fresh copy of $\cB_{\rm raw}^\bco(H_{\rm alg})$ is initialized on the raw action set $\cA_j$ without knowing $n_j$. At each visit it is queried with its own ordered local history $\cD_{j,k-1}^{\bco}$ and updated only with the raw feedback $\ell_k^j=-p_{s_{j,k}}y_{s_{j,k}}$. The resulting actions are adapted to this local history and fresh internal randomization.

Further, for each $\sigma$-algebra $\cH$ with respect to which the induced losses $\{L_k^j\}_{k\leq n_j}$ and the realized visit count $n_j$ are fixed, and 
the future bin-$j$ feedback remains conditionally unbiased, we have that 
\begin{align*}
    \E\left[\cR_j^{\rm learn}\mid \cH\right] \leq C_{\bco}\sqrt{n_j}\,\mathrm{polylog}(H_{\rm alg},\beta).
\end{align*}
Equivalently, for each $\cH$-measurable event $\mathcal G$ on which the conditions of Proposition~\ref{prop:induced-bco-equivalence} are satisfied, it holds that 
\begin{align*}
    \E\left[\mathbf 1_{\mathcal G}\cR_j^{\rm learn}\mid \cH\right] \leq \mathbf 1_{\mathcal G}C_{\bco}\sqrt{n_j}\,\mathrm{polylog}(H_{\rm alg},\beta).
\end{align*}
The constant $C_{\bco}$ may depend on fixed structural constants and 
the chosen refinement-generator family, but not on $H_{\rm alg}$, $j$, or the realized horizon $n_j$. The global parameter $H_{\rm alg}$ may be used for harmless logarithmic tuning; it is not the binwise horizon.
\end{definition}

\paragraph{Implementation of the refinement generator.}
In our implementation, we exploit a simple combination of the doubling trick and the online Newton method of \citet{fokkema2024online}, which yields $\widetilde{\cO}(\mathrm{poly}(q)\sqrt{n})$ anytime regret for each binwise coefficient-learning problem. Appendix~\ref{app:lg-theorem} presents the detailed BCO oracle guarantee, and Appendix~\ref{app:proof-of-prop:scb} gives a short proof of the resulting anytime bound for completeness.

Such raw anytime contract is the only refinement property used below. $\ORBIT$ supplies the raw coefficient set and raw bandit feedback, and the generator returns the next coefficient vector in the same raw coordinate system. Appendix~\ref{app:raw-bco-wrapper} shows that the contract is implemented by an affine normalization of the raw coefficient set, a shift-and-scale of the bounded feedback, and a doubling wrapper around a horizon-dependent normalized BCO routine. Hence, the proof below is modular: it applies to any generator satisfying Definition~\ref{def:raw-refinement-generator} above; the implementation used in this paper is described in Appendix \ref{app:proof-of-prop:scb}.

\begin{lemma}\label{lem:main-upper-refine}
Assume that Algorithm~\ref{alg:oplcb} is run with upper pilot-input budget $H$ on a realized stream of $N\leq H$ pilot states $\{\tilde u_s\}_{s=1}^N$, and the associated index--pilot sequence $\{(u_s,\tilde u_s)\}_{s=1}^N$ satisfies Assumption~\ref{assump:orbit-interface}. Assume further that $\rho_{\rm loc}\leq\rho_0$ and $L_p\eta\leq\rho_{\rm loc}/8$, and use $\cB_{\rm raw}^\bco(H)$ from an $\ORBIT$-compatible raw anytime refinement family in the sense of Definition~\ref{def:raw-refinement-generator}. Then we have that for each bin $j$ with $n_j>0$,
\begin{align*}
\E\Big[\bm{1}_{\cE_{j,\mathsf{coarse}}}\,\cR_j^\mathrm{learn}\,\Big\lvert\, \{(u_s,\tilde u_s)\}_{s=1}^N \Big] \leq C_{\bco}\sqrt{n_j}\,\mathrm{polylog}(H,\beta).
\end{align*}
More generally, let $\cH_{\coarse}^{\rm all}$ be the $\sigma$-algebra generated by the associated index--pilot sequence and all coarse-phase observations from all bins. If $\mathcal G$ is any $\cH_{\coarse}^{\rm all}$-measurable event satisfying that $\mathcal G\subseteq\cE_{j,\coarse}$, it holds that 
\begin{align*}
    \E\Big[\bm{1}_{\mathcal G}\,\cR_j^\mathrm{learn}\,\Big\lvert\, \{(u_s,\tilde u_s)\}_{s=1}^N \Big] \leq C_{\bco}\sqrt{n_j}\,\mathrm{polylog}(H,\beta).
\end{align*}
For bins with $n_j=0$, we set $\cR_j^{\rm learn}=0$, so the same bound is trivial. Here, constant $C_{\bco}$ depends only on the fixed structural constants, and the expectation is taken with respect to the pricing, refinement-generator, and demand randomness conditional on the associated index--pilot sequence.
\end{lemma}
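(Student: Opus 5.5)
The plan is to reduce the lemma to the raw anytime contract of Definition~\ref{def:raw-refinement-generator}, applied with $\cH:=\cH_j$ from~\eqref{eq:def-Hj}, or more generally with a $\sigma$-algebra that also contains $\cH_{\coarse}^{\rm all}$. Afterwards we integrate out the coarse-phase randomness. The second display implies the first, since $\cE_{j,\coarse}$ is itself $\cH_{\coarse}^{\rm all}$-measurable: the anchor $\tilde p_j$ is a deterministic function of the bin-$j$ coarse observations, and $p^\ast(\cdot)$ on $\overline B_j$ is deterministic. So I prove the general statement with $\mathcal G\subseteq\cE_{j,\coarse}$, and then take $\mathcal G=\cE_{j,\coarse}$.

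\textbf{Step 1: choice of conditioning.} Let $\cH':=\sigma\big(\cH_{\coarse}^{\rm all}\big)\supseteq\cH_j$, which contains the index--pilot sequence and $N$. I need three facts.
\begin{enumerate}
\item[(i)] The realized set $\cT_j^{\refine}$ and the count $n_j$ are $\cH'$-measurable. This holds because bin assignment uses only $\tilde u_s$, and the coarse/refine switch for bin $j$ depends only on the visit counter $\tau_j$. So the indices $s_{j,k}$ are functions of the pilot stream and $N$.
\item[(ii)] The losses $L_k^j$ in~\eqref{eq:induced-bco-loss} are $\cH'$-measurable. They depend on $u_{s_{j,k}}$, $\tilde u_{s_{j,k}}$, and on $\cA_j$, which is built from $\tilde p_j$.
\item[(iii)] The feedback $\ell_k^j$ remains conditionally unbiased given $\sigma(\cH',a_1^j,\ell_1^j,\dots,a_k^j)$.
\end{enumerate}
For (iii), I will invoke Assumption~\ref{assump:orbit-interface}\,1). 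Conditional on the pilot stream and $N$, the noises on $\ORBIT$ calls are i.i.d.\ and independent of the generator randomization. The bin-$j$ refinement noise $\xi_{s_{j,k}}$ is a different index from all coarse-phase noises and all earlier bin-$j$ refinement noises. Refinement in other bins does not enter $\cH'$ at all. Hence $\E[y_{s_{j,k}}\mid\cdot]=g(p_{s_{j,k}}-u_{s_{j,k}})$ with $p_{s_{j,k}}$ determined by the conditioning. On $\mathcal G\subseteq\cE_{j,\coarse}$ the projection is inactive, so $\E[\ell_k^j\mid\cdot]=L_k^j(a_k^j)$.

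\textbf{Step 2: verify the Proposition~\ref{prop:induced-bco-equivalence} hypotheses on $\mathcal G$.} The hypotheses $\rho_{\rm loc}\le\rho_0$ and $L_p\eta\le\rho_{\rm loc}/8$ are assumed, and $\cE_{j,\coarse}$ holds on $\mathcal G$. So on $\mathcal G$ the refinement rounds form a valid raw BCO instance with convex losses and $\cR_j^{\rm learn}=\cR_j^{\bco}(n_j)$. Since $n_j\le N\le H$, the condition $n_j\le H_{\rm alg}=H$ holds. The equivalent form of the contract then gives
\[
\E\big[\mathbf 1_{\mathcal G}\cR_j^{\rm learn}\mid\cH'\big]\le\mathbf 1_{\mathcal G}\,C_{\bco}\sqrt{n_j}\,\mathrm{polylog}(H,\beta).
\]

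\textbf{Step 3: tower property.} Because $n_j$ is measurable with respect to the pilot stream and $\mathbf 1_{\mathcal G}\le1$, conditioning the display above on $\{(u_s,\tilde u_s)\}_{s=1}^N$ yields the claim. For $n_j=0$ the bound is trivial.

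\textbf{Main obstacle.} The delicate point is Step~1(iii). Refinement visits in other bins interleave with bin $j$ and use the same global generator-independence structure. Also, the trust region depends on coarse data that has been conditioned on. I would first argue, via a product-space representation with the noises indexed by $\ORBIT$-call number, that the history of the other bins is independent of the bin-$j$ refinement noise given $\cH'$. If needed, I would enlarge the conditioning to include the other bins' refinement histories, and check that unbiasedness still holds because their noises carry disjoint indices.
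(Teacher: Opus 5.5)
Your proposal is correct and follows the paper's proof essentially step by step. You condition on $\cH_{\coarse}^{\rm all}$, under which $n_j$, $\tilde p_j$, $\cA_j$, the bin-$j$ refinement times and the losses $L_k^j$ are fixed while the future bin-$j$ noises stay independent. You then apply the localized form of the raw anytime contract on $\mathcal G\subseteq\cE_{j,\coarse}$ with $H_{\rm alg}=H$, using $n_j\le N\le H$, and finish with the tower property down to the index--pilot sequence, taking $\mathcal G=\cE_{j,\coarse}$ for the first display. The fallback you mention of enlarging the conditioning by the other bins' refinement histories is not needed, since the unbiasedness requirement only involves $\cH_{\coarse}^{\rm all}$ together with the bin-$j$ history, exactly as you verified in Step 1(iii).
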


\subsection{Conditional $\ORBIT$ regret guarantee}\label{subsec:conditional-orbit-guarantee}

The conditional regret bound follows by summing the coarse cost, the raw refinement learning cost, and the local polynomial approximation cost.

\begin{theorem}[Conditional $\ORBIT$ guarantee]\label{thm:upper}
Assume that Assumptions~\ref{assump:bounded}--\ref{assump:rho} hold, and let $\rho_0$ be the local-concavity radius in Lemma~\ref{lem:rho-exists}. Run Algorithm~\ref{alg:oplcb} with upper pilot-input budget $H$ and generator $\cB_{\rm raw}^\bco(H)$ from an $\ORBIT$-compatible raw anytime refinement family in the sense of Definition~\ref{def:raw-refinement-generator}. Assume further that the subroutine is stopped after $N\leq H$ pilot inputs, and the resulting index--pilot sequence $\{(u_s,\tilde u_s)\}_{s=1}^N$ satisfies Assumption~\ref{assump:orbit-interface}. Let $\eta_{\rm grid}>0$ be a fixed grid spacing independent of $H$, and set $\rho_{\rm loc}:=\sqrt{\eta_{\rm grid}}$. Then there exists an absolute constant $c_0\in(0,1)$ such that if
\begin{align*}
    \eta_{\rm grid} \leq c_0\min\{1,\rho_0^2,\sigma_r/L_r\},\quad 
    h,\eta \leq \frac{c_0\sigma_r\rho_{\rm loc}^2}{L_g p_{\max}},\quad
    h^{\beta-1} \leq \frac{\rho_{\rm loc}}{8L_p}, \\
    L_p\eta\leq \frac{\rho_{\rm loc}}{8},\quad
    m_{\coarse}=\lceil m_0\log(eH)\rceil, \quad m_0\geq \frac{p_{\max}^2}{c_0\sigma_r^2\rho_{\rm loc}^4},
\end{align*}
Algorithm~\ref{alg:oplcb}, run with localization scale $\rho_{\rm loc}$ and trust-region half-width $\rho_{\rm loc}/4$, satisfies that 
\begin{align}\label{eq:orbit-thm-decompose}
     &\E\bigg[\sum_{s=1}^N \big[ r(u_s,p^\ast(u_s))-r(u_s,p_s)\big] \bigg\lvert \{(u_s,\tilde u_s)\}_{s=1}^N \bigg] \notag\\
     &\quad\leq C\frac{\abs{\cG}m_0\log(eH)}{h} +C_{\bco}\sqrt{\frac{N}{h}}\,\mathrm{polylog}(H,\beta) +C N\big(h^{2\beta-2}+\eta^2\big),
\end{align}
where constant $C$ depends only on the structural model constants and 
$\beta$, and $C_{\bco}$ is the constant in the raw anytime refinement guarantee. Since $\eta_{\rm grid}$ and $m_0$ are fixed structural choices, the first term is $\widetilde \cO(1/h)$. In particular, if $H=T$ and $N\leq T$, provided that the pilot accuracy $\eta$ satisfies the displayed smallness conditions, the choice of $h:=T^{-1/(4\beta-3)}$ yields that 
\begin{align*}
    \E\bigg[\sum_{s=1}^N \big[ r(u_s,p^\ast(u_s))-r(u_s,p_s)\big] \bigg\lvert \{(u_s,\tilde u_s)\}_{s=1}^N \bigg] =\widetilde{\cO}\Big(T^{\frac{2\beta-1}{4\beta-3}} + T\eta^2\Big).
\end{align*}
\end{theorem}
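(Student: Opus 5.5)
We split the conditional regret on a good event and its complement, and then split the good-event regret bin by bin into coarse, learning and approximation parts.

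\textbf{Good event and its complement.} Let $\cE_{\rm mean}$ be the event in Lemma~\ref{lem:mean-concentration}. It depends only on the pilot sequence and the coarse observations, so it is $\cH_{\coarse}^{\rm all}$-measurable, and it holds with conditional probability at least $1-CM\abs{\cG}H^{-5}$. On $\cE_{\rm mean}$, the stated conditions on $\eta_{\rm grid},h,\eta,m_0$ are exactly those of Lemma~\ref{lem:main-upper-coarse-compressed}, so $\cE_{j,\coarse}$ holds for every bin that enters refinement. On the complement we bound the per-round regret by $p_{\max}$, giving a contribution of at most $p_{\max}N\cdot CM\abs{\cG}H^{-5}\leq C\abs{\cG}/(hH^{3})$. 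This is negligible because $N\leq H$ and $M\leq C_{\cU}/h$.

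\textbf{Decomposition on the good event.} We write the total regret as $\sum_j(\cR_j^{\coarse}+\cR_j^{\refine})$.
\begin{itemize}
\item \emph{Coarse rounds.} Each bin spends at most $\abs{\cG}m_{\coarse}=\abs{\cG}\lceil m_0\log(eH)\rceil$ rounds in the coarse phase, each costing at most $p_{\max}$. Summing over the $M\leq C_{\cU}/h$ bins gives the first term $C\abs{\cG}m_0\log(eH)/h$.
\item \emph{Projection in refinement rounds.} On $\cE_{j,\coarse}$, every $a\in\cA_j$ yields $\abs{\frakq^j(\tilde u;a)-p^\ast(u)}\leq\rho_{\rm loc}/4+\rho_{\rm loc}/8<\rho_0$. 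By Lemma~\ref{lem:rho-exists} these prices lie in $(0,p_{\max})$, so the projection is inactive and $p_s=\frakq^j(\tilde u_s;a_s)$.
\item \emph{Refinement rounds.} Decomposition \eqref{eq:refined-regret-decomposition} then applies. Proposition~\ref{prop:regret-approx-j} gives $\cR_j^{\rm approx}\leq C_{\rm approx}n_j(\eta^2+h^{2\beta-2})$. Summing with $\sum_j n_j\leq N$ yields the third term.
\item \emph{Learning term.} Apply the second form of Lemma~\ref{lem:main-upper-refine} with $\mathcal G:=\cE_{\rm mean}\cap\{\text{bin } j \text{ enters refinement}\}\subseteq\cE_{j,\coarse}$. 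This event is $\cH_{\coarse}^{\rm all}$-measurable, since entering refinement is determined by the pilot sequence. It gives $\E[\bm 1_{\mathcal G}\cR_j^{\rm learn}\mid\cdot]\leq C_{\bco}\sqrt{n_j}\,\mathrm{polylog}(H,\beta)$.
\end{itemize}

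\textbf{Aggregation.} The $n_j$ are determined by the pilot sequence and are therefore fixed under the conditioning. Cauchy--Schwarz gives
\[
\sum_j\sqrt{n_j}\leq\sqrt{M\sum_j n_j}\leq\sqrt{C_{\cU}N/h},
\]
which is the second term. Adding the three contributions and the failure-event term proves \eqref{eq:orbit-thm-decompose}.

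\textbf{Rate.} For the final rate, take $H=T$, $N\leq T$ and $h=T^{-1/(4\beta-3)}$. The three terms become:
\begin{itemize}
\item $1/h=T^{1/(4\beta-3)}$;
\item $\sqrt{T/h}=T^{(4\beta-2)/(2(4\beta-3))}=T^{(2\beta-1)/(4\beta-3)}$;
\item $Th^{2\beta-2}=T^{1-(2\beta-2)/(4\beta-3)}=T^{(2\beta-1)/(4\beta-3)}$.
\end{itemize}
The first is dominated since $1\leq2\beta-1$, leaving $\widetilde\cO(T^{(2\beta-1)/(4\beta-3)}+T\eta^2)$. The smallness conditions on $h$ hold for $T$ large enough; for small $T$ we absorb the regret into constants.

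\textbf{Main obstacle.} The main obstacle is the measurability bookkeeping for the learning term. We must check that the event on which Lemma~\ref{lem:main-upper-refine} is invoked depends only on coarse data and the pilot sequence, and not on refinement outcomes, so that conditional unbiasedness of the refinement feedback survives. Everything else is direct summation.
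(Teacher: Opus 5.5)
Your proof is correct and follows the paper's argument essentially step for step. It bounds the coarse cost by a deterministic $\widetilde\cO(1/h)$ term, charges a failure event of conditional probability $\cO(M\abs{\cG}H^{-5})$ at $p_{\max}$ per round, and on the good event combines Proposition~\ref{prop:regret-approx-j} with Lemma~\ref{lem:main-upper-refine} for an $\cH_{\coarse}^{\rm all}$-measurable $\mathcal G$, aggregated by Cauchy--Schwarz. The differences are cosmetic: you build $\mathcal G$ from the concentration event $\cE_{\rm mean}$ where the paper uses $\cE_{\coarse}=\cap_{j}\cE_{j,\coarse}$, and your $H^{-3}$ failure bound is looser than the available $H^{-4}$ but equally negligible.
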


Theorem \ref{thm:upper} above separates the realized number $N$ of scalar-pilot calls from the upper budget $H$. The generator may use $H$ for logarithmic tuning, but no bin-specific copy is given its future visit count $n_j$. Since $\rho_0>0$, the displayed parameter conditions are feasible for each fixed structural instance: one may choose a sufficiently small fixed $\eta_{\rm grid}$, set $\rho_{\rm loc}=\sqrt{\eta_{\rm grid}}$, and then choose $m_0$ sufficiently large. This is an instance-wise tuning statement. The algorithm receives the pilot states and the numerical grid and sampling constants; the theorem asserts the regret guarantee when those numerical choices satisfy the structural inequalities. Uniform tuning over a whole model class requires known class-level bounds such as $\rho_0\geq \underline\rho>0$, $\sigma_r\geq \underline\sigma>0$, and upper bounds on $L_g,L_r,L_p,p_{\max}$, and 
the length of $\cU$.

The three terms on the right-hand side of~\eqref{eq:orbit-thm-decompose} above correspond directly to the three parts of the policy
\begin{align}\label{eq:orbit-detailed-decompose}
\underbrace{h^{-1}}_{\text{coarse localization}} + \underbrace{\sqrt{N/h}}_{\text{local learning}} + \underbrace{N (h^{2\beta-2} + \eta^2)}_{\text{local approximation}},
\end{align}
up to logarithmic and fixed structural-grid factors. The term $\widetilde{\cO}(h^{-1})$ is the total cost of the coarse localization. The term $\widetilde{\cO}(\sqrt{N/h})$ represents the aggregate learning cost of the local raw refinement generators. The term $Nh^{2\beta-2}$ stems from the approximation cost of replacing the oracle price map inside each bin with a degree-$q$ polynomial, and $N\eta^2$ is the cost induced by the pilot error. We are now ready to prove Theorem~\ref{thm:upper}. 

\textit{Proof of Theorem~\ref{thm:upper}}. 
Let us fix the associated index--pilot sequence $\{(u_s,\tilde u_s)\}_{s=1}^N$. Note that the total number of coarse pulls is at most $\sum_{j=1}^M \abs{\cT_j^{\coarse}} \leq M\abs{\cG}m_{\coarse}$.
Since Algorithm~\ref{alg:oplcb} always posts prices in $[0,p_{\max}]$, the regret per $\ORBIT$ call is at most $p_{\max}$. It follows that 
\begin{align}
    \label{eq:proof-thm-upper-coarse} R_{\coarse} \leq p_{\max}M\abs{\cG}m_{\coarse} \leq C\frac{\abs{\cG}m_0\log(eH)}{h}.
\end{align}
Denote by  $\cE_{\coarse}:=\cap_{j:\,\cT_j^{\refine}\ne\varnothing}\cE_{j,\coarse}$. By invoking Lemmas~\ref{lem:mean-concentration} and~\ref{lem:main-upper-coarse-compressed}, it holds that $\Prob\big(\cE_{\coarse}^c\mid \{(u_s,\tilde u_s)\}_{s=1}^N\big) \leq CM\abs{\cG}H^{-5}$.
On event $\cE_{\coarse}^c$, the projection step still guarantees prices in $[0,p_{\max}]$, so the failure contribution is at most
\begin{align*}
    p_{\max}N\cdot CM\abs{\cG}H^{-5} \leq C\frac{\abs{\cG}}{h}H^{-4},
\end{align*}
where we have used $N\leq H$ and $M\leq C_{\cU}/h$. This term is dominated by the coarse-localization term in~\eqref{eq:proof-thm-upper-coarse} for all large $H$.

On event $\cE_{\coarse}$, the refinement regret decomposes binwise according to~\eqref{eq:refined-regret-decomposition}. For each active bin $j$, event $\cE_{\coarse}$ is $\cH_{\coarse}^{\rm all}$-measurable and satisfies that $\cE_{\coarse}\subseteq\cE_{j,\coarse}$. Then an application of Proposition~\ref{prop:regret-approx-j} and Lemma~\ref{lem:main-upper-refine} with $\mathcal G=\cE_{\coarse}$ for that bin yields that 
\begin{align*}
    &\E\Big[\bm 1_{\cE_{\coarse}}\sum_{j=1}^M \cR_j^{\refine}\,\Big\lvert\, \{(u_s,\tilde u_s)\}_{s=1}^N\Big]  \leq C\sum_{j=1}^M n_j(h^{2\beta-2}+\eta^2) +C_{\bco}\mathrm{polylog}(H,\beta)\sum_{j=1}^M\sqrt{n_j}\\
    &\quad \leq CN(h^{2\beta-2}+\eta^2) +C_{\bco}\mathrm{polylog}(H,\beta)\sqrt{M\sum_{j=1}^M n_j} \leq CN(h^{2\beta-2}+\eta^2) +C_{\bco}\sqrt{\frac{N}{h}}\mathrm{polylog}(H,\beta),
\end{align*}
where we have used $\sum_j n_j\leq N$ and $M\leq C_{\cU}/h$. Therefore, combining this estimate with~\eqref{eq:proof-thm-upper-coarse} establishes~\eqref{eq:orbit-thm-decompose}. Further, with $H=T$ and $N\leq T$, the choice of $h=T^{-1/(4\beta-3)}$ balances the upper bounds on $\sqrt{N/h}$ and $Nh^{2\beta-2}$ by the corresponding terms with $N$ replaced by $T$, and it satisfies the required smallness conditions for all sufficiently large $T$. This concludes the proof of Theorem~\ref{thm:upper}.

\section{The linear utility model: adaptive pilot and minimax optimality}\label{sec:linear-main}
 
\subsection{Baseline linear utility model}\label{subsec:baseline-linear}
 
We now specialize the scalar-index framework to the baseline linear utility model. This is the main model employed for the fully online upper bound and the matching lower bound, and is the most widely adopted utility model in the contextual pricing literature with binary feedback~\citep{fan2024policy,wang2025tight,han2026general,tullii2024improved,luo2024distribution}.
 
\begin{assumption}[Linear utility]\label{assump:linear}
There exists a constant $C_\theta>0$ such that
\begin{enumerate}
    \item[1)] $\cC\subset \mathbb{B}^d_2(1)$;
    \item[2)] $\theta_\ast\in \mathbb{B}^d_2(C_\theta)$ and $\mc^\top\theta_\ast\geq0$ for each $\mc\in\cC$;
    \item[3)] $\mu_\ast(\mc)=\mc^\top\theta_\ast$.
\end{enumerate}
\end{assumption}
 
Assumption~\ref{assump:linear} above implies Assumption~\ref{assump:bounded}; one may always take the conservative structural interval $\cU=[0,C_\theta]$, while a smaller compact interval containing the actual image of $\mu_\ast$ may also be used when it is part of the instance description. The nonnegativity condition ensures that the deterministic utility component is nonnegative. It does not by itself imply boundedness of the realized valuation $v_t=\mc_t^\top\theta_\ast+\xi_t$; the bounded realized-valuation normalization used by uniform-price pilots is stated in the next subsection. A canonical pair of $\cC$ and $\theta_\ast$ satisfying this assumption is given by 
\begin{align*}
    \cC=\mathbb{B}^{d-1}_2(1/\sqrt2)\times\{1/\sqrt2\},
    \qquad
    \theta_\ast=(\bar\theta_\ast,C_\theta/\sqrt2)
\end{align*}
for any $\bar\theta_\ast\in\mathbb{B}^{d-1}_2(C_\theta/\sqrt2)$.

\subsection{Adaptive construction of the scalar pilot}\label{subsec:adaptive-linear-pilot}

The conditional guarantee in Theorem~\ref{thm:upper} becomes a fully online policy once we can construct scalar pilots satisfying Assumption~\ref{assump:orbit-interface}. For the low-dimensional linear model, we use the adaptive elliptical exploration.
 
Observe that if $P\sim\mathrm{Unif}[0,p_{\max}]$ is independent of $(\mc,\xi)$, we have that 
\begin{align*}
    \E\big[p_{\max}\bm 1\{\mc^\top\theta_\ast+\xi\geq P\}\mid \mc\big]=\E[\mc^\top\theta_\ast+\xi\mid\mc]=\mc^\top\theta_\ast
\end{align*}
since $\Prob(P\leq v\mid v)=v/p_{\max}$ for $v\in[0,p_{\max}]$ and $\E[\xi]=0$. For an exploration time $\tau$, the pseudo-response $p_{\max}y_\tau$ is thus an unbiased observation of $\mc_\tau^\top\theta_\ast$. Given the exploration set $\cT_{t-1}^{\mathsf{exp}}$ before round $t$, let us define
\begin{align}\label{eq:linear-ridge-regression}
    \bA_t:=\bI+\sum_{\tau\in\cT_{t-1}^{\mathsf{exp}}}\mc_\tau\mc_\tau^\top,
    \qquad
    \hat\theta_t:=\bA_t^{-1}\sum_{\tau\in\cT_{t-1}^{\mathsf{exp}}}p_{\max}y_\tau\mc_\tau .
\end{align}
The uncertainty score
\begin{align*}
    w_t:=C_w\|\mc_t\|_{\bA_t^{-1}},
    \qquad
    C_w:=32(C_\theta+p_{\max})\sqrt{\log(eT)}
\end{align*}
is the standard elliptical confidence radius used in linear contextual bandits \citep{chu2011contextual}. It measures how well the current context is covered by the uniform-price samples collected so far.  If $w_t>\eta$, the current direction is not yet well covered by the previous uniform-price samples, and the policy explores by posting a fresh uniform
price. If $w_t\leq\eta$, the utility index is already estimated accurately enough for the target pilot precision, and the policy sends  $\tilde u_t=\proj_{\cU}(\mc_t^\top\hat\theta_t)$ to $\ORBIT$.  Such adaptive exploration rule follows the uncertainty-triggered design of \citet{tullii2024improved}, but its role here is different. We use it only to certify scalar utility-index estimates for $\ORBIT$. In this way, the policy does \textit{not} need the context distribution to cover all directions in advance. It explores a direction only when that direction appears and is still uncertain under the current design. Algorithm~\ref{alg:adaptive-init} provides the full procedure.
The lemma below records the resulting confidence guarantee.
\begin{lemma}\label{lem:linear-confidence}
Under Assumption~\ref{assump:linear}, Algorithm~\ref{alg:adaptive-init} satisfies that with probability at least $1-\cO(T^{-3})$,
\begin{align}
    \label{eq:linear-uncertainty-bound} \abs{\mc_t^\top(\hat{\theta}_t- \theta_\ast)} \leq w_t, \quad \forall t \in [T].
\end{align}
\end{lemma}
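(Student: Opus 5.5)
We prove \eqref{eq:linear-uncertainty-bound} with the standard self-normalized martingale argument for ridge regression, applied only on the exploration rounds. For each exploration round $\tau$, write $p_{\max}y_\tau=\mc_\tau^\top\theta_\ast+\epsilon_\tau$. By the uniform-price identity displayed before the lemma, $\E[\epsilon_\tau\mid\cF_{\tau-1},\mc_\tau]=0$, where $\cF_{\tau-1}$ is the history before round $\tau$ (this uses that contexts are exogenous and that the fresh uniform price is independent of everything else). Moreover $\abs{\epsilon_\tau}\leq p_{\max}+C_\theta$, since $p_{\max}y_\tau\in[0,p_{\max}]$ and $\abs{\mc_\tau^\top\theta_\ast}\leq C_\theta$ by Assumption~\ref{assump:linear}. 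Hence $\epsilon_\tau$ is conditionally $(p_{\max}+C_\theta)$-sub-Gaussian.

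Whether round $\tau$ explores depends only on $\mc_\tau$ and past exploration data, so it is $\sigma(\cF_{\tau-1},\mc_\tau)$-measurable. Hence the exploration subsequence, indexed by the exploration count, is a valid adapted sequence for the self-normalized bound. Write $S_t:=\sum_{\tau\in\cT^{\mathsf{exp}}_{t-1}}\epsilon_\tau\mc_\tau$. The ridge estimator \eqref{eq:linear-ridge-regression} then gives $\hat\theta_t-\theta_\ast=\bA_t^{-1}S_t-\bA_t^{-1}\theta_\ast$, and Cauchy--Schwarz in the $\bA_t$-geometry yields
\[
\abs{\mc_t^\top(\hat\theta_t-\theta_\ast)}\leq \|\mc_t\|_{\bA_t^{-1}}\big(\|S_t\|_{\bA_t^{-1}}+\|\theta_\ast\|_{\bA_t^{-1}}\big)\leq \|\mc_t\|_{\bA_t^{-1}}\big(\|S_t\|_{\bA_t^{-1}}+C_\theta\big),
\]
where the last step uses $\bA_t\succeq\bI$.

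Next, invoke the self-normalized inequality of Abbasi-Yadkori et al. (2011), which holds uniformly over all times with probability $1-\delta$:
\[
\|S_t\|_{\bA_t^{-1}}\leq (p_{\max}+C_\theta)\sqrt{2\log\big(\det(\bA_t)^{1/2}/\delta\big)}.
\]
Since $\|\mc_\tau\|\leq1$ and there are at most $T$ rounds, $\log\det\bA_t\leq d\log(1+T/d)$. Choosing $\delta=T^{-3}$ gives $\|S_t\|_{\bA_t^{-1}}\lesssim(p_{\max}+C_\theta)\sqrt{d\log T+\log T}$.

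\textbf{Main obstacle.} The constant $C_w=32(C_\theta+p_{\max})\sqrt{\log(eT)}$ contains no $\sqrt d$ factor, so the self-normalized route above gives a radius too large by $\sqrt d$. To match the stated $C_w$ I would instead follow the argument of \citet{tullii2024improved}, as in SupLinUCB-type analyses of \citet{chu2011contextual}. The key observation is that the exploration decision at $\tau$ depends only on contexts, not on past responses $y$, because $w_\tau$ is a function of $\bA_\tau$ alone. Conditional on the whole context sequence, the exploration set and the matrices $\bA_t$ are therefore deterministic, and the $\epsilon_\tau$ are independent, mean zero and bounded. For a fixed $t$, the quantity $\mc_t^\top\bA_t^{-1}S_t=\sum_\tau(\mc_t^\top\bA_t^{-1}\mc_\tau)\epsilon_\tau$ is then a weighted sum of independent bounded variables. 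Its weights have squared norm $\sum_\tau(\mc_t^\top\bA_t^{-1}\mc_\tau)^2\leq\mc_t^\top\bA_t^{-1}\mc_t$, so Hoeffding's inequality gives a deviation of at most $(p_{\max}+C_\theta)\|\mc_t\|_{\bA_t^{-1}}\sqrt{2\log(2T^4)}$ with probability $1-T^{-4}$.

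Adding the bias bound $C_\theta\|\mc_t\|_{\bA_t^{-1}}$ and taking a union bound over $t\in[T]$ then yields \eqref{eq:linear-uncertainty-bound} with probability $1-\cO(T^{-3})$, with a constant comfortably below $32(C_\theta+p_{\max})\sqrt{\log(eT)}$. The step I expect to need care is this conditioning claim: one must verify that Algorithm~\ref{alg:adaptive-init}'s exploration rule and $\cU$-projection never use response data except through $\hat\theta_t$, and that context exogeneity makes the conditioning legitimate.
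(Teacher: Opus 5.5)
Your final argument is correct and is essentially the paper's proof: condition on the context sequence so that the exploration set and $\bA_t$ are deterministic, bound the bias by $C_\theta\|\mc_t\|_{\bA_t^{-1}}$, apply (Azuma--)Hoeffding to the weighted noise sum using $\sum_\tau(\mc_t^\top\bA_t^{-1}\mc_\tau)^2\leq\|\mc_t\|_{\bA_t^{-1}}^2$, and union-bound over $t\in[T]$. The self-normalized detour is unnecessary and, as you note, would lose a $\sqrt d$ factor against $C_w$, so you can drop it.
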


\begin{algorithm}[t]
\caption{Adaptive linear pilot coupled with $\ORBIT$}\label{alg:adaptive-init}
\begin{algorithmic}[1]
    \State \textbf{Input:} structural interval $\cU$, target pilot accuracy $\eta$, horizon $T$, and $\ORBIT$ parameters.
    \State Initialize $\cT^{\mathsf{exp}}_0=\varnothing$ and initialize $\ORBIT$ with upper pilot-input budget $T$.
    \For{$t=1,\dots,T$}
        \State Observe $\mc_t$; form $\bA_t$ and $\hat\theta_t$ by~\eqref{eq:linear-ridge-regression}; compute $w_t=C_w\|\mc_t\|_{\bA_t^{-1}}$.
        \If{$w_t>\eta$}
            \State Post $p_t\sim\mathrm{Unif}[0,p_{\max}]$, observe $y_t$, and set $\cT_t^{\mathsf{exp}}=\cT_{t-1}^{\mathsf{exp}}\cup\{t\}$.
        \Else
            \State Set $\cT_t^{\mathsf{exp}}=\cT_{t-1}^{\mathsf{exp}}$ and $\tilde u_t=\proj_{\cU}(\mc_t^\top\hat\theta_t)$.
            \State Call $\ORBIT$ with scalar pilot $\tilde u_t$; the $\ORBIT$-selected price is posted, the resulting $y_t$ is fed back to $\ORBIT$, and $\ORBIT$ updates its internal state.
        \EndIf
    \EndFor
\end{algorithmic}
\end{algorithm}
 
\begin{remark}[Calling convention for $\ORBIT$]\label{rem:calling-convention}
For precision, we record how Algorithm~\ref{alg:adaptive-init} invokes $\ORBIT$ (Algorithm~\ref{alg:oplcb}) on the local time axis introduced in Section~\ref{sec:oplcb}. ``Initialize $\ORBIT$'' in line~2 of Algorithm~\ref{alg:adaptive-init} executes only the initialization block of Algorithm~\ref{alg:oplcb} and does not enter its main loop. ``Call $\ORBIT$ with scalar pilot $\tilde u_t$'' in line~9 executes one iteration of $\ORBIT$'s main loop and then exits, retaining all internal state for the next invocation. With such convention, each non-exploration round of Algorithm~\ref{alg:adaptive-init} corresponds to one increment of $\ORBIT$'s local clock $s$, and exploration rounds do not advance $s$.
\end{remark}

The procedure above is not new by itself: a uniform-pricing exploration phase gated by a context-based uncertainty score was introduced first by~\citet{tullii2024improved}, where it is paired with a successive-elimination subroutine in the $\beta=1$ regime. What is \textit{new} here is that the same adaptive scheme can be used as a pilot generator for $\ORBIT$ in the smooth regime $\beta\geq2$. This works since $\ORBIT$ places only interface-level requirements on the pilot sequence (Assumption~\ref{assump:orbit-interface}), which tolerates non-stationary distribution of the input pilot sequence due to adaptive exploration. In contrast, the stationary subroutines of~\citet{wang2025tight,han2026general} require the pilot sequence to satisfy \textit{stronger} stationarity assumptions and therefore \textit{cannot} be coupled with adaptive pilot procedures of this form.

We now record the three consequences needed to apply Theorem~\ref{thm:upper}: the independence of the associated index--pilot sequence from the $\ORBIT$-round demand noises, the pilot accuracy on non-exploration rounds, and a bound on the number of exploration rounds.

\begin{proposition}\label{prop:adaptive-init-guarantee}
Assume the linear utility model in Assumption~\ref{assump:linear}. Let $\cT^\ORBIT:=[T]\setminus\cT_T^{\mathsf{exp}}$. Then Algorithm~\ref{alg:adaptive-init} satisfies that with error level $0<\eta\leq1/2$,
\begin{enumerate}
    \item Assume further that $\cT^{\ORBIT} = \{t_1,\dots,t_N\}$ and the $\ORBIT$ subroutine is invoked with pilot states $\{\tilde u_{t_s}\}_{s=1}^N$. Then the associated index--pilot sequence
    $\{(u_{t_s},\tilde u_{t_s})\}_{s=1}^N$, with $u_{t_s}=\mc_{t_s}^\top\theta_\ast$ and $\tilde u_{t_s}=\proj_{\cU}(\mc_{t_s}^\top\hat\theta_{t_s})$, is independent of the $\ORBIT$ demand-noise sequence $\{\xi_{t_s}\}_{s=1}^N$ and 
    $\ORBIT$'s internal refinement randomization.
    \item With probability at least $1-\cO(T^{-3})$,
    \begin{align*}
    \abs{\tilde u_t-u_t}\leq\eta, \qquad \forall t\in\cT^\ORBIT .
\end{align*}
    \item $\abs{\cT_T^{\mathsf{exp}}}= \widetilde\cO(d\eta^{-2})$.
\end{enumerate}
\end{proposition}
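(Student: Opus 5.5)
We prove the three items separately, in the order (2), (3), (1), since item (2) is a direct consequence of Lemma~\ref{lem:linear-confidence} and item (3) is a standard elliptical-potential count.

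\emph{Item (2).} On the event of Lemma~\ref{lem:linear-confidence}, which has probability at least $1-\cO(T^{-3})$, we have $\abs{\mc_t^\top(\hat\theta_t-\theta_\ast)}\leq w_t$ for all $t$. For $t\in\cT^\ORBIT$ the gate gives $w_t\leq\eta$. Since $u_t=\mc_t^\top\theta_\ast\in\cU$ and $\proj_\cU$ is $1$-Lipschitz with $\proj_\cU(u_t)=u_t$, we get
\[
\abs{\tilde u_t-u_t}\leq\abs{\mc_t^\top\hat\theta_t-u_t}\leq\eta .
\]

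\emph{Item (3).} Every exploration round $\tau$ satisfies $\|\mc_\tau\|_{\bA_\tau^{-1}}>\eta/C_w$, and $\bA$ is updated by exactly these contexts. Since $\|\mc_\tau\|_2\leq1$, we have $\|\mc_\tau\|^2_{\bA_\tau^{-1}}\leq1$, so $\min\{1,\|\mc_\tau\|^2_{\bA_\tau^{-1}}\}=\|\mc_\tau\|^2_{\bA_\tau^{-1}}$. The elliptical potential lemma then gives
\[
\abs{\cT_T^{\mathsf{exp}}}\,\eta^2/C_w^2\leq\sum_{\tau\in\cT_T^{\mathsf{exp}}}\|\mc_\tau\|^2_{\bA_\tau^{-1}}\leq 2d\log\bigl(1+\abs{\cT_T^{\mathsf{exp}}}/d\bigr).
\]
Because $C_w^2=\cO(\log T)$, this yields $\abs{\cT_T^{\mathsf{exp}}}=\widetilde\cO(d\eta^{-2})$. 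Solving the implicit inequality $n\leq c\,d\,\eta^{-2}\log T\log(1+n/d)$ only introduces an extra logarithmic factor, using $n\leq T$.

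\emph{Item (1).} This is the delicate part and the main obstacle. The difficulty is that both the exploration set and the stopping time $N$ are adaptive. The key structural fact is that $\hat\theta_t$, $\bA_t$ and $w_t$ are functions only of the contexts and of $(p_\tau,y_\tau)$ for past exploration rounds $\tau$. Exploration outcomes depend only on $\xi_\tau$ at exploration rounds and on the independent uniform prices. We argue by induction on $t$. The decision ``explore or call ORBIT'' at round $t$, the pilot $\tilde u_t$, and the set $\cT_{t-1}^{\mathsf{exp}}$ are all measurable with respect to
\[
\cF^{\rm pil}:=\sigma\bigl(\{\mc_t\},\text{uniform prices},\{\xi_t\mathbf 1\{t\in\cT^{\mathsf{exp}}\}\}\bigr).
\]
ORBIT-round prices and outcomes never enter this $\sigma$-algebra.

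To make ``the noises at ORBIT rounds'' well-defined despite the random index set, introduce two independent i.i.d.\ noise tapes: one consumed by exploration rounds in order, and one consumed by ORBIT calls in order, the $s$-th call receiving the $s$-th tape entry. This coupling does not change the joint law, because each round's noise is fresh and independent of everything before it, and the round type is determined before the noise is drawn. Under this construction, $(N,\{(u_{t_s},\tilde u_{t_s})\})$ is a function of the contexts, the exploration tape and the uniform prices alone. It is therefore independent of the ORBIT tape $\{\xi_{t_s}\}$ and of ORBIT's internal randomization. The care needed lies in verifying rigorously that the round-type decision never depends on ORBIT's outputs; the induction above supplies exactly this.
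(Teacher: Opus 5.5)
Your proposal is correct. Item (2) matches the paper essentially verbatim. Item (3) differs only in the counting device. The paper invokes the core-set bound of Lemma~\ref{lem:core-set-size} with $\bar\eta=\eta^2/C_w^2$, which gives $\abs{\cT_T^{\mathsf{exp}}}\leq Cd\bar\eta^{-1}\log(1+\bar\eta^{-1})$ directly. Your elliptical-potential computation is essentially that lemma's proof, with the implicit inequality closed via $n\leq T$ at the cost of a harmless $\log T$ factor.

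The real difference is in item (1). The paper simply notes that the gate $w_t=C_w\|\mc_t\|_{\bA_t^{-1}}$ depends only on contexts and earlier exploration indicators, because $\bA_t$ never involves prices or outcomes. By induction, $\cT_T^{\mathsf{exp}}$, $\cT^{\ORBIT}$ and $N$ are therefore deterministic functions of the context sequence. After conditioning on the contexts and the uniform-price draws, the two index sets are fixed and disjoint. The pilots see the noise only through exploration rounds, so their independence from the $\ORBIT$-round noises is immediate from the i.i.d.\ structure.

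Your two-tape coupling is valid and more general: it would survive a gate that depended on exploration outcomes, e.g.\ through $\hat\theta_t$. However, the obstacle you single out---an outcome-adaptive exploration set and stopping length---does not actually arise in Algorithm~\ref{alg:adaptive-init}. Your $\sigma$-algebra $\cF^{\mathrm{pil}}$ is also larger than needed, since the round type is already context-measurable. The paper's route buys brevity; yours buys robustness to changes in the gating rule.
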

 
On the high-probability event in Proposition~\ref{prop:adaptive-init-guarantee} above, the $\ORBIT$ subsequence satisfies Assumption~\ref{assump:orbit-interface}; the exploration rounds contribute only their count times a bounded per-round regret. Hence, combining this with Theorem~\ref{thm:upper} gives the fully online bound.

\begin{corollary}[Fully online regret guarantee]\label{cor:upper}
Assume that Assumptions~\ref{assump:bounded}--\ref{assump:rho} and \ref{assump:linear} hold. Run $\ORBIT$ with upper pilot-input budget $T$, generator $\cB_{\rm raw}^\bco(T)$ from an $\ORBIT$-compatible raw anytime refinement family, balanced bin width $h=T^{-1/(4\beta-3)}$, grid spacing $\eta_{\rm grid}$, localization scale $\rho_{\rm loc}:=\sqrt{\eta_{\rm grid}}$, and the remaining parameters satisfying Theorem~\ref{thm:upper}. Assume further that the target pilot accuracy satisfies that $0<\eta\leq \min\{ 1/2, c_0\sigma_r\rho_{\rm loc}^2 /L_g p_{\max} \}$ and $ L_p\eta\leq \rho_{\rm loc}/8$.
Then Algorithm~\ref{alg:adaptive-init}, coupled with $\ORBIT$, satisfies that 
\begin{align*}
\Regret(T) = \widetilde{\cO}\Big( T^{\frac{2\beta-1}{4\beta-3}} + T\eta^2+d\eta^{-2}\Big).
\end{align*}
In particular, if $\eta^2\asymp\sqrt{d/T}$ satisfies the displayed smallness conditions, we have that 
\begin{align*}
\Regret(T) = \widetilde{\cO}\Big(T^{\frac{2\beta-1}{4\beta-3}} + \sqrt{dT}\Big).
\end{align*}
\end{corollary}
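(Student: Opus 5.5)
We split the regret in \eqref{eq:regret-def} over the two disjoint round sets produced by Algorithm~\ref{alg:adaptive-init}: the exploration rounds $\cT_T^{\mathsf{exp}}$ and the $\ORBIT$ rounds $\cT^{\ORBIT}=\{t_1,\dots,t_N\}$. Since every posted price lies in $[0,p_{\max}]$ and $\max_p R(\mc_t,p)\leq p_{\max}$, the per-round regret is at most $p_{\max}$. Part~3 of Proposition~\ref{prop:adaptive-init-guarantee} therefore bounds the exploration contribution by $p_{\max}\abs{\cT_T^{\mathsf{exp}}}=\widetilde\cO(d\eta^{-2})$. This bound holds deterministically up to logarithms, because the elliptical potential argument is pathwise.

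For the $\ORBIT$ rounds, let $\Omega$ be the event of Part~2, on which $\abs{\tilde u_{t_s}-u_{t_s}}\leq\eta$ for all $s$. The event has probability at least $1-\cO(T^{-3})$. Off $\Omega$ the regret is at most $p_{\max}T\cdot\cO(T^{-3})=o(1)$. On $\Omega$ we need Assumption~\ref{assump:orbit-interface}. Feasibility holds because $\tilde u$ is projected onto $\cU$, and the bound $u\in\cU$ holds because $\cU=[0,C_\theta]$. Independence is Part~1. Accuracy holds on $\Omega$. One subtlety: $\Omega$ is determined by the exploration-round data and the contexts, so it is a function of the pilot stream and of pilot-side randomness. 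Conditioning on the realized index--pilot sequence together with $N$ therefore keeps the $\ORBIT$ noises fresh.

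We then apply Theorem~\ref{thm:upper} with $H=T$, $N\leq T$ and $h=T^{-1/(4\beta-3)}$. Its smallness conditions on $\eta$ are exactly the displayed hypotheses of the corollary, and those on $h$ hold for large $T$. The theorem gives a conditional bound of $\widetilde\cO(T^{\frac{2\beta-1}{4\beta-3}}+T\eta^2)$. The regret in \eqref{eq:regret-def} at $\ORBIT$ rounds equals $r(u_s,p^\ast(u_s))-r(u_s,p_s)$, since $p^\ast(u_s)$ is the maximizer over $[0,p_{\max}]$. Taking $\E[\mathbf 1_\Omega\,\cdot\,]$ and then integrating over the pilot stream gives the same bound unconditionally.

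The main obstacle is making the conditioning rigorous. The event $\Omega$ is not part of Assumption~\ref{assump:orbit-interface}, which requires accuracy for every realization. The plan is to run the analysis on a modified pilot stream that equals the actual one on $\Omega$ and is replaced by the truth $u_{t_s}$ off $\Omega$. This modified stream satisfies the interface almost surely, since it is still measurable with respect to pilot-side information. The actual regret coincides with the modified-stream regret on $\Omega$, and the difference off $\Omega$ is absorbed into the $o(1)$ term.

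Summing the three pieces gives $\widetilde\cO(T^{\frac{2\beta-1}{4\beta-3}}+T\eta^2+d\eta^{-2})$. Choosing $\eta^2\asymp\sqrt{d/T}$ balances the last two terms at $\sqrt{dT}$, which proves the second display whenever this $\eta$ meets the smallness conditions.
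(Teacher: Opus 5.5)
Your proposal is correct and follows essentially the same route as the paper's proof. The paper bounds the exploration rounds by $p_{\max}\abs{\cT_T^{\mathsf{exp}}}=\widetilde\cO(d\eta^{-2})$ via Part~3 of Proposition~\ref{prop:adaptive-init-guarantee}, applies Theorem~\ref{thm:upper} with $H=T$ and $h=T^{-1/(4\beta-3)}$ to the re-indexed $\ORBIT$ rounds on the pilot-accuracy event, charges the $\cO(T^{-3})$ failure event at most $p_{\max}T$, and balances with $\eta^2\asymp\sqrt{d/T}$. The only difference is your modified-stream device (replacing pilots by the truth off $\Omega$), which makes explicit a conditioning step the paper handles by applying Theorem~\ref{thm:upper} realization-wise on the pilot event, since that event is measurable with respect to the index--pilot sequence.
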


\subsection{Matching lower bound for the linear model}\label{sec:lower}

We now turn to the lower bound. The construction is a hypercube of local perturbations. Each coordinate flips the sign of a smooth bump on one cell of the latent-index space, shifting the local oracle price on that cell while leaving the rest of the model unchanged. Any policy is thus forced to solve many local inference problems in parallel. To keep the statement compact, we use the following terminology.

\begin{definition}[Lower-bound-normalized linear instance]\label{def:lb-normalized-instance}
A pricing instance $\mathfrak I$ is called lower-bound normalized if it satisfies the following fixed one-dimensional conditions: $d=1$, $p_{\max}=1$, $C_\theta=1$, $\theta_\ast=1$, the utility map is $\mu_\ast(\mc)=\mc$, and the context space is contained in $[0,1]$. The noise has mean zero and for each context in the instance, the realized valuation $\mu_\ast(\mc)+\xi$ lies in $[0,p_{\max}]$ almost surely. Assumptions~\ref{assump:bounded}--\ref{assump:rho} hold on the scalar-index interval supplied with the instance, with structural constants bounded above and where applicable, bounded below by positive numerical constants that are independent of $T$. Further, the local-concavity radius in Lemma~\ref{lem:rho-exists} is at least $1/16$.
\end{definition}

The hard family constructed below consists entirely of lower-bound-normalized linear instances. Consequently, the lower bound applies inside the same class covered by the upper-bound analysis.

We adopt the following normalized constants in the construction
\begin{align*}
    C_{\rm loc}=\frac{1}{32},\qquad
    B:=1-C_{\rm loc}=\frac{31}{32},\qquad
    w:=\gamma T^{-1/(4\beta-3)}, \qquad
    M:=\floor{\frac{1}{64w}}
\end{align*}
where $\gamma>0$ is a sufficiently small numerical constant. The structural interval $\cU$ in Assumption~\ref{assump:bounded} is the fixed interval $\cU_{\rm lb}:=[\mu_0,\mu_0+C_{\rm loc}]$ in the centered construction below, where $\mu_0$ is the common centering constant defined shortly. It contains the scalar-index support of the hard instance, and giving this interval to the learner can only make the lower bound stronger. The constant $C_{\rm loc}$ controls only the size of the local coordinate grid used in the hard family. Let $g_0$ be a smooth auxiliary tail that agrees with the truncated-linear tail $z\mapsto 1-z/B$ on a fixed interior strip; see Appendix~\ref{app:sec5-proofs} for the detailed construction. We first describe the construction in an uncentered local coordinate $x$ and then center the noise distribution to respect the standing zero-mean normalization.

We set
\begin{align*}
    x_j:=2jw, \quad j \in [M],
\end{align*}
and let $\varphi\in C_0^\infty([-1/8,1/8])$ be an odd bump with $\varphi(0)=0$ and $\varphi'(0)=1$. We define the baseline oracle price in the local coordinate as
\begin{align*}
    p_0^\ast(x):=\frac{B+x}{2}, \quad p_j^{0} := p_0^{\ast}(x_j), \quad z_j:=p_j^0-x_j=\frac{B}{2}-jw.
\end{align*}
For each sign vector $\omega\in\{-1,+1\}^M$, let us define the auxiliary tail
\begin{align*}
g_\omega(z):=g_0(z)+\kappa w^\beta \sum_{j \in [M]} \omega_j \varphi\Big(\frac{z-z_j}{w}\Big).
\end{align*}
Since $\varphi$ is odd and each bump support is contained in the interior of $[0,B]$, it holds that $\int_0^B \varphi((z-z_j)/w)\d z=w\int\varphi=0$. Hence, once $\kappa$ is chosen small enough so that $g_\omega$ is a valid tail function, all auxiliary laws with tails $g_\omega$ have the same mean
\begin{align*}
    \mu_0=\int_0^B g_\omega(z)\d z,
\end{align*}
which does not depend on $\omega$. 

The actual hard instance uses the centered noise tail
\begin{align*}
    \bar g_\omega(z):=g_\omega(z+\mu_0),
\end{align*}
that is, the tail of $X-\mu_0$ when $X$ has auxiliary tail $g_\omega$. It also utilizes scalar contexts $\mc_j:=\mu_0+x_j$ with $\theta_\ast=1$ and $\Prob(\mc_t=\mc_j)=M^{-1}$. Then the noise has mean zero and
\begin{align*}
    p\,\bar g_\omega(p-\mc_j)=p\,g_\omega(p-x_j),
\end{align*}
so the actual centered instance has exactly the same revenue geometry as the auxiliary local-coordinate construction. For readability, the lower-bound analysis below writes $c_j$ for the local coordinate $x_j$, writes $c_t=\mc_t-\mu_0$ for the observed local coordinate, and works with the translated revenue $p g_\omega(p-c_j)$. Such deterministic reparametrization is common to all environments; equivalently, one may grant $\mu_0$ to the learner. Denote by $P_\omega$ the law of the full transcript under the corresponding centered instance, whose actual tail is $\bar g_\omega$.

\begin{theorem}[Lower bound]\label{thm:lower}
Fix $\beta\geq 2$. Then there exists a constant $c_\beta>0$ such that for all sufficiently large $T$ and each pricing policy $\pi$, one can find a lower-bound-normalized linear instance $\mathfrak I$ satisfying that 
\begin{align*}
    \Regret_\pi^{\mathfrak I}(T) \geq c_\beta T^{\frac{2\beta-1}{4\beta-3}}.
\end{align*}
Equivalently, the minimax regret over the lower-bound-normalized linear instances is at least this quantity.
\end{theorem}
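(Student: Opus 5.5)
The plan is an Assouad-type argument over the hypercube $\{-1,+1\}^M$ built above. Regret decomposes across the $M$ cells: for any policy, the expected regret under $P_\omega$ is a sum over $j\in[M]$ of contributions from rounds with $\mc_t=\mc_j$. Within cell $j$, only the sign $\omega_j$ matters for the revenue curve near $p_j^0$. This is because bumps are supported on disjoint $z$-windows of width $w/4$ around $z_j$, and spacing $z_j-z_{j+1}=w$. One must check that the relevant price window for context $c_j$, namely $z\in z_j\pm O(w)$, only meets the $j$th bump, or else handle bleed by a short argument.

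\textbf{Step 1 (validity).} Verify that each $g_\omega$ is a valid tail in the lower-bound-normalized class. The perturbation $\kappa w^\beta\varphi((z-z_j)/w)$ has $k$th derivative of order $\kappa w^{\beta-k}$, so its $\beta$-Hölder norm is $O(\kappa)$. Assumption~\ref{assump:rho} follows because $g_0$ is linear on the strip, so $p(1-(p-x)/B)$ is strictly concave with curvature $2/B$. The perturbation changes the second derivative of revenue by only $O(\kappa w^{\beta-2})\le O(\kappa)$, so small $\kappa$ preserves monotonicity, quadratic growth, interiority, and $\rho_0\ge 1/16$. The mean is independent of $\omega$ since $\varphi$ is odd, as noted.

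\textbf{Step 2 (separation).} Compute the shift of the oracle price at $c_j$. From the first-order condition $g_\omega(p-x_j)+p\,g_\omega'(p-x_j)=0$ and $\varphi'(0)=1$, the perturbation of the derivative at $p_j^0$ is $\kappa w^{\beta-1}\omega_j p_j^0$. By the implicit function theorem with curvature $\asymp1$, this gives $p^\ast_\omega(c_j)=p_j^0+\omega_j c\kappa w^{\beta-1}+O(w^{\beta})$. The two signs therefore separate the optimal prices by $\Delta\asymp\kappa w^{\beta-1}$. Any price is then at distance at least $\Delta/2$ from one of them, and by quadratic growth it incurs regret at least $\sigma_r\Delta^2/8\asymp w^{2\beta-2}$ under that sign.

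\textbf{Step 3 (information).} Bound the KL divergence between $P_\omega$ and $P_{\omega^{(j)}}$, where $\omega^{(j)}$ flips coordinate $j$. The observation is Bernoulli, and the two tails differ only for $p-c_t\in z_j\pm w/8$. The per-round KL is $\lesssim (2\kappa w^\beta)^2/(g(1-g))\asymp\kappa^2w^{2\beta}$ when the posted price lies within $O(w)$ of $p_j^0$, and it is zero otherwise. Since contexts are equiprobable, other cells can also reach the $j$-bump window, so bleed must be counted. Let $N_j$ be the number of rounds whose gap $p_t-c_t$ falls in bump window $j$. The divergence is then $\lesssim\kappa^2w^{2\beta}\E N_j$.

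\textbf{Step 4 (tradeoff) — the main obstacle.} The difficulty is that probing window $j$ from context $c_j$ is cheap. The window contains $p_j^0$, so the regret per such round is only $O(w^{2\beta-2})$, and the learner could sample it $\asymp T/M$ times for free. The cure is to combine two regret sources. If $\E N_j\lesssim w^{-2\beta}$, then the Bretagnolle–Huber inequality leaves constant error probability in guessing $\omega_j$. Each of the $\asymp T/M$ visits to cell $j$ then pays $\gtrsim w^{2\beta-2}$. Summing over $M\asymp 1/w$ cells gives $T w^{2\beta-2}$, and with $w=\gamma T^{-1/(4\beta-3)}$ this equals $T^{(2\beta-1)/(4\beta-3)}$. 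The budget is consistent: $w^{-2\beta}=T^{2\beta/(4\beta-3)}$ exceeds $T/M=Tw=T^{(4\beta-4)/(4\beta-3)}$ exactly when $2\beta\ge 4\beta-4$, i.e.\ $\beta\le2$. So for $\beta>2$ visits from cell $j$ alone exceed the budget, and the construction must penalize informative probes.

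I would try to fix this as follows. Reaching a bump window with a price far from $p^\ast$ (bleed from other cells) costs $\gtrsim w^2$ per round, because the gap shift is at least $w$ and quadratic growth applies. Probes from cell $j$ itself are informative only through the derivative: the bump difference at gap $z_j+\delta$ is $\asymp\kappa w^{\beta-1}|\delta|$. The per-round KL is then $\kappa^2w^{2\beta-2}\delta^2$, while the regret is $\gtrsim\delta^2$. Hence total information $\lesssim\kappa^2 w^{2\beta-2}\cdot(\text{regret in cell }j)$. Either the cell regret exceeds $w^{-(2\beta-2)}\cdot c$, or the test fails. Summing over cells gives a regret lower bound of $\min\{M w^{-(2\beta-2)},\,T w^{2\beta-2}\}$, and the choice of $w$ balances the two, since $Mw^{-(2\beta-2)}=w^{-(2\beta-1)}=T^{(2\beta-1)/(4\beta-3)}$. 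Averaging over $\omega$ uniformly (Assouad) then yields an instance with regret at least $c_\beta T^{(2\beta-1)/(4\beta-3)}$.

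The delicate point is making the KL-per-regret inequality rigorous uniformly over all prices, including bleed from neighboring cells. I expect this to be the main technical work.
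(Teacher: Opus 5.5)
Your plan is essentially the paper's proof: the same sign-flip hypercube, the $\asymp w^{\beta-1}$ oracle-price shift, and a per-cell KL bound of order $w^{2\beta-2}$ times local regret obtained exactly as you suggest (own-cell probes are informative only through $\varphi'$, while bleed rounds from other cells pay $\gtrsim w^2$ against KL $\lesssim w^{2\beta}$), followed by a two-point Bretagnolle--Huber argument (the paper uses a majority-side decoder relative to $p_j^0$) giving $R_j^\omega+R_j^{\omega^{(j)}}\gtrsim Tw^{2\beta-1}$, summed over $M\asymp 1/w$ cells. Two fixes are needed. First, regret at $c_j$ is quadratic in the distance to $p^\ast_\omega(c_j)$, not to $p_j^0$, so your purely multiplicative KL--regret inequality is false as stated (posting $p^\ast_\omega(c_j)$ has zero regret but positive KL); it must carry an additive $Cw^{4\beta-4}$ per visit to cell $j$, totaling $CTw^{4\beta-3}=C\gamma^{4\beta-3}$, which is harmless for small $\gamma$. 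Second, because the local regret must include bleed rounds, summing over $j$ costs a factor $2$ (each round lies in exactly one context cell and at most one bump window).
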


The proof of Theorem~\ref{thm:lower} above is presented in Appendix~\ref{app:sec5-proofs}. The argument mirrors the local structure exploited by $\ORBIT$. Each latent-index cell carries one bit. Flipping that bit shifts the oracle price by order $w^{\beta-1}$, so an incorrect local decision costs order $w^{2\beta-2}$ per visit. A Kullback--Leibler (KL) divergence calculation reveals that little regret on a cell keeps the two paired environments statistically close, while large regret already gives the desired lower bound. Thus, aggregating the resulting local two-point bounds over $M\asymp 1/w$ cells and choosing $w\asymp T^{-1/(4\beta-3)}$ yield the exponent in Theorem~\ref{thm:lower}.

\section{Broader applications beyond linear utility}\label{sec:extensions}

Section~\ref{sec:linear-main} has established a regret guarantee for the $d$-dimensional linear utility model exploiting the ridge confidence ellipsoid in Lemma~\ref{lem:linear-confidence}. That construction is, however, specific to linear utilities. In this section, we will extend to the more general utility setting when the underlying utility function $\mu_\ast$ lies in some function class $\cF$. We introduce a more modular explore-first template for pilot estimation as in most previous works of \citet{han2026general,wang2025tight,fan2024policy,gong2025minimax,chen2024dynamic,luo2024distribution}. 
The seller first runs a randomized burn-in phase, collects pseudo-responses from uniform prices, passes the resulting offline dataset to an estimation oracle, and then freezes the returned utility estimate as the scalar pilot throughout the subsequent $\ORBIT$ phase, as detailed in Algorithm~\ref{alg:Init}.

\begin{algorithm}[t]
\caption{Explore-then-$\ORBIT$}\label{alg:Init}
\begin{algorithmic}[1]
    \State \textbf{Input:} structural interval $\cU$, exploration length $n_{\mathsf{exp}}$, estimation oracle $\cB^{\mathsf{est}}$, horizon $T$, and $\ORBIT$ parameters.
    \State Initialize dataset $\cD= \varnothing$.
    \For{$t=1,\dots,n_{\mathsf{exp}}$}
        \State Observe $\mc_t$, post $p_t\sim \mathrm{Unif}[0,p_{\max}]$, observe $y_t$, set $Z_t:=p_{\max}y_t$, and add $(\mc_t,Z_t)$ to $\cD$.
    \EndFor
    \State Compute $\widehat\mu:=\cB^{\mathsf{est}}(\cD)$ and freeze this estimator for all future rounds.
    \State Initialize $\ORBIT$ with upper pilot-input budget $H=T-n_{\mathsf{exp}}$ and the remaining $\ORBIT$ parameters.
    \For{$t=n_{\mathsf{exp}}+1,\dots,T$}
        \State Observe $\mc_t$, set $\tilde u_t=\proj_{\cU}(\widehat\mu(\mc_t))$, and call $\ORBIT$ with scalar pilot $\tilde u_t$; the $\ORBIT$-selected price is posted, the resulting $y_t$ is fed back to $\ORBIT$, and $\ORBIT$ updates its internal state.
    \EndFor
\end{algorithmic}
\end{algorithm}

Due to the two phases design, the pilot estimator constructed in Algorithm~\ref{alg:Init} satisfies naturally the independence condition required in Assumption~\ref{assump:orbit-interface}, so we will focus on the discussion of pilot accuracy in the followed part.

Such modularity \textit{shifts} the statistical burden from  $\ORBIT$ to the offline estimation oracle over $\cF$. Unlike the adaptive construction in Section~\ref{sec:linear-main}, which does not require assumptions on the context distribution, the explore-first approach relies on whatever regularity conditions are needed for the oracle to deliver a uniform error guarantee, which may be different for different $\cF$. We state the general requirement through the performance of an offline estimation oracle condition following 
\citet{gong2025minimax}.

\begin{assumption}[Offline estimation oracle]\label{cond:offline-estimation}
Fix a utility class $\cF$ and a context distribution $P_{\cC}$. We say that an $\cF$-dependent offline estimation oracle $\cB^{\mathsf{est}}$ has error rate $\varepsilon_n(\delta)$ under $P_{\cC}$ if for each $\mu\in\cF$ and each i.i.d. dataset $\cD_n=\{(\mc_i,Z_i)\}_{i=1}^n$ satisfying that $\mc_i\sim P_{\cC}$, $\E[Z_i\mid \mc_i]=\mu(\mc_i)$, and $\abs{Z_i}\leq p_{\max}$, the output $\widehat\mu:=\cB^{\mathsf{est}}(\cD_n)$ satisfies that 
\begin{align*}
    \Prob\left(\sup_{\mc\in\cC}\abs{\widehat\mu(\mc)-\mu(\mc)}\leq \varepsilon_n(\delta)\right)
    \geq 1-\delta .
\end{align*}
\end{assumption}

The theorem below is an immediate consequence of the conditional $\ORBIT$ guarantee. It shifts the statistical burden from $\ORBIT$ to the offline oracle over $\cF$.

\begin{theorem}[Regret with offline utility oracles]\label{thm:regret-with-offline-oracles}
Assume that Assumptions~\ref{assump:bounded}--\ref{assump:rho} hold, $\mu_\ast\in\cF$, and the contexts are i.i.d. from $P_{\cC}$. Let $\delta_T=T^{-3}$. Assume further that Assumption~\ref{cond:offline-estimation} holds for $\cB^{\mathsf{est}}$ under $(\cF,P_{\cC})$, and for all sufficiently large $n$,
\begin{align}\label{eq:offline-rate-condition}
    \varepsilon_n(\delta_T)\leq \cV_T(\cF)n^{-\alpha}
\end{align}
for some $\alpha>0$. Run Algorithm~\ref{alg:Init} with
\begin{align*}
    n_{\mathsf{exp}}\asymp T^{\frac{1}{1+2\alpha}}\cV_T(\cF)^{\frac{2}{1+2\alpha}},
\end{align*}
estimation oracle $\cB^{\mathsf{est}}$, and $\ORBIT$ parameters as in Theorem~\ref{thm:upper}. If $n_{\mathsf{exp}}<T$ and the resulting pilot accuracy satisfies the smallness conditions required in Theorem~\ref{thm:upper}, we have that 
\begin{align*}
    \Regret(T)
    =\widetilde{\cO}\left(
        T^{\frac{2\beta-1}{4\beta-3}}
        +T^{\frac{1}{1+2\alpha}}\cV_T(\cF)^{\frac{2}{1+2\alpha}}
    \right).
\end{align*}
\end{theorem}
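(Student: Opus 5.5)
The plan is to split the regret of Algorithm~\ref{alg:Init} into the exploration phase $t\leq n_{\mathsf{exp}}$ and the $\ORBIT$ phase $t>n_{\mathsf{exp}}$, and then tune $n_{\mathsf{exp}}$.

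\textbf{Exploration phase.} Since prices lie in $[0,p_{\max}]$, each round costs at most $p_{\max}$ regret, so this phase contributes at most $p_{\max}n_{\mathsf{exp}}$. The exploration data also satisfy the hypotheses of Assumption~\ref{cond:offline-estimation}. Contexts are i.i.d.\ from $P_{\cC}$ and $\abs{Z_t}\leq p_{\max}$. By the uniform-price identity of Section~\ref{subsec:adaptive-linear-pilot} (valuations lie in $[0,p_{\max}]$ and $\E\xi=0$), we have $\E[Z_t\mid\mc_t]=\mu_\ast(\mc_t)$. Moreover, the pairs $(\mc_t,Z_t)$ are i.i.d.\ because the pricing in this phase is non-adaptive.

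\textbf{Pilot accuracy.} With probability at least $1-T^{-3}$, the oracle returns $\widehat\mu$ with $\sup_{\mc}\abs{\widehat\mu(\mc)-\mu_\ast(\mc)}\leq\eta:=\cV_T(\cF)n_{\mathsf{exp}}^{-\alpha}$. Projection onto $\cU$ is $1$-Lipschitz and $\mu_\ast(\mc)\in\cU$, so $\abs{\tilde u_t-u_t}\leq\eta$ for every $t>n_{\mathsf{exp}}$. This gives pilot accuracy in Assumption~\ref{assump:orbit-interface}.

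\textbf{Independence.} The estimate $\widehat\mu$ is a function of the exploration data only. The $\ORBIT$-phase contexts are exogenous, and the $\ORBIT$-phase noises are fresh i.i.d.\ draws independent of everything earlier. Hence the index--pilot sequence $\{(u_t,\proj_{\cU}(\widehat\mu(\mc_t)))\}_{t>n_{\mathsf{exp}}}$, with its deterministic length $N=T-n_{\mathsf{exp}}$, is independent of the $\ORBIT$ noises and of $\ORBIT$'s internal randomness. This gives the independence part of Assumption~\ref{assump:orbit-interface}.

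\textbf{$\ORBIT$ phase.} On the oracle good event, Theorem~\ref{thm:upper} applies conditionally, with $H=N\leq T$ and $h=T^{-1/(4\beta-3)}$; the theorem's stated smallness conditions on $\eta$ are assumed. Taking expectation over the pilot sequence gives $\widetilde{\cO}(T^{\frac{2\beta-1}{4\beta-3}}+T\eta^2)$ for this phase. Two small points need checking:
\begin{itemize}
\item Using $H=N$ instead of $T$ in the theorem changes only logarithmic factors and the $H^{-4}$ failure term. The term $\sqrt{N/h}$ is at most $\sqrt{T/h}$.
\item The oracle failure event has probability at most $T^{-3}$. On it the regret is at most $p_{\max}T$, contributing $\cO(T^{-2})$.
\end{itemize}

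\textbf{Tuning.} Summing the pieces,
\[
\Regret(T)=\widetilde{\cO}\big(T^{\frac{2\beta-1}{4\beta-3}}+n_{\mathsf{exp}}+T\cV_T(\cF)^2n_{\mathsf{exp}}^{-2\alpha}\big).
\]
Balancing the last two terms gives $n_{\mathsf{exp}}^{1+2\alpha}\asymp T\cV_T(\cF)^2$, which is the prescribed $n_{\mathsf{exp}}$. At this choice both terms equal $T^{\frac{1}{1+2\alpha}}\cV_T(\cF)^{\frac{2}{1+2\alpha}}$. We also need $n_{\mathsf{exp}}$ large enough for~\eqref{eq:offline-rate-condition} to be in force, which holds for large $T$ whenever this term grows.

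\textbf{Main obstacle.} The substantive step is the independence part of Assumption~\ref{assump:orbit-interface}. I would handle it by conditioning on the exploration-phase $\sigma$-algebra together with the full context sequence, then arguing that the later noises are independent of this $\sigma$-algebra. Everything else is bookkeeping on top of Theorem~\ref{thm:upper}.
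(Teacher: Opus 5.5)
Your proposal is correct and follows the route the paper intends. The paper gives no separate proof and calls the theorem an immediate consequence of Theorem~\ref{thm:upper}, mirroring the proof of Corollary~\ref{cor:upper}: exploration cost $p_{\max}n_{\mathsf{exp}}$, a $T\eta^2$ term with $\eta=\cV_T(\cF)n_{\mathsf{exp}}^{-\alpha}$ on the $1-T^{-3}$ oracle event, independence from the two-phase design, and balancing $n_{\mathsf{exp}}$ against $T\eta^2$.
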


The error rate condition~\eqref{eq:offline-rate-condition} above is well-understood for many function classes, such as the linear models \citep{fan2024policy,wang2025tight,han2026general}, generalized linear models \citep{li2017provably}, and finite classes \citep{gong2025minimax}. Plugging the associated oracles into Theorem~\ref{thm:regret-with-offline-oracles} above directly recovers or extends the corresponding regret guarantees, so we do not pursue these directions here.

In the remainder of this section, we will instead focus on two utility classes that are adopted widely in contextual pricing yet fall outside the linear analysis of Section~\ref{sec:linear-main}. Section~\ref{subsec:linear-sparse-utility} considers a sparse high-dimensional linear utility, where a Lasso-based oracle applies under a compatibility condition on the context covariance. Section~\ref{subsec:nonparametric-utility} considers a $\gamma$-H\"{o}lder
nonparametric utility, for which a local-polynomial oracle applies under a bounded-density condition. For both cases, Theorem~\ref{thm:regret-with-offline-oracles} gives \textit{improved or new} regret bounds \textit{without} the need of 
stronger assumptions made in prior works.

\subsection{Sparse high-dimensional linear utility}\label{subsec:linear-sparse-utility}

Our first instantiation is the sparse high-dimensional linear model, widely
adopted in contextual pricing under various feedback
types~\citep{javanmard2019dynamic,ban2021personalized,javanmard2020multi}.

\begin{definition}[Sparse linear utility model]\label{def:linear-sparse}
A $d$-dimensional, $s$-sparse linear utility model has $\mu_\ast(\mc)=\mc^\top\theta_\ast$ for some $\cC\subset\mathbb B_\infty^d(1)$ and $\theta_\ast\in\mathbb B_1^d(C_\theta)$ satisfying that $\|\theta_\ast\|_0\leq s$ and $\mc^\top\theta_\ast\geq0$ for all $\mc\in\cC$.
\end{definition}

The model above satisfies Assumption~\ref{assump:bounded} with $\cU=[0,C_\theta]$. The $\ell_\infty$-bound on $\mc$ in Definition~\ref{def:linear-sparse} is the standard convention of~\citet{bastani2020online,hao2020high}, ensuring a $d$-independent lower bound on $c_{\min}$; under only an-$\ell_2$ bound one implicitly has $c_{\min} = \cO(1/d)$, which inflates the final regret by an
extra $\sqrt{d}$ factor.

\paragraph{Context distribution.} To utilize sparsity, we adopt the following standard compatibility
condition~\citep{buhlmann2011statistics,bastani2020online,javanmard2019dynamic}:

\begin{assumption}[Compatibility condition]\label{assumption:compatibility}
The contexts are i.i.d. from a distribution $P_{\cC}$ on $\cC$. Let
$\Sigma:=\E_{P_{\cC}}[\mc\mc^\top]$. There exists some $c_{\min}>0$ such that for each $I\subset[d]$,
\[
    \theta^\top \Sigma \theta
    \geq c_{\min}\,\frac{\lVert \theta_I \rVert_1^2}{\lvert I \rvert},
    \quad \forall \theta \in \R^d
    \text{ with } \lVert \theta_{I^c}\rVert_1 \leq 3\lVert \theta_I\rVert_1.
\]
\end{assumption}

\paragraph{Offline regression oracle.} For the offline oracle, we employ the Lasso estimator
\begin{align}\label{eq:f-Lasso}
    \widehat\mu(\mc):=\mc^\top\widehat\theta, \qquad
    \widehat\theta  \in\argmin_{\theta\in\R^d} \frac{1}{n_{\mathsf{exp}}}\sum_{t=1}^{n_{\mathsf{exp}}}  (Z_t-\mc_t^\top\theta)^2 +\lambda\|\theta\|_1,  \qquad
    \lambda=C_\lambda p_{\max}\sqrt{\frac{\log(dT)}{n_{\mathsf{exp}}}}.
\end{align}
Under Assumption~\ref{assumption:compatibility}, the standard Lasso theory shows that with probability at least $1-\cO(T^{-3})$,
\begin{align*}
    \sup_{\mc\in\cC}\abs{\widehat\mu(\mc)-\mu_\ast(\mc)} \leq
    C\frac{s p_{\max}}{c_{\min}}\sqrt{\frac{\log(dT)}{n_{\mathsf{exp}}}},
\end{align*}
up to the usual high-probability passage from population to empirical compatibility for the burn-in design; equivalently, one may condition directly on a realized compatible burn-in design. Consequently, Assumption~\ref{cond:offline-estimation} holds with $\alpha=1/2$ and $\cV_T(\cF)=\widetilde{\cO}(s)$, and Theorem~\ref{thm:regret-with-offline-oracles} yields the following bound.

\begin{corollary}\label{thm:regret-sparse}
Under Assumptions~\ref{assump:bounded}--\ref{assump:rho}
and~\ref{assumption:compatibility} with the sparse utility model in
Definition~\ref{def:linear-sparse}, Algorithm~\ref{alg:Init} with
$n_{\mathsf{exp}} \asymp s\sqrt{T}$, Lasso oracle in~\eqref{eq:f-Lasso},
and  $\ORBIT$ parameters as in Theorem~\ref{thm:upper} satisfies that 
\[
    \Regret(T)
    = \widetilde{\cO}\left(
        T^{\frac{2\beta-1}{4\beta-3}}
        + s\sqrt{T}
    \right).
\]
\end{corollary}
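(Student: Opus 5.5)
The corollary is an instantiation of Theorem~\ref{thm:regret-with-offline-oracles} with $\alpha=1/2$ and $\cV_T(\cF)=\widetilde{\cO}(s)$. The proof therefore reduces to verifying that theorem's hypotheses for the Lasso oracle and substituting the rate.

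\textbf{Structural hypotheses.} Definition~\ref{def:linear-sparse} gives $\abs{\mc^\top\theta_\ast}\le \|\mc\|_\infty\|\theta_\ast\|_1\le C_\theta$ and $\mc^\top\theta_\ast\ge 0$, so Assumption~\ref{assump:bounded} holds with $\cU=[0,C_\theta]$. Assumption~\ref{assumption:compatibility} supplies i.i.d.\ contexts. During burn-in, $Z_t=p_{\max}y_t$ with $p_t\sim\mathrm{Unif}[0,p_{\max}]$. By the computation preceding~\eqref{eq:linear-ridge-regression}, $\E[Z_t\mid\mc_t]=\mc_t^\top\theta_\ast$ and $\abs{Z_t}\le p_{\max}$. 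These are exactly the data requirements of Assumption~\ref{cond:offline-estimation}.

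\textbf{Oracle rate.} Next, I would establish the uniform Lasso bound displayed before the corollary. The noise $e_t:=Z_t-\mc_t^\top\theta_\ast$ is bounded, hence sub-Gaussian with parameter $\lesssim p_{\max}$.
\begin{itemize}
\item A Hoeffding bound plus a union bound over the $d$ coordinates gives $\|n_{\mathsf{exp}}^{-1}\sum_t e_t\mc_t\|_\infty\le \lambda/2$ with probability $1-\cO(T^{-3})$, for $C_\lambda$ large.
\item Population compatibility must transfer to the empirical Gram matrix $\widehat\Sigma$. Entrywise concentration gives $\|\widehat\Sigma-\Sigma\|_{\max}\lesssim\sqrt{\log(dT)/n_{\mathsf{exp}}}$. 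On the cone $\|\theta_{I^c}\|_1\le 3\|\theta_I\|_1$ this perturbs $\theta^\top\Sigma\theta$ by at most $16\|\theta_I\|_1^2\|\widehat\Sigma-\Sigma\|_{\max}$. This perturbation is at most $c_{\min}\|\theta_I\|_1^2/(2s)$ once $n_{\mathsf{exp}}\gtrsim s^2\log(dT)/c_{\min}^2$. The choice $n_{\mathsf{exp}}\asymp s\sqrt T$ meets this requirement for large $T$, provided $s\lesssim \sqrt T$ up to logs; otherwise the claimed bound is trivial since $s\sqrt T\ge T$.
\item The standard oracle inequality then gives $\|\widehat\theta-\theta_\ast\|_1\lesssim s\lambda/c_{\min}$.
\item H\"older's inequality with $\|\mc\|_\infty\le 1$ converts this into the sup-norm bound $\sup_{\mc}\abs{\widehat\mu(\mc)-\mu_\ast(\mc)}\lesssim (s p_{\max}/c_{\min})\sqrt{\log(dT)/n_{\mathsf{exp}}}$.
\end{itemize}
Thus $\varepsilon_n(T^{-3})\le \cV_T n^{-1/2}$ with $\cV_T\asymp (s p_{\max}/c_{\min})\sqrt{\log(dT)}=\widetilde{\cO}(s)$.

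\textbf{Substitution and pilot conditions.} With $\alpha=1/2$, Theorem~\ref{thm:regret-with-offline-oracles} prescribes $n_{\mathsf{exp}}\asymp T^{1/2}\cV_T=\widetilde{\cO}(s\sqrt T)$, and the regret becomes $\widetilde{\cO}(T^{\frac{2\beta-1}{4\beta-3}}+s\sqrt T)$. It remains to check the smallness conditions of Theorem~\ref{thm:upper}. The pilot error is $\eta\asymp \cV_T n_{\mathsf{exp}}^{-1/2}\asymp (\cV_T/\sqrt T)^{1/2}$. For $s=o(\sqrt T)$ up to logs, $\eta\to 0$, so $\eta\le 1/2$, $h,\eta\le c_0\sigma_r\rho_{\rm loc}^2/(L_gp_{\max})$ and $L_p\eta\le\rho_{\rm loc}/8$ all hold for large $T$. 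In the complementary regime the bound is trivial. Projection onto $\cU$ is $1$-Lipschitz and $\mu_\ast(\mc)\in\cU$, so it preserves accuracy. Finally, independence in Assumption~\ref{assump:orbit-interface} holds because the estimator is frozen before $\ORBIT$ begins and later contexts are exogenous.

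\textbf{Main obstacle.} The delicate step is the population-to-empirical compatibility transfer, which must hold uniformly over all index sets $I$ of size $\le s$. I would first try the max-norm perturbation argument above, since it avoids restricted-eigenvalue chaining. I would use it only for $I=\mathrm{supp}(\theta_\ast)$, which is all the Lasso oracle inequality needs.
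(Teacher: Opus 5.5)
Your proposal is correct and matches the paper's argument. The paper likewise verifies Assumption~\ref{cond:offline-estimation} for the Lasso oracle with $\alpha=1/2$ and $\cV_T(\cF)=\widetilde{\cO}(s)$ via standard Lasso theory, then plugs these into Theorem~\ref{thm:regret-with-offline-oracles}; your max-norm transfer of compatibility to the empirical Gram matrix and your trivial-regime handling of the smallness conditions spell out what the paper only invokes as ``the usual high-probability passage from population to empirical compatibility'' and as hypotheses of that theorem.
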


\subsection{Nonparametric utility model}\label{subsec:nonparametric-utility}

Our second example is a $\gamma$-H\"older nonparametric utility,
extending the $\gamma \in \{2,4\}$ setting of~\citet{chen2024dynamic} to
arbitrary smoothness $\gamma > 0$.

\begin{definition}[$\gamma$-H\"older utility model]\label{def:holder-utility}
Let $\gamma>0$ and $\cC=[0,1]^d$. The utility function $\mu_\ast:\cC\to[0,1]$ belongs to a standard $\gamma$-H\"older ball with radius $L_\mu$ on $\cC$.
\end{definition}

The model above satisfies Assumption~\ref{assump:bounded} with $\cU=[0,1]$. We impose the usual bounded-density condition for sup-norm nonparametric regression.

\paragraph{Context distribution.} We impose the following regularity condition on the covariate distribution as in \citet{chen2024dynamic, TsybakovNon}.

\begin{assumption}[Density condition]\label{assumption:density}
For each $t \in [n_{\mathsf{exp}}]$, the contexts are i.i.d. from a distribution $P_{\cC}$ on $\cC$ with density $f$ satisfying that $0 < f_{\min}\leq f(\mc)\leq f_{\max}<\infty$ for all $\mc\in\cC$.
\end{assumption}

\paragraph{Offline regression oracle.} Let $\ell_\gamma:=\lceil\gamma\rceil-1$ and choose bandwidth $b = \big(\frac{\log T}{ n_{\mathsf{exp}}}\big)^{\frac{1}{2\gamma+d}}$. The local-polynomial oracle of degree $\ell_\gamma$ returns
\begin{align}\label{eq:f-locpoly}
    \widehat\mu(x):=(\widehat\vartheta_x)_0, \qquad  \widehat\vartheta_x \in\argmin_{\vartheta}  \sum_{t:\|\mc_t-x\|_\infty\leq b}  \left(Z_t- \sum_{\abs{r}\leq\ell_\gamma}\vartheta_r\left(\frac{\mc_t-x}{b}\right)^r  \right)^2.
\end{align}
By the classical sup-norm guarantee for local polynomial regression \citep{TsybakovNon}, under Assumption~\ref{assumption:density}, we have that with probability at least $1-\cO(T^{-3})$,
\begin{align*}
    \sup_{\mc\in\cC}\abs{\widehat\mu(\mc)-\mu_\ast(\mc)} \leq C_{\gamma,d}\left(\frac{\log T}{n_{\mathsf{exp}}}\right)^{\frac{\gamma}{2\gamma+d}},
\end{align*}
where $C_{\gamma,d}$ depends on $\gamma,d,L_\mu,f_{\min},f_{\max}$, and $p_{\max}$. Thus, Assumption~\ref{cond:offline-estimation} holds with $\alpha=\gamma/(2\gamma+d)$ and $\cV_{T}(\cF)=\widetilde{\cO}(1)$, and Theorem~\ref{thm:regret-with-offline-oracles} gives the following bound.

\begin{corollary}\label{thm:holder-regret}
Under Assumptions~\ref{assump:bounded}--\ref{assump:rho}, bounded realized-valuation normalization, and Assumption~\ref{assumption:density} with the $\gamma$-H\"older utility model in Definition~\ref{def:holder-utility}, Algorithm~\ref{alg:Init} with $n_{\mathsf{exp}}\asymp T^{\frac{2\gamma+d}{4\gamma+d}}$, local-polynomial oracle in~\eqref{eq:f-locpoly}, and $\ORBIT$ parameters as in Theorem~\ref{thm:upper} satisfies that for all sufficiently large $T$ with $n_{\mathsf{exp}}<T$,
\begin{align*}
    \Regret(T) =\widetilde{\cO}\left( T^{\frac{2\beta-1}{4\beta-3}}+T^{\frac{2\gamma+d}{4\gamma+d}}\right).
\end{align*}
\end{corollary}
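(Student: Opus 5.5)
Corollary~\ref{thm:holder-regret} follows by instantiating Theorem~\ref{thm:regret-with-offline-oracles} with $\alpha=\gamma/(2\gamma+d)$ and $\cV_T(\cF)=\widetilde{\cO}(1)$. The plan has three steps, taken in order.

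\textbf{Step 1: the burn-in data fit Assumption~\ref{cond:offline-estimation}.} For $t\le n_{\mathsf{exp}}$ the contexts are i.i.d.\ from $P_{\cC}$ by Assumption~\ref{assumption:density}. The price $p_t$ is uniform on $[0,p_{\max}]$ and independent of $(\mc_t,\xi_t)$. By the bounded realized-valuation normalization and $\E[\xi]=0$, the pseudo-response $Z_t=p_{\max}y_t$ satisfies
\begin{align*}
\E[Z_t\mid\mc_t]=\mu_\ast(\mc_t), \qquad \abs{Z_t}\le p_{\max}.
\end{align*}
The pairs $(\mc_t,Z_t)$ are i.i.d.\ because the noises are i.i.d.\ and exogenous.

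\textbf{Step 2: the oracle rate.} Apply the sup-norm local-polynomial bound quoted before the corollary with $\delta_T=T^{-3}$:
\begin{align*}
\varepsilon_n(\delta_T)\le C_{\gamma,d}(\log T/n)^{\gamma/(2\gamma+d)}.
\end{align*}
This is~\eqref{eq:offline-rate-condition} with $\cV_T(\cF)=C_{\gamma,d}(\log T)^{\gamma/(2\gamma+d)}$, which is polylogarithmic in $T$.

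This is the main obstacle, because the classical result is usually stated pointwise or in expectation. To get it uniformly over $\cC=[0,1]^d$ with high probability, I would do the following:
\begin{itemize}
\item[(i)] Lower-bound the smallest eigenvalue of the local Gram matrices uniformly over centers, using $f\ge f_{\min}$, matrix Bernstein, and a union bound over a $T^{-c}$-net of centers.
\item[(ii)] Bound the bias by $C L_\mu b^\gamma$ from the H\"older condition.
\item[(iii)] Bound the stochastic term by $C\sqrt{\log T/(nb^d)}$ via Hoeffding on the net, then pass to all centers by Lipschitz continuity of the estimator in the center $x$ on the high-probability event.
\end{itemize}
With the stated bandwidth $b$, the bias and stochastic terms balance. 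Finally, the projection onto $\cU=[0,1]$ does not increase the error because $\mu_\ast\in[0,1]$.

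\textbf{Step 3: apply the theorem.} The exploration length prescribed by Theorem~\ref{thm:regret-with-offline-oracles} is
\begin{align*}
n_{\mathsf{exp}}\asymp T^{1/(1+2\alpha)}\cV_T^{2/(1+2\alpha)}.
\end{align*}
Since $1+2\alpha=(4\gamma+d)/(2\gamma+d)$, this equals $T^{(2\gamma+d)/(4\gamma+d)}$ up to logarithmic factors, which matches the corollary's choice.

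We still need to check the theorem's side conditions.
\begin{itemize}
\item The resulting pilot accuracy is $\eta\asymp\widetilde{\cO}(T^{-\gamma/(4\gamma+d)})$, which tends to zero. So for all sufficiently large $T$ it meets the smallness requirements $\eta\le c_0\sigma_r\rho_{\rm loc}^2/(L_gp_{\max})$ and $L_p\eta\le\rho_{\rm loc}/8$. The condition $n_{\mathsf{exp}}<T$ is assumed in the statement.
\item Assumption~\ref{assump:orbit-interface} holds on the oracle's good event, because the pilot $\widehat\mu$ is frozen from burn-in data and is independent of the ORBIT-phase noises and randomization.
\end{itemize}
The theorem then gives
\begin{align*}
\widetilde{\cO}\big(T^{\frac{2\beta-1}{4\beta-3}}+T^{\frac{2\gamma+d}{4\gamma+d}}\big).
\end{align*}
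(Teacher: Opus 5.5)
Your proposal is correct and follows the paper's route: the paper also takes the sup-norm local-polynomial bound as given, reads off $\alpha=\gamma/(2\gamma+d)$ and $\cV_T(\cF)=\widetilde{\cO}(1)$, and plugs these into Theorem~\ref{thm:regret-with-offline-oracles}, with the frozen burn-in pilot supplying the interface independence. One step in your optional sketch of that bound (which the paper cites from the literature rather than proves) needs fixing: the estimator in~\eqref{eq:f-locpoly} uses a hard window $\mathbf 1\{\|\mc_t-x\|_\infty\le b\}$, so it is not Lipschitz in the center $x$, and the passage from the net to all of $\cC$ should instead use either that the selected index set takes only polynomially many values in $n$ as $x$ ranges over $[0,1]^d$, or a bound on the jumps, each of order $1/(nb^d)$, caused by the few sample points that cross the window boundary.
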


In comparison, under the same utility model and assumptions, the best-known regret prior to our work was achieved in \citet{chen2024dynamic}
\begin{align*}
    \widetilde{\cO}\Big( dT^{\frac{2\beta+1}{4\beta-1}} + T^{\frac{2\gamma+d}{4\gamma+d}} + \bm{1}\{\gamma=4\}\, T^{\frac{7}{13}} \Big), \quad \gamma \in \{2,4\}.
\end{align*}
Corollary~\ref{thm:holder-regret} above extends this result to all $\gamma > 0$ regime, and improves the dependence on both H\"older smoothness exponents $\gamma$ and $\beta$.

\section{Experiments}\label{sec:experiments}

In this section, we complement the theoretical results with a numerical study under three settings: linear utility with i.i.d.\ uniform-on-sphere covariates (Section~\ref{subsec: exp-linear-iid}), linear utility with i.i.d. ill-conditioned anisotropic transformed-sphere covariates (Section~\ref{subsec: exp-heterogeneous}), and sparse linear utility with i.i.d.\ uniform-cube covariates (Section~\ref{subsec: exp-sparse}).

Throughout the section, we simulate the \textit{linear semiparametric model} $u_t=\mc_t^\top\theta_\ast$, for the non-sparse case (Section~\ref{sec:linear-main}) and the sparse case (Section~\ref{subsec:linear-sparse-utility}) with H\"older exponent $\beta = 2$. The valuation distribution with tail function $g(\cdot)$ is fixed and shared across all methods. More precisely, with the smooth cutoff function 
\begin{align*}
\varphi(t) := \begin{cases}
0, & t\leq 0,\\
e^{-1/t}/(e^{-1/t}+e^{-1/(1-t)}), & t\in(0,1),\\
1, & t\geq 1,
\end{cases}   
\end{align*}
we set \begin{align*}
    g(z):=1-\varphi\big((z{+}0.3)/0.6\big).
\end{align*}
As a result, the valuation noise $\xi_t$ is supported on $[-0.3,0.3]$.

The methods compared are \textit{ORBIT-Adaptive} (Algorithm~\ref{alg:oplcb} with the adaptive pilot of Algorithm~\ref{alg:adaptive-init}), \textit{Explore-then-ORBIT-Lasso} (Algorithm~\ref{alg:Init} with the Lasso oracle of Section~\ref{subsec:linear-sparse-utility}), the doubling-episodic explore-then-commit algorithm of \citet{fan2024policy} with the OLS and Lasso oracles (\textit{ETC-OLS}, \textit{ETC-Lasso}), and the LPSP-style baseline of \citet{han2026general} (\textit{LPSP}), which we implement from their reference code. The algorithm in \citet{wang2025tight} is nearly the same as the implementation of \citet{han2026general} with $\beta = 2$, so is covered by the LPSP baseline.

In our implementation of $\ORBIT$, we employ the zeroth-order online gradient descent for the induced BCO problem in the local refinement step~\citep{flaxman2005online}. This differs from the theoretical version of Algorithm~\ref{alg:oplcb}, whose analysis is based on the online Newton-step refinement subroutine of~\citet{fokkema2024online}. The BCO algorithm of~\citet{fokkema2024online} is designed primarily for theoretical guarantees and in our experiments, does not scale well with the horizon $T$ or with the dimensionality of the local polynomial parameterization. For computational simplicity and empirical stability, we therefore replace it with the zeroth-order online gradient-descent subroutine of~\citet{flaxman2005online}, which is easier to implement and performs well in our numerical experiments.

\subsection{Linear utility with non-degenerate covariates}\label{subsec: exp-linear-iid}

We compare the empirical performance of ORBIT-Adaptive, ETC-OLS, and LPSP under the linear utility model investigated in Section~\ref{sec:linear-main}.

\vspace{0.1cm}
\noindent\textbf{Utility model and context distribution.} Fix dimensionality $d\geq 2$. At each time $t$, we sample $\mc_t=(\mc_t^{\mathrm{rand}},1)$ with $\mc_t^{\mathrm{rand}}\sim_{\text{i.i.d.}} \text{Unif}(\mathbb S^{d-2})$. The true parameter is set to $\theta_\ast=(\mathbf{1}_{d-1}/\sqrt{d-1},2)$. Such a design satisfies Assumption~\ref{assump:linear}, up to a $\sqrt{2}$ rescaling of the context bound. Moreover, the latent index $u_t=\theta_\ast^\top \mc_t$ lies in $[1,3]$, and the perturbed utility $u_t+\xi_t$ lies in $[0.7,3.3]$. Hence, the choice of $p_{\max}=3.5$ ensures that the oracle optimal price is strictly interior for every realization considered in the experiment.

\vspace{0.1cm}
\noindent\textbf{Environment parameters.} Under this environment, we sweep the horizon $T\in\{3\cdot 10^3,10^4,3\cdot 10^4,10^5\}$ and  dimensionality $d\in\{5,10,20\}$, using $50$ independent repetitions for each $(T,d)$ setting.

\vspace{0.1cm}
\noindent\textbf{Results.} Figure~\ref{fig: linear-iid-regret} depicts the cumulative regret as a function of $T$ on a log--log scale. ORBIT-Adaptive has performance comparable to the baselines when $d=5$, and becomes substantially better as the dimensionality grows. This pattern is consistent with the theory in two aspects. First, ORBIT-Adaptive has a milder dimensionality dependence, with the leading pilot-estimation contribution scaling as $\sqrt{dT}$, whereas the guarantees for ETC-OLS and LPSP have dependence on $d$ of order $\mathcal O(d^3)$. Second, the regret guarantee of ORBIT-Adaptive, as well as the amount of exploration required by its adaptive pilot, does not inflate as $\lambda_{\min}^{-1}(\mathbb E[\mc_t\mc_t^\top])$ grows. This is important in the present bounded-context design, where the minimum eigenvalue of the design covariance inevitably decreases as $d$ grows. In fact, the reported regret of ORBIT-Adaptive decreases with $d$: under the uniform-sphere distribution and the chosen signal-spread parameter $\theta_\ast$, the adaptive pilot used by ORBIT becomes easier as $d$ increases (to see this, notice that the variance of $\mc_t^\top \theta_\ast$ scales as $\cO(1/d)$). In contrast, ETC-OLS and LPSP rely on exploration lengths that grow sharply as the minimum eigenvalue of the design covariance decays, and thus cannot exploit this favorable structure.

\begin{figure}[h!]
    \captionsetup[sub]{font=small}
    \centering
    \begin{subfigure}{0.32\textwidth}\includegraphics[width=\linewidth]{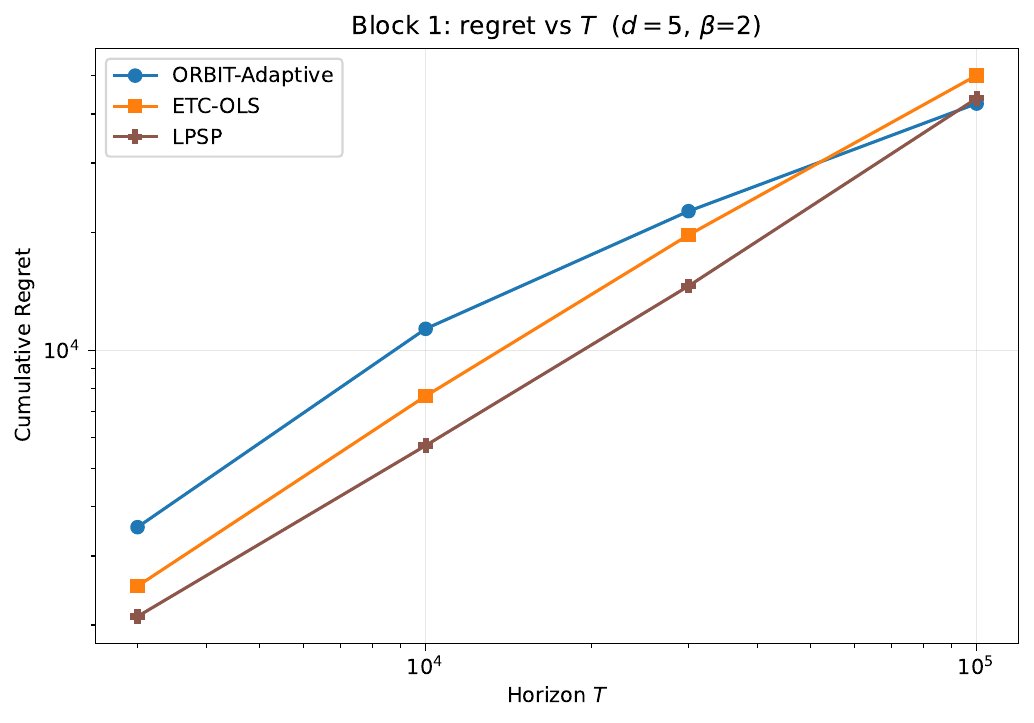}\caption*{$d=5$}\end{subfigure}
    \begin{subfigure}{0.32\textwidth}\includegraphics[width=\linewidth]{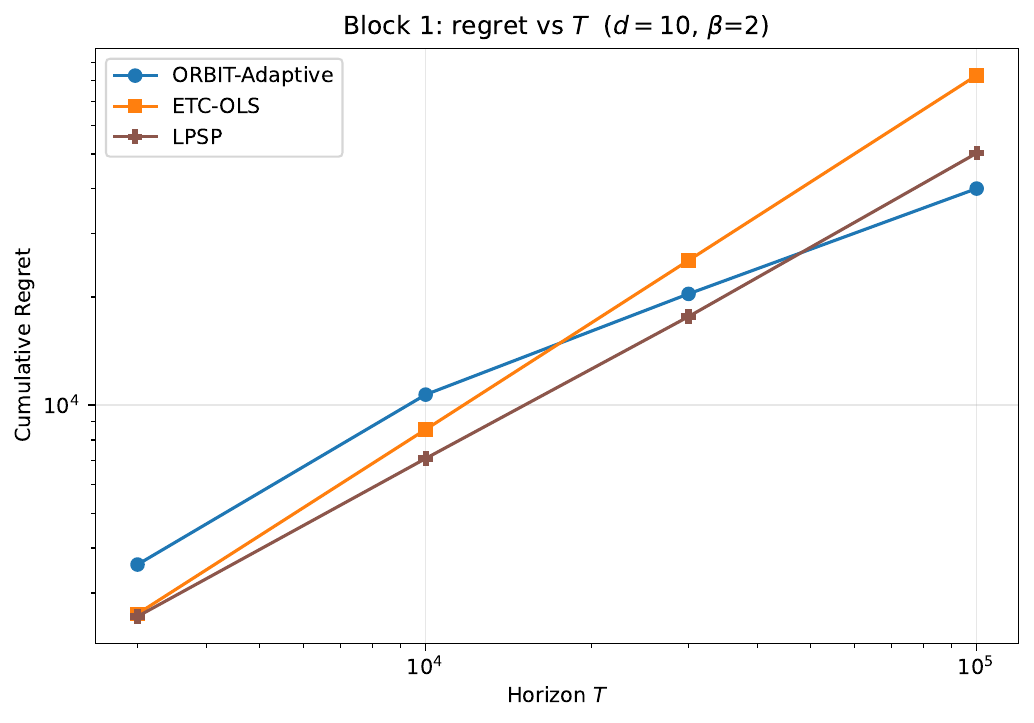}\caption*{$d=10$}\end{subfigure}
    \begin{subfigure}{0.32\textwidth}\includegraphics[width=\linewidth]{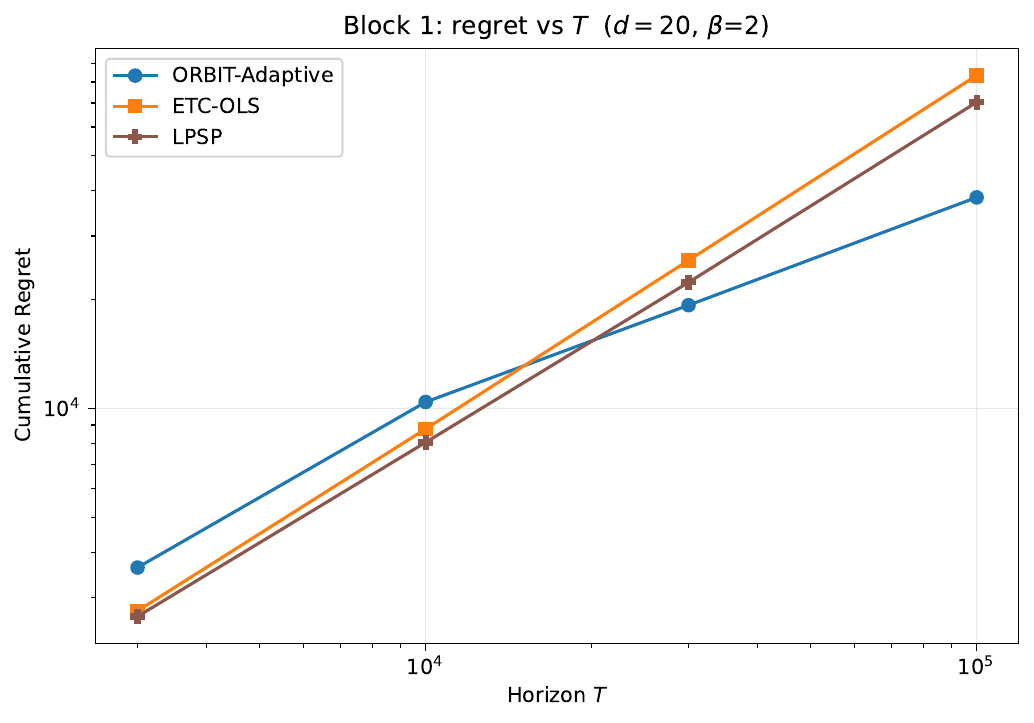}\caption*{$d=20$}\end{subfigure}
    \caption{Cumulative regret versus horizon $T$ for i.i.d.\ context dimensions $d\in\{5,10,20\}$, $50$ repetitions per setting.}
    \label{fig: linear-iid-regret}
\end{figure}

\subsection{Linear utility with ill-conditioned covariates}\label{subsec: exp-heterogeneous}

To expose the dependence of performance on the covariate structure, in this setting we compare the numerical results of ORBIT-Adaptive, ETC-OLS, and LPSP with ill-conditioned covariates.

\noindent\textbf{Utility model and context distribution.} We fix dimensionality $d\geq 2$ and sample $\mc_t = (\mc_t^{\mathrm{rand}}, 1)$ with $\mc_t^{\mathrm{rand}} = \Sigma_\varepsilon^{1/2} z_t$, where $z_t \sim_{\text{i.i.d.}} \text{Unif}(\mathbb S^{d-2})$ and
\begin{align*}
\Sigma_\varepsilon = (1-\varepsilon)v v^\top + \varepsilon I_{d-1} \qquad\text{ for some } \|v\|_2=1,\ v\perp\bm{1}_{d-1}.
\end{align*}
Such construction yields that $\mathrm{Cov}(\mc_t^{\mathrm{rand}}) = \Sigma_\varepsilon/(d-1)$: the design covariance is \emph{exactly proportional} to $\Sigma_\varepsilon$, and in particular, $\lambda_{\min}(\mathrm{Cov}(\mc_t^{\mathrm{rand}}))=\varepsilon/(d-1)$ depends linearly on the $\epsilon$ parameter.

We employ the same $\theta_\ast=(\mathbf{1}_{d-1}/\sqrt{d-1},2)$ as in Section~\ref{subsec: exp-linear-iid}. Under such design, the required assumptions are satisfied for the same reason.

\vspace{0.1cm}
\noindent\textbf{Environment parameters.} We fix $T=5\cdot 10^4$, $d=5$, and change $\varepsilon$ from $1.0$ to $0.05$ across the grid $\{1.0,0.5,0.2,0.1,0.05\}$. At $\varepsilon=1$, the design recovers the isotropic-on-sphere baseline of Section~\ref{subsec: exp-linear-iid} at $d=5$; as $\varepsilon\to 0$, the covariance of context then degenerates to the rank-one matrix $vv^\top/(d-1)$. We repeat the experiment 50 times for each $\varepsilon$.

\vspace{0.1cm}
\noindent\textbf{Results.} Figure~\ref{fig: heterogeneous-regret} reports the cumulative regret of the three algorithms as $\varepsilon$ varies, on a log--log scale. As $\varepsilon$ decreases, the regrets of both ETC-OLS and LPSP increase sharply. This is because their pilot-estimation stages depend on the inverse minimum singular value of the context covariance matrix, which deteriorates under stronger context concentration. In contrast, ORBIT-Adaptive exhibits the opposite trend and performs better as $\varepsilon$ decreases. Such observation is consistent with the uncertainty criterion in Algorithm~\ref{alg:adaptive-init}: concentration around a fixed one-dimensional direction reduces the intrinsic difficulty of piloting for the adaptive exploration procedure, even though it makes the ambient covariance matrix more ill-conditioned.

\begin{figure}[h!]
    \centering
    \includegraphics[width=0.55\textwidth]{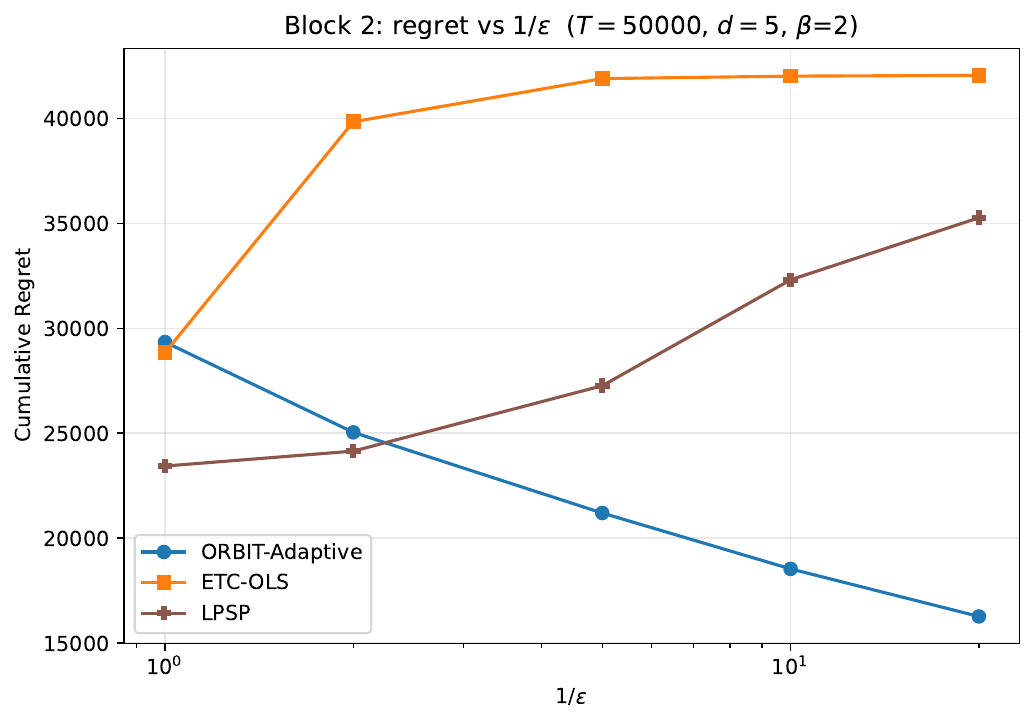}
    \caption{Cumulative regret versus $1/\varepsilon$ for anisotropic covariates, as $\varepsilon$ changes from $1$ to $0.05$, $50$ repetitions per setting.}
    \label{fig: heterogeneous-regret}
\end{figure}

\subsection{Sparse linear utility}\label{subsec: exp-sparse}
Finally, we compare the performance of Explore-then-ORBIT-Lasso, ETC-OLS, and ETC-Lasso under the sparse utility setting as described in Section~\ref{subsec:linear-sparse-utility}. We do not include the LPSP benchmark since its joint least-squares estimation step does not scale well in $d$, as described in \citet{han2026general}.

\begin{figure}[t]
    \captionsetup[sub]{font=small}
    \centering
    \begin{subfigure}{0.49\textwidth}\includegraphics[width=\linewidth]{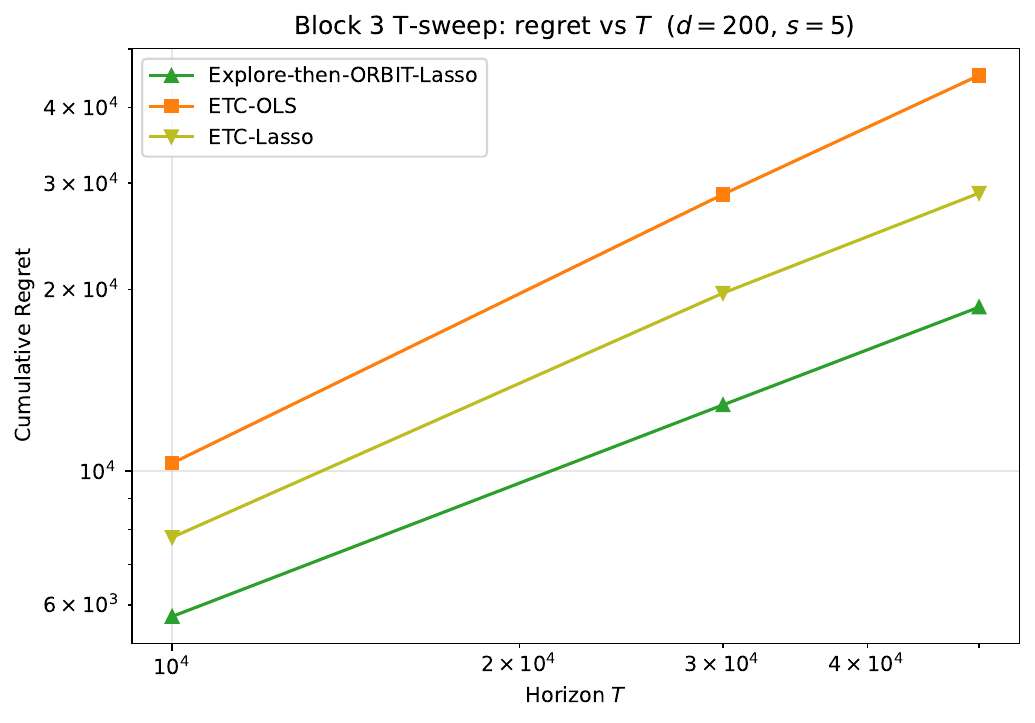}\caption*{regret vs $T$  ($d=200$)}\end{subfigure}
    \begin{subfigure}{0.49\textwidth}\includegraphics[width=\linewidth]{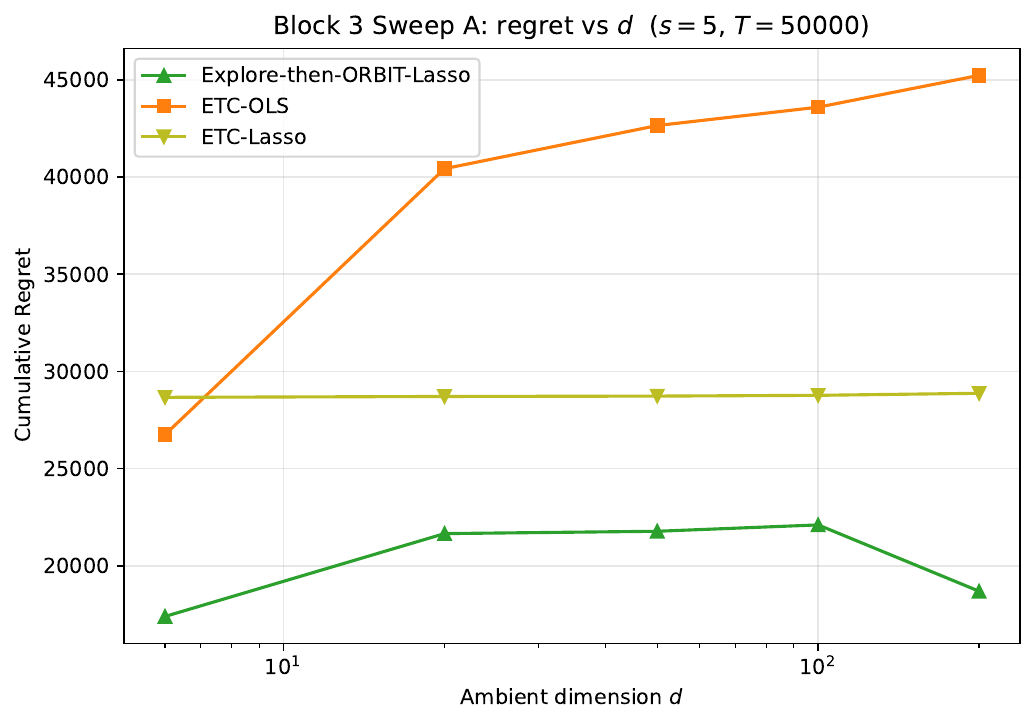}\caption*{regret vs $d$  ($T=5\cdot 10^4$)}\end{subfigure}
    \caption{Sparse linear utility at $s=5$, $50$ repetitions per setting. \textbf{Left:} cumulative regret versus horizon $T\in\{10^4,3\cdot 10^4,5\cdot 10^4\}$ at the largest dimension $d=200$ (log--log axes). \textbf{Right:} cumulative regret versus $d\in\{6,20,50,100,200\}$ at $T=5\cdot 10^4$ (log $x$-axis).}
    \label{fig: sparse-rate}
\end{figure}

\vspace{0.1cm}
\noindent\textbf{Utility model and context distribution.} Fix dimensionality $d\geq 2$ and  sparsity level $s$. At each time $t$, we sample $\mc_t=(\mc_t^{\mathrm{rand}},1)$, where the components of $\mc_t^{\mathrm{rand}}$ are drawn i.i.d.\ from $\mathrm{Unif}([-1,1])$. The true parameter $\theta_\ast$ is constructed as follows. We randomly select exactly $s$ nonzero non-intercept coordinates, assign each selected coordinate magnitude $1/s$ with an independent random sign, and set the last coordinate, corresponding to the intercept term, to be $2$. Such construction gives that $\|\theta_\ast\|_1=3$, so Definition~\ref{def:linear-sparse} is satisfied with $C_\theta=3$. Moreover, we have that $u_t\in[1,3]$ and $u_t+\xi_t\in[0.7,3.3]$. Thus, choosing $p_{\max}=3.5$ covers the full range of possible valuations and keeps the oracle price interior throughout the experiment.

\vspace{0.1cm}
\noindent\textbf{Environment parameters.} We report the cumulative regret on a log--log scale for horizons $T\in\{10^4,3\cdot 10^4,5\cdot 10^4\}$ with fixed dimensionality $d=200$. We also present the regret at $T=5\cdot 10^4$ as dimensionality varies over $d\in\{6,20,50,100,200\}$. For both experiments, we fix the sparsity level at $s=5$ and use $50$ independent repetitions for each setting.

\vspace{0.1cm}
\noindent\textbf{Results.} Figure~\ref{fig: sparse-rate} presents the results. The two panels show that both Explore-then-ORBIT-Lasso and ETC-Lasso substantially outperform ETC-OLS in the high-dimensional setting, reflecting their nearly dimension-free dependence under sparsity.

\section{Discussions} \label{sec:conclusion}

We have in this paper developed an oracle-price-map view of contextual pricing. Under the smoothness of the unknown noise tail and a strong revenue-geometry condition, the regret-relevant object is the one-dimensional map $u\mapsto p^\ast(u)$. $\ORBIT$ learns such map through a coarse-to-fine architecture: a scalar pilot locates the latent state, a short grid phase localizes a safe anchor price in each active bin, and a local polynomial convex-bandit routine refines prices inside the resulting trust region. For the baseline linear utility model, an adaptive pilot construction yields the fully online regret $\widetilde\cO(T^{\frac{2\beta-1}{4\beta-3}}+\sqrt{dT})$, and the lower bound unveils that the horizon exponent is minimax sharp for fixed $d$.

The same interface suggests several directions for future work. One is to design adaptive choices of the grid and trust-region parameters that avoid structural tuning constants. Another is to combine oracle-map learning with operational constraints such as inventory, capacity, and fairness. A third direction is to extend this framework to other semiparametric decision problems, such as auction reserve pricing where the target object is again a low-dimensional oracle decision rule rather than the full demand model.

\bigskip

\bibliographystyle{informs2014}
\bibliography{mybib}

\newpage
\begin{APPENDICES}

\section{CDF-shape regularity and quadratic revenue geometry}\label{appenidx: compare-assumption}
This appendix records a precise calculation connecting the cumulative distribution function (CDF)-level shape conditions to the quadratic revenue-growth bounds used in Assumption~\ref{assump:rho}. In particular, we will focus on the shape assumptions on the CDF function used in \citet{fan2024policy,javanmard2019dynamic}, i.e., Assumption 2.1+(A.26) in \cite{fan2024policy} and Assumption 1 in \cite{javanmard2019dynamic}. Let $f_\Xi=F_\Xi'$ be the density, and recall that $g=1-F_\Xi$. We define 
\begin{align*}
    \phi(v):=v-\frac{1-F_\Xi(v)}{f_\Xi(v)}=v-\frac{g(v)}{f_\Xi(v)} .
\end{align*}

\begin{assumption}\label{assumption:cdf-shape}
There exist some constants $c_l,c_u,c_\phi,M_f>0$ such that on $[-V,V]$, 
    \begin{enumerate}
        \item[(i)] $f_\Xi$ is continuously differentiable, $c_l\leq f_\Xi(z)\leq c_u$, and $|f_\Xi'(z)|\leq M_f$ for all $z\in[-V,V]$.
        \item[(ii)] $\phi'(z)\geq c_\phi$ for all $z\in[-V,V]$. 
    \end{enumerate}
\end{assumption}

\begin{lemma}\label{lem:local-concave-via-fan}
Assume that Assumptions~\ref{assump:bounded} and \ref{assumption:cdf-shape} hold. Fix any $u\in\cU$ and any price $p^\dagger\in[0,p_{\max}]$ such that $z^\dagger:=p^\dagger-u\in[-V,V]$ and $\phi(z^\dagger)=-u$. Then we have that for each $p\in[0,p_{\max}]$, 
    \begin{align*}
        \frac{c_lc_\phi}{2}\abs{p-p^\dagger}^2 \leq r(u,p^\dagger)-r(u,p) \leq \frac{c_u}{2}\left(2+\frac{M_f}{c_l^2}\right)\abs{p-p^\dagger}^2 .
    \end{align*}
\end{lemma}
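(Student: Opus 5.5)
Fix $u$ and write $z=p-u$, so that $r(u,p)=(u+z)g(z)$ and $\partial_p r(u,p)=g(z)-(u+z)f_\Xi(z)$. The key identity is
\begin{align*}
\partial_p r(u,p)=-f_\Xi(z)\bigl(u+z-g(z)/f_\Xi(z)\bigr)=-f_\Xi(z)\bigl(\phi(z)+u\bigr),
\end{align*}
so stationarity at $p^\dagger$ is exactly $\phi(z^\dagger)=-u$, and in particular $\partial_p r(u,p^\dagger)=0$. We will obtain both bounds by integrating the first derivative from $p^\dagger$:
\begin{align*}
r(u,p^\dagger)-r(u,p)=\int_p^{p^\dagger}\partial_p r(u,s)\,\d s,
\end{align*}
using the factored form for the lower bound and the second derivative for the upper bound. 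All points $s-u$ with $s\in[0,p_{\max}]$ lie in $[-V,V]$ by Assumption~\ref{assump:bounded}, so Assumption~\ref{assumption:cdf-shape} applies throughout.

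For the lower bound, take $s$ between $p$ and $p^\dagger$. Since $\phi(z^\dagger)+u=0$ and $\phi'\ge c_\phi$, the mean value theorem gives $\phi(s-u)+u$ the sign of $s-p^\dagger$, with $|\phi(s-u)+u|\ge c_\phi|s-p^\dagger|$. Combined with $f_\Xi\ge c_l$, this gives $-\partial_p r(u,s)\,\mathrm{sign}(s-p^\dagger)\ge c_lc_\phi|s-p^\dagger|$. Integrating from $p^\dagger$ to $p$, in either direction, yields $r(u,p^\dagger)-r(u,p)\ge \tfrac{c_lc_\phi}{2}|p-p^\dagger|^2$. The only point needing care is the sign bookkeeping for the cases $p<p^\dagger$ and $p>p^\dagger$.

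For the upper bound, we use Taylor's theorem with $\partial_p r(u,p^\dagger)=0$, which gives $r(u,p^\dagger)-r(u,p)\le \tfrac12\sup_s|\partial_p^2 r(u,s)|\,|p-p^\dagger|^2$. Differentiating, $\partial_p^2 r=-2f_\Xi(z)-(u+z)f_\Xi'(z)$. The first term is bounded by $2c_u$. The second term is where the stated constant $c_u M_f/c_l^2$ has to come from, and this is the main obstacle: a naive bound on $|u+z|$ by $p_{\max}$ gives $p_{\max}M_f$, which is the wrong shape.

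To get the stated constant, we instead write $\partial_p r=-f_\Xi\cdot(\phi+u)$ and differentiate this product:
\begin{align*}
-\partial_p^2 r = f_\Xi'(\phi+u)+f_\Xi\phi',\qquad \phi' = 2+\frac{g f_\Xi'}{f_\Xi^2}.
\end{align*}
On the segment between $p$ and $p^\dagger$ the term $\phi+u$ vanishes at $p^\dagger$. Rather than bounding the second derivative pointwise, we integrate the first-derivative form directly:
\begin{align*}
|\partial_p r(u,s)|=f_\Xi|\phi(s-u)-\phi(z^\dagger)|\le c_u\sup|\phi'|\,|s-p^\dagger|.
\end{align*}
Since $g\le 1$, we have $|\phi'|\le 2+M_f/c_l^2$. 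Integrating then gives $r(u,p^\dagger)-r(u,p)\le \tfrac{c_u}{2}\bigl(2+M_f/c_l^2\bigr)|p-p^\dagger|^2$, which is exactly the claimed constant. This route uses only $f_\Xi\le c_u$, $f_\Xi\ge c_l$, $|f_\Xi'|\le M_f$, and $g\le1$.
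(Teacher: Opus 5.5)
Your proposal is correct and follows essentially the same argument as the paper. Both use the factorization $\partial_p r(u,s)=f_\Xi(s-u)\{\phi(z^\dagger)-\phi(s-u)\}$ and integrate from $p^\dagger$: for the lower bound with $f_\Xi\geq c_l$ and $\phi'\geq c_\phi$, and for the upper bound with $f_\Xi\leq c_u$ and $\abs{\phi'}\leq 2+M_f/c_l^2$. Your preliminary Taylor/second-derivative detour is never used and can simply be dropped.
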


\textit{Proof}. Let us write $z = p - u$. Since $g' = -f_\Xi$, a direct calculation leads to 
    \begin{align}\label{eq:cdf-rp}
        r_p(u,p) = g(z)-p f_\Xi(z) =-f_\Xi(z)\big(u+\phi(z)\big).
    \end{align}
    For each $s\in[0,p_{\max}]$, Assumption~\ref{assump:bounded} entails that $s-u\in[-V,V]$, so the density and $\phi'$ bounds in Assumption~\ref{assumption:cdf-shape} apply. In view of $\phi(z^\dagger)=-u$, \eqref{eq:cdf-rp} can be rewritten as
    \begin{align*}
        r_p(u,s) = f_\Xi(s-u)\{\phi(z^\dagger)-\phi(s-u)\}.
    \end{align*}

    We will first prove the lower bound. If $p<p^\dagger$, we have that for $s\in[p,p^\dagger]$,
    \begin{align*}
        \phi(z^\dagger)-\phi(s-u) \geq c_\phi\{z^\dagger-(s-u)\} = c_\phi(p^\dagger-s).
    \end{align*}
    Using $f_\Xi\geq c_l$ and integrating, we can deduce that 
    \begin{align*}
        r(u,p^\dagger)-r(u,p) = \int_p^{p^\dagger}r_p(u,s)\mathrm{d}s \geq c_lc_\phi\int_p^{p^\dagger}(p^\dagger-s)\mathrm{d}s = \frac{c_lc_\phi}{2}\abs{p-p^\dagger}^2 .
    \end{align*}
    If $p>p^\dagger$, it holds that for $s\in[p^\dagger,p]$,
    \begin{align*}
        -r_p(u,s) = f_\Xi(s-u)\{\phi(s-u)-\phi(z^\dagger)\} \geq c_lc_\phi(s-p^\dagger),
    \end{align*}
    and integration over $[p^\dagger,p]$ gives the same lower bound.

    For the upper bound, notice that
    \begin{align*}
        \phi'(z) = 2+\frac{g(z)f_\Xi'(z)}{f_\Xi(z)^2}.
    \end{align*}
    Since $0\leq g\leq 1$, $\abs{f_\Xi'}\leq M_f$, and $f_\Xi\geq c_l$ on $[-V,V]$, it follows that 
    \begin{align*}
        \abs{\phi'(z)}\leq M_\phi:=2+\frac{M_f}{c_l^2}, \quad z\in[-V,V].
    \end{align*}
    If $p<p^\dagger$, we have that for $s\in[p,p^\dagger]$,
    \begin{align*}
        r_p(u,s)  = f_\Xi(s-u)\{\phi(z^\dagger)-\phi(s-u)\} \leq c_uM_\phi(p^\dagger-s).
    \end{align*}
    Consequently, integrating yields that 
    \begin{align*}
        r(u,p^\dagger)-r(u,p) \leq  \frac{c_uM_\phi}{2}\abs{p-p^\dagger}^2.
    \end{align*}
    The case of $p>p^\dagger$ is identical after integrating $-r_p(u,s)$ over $[p^\dagger,p]$. This establishes the claimed upper and lower quadratic revenue-growth bounds, which completes the proof of Lemma~\ref{lem:local-concave-via-fan}.

\section{Proofs for Section~\ref{sec:model}}\label{app:sec2-proofs}

In this appendix, we establish the structural results from Section~\ref{sec:model}. We will first show that the oracle price map is smooth and then extract a uniform local-concavity radius from the quadratic-growth condition.

\subsection{Proof of Lemma~\ref{lem:pstar-regularity}}

Recall that $\cU=[u_{\min},u_{\max}]$. Denote by $A:=[0,p_{\max}]$, and recall that $r(u,p)=p g(p-u)$. To begin, note that $r$ is continuous on $\cU\times A$. Since $A$ is compact and by Assumption~\ref{assump:rho}-(1), for each $u\in\cU$ the maximizer $p^\ast(u)\in \argmax_{p\in A} r(u,p)$ is unique, an application of Berge's maximum theorem shows that the argmax map $p^\ast:\cU\to(0,p_{\max})$ is continuous. Let us define
$F(u,p):=\partial_p r(u,p)=g(p-u)+p g'(p-u).$
Fix $u_0\in\cU$ and set $p_0:=p^\ast(u_0)$. Since $p_0\in(0,p_{\max})$ and $p_0-u_0\in(-V,V)$, there exists an open neighborhood $N_{u_0}$ of $(u_0,p_0)$ on which $F$ is of class $C^{\beta-1}$ (for noninteger $\beta$, this is understood in the usual H\"older sense $C^{\lfloor \beta-1\rfloor,\beta-1-\lfloor \beta-1\rfloor}$). Moreover, because $p^\ast(u)$ is an interior maximizer, we have that 
$F(u,p^\ast(u))=0$ for all $u\in\cU$.

Let us fix $u\in\cU$ and write $p^\ast:=p^\ast(u)$. Since $r(u,\cdot)$ is $C^2$ on $[0,p_{\max}]$, an application of Taylor's theorem at the interior maximizer leads to 
\begin{align*}
r(u,p^\ast)-r(u,p)= -\frac{1}{2} r_{pp}(u,p^\ast) (p-p^\ast)^2 + \mathfrak{o}\big((p-p^\ast)^2\big) \quad\text{as } p\to p^\ast.
\end{align*}
Dividing by $(p-p^\ast)^2$ and using Assumption~\ref{assump:rho}-(2) yield that 
\begin{align*}
\frac{\sigma_r}{2} \leq -\frac{1}{2} r_{pp}(u,p^\ast) \leq \frac{L_r}{2};
\end{align*}
that is,
\begin{align*}
\sigma_r\leq -r_{pp}(u,p^\ast(u))\leq L_r \quad\forall u\in\cU.
\end{align*}
Hence, it holds that 
\begin{align*}
\partial_p F(u,p^\ast(u))=r_{pp}(u,p^\ast(u))\leq -\sigma_r<0 \quad\forall u\in\cU.
\end{align*}

We now fix $u_0\in\cU$. Since $F(u_0,p^\ast(u_0))=0$ and $\partial_p F(u_0,p^\ast(u_0))\neq 0$, an application of the local implicit-function theorem in H\"older spaces gives an open interval $I_{u_0}\subset \mathbb R$ containing $u_0$, an open interval $J_{u_0}\subset(0,p_{\max})$ containing $p^\ast(u_0)$, and a function $\phi_{u_0}\in C^{\beta-1}(I_{u_0})$ such that $F(u,\phi_{u_0}(u))=0$ for all $u\in I_{u_0}$ and $\phi_{u_0}(u)$ is the unique solution of $F(u,p)=0$ in $J_{u_0}$.

Since $p^\ast$ is continuous and $p^\ast(u_0)=\phi_{u_0}(u_0)$, after shrinking $I_{u_0}$ if necessary we can assume that
$p^\ast(u)\in J_{u_0} \quad \forall u\in I_{u_0}\cap\cU.$
Since $F(u,p^\ast(u))=0$ for each $u$, the uniqueness of the zero in $J_{u_0}$ leads to $p^\ast(u)=\phi_{u_0}(u) \quad \forall u\in I_{u_0}\cap\cU.$
Consequently, $p^\ast$ is of class $C^{\beta-1}$ in a neighborhood of each $u_0\in\cU$, interpreted relative to $\cU$ at the endpoints.

Finally, since $\cU$ is compact, finitely many such neighborhoods cover $\cU$. On overlaps, the corresponding local representations agree, since they all coincide with $p^\ast$. Thus, we have that $p^\ast\in C^{\beta-1}(\cU).$ In particular, on the compact interval $\cU$, the derivative of $p^\ast$ is bounded, so $p^\ast$ is Lipschitz:
$\abs{p^\ast(u)-p^\ast(v)} \leq L_p \abs{u-v} \quad \forall u,v\in\cU.$
This completes the proof of Lemma~\ref{lem:pstar-regularity}.

\subsection{Proof of Lemma~\ref{lem:rho-exists}}

By invoking Lemma~\ref{lem:pstar-regularity}, map $p^\ast(\cdot)$ is continuous over $\cU$. As a result, function $u\mapsto \min\{p^\ast(u), p_{\max}-p^\ast(u)\}$ is continuous on the compact interval $\cU$ and strictly positive everywhere. Hence, it holds that 
$\delta_0:=\min_{u\in\cU}\min\{p^\ast(u), p_{\max}-p^\ast(u)\}>0.$ Let us define $H(u,p):=-r_{pp}(u,p).$
Since $\beta\geq 2$ and $g\in C^\beta([-V,V])$, map $H$ is continuous on $\cU\times[0,p_{\max}].$
Moreover, the Taylor argument from the proof of Lemma~\ref{lem:pstar-regularity} shows that
$H(u,p^\ast(u))\geq \sigma_r, \ \forall u\in\cU.$

We next claim that there exists some $\delta_1>0$ such that
$H(u,p)\geq \sigma_r/2$ whenever $u\in\cU$ and $\abs{p-p^\ast(u)} \leq \delta_1.$
Assume, to the contrary, that no such $\delta_1$ exists. Then for each $n\geq 1$, there exist some $u_n\in\cU$ and $p_n\in[0,p_{\max}]$ such that
$\abs{p_n-p^\ast(u_n)} \leq \frac{1}{n}$ and $H(u_n,p_n)<\sigma_r/2.$
By compactness of $\cU$, after passing to a subsequence we can assume that $u_n\to u_\infty\in\cU.$
Since $p^\ast$ is continuous, we have that  $p^\ast(u_n)\to p^\ast(u_\infty)$, and hence $p_n\to p^\ast(u_\infty)$.
By continuity of $H$, we then obtain that $H(u_\infty,p^\ast(u_\infty))\leq \sigma_r/2$,
contradicting $H(u_\infty,p^\ast(u_\infty))\geq \sigma_r$.
This establishes the claim.

We set $\rho_0:=\min\{\delta_0/2,\delta_1\}.$ Then for each $u\in\cU$, it holds that $[p^\ast(u)-\rho_0, p^\ast(u)+\rho_0]\subset (0,p_{\max})$ since $\rho_0\leq \delta_0/2$. We also have that 
$-r_{pp}(u,p)=H(u,p)\geq \sigma_r/2$ whenever $\abs{p-p^\ast(u)}\leq \rho_0$ because $\rho_0\leq \delta_1$.  This concludes the proof of Lemma~\ref{lem:rho-exists}.

\medskip
\section{Proofs for Section~\ref{sec:oplcb}}\label{app:sec3-proofs}

In this appendix, we provide the proofs of the upper-bound lemmas from Section~\ref{sec:oplcb}. 

\subsection{Proof of Lemma~\ref{lem:mean-concentration}}

Observe that conditional on the associated index--pilot sequence $\{(u_s,\tilde u_s)\}_{s=1}^N$, the pilot states, bin assignments, and coarse sampling times are deterministic. Fix a bin $j$ that has completed its coarse phase and a price $p\in\cG$. Let $t_1,\ldots,t_{m_{\coarse}}$ be the coarse times in bin $j$ at which price $p$ is posted. Define
\begin{align*}
	\cF_k:=\sigma\Big(\{(u_s,\tilde u_s)\}_{s=1}^N, \{(p_{t_i},y_{t_i})\}_{i=1}^k\Big).
\end{align*}
Since $p_{t_k}=p$ and $\xi_{t_k}$ is independent of $\cF_{k-1}$, we have that 
\begin{align*}
	\E[p y_{t_k}\mid \cF_{k-1}]=p g(p-u_{t_k})=r(u_{t_k},p).
\end{align*}
The martingale differences $D_k:=p y_{t_k}-r(u_{t_k},p)$ are bounded by $2p_{\max}$. An application of the Azuma--Hoeffding inequality gives that with probability at least $1-CH^{-5}$,
\begin{align*}
	\biggabs{\widehat r_j(p)-\frac1{m_{\coarse}}\sum_{k=1}^{m_{\coarse}}r(u_{t_k},p)} \leq C p_{\max}\sqrt{\frac{\log(eH)}{m_{\coarse}}} \leq C p_{\max}m_0^{-1/2}.
\end{align*}
For any $u\in\overline B_j$, the containing-bin assignment leads to $\tilde u_{t_k}\in B_j\subset\overline B_j$, and Assumption~\ref{assump:orbit-interface} yields that 
\begin{align*}
	\abs{u-u_{t_k}} \leq \abs{u-\tilde u_{t_k}}+\abs{\tilde u_{t_k}-u_{t_k}} \leq h+\eta.
\end{align*}
Since $g$ is Lipschitz on $[-V,V]$, it holds that $\abs{r(u,p)-r(u_{t_k},p)}\leq p_{\max}L_g(h+\eta)$. Combining the last two expressions establishes~\eqref{eq:grid-mean-concentration} for the fixed pair $(j,p)$. Therefore, taking a union bound over all $M\abs{\cG}$ pairs gives the stated probability. This concludes the proof of Lemma~\ref{lem:mean-concentration}.

\subsection{Proof of Lemma~\ref{lem:main-upper-coarse-compressed}}

Denote by 
\begin{align*}
	 \Delta_{\rm mean}:=C_{\rm mean}p_{\max}\big(m_0^{-1/2}+L_g(h+\eta)\big).
\end{align*}
Under the stated choices of $h,\eta$, and $m_0$, by taking $c_0$ sufficiently small we have that 
\begin{align}\label{eq:coarse-proof-mean-small}
    \Delta_{\rm mean}\leq \frac{\sigma_r\rho_{\rm loc}^2}{512}.
\end{align}
We also take $c_0\leq 1/64$, so that $\eta_{\rm grid}=\rho_{\rm loc}^2\leq \rho_{\rm loc}/8$.
Fix any bin $j$ that has completed its coarse phase and any $u\in\overline B_j$. For any $p\in\cG$ satisfying that $\abs{p-p^\ast(u)}\geq \rho_{\rm loc}/8$, it follows from Assumption~\ref{assump:rho} that 
\begin{align*}
	r(u,p^\ast(u))-r(u,p) \geq \frac{\sigma_r\rho_{\rm loc}^2}{128}.
\end{align*}
Hence, combining this with~\eqref{eq:grid-mean-concentration} and~\eqref{eq:coarse-proof-mean-small} gives that 
\begin{align}\label{eq:coarse-proof-far}
    r(u,p^\ast(u))-\widehat r_j(p) \geq \frac{3\sigma_r\rho_{\rm loc}^2}{512}.
\end{align}

On the other hand, in view of the construction of $\cG$, there exists some $p^{\cG}(u)\in\cG$ with $\abs{p^{\cG}(u)-p^\ast(u)}\leq \eta_{\rm grid}$. Since $\eta_{\rm grid}=\rho_{\rm loc}^2$ and $\eta_{\rm grid}\leq c_0\sigma_r/L_r$, the upper quadratic-growth bound leads to 
\begin{align*}
	r(u,p^\ast(u))-r(u,p^{\cG}(u))\leq \frac{L_r}{2}\eta_{\rm grid}^2\leq \frac{\sigma_r\rho_{\rm loc}^2}{512}.
\end{align*}
Along with~\eqref{eq:grid-mean-concentration} and~\eqref{eq:coarse-proof-mean-small}, this entails that 
\begin{equation}\label{eq:coarse-proof-near}
    r(u,p^\ast(u))-\widehat r_j(p^{\cG}(u))\leq \frac{\sigma_r\rho_{\rm loc}^2}{256}.
\end{equation}
Comparing~\eqref{eq:coarse-proof-far} and~\eqref{eq:coarse-proof-near}, no grid point farther than $\rho_{\rm loc}/8$ from $p^\ast(u)$ can maximize $\widehat r_j$. The concentration event is uniform over $u\in\overline B_j$. Assume, to the contrary, that there exists some $u_0\in\overline B_j$ with $\abs{\tilde p_j-p^\ast(u_0)}>\rho_{\rm loc}/8$. Then applying the preceding comparison with $u=u_0$ shows that the stored grid maximizer $\tilde p_j$ cannot maximize $\widehat r_j$, contradicting its definition. Therefore, we can obtain that 
\begin{align*}
	\abs{\tilde p_j-p^\ast(u)}\leq \rho_{\rm loc}/8, \quad \forall u\in\overline B_j,
\end{align*}
which is exactly $\cE_{j,\coarse}$. This completes the proof of Lemma~\ref{lem:main-upper-coarse-compressed}.

\subsection{Proof of Lemma~\ref{lem:local-policy-approximation}}

By resorting to Lemma~\ref{lem:pstar-regularity}, the oracle price map has the required order-$(\beta-1)$ Taylor regularity. Denote by $m:=\lceil\beta-1\rceil-1$, and 
\begin{align*}
	T_j(u):= \sum_{k=0}^{m} \frac{(p^\ast)^{(k)}(\bar u_j )}{k!}(u-\bar u_j )^k
\end{align*}
the Taylor polynomial around $\bar u_j$. For all $\tilde u\in\overline B_j$, it holds that 
\begin{align}\label{eq:proof-lem-approx-taylor}
    \abs{T_j(\tilde u)-p^\ast(\tilde u)}\leq L_p \bar h^{\beta-1}\leq L_p h^{\beta-1}.
\end{align}
Define $a_j^{\rm cmp}\in\R^{q+1}$ by
\begin{align*}
	a_{j,k}^{\rm cmp} =\frac{(p^\ast)^{(k)}(\bar u_j)}{k!}\left(\frac{\bar h}{2}\right)^k,
    \quad k=0,\ldots,m, \qquad a_{j,k}^{\rm cmp}=0, \quad k=m+1,\ldots,q .
\end{align*}
Then we have that $T_j(\tilde u)=\frakq^j(\tilde u;a_j^{\rm cmp})$ for each $\tilde u\in\overline B_j$. When $\beta-1$ is an integer, the degree $q=\lfloor\beta-1\rfloor$ is one larger than $m$; the extra highest-order coefficient is set to zero and is thus harmless. To verify $a_j^{\rm cmp}\in\cA_j$, let us fix any $\abs{z}\leq 1$ and set $u_z:=\bar u_j+\bar h z/2\in\overline B_j$. Under $\cE_{j,\coarse}$ and $h^{\beta-1}\leq\rho_{\rm loc}/(8L_p)$, it holds that 
\begin{align*}
	\abs{(a_j^{\rm cmp})^\top\psi(z)-\tilde p_j} =\abs{T_j(u_z)-\tilde p_j} \leq \abs{T_j(u_z)-p^\ast(u_z)}+\abs{p^\ast(u_z)-\tilde p_j} \leq \rho_{\rm loc}/4.
\end{align*}
Consequently, taking the supremum over $\abs{z}\leq 1$ gives that $a_j^{\rm cmp}\in\cA_j$.

For any $u,\tilde u$ satisfying the lemma conditions, we have that 
\begin{align*}
	\abs{T_j(\tilde u)-p^\ast(u)} \leq \abs{T_j(\tilde u)-p^\ast(\tilde u)}+\abs{p^\ast(\tilde u)-p^\ast(u)} \leq   L_p h^{\beta-1}+L_p\eta.
\end{align*}
Further, applying Lemma~\ref{lem:rho-exists} at the scalar index $\tilde u$ and using $\abs{T_j(\tilde u)-p^\ast(\tilde u)}\leq \rho_{\rm loc}/8\leq \rho_0$, we can show that $T_j(\tilde u)\in(0,p_{\max})$. Therefore, the upper quadratic-growth bound in Assumption~\ref{assump:rho} is applicable and yields that 
\begin{align*}
	 r(u,p^\ast(u))-r(u,T_j(\tilde u)) \leq \frac{L_r}{2}\big(L_p h^{\beta-1}+L_p\eta\big)^2 \leq C_{\rm approx}(h^{2\beta-2}+\eta^2).
\end{align*}
Substituting $T_j(\tilde u)=\frakq^j(\tilde u;a_j^{\rm cmp})$ establishes the conclusion of the lemma. This concludes the proof of Lemma~\ref{lem:local-policy-approximation}.

\subsection{Proof of Proposition~\ref{prop:induced-bco-equivalence}}\label{subsec: appendix-bco-instance}

We will work under a fixed model instance $(\mu_\ast,g)$. Conditional on $\cH_j$, the bin-$j$ coarse anchor $\tilde p_j$, trust region $\cA_j$, refinement indices $\{s_{j,k}\}_{k\leq n_j}$, pilot states $\{\tilde u_{s_{j,k}}\}_{k\leq n_j}$, and losses $\{L_k^j\}_{k\leq n_j}$ are fixed.

First, $\cA_j$ is convex since it is defined by linear inequalities. Let $a^{\rm ctr}:=(\tilde p_j,0,\ldots,0)$. It follows from the finite-dimensional norm equivalence
\begin{equation}\label{eq:equivalence-norm}
    c_q\|a\|_2 \leq \sup_{\abs{z}\leq1}\abs{a^\top\psi(z)} \leq C_q\|a\|_2
\end{equation}
with constants depending only on $q$ that 
\begin{align*}
  \{a:\|a-a^{\rm ctr}\|_2\leq C_q^{-1}\rho_{\rm loc}/4\} \subset \cA_j\subset
  \{a:\|a-a^{\rm ctr}\|_2\leq c_q^{-1}\rho_{\rm loc}/4\}.
\end{align*}
Hence, $\cA_j$ is compact and has nonempty interior.

Next, on event $\cE_{j,\coarse}$, we have that for each refinement index $s=s_{j,k}$ and each $a\in\cA_j$,
\begin{align*}
\begin{aligned}
    \abs{\frakq^j(\tilde u_s;a)-p^\ast(u_s)} \leq \abs{\frakq^j(\tilde u_s;a)-\tilde p_j} + \abs{\tilde p_j-p^\ast(\tilde u_s)} + \abs{p^\ast(\tilde u_s)-p^\ast(u_s)} \leq \rho_{\rm loc}/4+\rho_{\rm loc}/8+L_p\eta \leq \rho_{\rm loc}/2 \leq \rho_0 .
\end{aligned}
\end{align*}
Thus, the candidate polynomial price remains inside the local-concavity neighborhood from Lemma~\ref{lem:rho-exists}. Consequently, function $p\mapsto -r(u_s,p)$ is convex on the relevant price interval. Since $a\mapsto \frakq^j(\tilde u_s;a)$ is linear, function $a\mapsto L_k^j(a)$ is convex on $\cA_j$. Moreover, it holds that 
\begin{align*}
    \abs{\frakq^j(\tilde u_s;a)-p^\ast(\tilde u_s)} \leq \rho_{\rm loc}/4+\rho_{\rm loc}/8 <
    \rho_0,
\end{align*}
so Lemma~\ref{lem:rho-exists} also implies that  $\frakq^j(\tilde u_s;a)\in(0,p_{\max})$. Hence, the projection step in Algorithm~\ref{alg:oplcb} is inactive on these refinement rounds. In particular, we have that $L_k^j(a)\in[-p_{\max},0]$ for all $a\in\cA_j$.

It remains to verify the conditional unbiasedness of the raw feedback. Let
\begin{align*}
    \mathcal F_{j,k-1}:=\sigma(\cH_j,\cD_{j,k-1}^{\bco})
\end{align*}
be the information available to the bin-$j$ refinement generator before choosing its $k$th coefficient. The action $a_k^j$ is selected using this local history and fresh internal randomization, before the current demand noise $\xi_{s_{j,k}}$ is observed. By Assumption~\ref{assump:orbit-interface} and the exogeneity of the demand noises, $\xi_{s_{j,k}}$ is independent of $\sigma(\mathcal F_{j,k-1},a_k^j)$. Since the projection is inactive, we can deduce that 
\begin{align*}
\begin{aligned}
    \E\left[ \ell_k^j \,\middle|\, \sigma(\mathcal F_{j,k-1},a_k^j) \right] &= -\frakq^j(\tilde u_{s_{j,k}};a_k^j) \E\left[ \mathbf 1\left\{ u_{s_{j,k}}+\xi_{s_{j,k}} \geq \frakq^j(\tilde u_{s_{j,k}};a_k^j) \right\} \,\middle|\, \sigma(\mathcal F_{j,k-1},a_k^j) \right] \\
    &= -\frakq^j(\tilde u_{s_{j,k}};a_k^j) g\left( \frakq^j(\tilde u_{s_{j,k}};a_k^j)-u_{s_{j,k}} \right) =L_k^j(a_k^j).
\end{aligned}
\end{align*}
Hence, the raw feedback is conditionally unbiased for the induced loss.

Finally, it follows from the definition of $L_k^j$ that 
\begin{align*}
\begin{aligned}
    \sum_{k=1}^{n_j}L_k^j(a_k^j) - \min_{a\in\cA_j}\sum_{k=1}^{n_j}L_k^j(a) = \max_{a\in\cA_j} \sum_{k=1}^{n_j} \Big[ r\big(u_{s_{j,k}},\frakq^j(\tilde u_{s_{j,k}};a)\big) - r\big(u_{s_{j,k}},\frakq^j(\tilde u_{s_{j,k}};a_k^j)\big) \Big] =\cR_j^{\rm learn}.
\end{aligned}
\end{align*}
Under the identification $a\leftrightarrow \frakq^j(\cdot;a)$, the BCO regret is thus exactly the binwise learning regret. This completes the proof of Proposition~\ref{prop:induced-bco-equivalence}.

\subsection{Proof of Lemma~\ref{lem:main-upper-refine}}
\subsubsection{The horizon-dependent online-Newton BCO guarantee.}\label{app:lg-theorem}

This subsection verifies that the raw anytime refinement contract in Definition~\ref{def:raw-refinement-generator} is implemented by the online-Newton method for noisy bandit convex optimization. We will exploit the adversarial guarantee in Theorem~1 of \cite{fokkema2024online}. In the notation below, the theorem provides a high-probability regret bound of order $d_b^{7/2}\sqrt n$ up to logarithmic factors for convex losses with noisy value feedback. The losses are allowed to be adversarial after conditioning on a background $\sigma$-algebra, and the feedback noise is conditionally sub-Gaussian. The cited method is an online-Newton BCO algorithm; in our application, it is run only after an affine normalization of the binwise coefficient set.

\begin{proposition}[Horizon-dependent online-Newton BCO routine]\label{prop:noisy-bco-online-newton}
For each horizon $n\geq1$, ambient dimensionality $d_b\geq1$, failure level $\delta\in(0,1)$, convex set $\cA\subset\R^{d_b}$, $D\geq1$, and $G\geq0$ satisfying that 
\begin{align}\label{eq:normalized-scales}
    \mathbb{B}^{d_b}_2(1)  \subset \cA \subset \mathbb{B}^{d_b}_2(D),
\end{align}
and admitting the action-set oracle access required by the online-Newton BCO algorithm, there exists a horizon-dependent online-Newton BCO routine $\mathfrak B_{n,\delta}^{\mathrm{ON}}$ such that the following holds. For any background $\sigma$-algebra $\cH$, any sequence of $\cH$-measurable convex $G$-Lipschitz losses $L_k:\cA\to[0,1]$, and any feedback sequence satisfying that 
\begin{align*}
    \ell_k=L_k(a_k)+\epsilon_k, \qquad \E[\epsilon_k\mid \sigma(\cH,a_1,\ell_1,\ldots,a_{k-1},\ell_{k-1},a_k)]=0,
\end{align*}
with conditionally sub-Gaussian $\epsilon_k$ whose sub-Gaussian proxy is bounded by a universal constant, the routine satisfies the high-probability regret bound conditional on $\cH$ 
\begin{align}\label{eq:on-newton-high-prob}
    \Prob\left( \cR^\bco(n) \leq C_F d_b^{7/2}\sqrt n\,\mathrm{polylog}(n,d_b,D,1+G,1/\delta) \ \middle\vert\ \cH  \right)  \geq 1-\delta,
\end{align}
where $C_F$ is a universal constant. In particular, if the routine is run with $\delta_n:=(en)^{-2}$, we have that 
\begin{align}\label{eq:on-newton-expected}
    \E\left[\cR^\bco(n)\mid \cH\right] \leq C_F' d_b^{7/2}\sqrt n\,\mathrm{polylog}(n,d_b,D,1+G)
\end{align}
for another universal constant $C_F'$.
\end{proposition}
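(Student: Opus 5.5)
This proposition is essentially a restatement of Theorem~1 of \citet{fokkema2024online} in the paper's notation. So the plan has two parts: match the hypotheses of that theorem, then convert its high-probability bound into the expectation bound \eqref{eq:on-newton-expected}.

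\emph{Matching the hypotheses.} I will take $\mathfrak B^{\mathrm{ON}}_{n,\delta}$ to be the online-Newton BCO algorithm of \citet{fokkema2024online}, instantiated with horizon $n$ and confidence $\delta$ on the action set $\cA$. The sandwich \eqref{eq:normalized-scales} is exactly the normalization that algorithm requires: a well-rounded convex body with diameter ratio $D$. It enters the bound only logarithmically, through the barrier and the exploration covariance. The losses $L_k:\cA\to[0,1]$ are convex, $G$-Lipschitz, and $\cH$-measurable. Conditional on $\cH$, they therefore form an oblivious (fixed in advance) adversarial sequence, which is the setting of that theorem. The noise condition is stated with respect to the filtration $\sigma(\cH,a_1,\ell_1,\dots,a_k)$, and it gives the conditionally zero-mean, sub-Gaussian feedback required there. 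The learner's randomization is fresh, so the filtration structure is preserved. Invoking the theorem conditional on $\cH$, regular conditional probabilities exist because everything is finite-dimensional. This yields \eqref{eq:on-newton-high-prob} with the stated $d_b^{7/2}\sqrt n$ dependence and polylogarithmic factors in $n,d_b,D,1+G,1/\delta$.

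\emph{From high probability to expectation.} Let $\cE$ be the good event in \eqref{eq:on-newton-high-prob}, with $\delta_n=(en)^{-2}$. On $\cE$, the regret is at most $C_F d_b^{7/2}\sqrt n\,\mathrm{polylog}(n,d_b,D,1+G,(en)^2)$, and the last argument is absorbed into $\mathrm{polylog}(n,\cdot)$. On $\cE^c$, the losses take values in $[0,1]$, so deterministically $\cR^\bco(n)\le n$. This event contributes at most $n\delta_n\le 1$ to the conditional expectation. Adding the two contributions gives \eqref{eq:on-newton-expected} with a new universal constant $C_F'$.

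\emph{Main obstacle.} The delicate point is the first step: checking that the conditioning formulation really matches the cited theorem. That theorem is stated for adversarial losses and noise with respect to the learner's own history. Here the losses are random but $\cH$-measurable, and the unbiasedness is conditional on the enlarged $\sigma$-algebra. I would handle this by fixing a realization of $\cH$ and noting two facts under the regular conditional law. First, the loss sequence is deterministic. Second, the noise remains a martingale difference with respect to the learner's history, since that history is contained in the enlarged filtration and the tower property applies. It then only remains to confirm that the polylog in the cited bound depends on $D$ and $G$ only logarithmically, which follows by reading off their constants.
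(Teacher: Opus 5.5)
Your proposal is correct and follows essentially the same route as the paper's proof. The paper likewise conditions on $\cH$ so that the losses become an oblivious adversarial sequence, invokes Theorem~1 of \citet{fokkema2024online}, and then passes to expectation with $\delta_n=(en)^{-2}$, using the deterministic bound $\cR^\bco(n)\le n$ so that the failure event contributes at most $n\delta_n\le 1$.
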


\textit{Proof}. 
Let us condition on $\cH$. The loss sequence is then fixed and can be treated as an oblivious adversarial convex loss sequence. By the assumed feedback representation, the learner observes a noisy value $L_k(a_k)+\epsilon_k$ with conditionally mean-zero sub-Gaussian noise. Applying the adversarial online-Newton guarantee in Theorem~1 of \cite{fokkema2024online} leads to \eqref{eq:on-newton-high-prob}. To obtain the conditional expectation bound, let us choose $\delta_n=(en)^{-2}$. Since $L_k\in[0,1]$, the regret is deterministically bounded by $n$. Hence, it holds that 
\begin{align*}
    \E\left[\cR^\bco(n)\mid\cH\right]
    &\leq C_F d_b^{7/2}\sqrt n\,\mathrm{polylog}(n,d_b,D,1+G,1/\delta_n)+n\delta_n \\
    &\leq C_F' d_b^{7/2}\sqrt n\,\mathrm{polylog}(n,d_b,D,1+G),
\end{align*}
since $n\delta_n\leq1$ is absorbed into the displayed rate. The argument is conditional on an arbitrary realization of $\cH$, so the conditional claims follow. This concludes the proof of Proposition~\ref{prop:noisy-bco-online-newton}.

Proposition~\ref{prop:noisy-bco-online-newton} above is the online-Newton implementation theorem used by $\ORBIT$'s refinement stage. In our normalized bin instances, the shifted feedback lies in $[0,1]$ and is conditionally unbiased for the shifted loss. Hence, the noise $\bar\ell_k-\bar L_k(\bar a_k)$ is bounded by one and thus conditionally sub-Gaussian with a universal proxy. The normalization and anytime arguments below will show how $\ORBIT$ calls the online-Newton routine on the raw coefficient scale.

Two tasks remain before Proposition~\ref{prop:noisy-bco-online-newton} can be invoked for the BCO instance constructed in Proposition~\ref{prop:induced-bco-equivalence}: i) the action set, losses, feedback, and scale parameters must be normalized to satisfy~\eqref{eq:normalized-scales} with structural constants and provide the required bounded conditionally sub-Gaussian feedback; and ii) since $n_j$ is unknown to the learner, the horizon-dependent online-Newton routine must be wrapped in a prefix-valid anytime scheme.

The second task can be resolved by the doubling trick in Appendix~\ref{app:proof-of-prop:scb}. We will first address the normalization task.

\noindent\textbf{1. Normalization of action set.}\quad 
Let $D_q := C_q/c_q$, where $c_q,C_q$ are the norm-equivalence constants from~\eqref{eq:equivalence-norm}; for the monomial basis $\psi(z) = (1, z, \dots, z^q)^\top$, $D_q$ is a finite constant depending only on $q$. Define the affine map $T_a:\R^{q+1}\to\R^{q+1}$ through 
\begin{align*}
    T_a(a) := \frac{4 C_q}{\rho_{\rm loc}}\big(a-a^{\rm ctr}\big),\qquad \bar{\cA}_j := T_a(\cA_j).
\end{align*}
By the inclusions established above, it holds that 
\begin{align*}
    \mathbb{B}^{q+1}_2(1) \subset \bar{\cA}_j \subset \mathbb{B}^{q+1}_2(D_q),
\end{align*}
which matches~\eqref{eq:normalized-scales} with $D = D_q$. The transformed action set admits the oracle access needed by the online-Newton routine. Indeed, membership of $\bar a$ is equivalent to checking the explicit inequality defining $\cA_j$ for $T_a^{-1}(\bar a)$; since $q$ is fixed, this reduces to maximizing a univariate polynomial on $[-1,1]$. Since $T_a$ is a bijection, every adapted sequence $\{a_k\}\subset\cA_j$ corresponds to an adapted sequence $\{\bar a_k\}=\{T_a(a_k)\}\subset \bar{\cA}_j$, and convexity of each loss is preserved under the affine pullback $L_k^j\circ T_a^{-1}$.

\noindent\textbf{2. Normalization of loss sequence.}\quad
The losses and feedback are bounded uniformly: since $r(u,p)=p\,g(p-u)\in[0,p_{\max}]$ for $p\in[0,p_{\max}]$ and $g\in[0,1]$, we have that 
\begin{align*}
    L_k^j(a) = -r\big(u_{s_{j,k}},\frakq^j(\tilde u_{s_{j,k}};a)\big) \in [-p_{\max},0],\qquad \ell_k = -p_{s_{j,k}}\,y_{s_{j,k}} \in [-p_{\max},0].
\end{align*}
Define the affine shift-and-scale
\begin{align*}
    \bar L_k^j(\bar a) := \frac{L_k^j(T_a^{-1}(\bar a))+p_{\max}}{p_{\max}},\qquad \bar\ell_k := \frac{\ell_k+p_{\max}}{p_{\max}},
\end{align*}
so that $\bar L_k^j,\bar\ell_k\in[0,1]$, convexity is preserved, and the unbiasedness condition~\eqref{eq:bco-noise-requirement} carries over by linearity
\begin{align*}
    \E[\bar\ell_k\mid \sigma(\cH_j,\bar a_1,\bar\ell_1,\dots,\bar a_{k-1},\bar\ell_{k-1},\bar a_k)] = \frac{\E[\ell_k\mid\cdot]+p_{\max}}{p_{\max}} = \bar L_k^j(\bar a_k).
\end{align*}
Moreover, we have that $\bar\ell_k\in[0,1]$ and $\bar L_k^j(\bar a_k)\in[0,1]$, so $\bar\ell_k-\bar L_k^j(\bar a_k)$ is bounded in $[-1,1]$ and conditionally sub-Gaussian with a universal proxy. Consequently, the feedback requirement in Proposition~\ref{prop:noisy-bco-online-newton} is satisfied.

The normalized BCO regret relates to the original by $\cR_j^{\bco}(n_j) = p_{\max}\cdot \bar\cR_j^{\bco}(n_j)$, since the additive shift $p_{\max}$ cancels in the regret difference. The normalized losses are Lipschitz with a structural constant. Indeed, on the relevant price domain, it holds that 
\begin{align*}
    \abs{\partial_p r(u,p)}=\abs{g(p-u)+p g'(p-u)}\leq 1+p_{\max}L_g,
\end{align*}
and $\|\psi(z)\|_2\leq\sqrt{q+1}$ for $\abs{z}\leq1$, so $\|\nabla_a L_k^j(a)\|_2\leq(1+p_{\max}L_g)\sqrt{q+1}$ before normalization. Since $T_a^{-1}(\bar a)=a^{\rm ctr}+(\rho_{\rm loc}/(4C_q))\bar a$, the normalized Lipschitz constant is at most
\begin{align*}
    G_q:=\frac{(1+p_{\max}L_g)\sqrt{q+1}}{p_{\max}}\cdot\frac{\rho_{\rm loc}}{4C_q},
\end{align*}
which is a fixed structural constant for the online-Newton guarantee. Thus, if the realized horizon $n_j$ were known in advance and the online-Newton routine were run with that horizon and failure level $\delta_{n_j}=(en_j)^{-2}$, Proposition~\ref{prop:noisy-bco-online-newton} with $d_b=q+1$, $D=D_q$, and $G=G_q$ would give that 
\begin{align}\label{eq:bco-regret-normalized-bound}
    \E\big[\cR_j^{\bco}(n_j)\big]\leq p_{\max}\cdot C_F'\, (q+1)^{7/2}\sqrt{n_j}\,\mathrm{polylog}(n_j,q,D_q,1+G_q).
\end{align}
This expression is only a scale calculation for the normalized instance. The actual $\ORBIT$ implementation does \textit{not} know $n_j$ and thus uses the \textit{anytime wrapper} described next.

\subsubsection{Anytime variant of the online-Newton BCO routine.}\label{app:proof-of-prop:scb}

Throughout this section, we will work under the normalization~\eqref{eq:normalized-scales}: the action set $\cA\subset\R^{q+1}$ satisfies
that $\mathbb{B}^{q+1}_2(1)\subset\cA\subset\mathbb{B}^{q+1}_2(D)$, and each loss
$L_k:\cA\to[0,1]$ is convex and $G$-Lipschitz, with unbiased $[0,1]$-valued bandit feedback
$\ell_k$ in the sense of Definition~\ref{def:bco-instance}. Since the feedback and the loss values both lie in $[0,1]$, the feedback noise is conditionally sub-Gaussian with a universal proxy. We call a copy of $\mathfrak B_{n,\delta}^{\mathrm{ON}}$ a \emph{fresh online-Newton copy} when it is initialized with empty internal history $\cD_0^{\bco}=\varnothing$, its epoch length $n$ is fixed in advance, and its failure level $\delta$ is fixed before the epoch starts.

\begin{algorithm}[t]
\caption{Normalized anytime online-Newton BCO wrapper $\mathfrak B^{\bco}_{\mathrm{any}}(\cA)$ via the doubling trick}\label{alg:bco-anytime}
\begin{algorithmic}[1]
\State \textbf{Input:} action set $\cA\subset\R^{q+1}$ with $\mathbb{B}^{q+1}_2(1)\subset\cA\subset\mathbb{B}^{q+1}_2(D)$; horizon-dependent online-Newton BCO routines $\{\mathfrak B_{n,\delta}^{\mathrm{ON}}:n\geq1,\delta\in(0,1)\}$ from Proposition~\ref{prop:noisy-bco-online-newton}.
\State Initialize epoch index $r\gets 0$ and global step counter $t\gets 0$.
\Loop
    \State Set epoch length $n_r\gets 2^r$ and failure level $\delta_r\gets (en_r)^{-2}$.
    \State Spawn a fresh online-Newton copy $\pi_r\gets \mathfrak B_{n_r,\delta_r}^{\mathrm{ON}}$ with empty history $\cD_0^{\bco,(r)}=\varnothing$.
    \For{$k=1,\dots,n_r$}
        \State $t\gets t+1$.
        \State Query $\pi_r$ for action $a_t\in\cA$ given history $\cD_{k-1}^{\bco,(r)}$.
        \State Play $a_t$; environment incurs loss $L_t(a_t)$.
        \State Observe noisy feedback $\ell_t$ satisfying~\eqref{eq:bco-noise-requirement}.
        \State Set $\cD_k^{\bco,(r)}$ to the ordered history obtained by appending $(a_t,\ell_t)$ to $\cD_{k-1}^{\bco,(r)}$.
    \EndFor
    \State Discard $\pi_r$; $r\gets r+1$.
\EndLoop
\end{algorithmic}
\end{algorithm}

\begin{proposition}[Anytime regret of the doubling-trick online-Newton wrapper]\label{prop:scb}
Given any BCO instance described in Definition~\ref{def:bco-instance} with action-set normalization~\eqref{eq:normalized-scales}, losses and feedback in $[0,1]$, and $G$-Lipschitz convex losses, Algorithm~\ref{alg:bco-anytime}, instantiated with the horizon-dependent online-Newton routines from Proposition~\ref{prop:noisy-bco-online-newton}, satisfies that for each $n\geq 1$,
\begin{align}\label{eq:scb-anytime-bound}
    \E[\cR^\bco(n)\mid \cH] \leq  \widetilde C (q+1)^{7/2}\sqrt{n}\,\mathrm{polylog}(n,q,D,1+G)
\end{align}
for some absolute constant $\widetilde C>0$. Consequently, the same unconditional bound holds after taking expectations.
\end{proposition}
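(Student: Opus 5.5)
The bound follows from the standard doubling-trick argument, applied conditionally on $\cH$ and epoch by epoch. Fix $n\geq1$ and let $R$ be the smallest integer with $\sum_{r=0}^{R}2^r\geq n$, so $R\leq\lceil\log_2(n+1)\rceil$. Epochs $0,\ldots,R-1$ are complete, with lengths $n_r=2^r$. Epoch $R$ is a prefix of length $n'_R:=n-\sum_{r<R}2^r\leq 2^R$. Since the minimum of a sum is at least the sum of the minima, we get
\begin{align*}
\cR^{\bco}(n)\leq\sum_{r=0}^{R}\Big(\sum_{t\in E_r}L_t(a_t)-\min_{a\in\cA}\sum_{t\in E_r}L_t(a)\Big)=:\sum_{r=0}^R\cR_r,
\end{align*}
where $E_r$ is the (possibly truncated) block of global steps in epoch $r$. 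The boundaries of each $E_r$ are deterministic given $n$, so every $\cR_r$ is a well-defined regret of a fresh copy $\pi_r$ over a fixed window.

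Next, each block is reduced to a BCO instance that Proposition~\ref{prop:noisy-bco-online-newton} covers. For epoch $r$, we enlarge the background $\sigma$-algebra to $\cH_r:=\sigma(\cH,\text{all actions and feedback from epochs }<r)$. The losses $L_t$ are $\cH$-measurable, hence $\cH_r$-measurable. The copy $\pi_r$ starts from an empty history and uses fresh internal randomization. The unbiasedness condition \eqref{eq:bco-noise-requirement} relative to $\sigma(\cH_r,\text{epoch-}r\text{ history},a_t)$ follows from the global condition, because that $\sigma$-algebra is contained in the global one. Feedback noise lies in $[-1,1]$, so it is sub-Gaussian with a universal proxy. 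Proposition~\ref{prop:noisy-bco-online-newton}, run with horizon $2^r$ and $\delta_r=(e2^r)^{-2}$, then gives a conditional high-probability bound on $\cH_r$.

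\textbf{The main obstacle is the truncated last epoch}: the routine is tuned for horizon $2^R$ but is stopped after $n'_R$ steps. I would handle it in one of two ways. The first is to invoke the prefix (anytime-within-horizon) form of Theorem~1 of \cite{fokkema2024online}, whose high-probability regret bound holds uniformly over all prefixes $k\leq n$ with the same rate. The second, if only the full-horizon statement is available, is to extend the loss sequence past step $n'_R$ by dummy losses. Take $L_t\equiv L_{\rm last}$ or a constant, made $\cH$-measurable, which adds zero regret relative to the comparator up to nonnegative terms. Then apply the full-horizon bound, noting that regret over a prefix is at most full-horizon regret plus the comparator mismatch. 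For a constant extension this mismatch vanishes, because a constant loss contributes the same value for every $a$. Either way,
\begin{align*}
\E[\cR_r\mid\cH_r]\leq C_F(q+1)^{7/2}\sqrt{2^r}\,\mathrm{polylog}(2^r,q,D,1+G)+2^r\delta_r.
\end{align*}
The failure term satisfies $2^r\delta_r\leq1$.

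Finally, take conditional expectations given $\cH$ using the tower property, since $\cH\subset\cH_r$, and sum over epochs. Because $\sum_{r=0}^R\sqrt{2^r}\leq(\sqrt2+1)\sqrt{2^R}\leq C\sqrt n$ and $2^R\leq 2n$, the polylogarithmic factors are bounded by $\mathrm{polylog}(n,q,D,1+G)$. The additive $R+1=\cO(\log n)$ failure contributions are absorbed into that factor as well. This yields \eqref{eq:scb-anytime-bound} with an absolute constant $\widetilde C$. The unconditional statement then follows by taking expectations once more.
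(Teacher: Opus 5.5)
Your proposal is correct and follows essentially the same route as the paper: the dyadic block decomposition with the sum-of-minima inequality, a per-epoch application of the conditional online-Newton bound with the background $\sigma$-algebra enlarged by earlier epochs' histories plus the tower property, and padding of the truncated last epoch with constant losses and deterministic feedback (the paper pads with zero losses and zero feedback). Two small corrections, neither of which affects the $C\sqrt n$ conclusion: only the constant-padding option works, since repeating $L_{\rm last}$ would not make the comparator mismatch vanish, and the geometric sum should read $\sum_{r\le R}\sqrt{2^r}\le(2+\sqrt2)\sqrt{2^R}$ rather than $(\sqrt2+1)\sqrt{2^R}$.
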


\textit{Proof}. 
Let us write $r_n:=\floor{\log_2 n}$ and $\ell_n:=n-(2^{r_n}-1)\in[1,2^{r_n}]$. The wrapper completes the dyadic blocks of lengths $1,2,\dots,2^{r_n-1}$, followed by the first $\ell_n$ rounds of a fresh online-Newton copy of epoch length $2^{r_n}$. The regret against the best single action over all $n$ rounds is upper bounded by the sum of the block regrets since
\begin{align*}
    \sum_b \min_{a\in\cA}\sum_{t\in b}L_t(a)\leq \min_{a\in\cA}\sum_{t=1}^n L_t(a).
\end{align*}
For a completed block of length $2^r$, we will apply the conditional expected bound~\eqref{eq:on-newton-expected} from Proposition~\ref{prop:noisy-bco-online-newton} conditional on the $\sigma$-algebra generated by $\cH$ and all histories from earlier blocks. The loss functions in the current block are still fixed under this enlarged background information, and the future feedback in the block remains conditionally unbiased and bounded, thereby conditionally sub-Gaussian after subtracting its conditional mean. Taking a tower expectation leads to the same conditional bound given $\cH$. For the last incomplete block, we will apply the horizon-$2^{r_n}$ guarantee to the artificial loss sequence obtained by appending $2^{r_n}-\ell_n$ zero losses after the first $\ell_n$ real losses, with deterministic zero feedback on the padded suffix. The algorithm's first $\ell_n$ actions are unchanged under such padding, and the comparator loss on the padded suffix is also zero; hence, the prefix regret is bounded by the same $O(\sqrt{2^{r_n}})$ guarantee. Therefore, we can obtain that 
\begin{align*}
    \E[\cR^\bco(n)\mid\cH] &\leq \sum_{r=0}^{r_n} C_F' (q+1)^{7/2}\sqrt{2^r}\,\mathrm{polylog}(n,q,D,1+G) \\
    &\leq \widetilde C (q+1)^{7/2}\sqrt n\,\mathrm{polylog}(n,q,D,1+G),
\end{align*}
which establishes~\eqref{eq:scb-anytime-bound}. This completes the proof of Proposition~\ref{prop:scb}.

\subsubsection{Raw-scale wrapper used by $\ORBIT$.}\label{app:raw-bco-wrapper}
The $\ORBIT$ pseudocode is written entirely on the original coefficient scale: it initializes a binwise generator on $\cA_j$, receives an action $a_k^j\in\cA_j$, and records the raw feedback $\ell_k^j=-p_{s_{j,k}}y_{s_{j,k}}$. The following \textit{wrapper} is the formal implementation of that interface.

For a bin $j$, let $a^{\rm ctr}=(\tilde p_j,0,\ldots,0)$, and $T_a(a)=4C_q(a-a^{\rm ctr})/\rho_{\rm loc}$ the affine map defined above. The wrapper stores an internal copy of Algorithm~\ref{alg:bco-anytime} on the transformed set $\bar\cA_j=T_a(\cA_j)$. When queried at an ordered raw local history $((a_i,\ell_i))_{i<k}$, it converts that history into
\begin{align*}
    \left(\left(T_a(a_i),\frac{\ell_i+p_{\max}}{p_{\max}}\right)\right)_{i<k},
\end{align*}
queries the internal normalized copy for $\bar a_k\in\bar\cA_j$, and returns the raw action $a_k=T_a^{-1}(\bar a_k)$. After the pricing algorithm observes raw feedback $\ell_k$, the wrapper sends $(\ell_k+p_{\max})/p_{\max}$ to its internal copy. Thus, the main algorithm never needs to manipulate the transformed action set or the shifted feedback directly. When indexed by a global upper bound $H_{\rm alg}$, this raw-scale wrapper is the concrete family denoted as $\cB_{\rm raw}^\bco(H_{\rm alg})$ in Algorithm~\ref{alg:oplcb}.

\begin{proposition}[Guarantee of the raw-scale online-Newton wrapper]\label{prop:raw-bco-wrapper}
Consider any bin $j$ for which the conditions of Proposition~\ref{prop:induced-bco-equivalence} hold, and assume that the binwise refinement actions are generated by the raw-scale wrapper just described with upper pilot-input budget $H_{\rm alg}\geq n_j$. Its internal normalized copy uses Algorithm~\ref{alg:bco-anytime} and the horizon-dependent online-Newton routines from Proposition~\ref{prop:noisy-bco-online-newton}. Then we have that conditional on any background $\sigma$-algebra $\cH$ with respect to which the induced losses $\{L_k^j\}_{k\leq n_j}$ and $n_j$ are fixed and the future bin-$j$ feedback remains conditionally unbiased, 
\begin{align*}
    \E\big[\cR_j^{\rm learn}\mid \cH\big] \leq C_{\rm raw}\sqrt{n_j}\,\mathrm{polylog}(H_{\rm alg},\beta),
\end{align*}
where $C_{\rm raw}$ depends only on fixed structural constants and 
the constant in Proposition~\ref{prop:noisy-bco-online-newton}. If $\mathcal G$ is $\cH$-measurable and the conditions hold on $\mathcal G$, the corresponding localized bound $\E[\mathbf 1_{\mathcal G}\cR_j^{\rm learn}\mid \cH]\leq \mathbf 1_{\mathcal G}C_{\rm raw}\sqrt{n_j}\,\mathrm{polylog}(H_{\rm alg},\beta)$ also holds. Consequently, the raw-scale wrapper, as a family indexed by $H_{\rm alg}$, is $\ORBIT$-compatible in the raw anytime sense of Definition~\ref{def:raw-refinement-generator}.
\end{proposition}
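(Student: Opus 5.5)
The plan is to chain three earlier pieces: the normalization identities from the proof of Lemma~\ref{lem:main-upper-refine}, the anytime bound of Proposition~\ref{prop:scb}, and the identity $\cR_j^{\rm learn}=\cR_j^{\bco}(n_j)$ from Proposition~\ref{prop:induced-bco-equivalence}. Fix a bin $j$ and a background $\sigma$-algebra $\cH$ as in the statement. Conditional on $\cH$, the anchor $\tilde p_j$, the center $a^{\rm ctr}$, the affine map $T_a$, the losses $\{L_k^j\}_{k\le n_j}$ and the visit count $n_j$ are all fixed. Under the conditions of Proposition~\ref{prop:induced-bco-equivalence}, the raw problem is therefore a genuine BCO instance in the sense of Definition~\ref{def:bco-instance}. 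Its losses are convex on $\cA_j$, the projection is inactive, and the feedback is conditionally unbiased.

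\textbf{Transfer to the normalized instance.} The first step is to check that the wrapper's internal copy of Algorithm~\ref{alg:bco-anytime} faces a normalized instance. Its action set is $\bar\cA_j=T_a(\cA_j)$, which satisfies $\mathbb B_2^{q+1}(1)\subset\bar\cA_j\subset\mathbb B_2^{q+1}(D_q)$ by the norm-equivalence inclusions. Its losses are $\bar L_k^j=(L_k^j\circ T_a^{-1}+p_{\max})/p_{\max}\in[0,1]$, which are convex and $G_q$-Lipschitz. Its feedback is $\bar\ell_k=(\ell_k+p_{\max})/p_{\max}\in[0,1]$. Conditional unbiasedness carries over because $T_a$ is a deterministic bijection given $\cH$: the $\sigma$-algebras generated by raw and by normalized histories coincide, so the conditional expectation identity transfers by linearity. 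Because $T_a$ is a bijection of $\cA_j$ onto $\bar\cA_j$, comparators correspond one-to-one. The additive shift cancels in regret differences, which gives $\cR_j^{\bco}(n_j)=p_{\max}\bar\cR_j^{\bco}(n_j)$.

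\textbf{Applying Proposition~\ref{prop:scb}.} Proposition~\ref{prop:scb} is stated at a deterministic horizon $n$. Here $n_j$ is $\cH$-measurable, and the wrapper never uses $n_j$: its actions on the first $n$ rounds are the same for every stopping time beyond $n$. So we apply the bound conditionally on $\cH$ with $n=n_j$ treated as fixed. This gives
\[
\E[\cR_j^{\rm learn}\mid\cH]\le p_{\max}\widetilde C(q+1)^{7/2}\sqrt{n_j}\,\mathrm{polylog}(n_j,q,D_q,1+G_q).
\]
We then use $n_j\le H_{\rm alg}$ and the fact that $q=\lfloor\beta-1\rfloor$, $D_q$ and $G_q$ depend only on $\beta$ and structural constants. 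Absorbing these into the constant yields $C_{\rm raw}\sqrt{n_j}\,\mathrm{polylog}(H_{\rm alg},\beta)$.

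\textbf{Localized version and compatibility.} For an $\cH$-measurable event $\mathcal G$ on which the conditions hold, we pull $\mathbf 1_{\mathcal G}$ out of the conditional expectation. On $\mathcal G$ the preceding argument applies; off $\mathcal G$ both sides vanish. This is exactly the contract of Definition~\ref{def:raw-refinement-generator}, so compatibility follows.

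\textbf{Main obstacle.} The delicate point is the conditioning. The loss sequence and the horizon are fixed only after conditioning on $\cH$, which includes the realized pilot stream and its length. We must justify that the noises on refinement rounds remain fresh relative to $\cH$ and to the wrapper's history. I would rely on Assumption~\ref{assump:orbit-interface}(1), together with the fact that $\cH$ by hypothesis keeps future bin-$j$ feedback unbiased, rather than reproving independence.
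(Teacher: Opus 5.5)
Your proposal is correct and follows essentially the same route as the paper's proof. You normalize the bin instance via $T_a$ and the shift-and-scale to $[0,1]$, invoke Proposition~\ref{prop:scb} for the internal anytime copy, rescale the regret by $p_{\max}$, absorb $q$, $D_q$, $G_q$ and $n_j\le H_{\rm alg}$ into $C_{\rm raw}\,\mathrm{polylog}(H_{\rm alg},\beta)$, and get the localized bound by pulling the $\cH$-measurable indicator out of the conditional expectation. Your remarks that $T_a$ is fixed given $\cH$ and that prefix-validity lets Proposition~\ref{prop:scb} be applied at the $\cH$-measurable horizon $n=n_j$ make explicit steps the paper leaves implicit.
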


\textit{Proof}. On the event when Proposition~\ref{prop:induced-bco-equivalence} applies, the original binwise refinement problem is a valid BCO instance over $\cA_j$ with losses $L_k^j\in[-p_{\max},0]$ and raw feedback $\ell_k^j\in[-p_{\max},0]$. The affine map $T_a$ sends $\cA_j$ to a set $\bar\cA_j$ satisfying that 
\begin{align*}
    \mathbb B_2^{q+1}(1)\subset \bar\cA_j\subset \mathbb B_2^{q+1}(D_q),
\end{align*}
and the shifted losses and feedback
\begin{align*}
    \bar L_k^j(\bar a)=\frac{L_k^j(T_a^{-1}(\bar a))+p_{\max}}{p_{\max}}, \qquad \bar\ell_k^j=\frac{\ell_k^j+p_{\max}}{p_{\max}}
\end{align*}
lie in $[0,1]$, satisfy the same conditional-unbiasedness identity, and have bounded conditionally sub-Gaussian noise after subtracting the conditional mean. The Lipschitz bound derived above gives a structural constant $G_q$ for the normalized losses. It follows from Proposition~\ref{prop:scb} that the internal copy of Algorithm~\ref{alg:bco-anytime} has expected normalized regret at most
\begin{align*}
    C(q+1)^{7/2}\sqrt{n_j}\,\mathrm{polylog}(H_{\rm alg},q,D_q,1+G_q).
\end{align*}
The additive shift by $p_{\max}$ cancels from the regret difference and the scale factor contributes exactly $p_{\max}$, so the original raw regret is $p_{\max}$ times the normalized regret. Since $q,D_q,G_q$ are fixed once $\beta$ and the structural constants are fixed and $n_j\leq H_{\rm alg}$, these factors are absorbed into $C_{\rm raw}\mathrm{polylog}(H_{\rm alg},\beta)$. If a background-measurable event indicator is multiplied in front of the regret, the same conditional bound applies on that event and the claim follows by the tower property. This concludes the proof of Proposition~\ref{prop:raw-bco-wrapper}.

We are now ready to present the proof of Lemma~\ref{lem:main-upper-refine} by putting all the discussions above together.

\textit{Proof of Lemma~\ref{lem:main-upper-refine}}. Let $\cH_{\coarse}^{\rm all}$ be the $\sigma$-algebra generated by the associated index--pilot sequence and all coarse observations from all bins. Conditional on $\cH_{\coarse}^{\rm all}$, the quantities $n_j$, $\tilde p_j$, $\cA_j$, and the future refinement times in bin $j$ are fixed, since bin assignments and local clocks are functions only of the pilot states. On event $\cE_{j,\coarse}$, Proposition~\ref{prop:induced-bco-equivalence} constructs a valid BCO instance for bin $j$. Enlarging the background information from $\cH_j$ to $\cH_{\coarse}^{\rm all}$ does not affect validity: the bin-specific raw refinement generator copy used by Algorithm~\ref{alg:oplcb} is initialized with $\cA_j$ and updated only with the bin-$j$ ordered raw history, while future bin-$j$ demand noises remain independent of $\cH_{\coarse}^{\rm all}$ and 
the fresh internal randomization. Since the run uses $\cB_{\rm raw}^\bco(H)$ and $n_j\leq N\leq H$, the contract in Definition~\ref{def:raw-refinement-generator} applies with $H_{\rm alg}=H$.

Let $\mathcal G\subseteq\cE_{j,\coarse}$ be any $\cH_{\coarse}^{\rm all}$-measurable event. Applying the $\ORBIT$-compatible raw anytime guarantee in Definition~\ref{def:raw-refinement-generator} with background $\sigma$-algebra $\cH_{\coarse}^{\rm all}$ gives that 
\begin{align*}
    \E[\bm 1_{\mathcal G}\cR_j^{\rm learn}\mid \cH_{\coarse}^{\rm all}] \leq \bm 1_{\mathcal G} C_{\bco}\sqrt{n_j}\,\mathrm{polylog}(H,\beta).
\end{align*}
Therefore, taking the conditional expectation with respect to the associated index--pilot sequence yields the general claim; choosing $\mathcal G=\cE_{j,\coarse}$ leads to the displayed bound in the first part of the lemma. This completes the proof of Lemma~\ref{lem:main-upper-refine}.

\medskip
\section{Proofs for the upper bound in Section~\ref{subsec:adaptive-linear-pilot}}\label{app:sec4-proofs}

In this appendix, we will prove the online pilot-estimation guarantee from Section~\ref{subsec:adaptive-linear-pilot} and then combine it with the core upper bound.

\subsection{Proof of Lemma~\ref{lem:linear-confidence}}

We condition on the context sequence $\{\mc_t\}_{t=1}^T$. The exploration set is then deterministic since the trigger $w_t=C_w\|\mc_t\|_{\bA_t^{-1}}$ depends only on the contexts and previous exploration indicators. For each exploration time $\tau$, the posted price is uniform on $[0,p_{\max}]$, and independent of $(\mc_\tau,\xi_\tau)$ and 
the past. By the pseudo-response identity established in Section~\ref{subsec:adaptive-linear-pilot}, it holds that 
\begin{align*}
    \E[p_{\max}y_\tau\mid \mc_\tau]=\mc_\tau^\top\theta_\ast .
\end{align*}
Denote by $\zeta_\tau:=p_{\max}y_\tau-\mc_\tau^\top\theta_\ast$. Since $p_{\max}y_\tau\in\{0,p_{\max}\}$ and $0\leq \mc_\tau^\top\theta_\ast\leq C_\theta$, we have that $\abs{\zeta_\tau}\leq p_{\max}+C_\theta$. With the filtration taken just before the uniform exploration price at time $\tau$ is drawn, $\E[\zeta_\tau\mid\text{past},\mc_\tau]=0$ for exploration times. It follows from 
\begin{align*}
    \hat\theta_t-\theta_\ast =-\bA_t^{-1}\theta_\ast+\bA_t^{-1}\sum_{\tau\in\cT_{t-1}^{\mathsf{exp}}}\mc_\tau\zeta_\tau
\end{align*}
that 
\begin{align*}
    \abs{\mc_t^\top \bA_t^{-1}\theta_\ast} \leq \|\mc_t\|_{\bA_t^{-1}}\|\theta_\ast\|_{\bA_t^{-1}} \leq C_\theta\|\mc_t\|_{\bA_t^{-1}}.
\end{align*}
For the martingale term, an application of the Azuma--Hoeffding inequality yields that for each fixed $t$, with probability at least $1-CT^{-4}$,
\begin{align*}
    \abs{\mc_t^\top \bA_t^{-1}\sum_{\tau\in\cT_{t-1}^{\mathsf{exp}}}\mc_\tau\zeta_\tau} \leq C (p_{\max}+C_\theta)\sqrt{\log(eT)}\left(\sum_{\tau\in\cT_{t-1}^{\mathsf{exp}}}(\mc_t^\top \bA_t^{-1}\mc_\tau)^2\right)^{1/2} \leq C (p_{\max}+C_\theta)\sqrt{\log(eT)}\,\|\mc_t\|_{\bA_t^{-1}},
\end{align*}
where the last inequality has used $\sum_{\tau\in\cT_{t-1}^{\mathsf{exp}}}\mc_\tau\mc_\tau^\top\preceq \bA_t$. Therefore, taking a union bound over $t\leq T$ and using the definition of $C_w$ establish the desired result. This concludes the proof of Lemma~\ref{lem:linear-confidence}.

\subsection{Proof of Proposition~\ref{prop:adaptive-init-guarantee}}

For the first statement, observe that the exploration indicator is determined by $w_t$, which is a deterministic function of the context sequence and previous exploration indicators. Hence, $\cT^\ORBIT$ is context-measurable. Conditional on the full context sequence and 
the algorithmic randomization used to draw exploration prices, the two subfamilies $\{\xi_t:t\in\cT_T^{\mathsf{exp}}\}$ and $\{\xi_t:t\in\cT^\ORBIT\}$ are independent since the demand noises are independent across periods. The pilot functions on $\ORBIT$ rounds are measurable with respect to the contexts, exploration prices, and exploration-round observations, and thus depend on demand noise only through $\{\xi_t:t\in\cT_T^{\mathsf{exp}}\}$. Since the pilot update never uses $\ORBIT$ prices, $\ORBIT$ outcomes, or $\ORBIT$ internal randomization, the same associated index--pilot sequence is also independent of $\ORBIT$'s internal refinement randomization. This establishes the stated independence condition for the $\ORBIT$ interface.

For the second statement, on the event in Lemma~\ref{lem:linear-confidence}, each $t\in\cT^\ORBIT$ satisfies that $w_t\leq\eta$, and thus 
\begin{align*}
    \abs{\tilde u_t-u_t} =\abs{\proj_{\cU}(\mc_t^\top\hat\theta_t)-\mc_t^\top\theta_\ast} \leq \abs{\mc_t^\top(\hat\theta_t-\theta_\ast)} \leq w_t\leq\eta,
\end{align*}
where the projection inequality above has utilized $u_t=\mc_t^\top\theta_\ast\in\cU$.

For the third statement, we will exploit the following standard core-set bound.
\begin{lemma}[Lemma~5.1 of \citet{yin2022efficient}]\label{lem:core-set-size}
Given $\lambda,\bar\eta > 0$ and an arbitrary sequence $\{\mc_t\}_{t=1}^T$ with $\|\mc_t\|_2\leq1$, initialize $\cC_{\mathsf{core}} = \varnothing$ and consider the following procedure for expanding $\cC_{\mathsf{core}}$: for $t=1,\dots,T$,
\begin{enumerate}
    \item[1)] If $\mc_t^\top(\sum_{\mc \in \cC_{\mathsf{core}}} \mc\mc^\top + \lambda \bI)^{-1}\mc_t > \bar\eta,$ add $\mc_t$ to $\cC_{\mathsf{core}}$.
    \item[2)] Otherwise, keep $\cC_{\mathsf{core}}$ unchanged.
\end{enumerate}
Then we have that 
\begin{align*}
    \abs{\cC_{{\mathsf{core}}}} \leq C d\bar\eta^{-1}\log(1+\bar\eta^{-1})
\end{align*}
for a universal constant $C$ when $\lambda=1$ and $\|\mc_t\|_2\leq1$.
\end{lemma}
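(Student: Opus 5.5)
The plan is the standard elliptical-potential argument. Write $\bA_{\cC}:=\lambda\bI+\sum_{\mc\in\cC_{\mathsf{core}}}\mc\mc^\top$ for the current design matrix. Let $k:=\abs{\cC_{\mathsf{core}}}$, and let $\mc_{(1)},\dots,\mc_{(k)}$ be the added contexts in order, with $\bA_{i-1}$ the design matrix just before the $i$th addition (so $\bA_0=\lambda\bI=\bI$). Each addition satisfies $x_i:=\mc_{(i)}^\top\bA_{i-1}^{-1}\mc_{(i)}>\bar\eta$. The matrix determinant lemma gives $\det\bA_i=\det\bA_{i-1}(1+x_i)$, hence
\begin{align*}
    \log\det\bA_k=\sum_{i=1}^k\log(1+x_i)>k\log(1+\bar\eta).
\end{align*}

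For the upper bound on the determinant, use $\|\mc_t\|_2\le1$: the trace of $\bA_k$ is at most $d+k$, and AM--GM then gives $\log\det\bA_k\le d\log(1+k/d)$. Combining the two bounds yields
\begin{align*}
    k\log(1+\bar\eta)<d\log(1+k/d).
\end{align*}

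It remains to solve this transcendental inequality for $k$, which I expect to be the only mildly fiddly step. Consider two regimes.

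When $\bar\eta\le1$, we have $\log(1+\bar\eta)\ge\bar\eta/2$. Setting $x=k/d$ gives $x<(2/\bar\eta)\log(1+x)$. The elementary fact that $x\le 2a\log(1+x)$ implies $x\le C a\log(1+a)$ for $a\ge1$ then yields $k\le Cd\bar\eta^{-1}\log(1+\bar\eta^{-1})$.

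When $\bar\eta>1$, note that $x_i\le\|\mc_{(i)}\|_2^2\le1$ because $\bA_{i-1}\succeq\bI$. So no addition ever occurs and $k=0$. The same conclusion holds for any $\bar\eta\ge1$ if the trigger is strict.

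Together the two cases give the claimed bound with a universal constant $C$.
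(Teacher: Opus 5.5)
Your proof is correct. The matrix determinant lemma gives $\log\det\bA_k>k\log(1+\bar\eta)$, and the trace/AM--GM bound gives $\log\det\bA_k\leq d\log(1+k/d)$. Solving $x<2a\log(1+x)$ with $x=k/d$ and $a=\bar\eta^{-1}\geq 1$ then yields the bound, and no context is ever added when $\bar\eta\geq 1$ because $\bA\succeq\bI$, so together these establish the claim. The paper gives no proof of its own and only cites Lemma~5.1 of Yin et al.\ (2022), which rests on the same log-determinant (elliptical-potential) comparison, so your route is essentially the standard one.
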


The exploration trigger $w_t=C_w\|\mc_t\|_{\bA_t^{-1}}>\eta$ implies that 
\begin{align*}
    \mc_t^\top \bA_t^{-1}\mc_t>\eta^2/C_w^2.
\end{align*}
Hence, applying Lemma~\ref{lem:core-set-size} above with $\bar\eta=\eta^2/C_w^2$ gives that 
$\abs{\cT_T^{\mathsf{exp}}} = \widetilde \cO(d\eta^{-2})$ since $C_w^2$ is logarithmic in $eT$ up to fixed model constants. This completes the proof of Proposition~\ref{prop:adaptive-init-guarantee}.

\subsection{Proof of Corollary~\ref{cor:upper}}

The regret on the exploration rounds is at most $p_{\max}\abs{\cT_T^{\mathsf{exp}}}$, which is $\widetilde \cO(d\eta^{-2})$ by Proposition~\ref{prop:adaptive-init-guarantee}. On the high-probability pilot event from Proposition~\ref{prop:adaptive-init-guarantee}, let us re-index the $\ORBIT$ rounds as $t_1<\cdots<t_N$ with $N=\abs{\cT^\ORBIT}$, and define $\bar u_s:=\mc_{t_s}^\top\theta_\ast$, $\bar{\tilde u}_s:=\proj_{\cU}(\mc_{t_s}^\top\hat\theta_{t_s})$, and $\bar\xi_s:=\xi_{t_s}$. The exploration rule is context-measurable, and the scalar pilots on $\ORBIT$ rounds depend only on contexts, exploration prices, and exploration-round noises. Hence, $\{(\bar u_s,\bar{\tilde u}_s)\}_{s=1}^N$ is independent of $\{\bar\xi_s\}_{s=1}^N$ and 
$\ORBIT$'s internal refinement randomization, while the same event gives that $\abs{\bar{\tilde u}_s-\bar u_s}\leq\eta$ for all $s$. Consequently, the associated index--pilot sequence for the re-indexed $\ORBIT$ rounds satisfies Assumption~\ref{assump:orbit-interface}. Applying Theorem~\ref{thm:upper} with $H=T$ to this length-$N$ $\ORBIT$ run with bin width $h=T^{-1/(4\beta-3)}$, using the same upper pilot-input budget $T$ in the logarithmic coarse-sampling schedule $\log(eT)$, and using $N\leq T$, we can obtain 
\begin{align*}
    \widetilde \cO\left(T^{\frac{2\beta-1}{4\beta-3}}+T\eta^2\right)
\end{align*}
regret on $\ORBIT$ rounds. The complement of the pilot event has probability $\cO(T^{-3})$ and contributes at most $p_{\max}T\cdot \cO(T^{-3})=\mathfrak o(1)$. Thus, combining the three contributions yields that 
\begin{align*}
    \Regret(T)= \widetilde \cO\left(T^{\frac{2\beta-1}{4\beta-3}}+T\eta^2+d\eta^{-2}\right).
\end{align*}
Therefore, optimizing the last two terms gives that $\eta^2\asymp\sqrt{d/T}$ when this value satisfies the structural smallness conditions, and the displayed simplified bound follows. This concludes the proof of Corollary~\ref{cor:upper}.

\medskip
\section{Proofs for the lower bound in Section~\ref{sec:lower}}\label{app:sec5-proofs}

In this appendix, we aim to prove Theorem~\ref{thm:lower} and its supporting lemmas. We begin with fixing the hard family and the accompanying notation; 
the subsequent subsections state the four intermediate claims, prove Theorem~\ref{thm:lower}, and then establish the claims in turn. We index the hard family by
\begin{align*}
\omega\in\{-1,+1\}^M,
\end{align*}
where the bit $\omega_j$ controls the sign of a localized perturbation around the translated coordinate associated with cell $j$.

Under the normalization of Section~\ref{sec:lower}, the actual linear utility lies in a subinterval of $[0,1]$ and $p_{\max}=1$, so the structural H\"older domain in Assumption~\ref{assump:holder} may be taken as $[-2,2]$. The lower-bound geometry uses local coordinates of size at most $C_{\rm loc}=1/32$ and the auxiliary constant
\begin{align*}
B:=1-C_{\rm loc}=\frac{31}{32}.
\end{align*}
For notational convenience in the construction below, all auxiliary tail functions are defined on the real line by constant extension outside their support; in particular, their restrictions to any compact interval, including the shifted structural domain required by the centered instance, have the same smoothness constants up to fixed factors.
We next fix deterministic strips
\begin{align*}
I_{\mathrm{glob}}:=\left[\frac{1}{8},\frac{7}{8}\right], \quad
I_{\mathrm{lin}}:=\left[\frac{11}{32},\frac{5}{8}\right], \quad
I_{\mathrm{curv}}:=\left[\frac{3}{8},\frac{19}{32}\right], \quad
I_{\mathrm{sign}}:=\left[\frac{7}{16},\frac{17}{32}\right].
\end{align*}
Let $g_{\mathrm{uni}}:\mathbb R\to[0,1]$ be the truncated-linear tail
\begin{align*}
g_{\mathrm{uni}}(z):= \begin{cases} 1 & \text{ if } z\leq 0, \\
1-z/B & \text{ if } 0\leq z\leq B, \\
0 & \text{ if } z\geq B. \end{cases}
\end{align*}

\begin{lemma}[Smooth baseline tail]\label{lem:smooth-baseline-tail}
For each sufficiently small $\varepsilon_0>0$, there exists a function $g_0\in C^\infty(\mathbb R)$ such that
\begin{enumerate}[label=(\alph*)]
    \item[a)] $g_0$ is nonincreasing and takes values in $[0,1]$;
    \item[b)] $g_0(z)=1$ for all $z\leq 0$, and $g_0(z)=0$ for all $z\geq B$;
    \item[c)] $g_0(z)=g_{\mathrm{uni}}(z)=1-z/B$ for all $z\in I_{\mathrm{glob}}$;
    \item[d)] $\pnorm{g_0-g_{\mathrm{uni}}}{\infty}\leq \varepsilon_0$.
\end{enumerate}
\end{lemma}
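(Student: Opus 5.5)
We construct $g_0$ by mollifying $g_{\mathrm{uni}}$ only near its two kinks at $z=0$ and $z=B$, leaving it untouched on the middle strip. Fix a standard nonnegative mollifier $\chi\in C_0^\infty([-1,1])$ with $\int\chi=1$, and set $\chi_\epsilon(z):=\epsilon^{-1}\chi(z/\epsilon)$ for a small $\epsilon>0$ to be chosen. Define $g_0:=g_{\mathrm{uni}}*\chi_\epsilon$ after a small modification that pins the boundary values.

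The direct convolution $g_{\mathrm{uni}}*\chi_\epsilon$ is smooth and nonincreasing, because convolving a nonincreasing function with a nonnegative kernel preserves monotonicity. It also takes values in $[0,1]$, and on $[\epsilon,B-\epsilon]$ it agrees with the affine function $1-z/B$, since convolving an affine function with a symmetric probability kernel returns it unchanged. Its defect is that it equals $1$ only for $z\le-\epsilon$ and $0$ only for $z\ge B+\epsilon$, so item b) fails.

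To repair b), we instead mollify a reparametrized ramp. Let $h_\epsilon:\mathbb R\to[0,1]$ be the nonincreasing piecewise-linear function equal to $1$ on $(-\infty,\epsilon]$, to $0$ on $[B-\epsilon,\infty)$, and linear in between. We then take $g_0:=h_\epsilon*\chi_{\epsilon/2}$. This function is $C^\infty$, nonincreasing and $[0,1]$-valued, and it equals $1$ on $(-\infty,\epsilon/2]\supset(-\infty,0]$ and $0$ on $[B-\epsilon/2,\infty)$, so a) and b) hold.

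Item c) is the main obstacle, because the linear piece of $h_\epsilon$ has slope $-1/(B-2\epsilon)$ rather than $-1/B$. We fix this with a third construction. Let $\eta\in C^\infty$ be a nonincreasing step with $\eta=1$ on $(-\infty,0]$, $\eta=0$ on $[1,\infty)$, and $\eta\in[0,1]$ everywhere. We set
\[
g_0(z):=\begin{cases}1-\epsilon\,\eta(-z/\epsilon)\ldots\end{cases}
\]
but it is cleaner to glue directly:
\[
g_0:=\lambda\cdot(1-z/B)+(1-\lambda)\,\mathbf 1\{z<B/2\}.
\]
Here $\lambda\in C^\infty$ satisfies $\lambda=1$ on $I_{\mathrm{glob}}$, $\lambda=0$ outside $[\epsilon',B-\epsilon']$ with $\epsilon'\le1/16$, and $0\le\lambda\le1$. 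Since $1/8>\epsilon'$, the indicator term is constant on every region where $1-\lambda\neq0$, so $g_0$ is $C^\infty$.

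With this definition c) is immediate. For b): when $z\le0$, $\lambda=0$ and the indicator equals $1$, so $g_0=1$; when $z\ge B$, $g_0=0$. Values lie in $[0,1]$ because $g_0$ is a convex combination of numbers in $[0,1]$.

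Monotonicity is now the step that needs care. On the left transition zone,
\[
g_0=1-\lambda z/B,
\]
so $\lambda'\ge0$ there gives $g_0'=-(\lambda'z+\lambda)/B\le0$, using $z\ge0$ in that zone. On the right zone,
\[
g_0=\lambda(1-z/B),
\]
so $\lambda'\le0$ and $1-z/B\ge0$ give $g_0'\le0$. Hence a) holds provided $\lambda$ is chosen increasing on the left zone and decreasing on the right zone.

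Finally, for d): the two functions differ only where $\lambda<1$. There both $g_0$ and $g_{\mathrm{uni}}$ lie within $\epsilon'/B$ of the same boundary constant ($1$ near $0$, $0$ near $B$). So $\|g_0-g_{\mathrm{uni}}\|_\infty\le\epsilon'/B$, and choosing $\epsilon'\le B\varepsilon_0$ (smaller than $1/16$) yields d).

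The construction for c) and d) is routine once the gluing function $\lambda$ is fixed; the only genuine check is the sign analysis of $g_0'$ in the two transition zones.
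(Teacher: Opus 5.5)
Your final gluing construction $g_0=\lambda(1-z/B)+(1-\lambda)\mathbf 1\{z<B/2\}$ is a valid route, and your checks of smoothness, the range, (a), (b) and (c) go through. The argument for (d), however, fails as written.

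\paragraph{The gap in (d).} You impose only $\lambda=1$ on $I_{\mathrm{glob}}$ and $\lambda=0$ off $[\epsilon',B-\epsilon']$. The set $\{\lambda<1\}$ is therefore only known to avoid $I_{\mathrm{glob}}$, so it may contain all of $[\epsilon',1/8)$ and $(7/8,B-\epsilon']$, not merely $\epsilon'$-neighbourhoods of $0$ and $B$. On these zones
\[
g_0-g_{\mathrm{uni}}=(1-\lambda(z))\,\frac{z}{B}\ \ \text{(left)},\qquad g_{\mathrm{uni}}-g_0=(1-\lambda(z))\Bigl(1-\frac{z}{B}\Bigr)\ \ \text{(right)}.
\]
By the intermediate value theorem there is some $z\in(\epsilon',1/8)$ with $\lambda(z)=1/2$, and there the gap equals $z/(2B)$. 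Nothing in your specification prevents that point from being, say, $z=1/16$, which gives a gap of $1/(32B)$ however small $\epsilon'$ is. Your sentence ``both lie within $\epsilon'/B$ of the same boundary constant'' is therefore false at such points, since $g_{\mathrm{uni}}(z)=1-z/B$ can be up to $1/(8B)$ away from $1$ there. As specified, $\lambda$ yields no bound on $\pnorm{g_0-g_{\mathrm{uni}}}{\infty}$ better than an absolute constant, whereas $\varepsilon_0$ must be small downstream.

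\paragraph{The repair.} You must control where $\lambda$ transitions, not just where it vanishes. Take $S\in C^\infty$ nondecreasing with $S=0$ on $(-\infty,0]$ and $S=1$ on $[1,\infty)$, and set $\lambda(z):=S\bigl((z-\epsilon')/\epsilon'\bigr)\,S\bigl((B-\epsilon'-z)/\epsilon'\bigr)$. Then:
\begin{itemize}
\item $\lambda=0$ off $[\epsilon',B-\epsilon']$;
\item $\lambda=1$ on $[2\epsilon',B-2\epsilon']\supset I_{\mathrm{glob}}$ once $\epsilon'\le 3/64$;
\item $\lambda$ is nondecreasing on $(-\infty,B/2]$ and nonincreasing on $[B/2,\infty)$, so your sign analysis of $g_0'$ applies verbatim;
\item both displayed differences are now at most $2\epsilon'/B$.
\end{itemize}
Choosing $\epsilon'\le\min\{B\varepsilon_0/2,\,3/64\}$ then gives (d).

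Two smaller points. The reason $(1-\lambda)\mathbf 1\{z<B/2\}$ is smooth is that the jump point $B/2$ lies in the interior of $I_{\mathrm{glob}}$, where $1-\lambda\equiv0$; it is not that $1/8>\epsilon'$. Also, the two abandoned mollification attempts and the unfinished cases display should be removed from the write-up.

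\paragraph{Comparison with the paper.} The paper builds a smooth density $f_0$ equal to $1/B$ on $[\delta,B-\delta]$, with boundary pieces whose masses are tuned to exactly $\delta/B$, and sets $g_0(z)=\int_z^\infty f_0$.
\begin{itemize}
\item Monotonicity, the range and (b) are automatic from $f_0\ge0$, unit mass and support in $[0,B]$.
\item (c) follows from the mass matching on $[B-\delta,B]$.
\item (d) follows from $\int|f_0-B^{-1}\mathbf 1_{[0,B]}|\le4\delta/B$.
\end{itemize}
The cost there is constructing boundary pieces with a prescribed integral. Your interpolation avoids mass matching and makes (b) and (c) immediate, at the price of the derivative sign check for monotonicity. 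Once $\lambda$ is fixed as above, both constructions deliver what is used later: $g_0'=-1/B$ and $g_0''=0$ on $I_{\mathrm{lin}}$, and $g_0$ constant off $[0,B]$.
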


\textit{Proof}. 
Let us choose $0<\delta<\min\{1/8,\ B-7/8,\ B\varepsilon_0/4\}$. We first construct a smooth density $f_0$ on $[0,B]$. Let $h_L\in C^\infty([0,\delta])$ be nonnegative, flat at both endpoints, equal to $0$ in a neighborhood of $0$, equal to $1/B$ in a neighborhood of $\delta$, and satisfying that 
\begin{align*}
    \int_0^\delta h_L(z)\d z=\frac{\delta}{B}.
\end{align*}
Such function can be obtained by first taking a smooth transition from $0$ to $1/B$ that is bounded by $1/B$, flat at the endpoints, and strictly below $1/B$ on a set of positive measure; its integral is then strictly smaller than $\delta/B$. Adding a nonnegative interior $C^\infty_0(0,\delta)$ bump with the unique positive coefficient that matches the displayed integral gives the desired $h_L$. The same construction after a change of variable $z\mapsto B-z$ yields $h_R\in C^\infty([B-\delta,B])$ nonnegative, flat at both endpoints, equal to $1/B$ in a neighborhood of $B-\delta$, equal to $0$ in a neighborhood of $B$, and satisfying that 
\begin{align*}
    \int_{B-\delta}^{B}h_R(z)\d z=\frac{\delta}{B}.
\end{align*}
Let us define
\begin{align*}
    f_0(z)=\begin{cases}
        h_L(z), & 0\leq z\leq \delta,\\
        1/B, & \delta\leq z\leq B-\delta,\\
        h_R(z), & B-\delta\leq z\leq B,\\
        0, & z\notin[0,B].
    \end{cases}
\end{align*}
The flatness and endpoint matching make $f_0\in C^\infty(\mathbb R)$, and $f_0\geq0$. Its total mass is
\begin{align*}
    \frac{\delta}{B}+\frac{B-2\delta}{B}+\frac{\delta}{B}=1.
\end{align*}

We now set
\begin{align*}
    g_0(z):=\int_z^\infty f_0(t)\d t .
\end{align*}
Then $g_0$ is a smooth nonincreasing tail, takes values in $[0,1]$, equals $1$ on $(-\infty,0]$, and equals $0$ on $[B,\infty)$. For $z\in[\delta,B-\delta]$, it holds that 
\begin{align*}
    g_0(z)=\int_z^{B-\delta}\frac{\d t}{B}+\int_{B-\delta}^B h_R(t)\d t =\frac{B-\delta-z}{B}+\frac{\delta}{B}=1-\frac{z}{B}.
\end{align*}
Since $I_{\rm glob}\subset[\delta,B-\delta]$, this establishes property (c). Finally, $g_0$ and $g_{\rm uni}$ differ only on $[0,\delta]\cup[B-\delta,B]$. On either boundary interval, we have that 
\begin{align*}
    \abs{g_0(z)-g_{\rm uni}(z)} \leq \int \abs{f_0(t)-B^{-1}\bm 1\{0\leq t\leq B\}}\d t \leq \frac{4\delta}{B} \leq \varepsilon_0,
\end{align*}
after the choice of $\delta$. This completes the proof of Lemma~\ref{lem:smooth-baseline-tail}.

We choose a sufficiently small constant $\varepsilon_0>0$ and fix a function $g_0$ satisfying Lemma~\ref{lem:smooth-baseline-tail} above. Let us fix a sufficiently small constant $\gamma>0$, and set
\begin{align*}
w:=\gamma T^{-1/(4\beta-3)}, \quad M:=\floor{\frac{1}{64w}}.
\end{align*}
We define the local grid coordinates by
\begin{align*}
c_j:=2jw, \quad j=1,\dots,M.
\end{align*}
The actual scalar contexts are a common translate of these local coordinates, as specified below after centering the auxiliary noise law.
Finally, let
\begin{align*}
\varphi\in C_0^\infty([-1/8,1/8])
\end{align*}
satisfy that 
\begin{align*}
\varphi(0)=0, \quad \varphi'(0)=1,
\end{align*}
and assume that $\varphi$ is odd.

With the above notation in place, for $u\in[-1/32,1/32]$, let us define the baseline oracle price
\begin{align*}
p_0^\ast(u):=\frac{B+u}{2}.
\end{align*}
At the grid contexts $c_j$, we write
\begin{align*}
p_j^0:=p_0^\ast(c_j)=\frac{B+c_j}{2}, \quad z_j:=p_j^0-c_j=\frac{B}{2}-jw, \quad J_j:=\left[z_j-\frac{w}{8},z_j+\frac{w}{8}\right].
\end{align*}
For each sign vector
\begin{align*}
\omega=(\omega_1,\dots,\omega_M)\in\{-1,+1\}^M,
\end{align*}
let us define the auxiliary perturbed tail
\begin{align*}
g_\omega(z):=g_0(z)+\kappa w^\beta \sum_{j\in[M]} \omega_j\varphi\Big(\frac{z-z_j}{w}\Big), \quad z\in\mathbb R,
\end{align*}
where $\kappa>0$ is a sufficiently small constant. Since $\varphi$ is odd and every bump support lies inside $[0,B]$, each perturbation integrates to zero
\begin{align*}
    \int_0^B \varphi\left(\frac{z-z_j}{w}\right)\d z=w\int\varphi=0.
\end{align*}
Thus, once the membership proof below verifies that $g_\omega$ is a valid tail function, all auxiliary laws with tails $g_\omega$ have the same mean
\begin{align*}
    \mu_0=\int_0^B g_\omega(z)\d z,
\end{align*}
which is independent of $\omega$. 

The actual centered tail used in the hard instance is given by 
\begin{align*}
    \bar g_\omega(z):=g_\omega(z+\mu_0),
\end{align*}
and the actual scalar contexts are $\mc_j:=\mu_0+c_j$ with $\theta_\ast=1$ and $\Prob(\mc_t=\mc_j)=1/M$. The actual noise is $X-\mu_0$, where $X$ has tail $g_\omega$, so it has mean zero. Moreover, the actual revenue at context $\mc_j$ is exactly $p g_\omega(p-c_j)$ in the local coordinate. In the rest of the proof, we will suppress such deterministic centering. Equivalently, we write $c_t:=\mc_t-\mu_0$ for the local coordinate of the observed context, and define
\begin{align*}
q_\omega(c,p):=g_\omega(p-c), \quad r_\omega(c,p):=pq_\omega(c,p), \quad p_\omega^\ast(u):=\argmax_{p\in[0,1]} r_\omega(u,p),
\end{align*}
where $c,u$ denote the local coordinates. Such reparametrization is common to all environments, so it does not reduce the information available to the learner. Let $P_\omega$ be the law of the full transcript under the corresponding centered instance. For $j\in[M]$, denote by $\omega^{(j)}$ the sign vector obtained from $\omega$ by flipping only its $j$th coordinate.

\subsection{Main lower-bound reduction}

We first state the four intermediate claims used to prove Theorem~\ref{thm:lower}. The lower-bound argument follows the same local decomposition suggested by the upper bound. Each cell carries one bit of information, and flipping that bit moves the local oracle price by order $w^{\beta-1}$ while leaving the rest of the environment unchanged. The four lemmas below quantify the geometry of this family, the way local regret aggregates across cells, and the amount of statistical information that one cell can reveal.

\begin{lemma}\label{lem:main-lower-family}
For all sufficiently small $\gamma,\kappa>0$ and all sufficiently large $T$, each corresponding centered hard instance is lower-bound normalized in the sense of Definition~\ref{def:lb-normalized-instance}. Moreover, if $p_j^0$ denotes the baseline oracle price at local coordinate $c_j$---equivalently, at actual scalar index $\mu_0+c_j$ in the centered instance---there exist constants $a_1,a_2>0$ such that for each $\omega\in\{-1,+1\}^M$ and each $j\in[M]$,
\begin{align*}
a_1 w^{\beta-1}\leq \omega_j\big(p_\omega^\ast(c_j)-p_j^0\big)\leq a_2 w^{\beta-1}.
\end{align*}
\end{lemma}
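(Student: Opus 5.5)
Our plan has two parts: membership of each centered instance in the lower-bound-normalized class, and the two-sided oracle-price shift. Both rest on the fact that on the strip $I_{\rm glob}$ the baseline tail is exactly linear, $g_0(z)=1-z/B$. For local coordinates $c\in[-1/32,1/32]$ and prices near $p_0^\ast(c)=(B+c)/2$, the gap $p-c$ lies near $B/2-c/2\in I_{\rm sign}\subset I_{\rm glob}$, so explicit computations are available. In particular, $r_0(c,p)=p(1-(p-c)/B)$ has maximizer $(B+c)/2$ and curvature $-\partial_p^2 r_0=2/B$.

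\emph{Validity and smoothness.} Each bump $\varphi((z-z_j)/w)$ is supported in $J_j$, and these supports are pairwise disjoint because the $z_j$ are spaced by $w$ and the supports have width $w/4$. All of them lie in $[B/2-1/64,B/2]\subset I_{\rm sign}$, where $g_0'=-1/B$. The perturbation has $k$th derivative bounded by $\kappa w^{\beta-k}\|\varphi^{(k)}\|_\infty$, so with $\kappa$ small:
\begin{itemize}
\item $g_\omega$ stays nonincreasing;
\item $g_\omega$ stays in $[0,1]$;
\item $g_\omega$ is smooth and equals $g_0$ outside $\bigcup_j J_j$.
\end{itemize}
For the H\"older seminorm, we use the standard fact that $w^\beta\varphi(\cdot/w)$ has $\beta$-H\"older norm bounded independently of $w$; disjointness of supports lets us sum over $j$ without losing a factor. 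Assumption~\ref{assump:holder} then holds with a $T$-independent $L_g$.

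The mean is $\omega$-independent, as already noted, so centering is legitimate. Valuations lie in $[0,1]$ because $X\in[0,B]$ and the actual index $\mu_0+c_j$ satisfies $\mu_0+c_j+X-\mu_0=c_j+X\in[2w,B+1/32]\subset[0,1]$. Assumption~\ref{assump:bounded} holds on $\cU_{\rm lb}$.

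\emph{Strong unimodality.} We take $\varepsilon_0$ and $\kappa$ small and check Assumption~\ref{assump:rho} uniformly in $u\in[-1/32,1/32]$. For the baseline $r_0$, when $p-u\in I_{\rm glob}$ the revenue is the exact concave quadratic above. Outside this range, $r_0\le r_{\rm uni}+\varepsilon_0$, and $r_{\rm uni}$ is bounded away from its maximum $(B+u)^2/(4B)$ by a fixed margin. The perturbation changes $r$ by $O(\kappa w^\beta)$ in value and $O(\kappa w^{\beta-2})$ in $\partial_p^2$; the latter is $O(\kappa)$ since $\beta\ge2$. Hence:
\begin{itemize}
\item on a fixed neighbourhood of $p_0^\ast(u)$, $-\partial_p^2 r_\omega\ge 1/B$, which gives uniqueness, interiority, local concavity radius at least $1/16$, and the quadratic bounds locally;
\item away from that neighbourhood, the fixed value gap survives the $O(\varepsilon_0+\kappa w^\beta)$ error, giving the global quadratic bounds with adjusted constants.
\end{itemize}

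\emph{Price shift.} At $c=c_j$, on the concavity neighbourhood the first-order condition reads
\[
F_\omega(p):=g_\omega(p-c_j)+p\,g_\omega'(p-c_j)=0 .
\]
For the baseline, $F_0(p)=1-(p-c_j)/B-p/B$ vanishes at $p_j^0$, with $F_0'=-2/B$. At $p=p_j^0$ the gap is $z_j$, so only bump $j$ is active there, and
\[
F_\omega(p_j^0)=\omega_j\kappa\big(w^\beta\varphi(0)+p_j^0w^{\beta-1}\varphi'(0)\big)=\omega_j\kappa\, p_j^0\, w^{\beta-1}.
\]
Moreover $|F_\omega'+2/B|\le C\kappa w^{\beta-2}\le C\kappa$ on this neighbourhood. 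The mean value theorem then gives
\[
p_\omega^\ast(c_j)-p_j^0=\frac{F_\omega(p_j^0)}{-F_\omega'(\bar p)},
\]
with denominator in $[2/B-C\kappa,\,2/B+C\kappa]$. Since $p_j^0\approx B/2$, this yields $a_1w^{\beta-1}\le\omega_j(p_\omega^\ast(c_j)-p_j^0)\le a_2w^{\beta-1}$ with $a_{1},a_2\asymp\kappa$.

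The main obstacle is the uniform global quadratic-growth lower bound in Assumption~\ref{assump:rho}. It requires choosing $\varepsilon_0$ and $\kappa$ in the right order, and a careful check near the support endpoints where $g_0$ departs from $g_{\rm uni}$. We would handle the far-from-optimum region by compactness together with the explicit value gap of $r_{\rm uni}$.
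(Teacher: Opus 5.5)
Your proposal is correct and follows essentially the same route as the paper. The steps match: derivative bounds on the disjointly supported bumps give validity and H\"older smoothness; a curvature window where $g_0$ is exactly linear, combined with the $r_{\rm uni}$ value gap (choosing $\varepsilon_0$ and then $\kappa$ small), gives Assumption~\ref{assump:rho} and the $1/16$ radius; and the mean-value argument on $F_\omega$ at $p_j^0$, using $F_\omega(p_j^0)=\omega_j\kappa p_j^0w^{\beta-1}$, gives the shift bounds.

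There are two small differences. The paper locates the maximizer inside $I_{\mathrm{sign}}$ through the signs of $r_{\omega,p}$ at $7/16$ and $17/32$, whereas you use strict concavity plus the value gap; both work. The paper also explicitly checks $0\le\mu_0+c_j\le B+C_{\rm loc}=1$ for the context-space condition, a one-line check you should add.
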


Lemma~\ref{lem:main-lower-family} above identifies the relevant local scale. Flipping one bit changes the local oracle price by order $w^{\beta-1}$. Since the revenue curve is curved around its maximizer, an incorrect local decision will incur a cost of order $w^{2\beta-2}$ each time the corresponding cell is visited.

For environment $\omega$, let us define the one-step regret as 
\begin{align*}
\Delta_t^\omega:=r_\omega(c_t,p_\omega^\ast(c_t))-r_\omega(c_t,p_t),
\end{align*}
and the local regret attached to cell $j$ as 
\begin{align*}
R_j^\omega:= \E_\omega \bigg[ \sum_{t \in [T]} \Delta_t^\omega \bm 1\{c_t=c_j \text{ or } p_t-c_t\in J_j\} \bigg],
\end{align*}
where $J_j :=[z_j-w/8, z_j+w/8]$ is the bump interval around $z_j$.

\begin{lemma}\label{lem:main-lower-agg}
For each environment $\omega$, it holds that 
\begin{align*}
\sum_{j \in [M]} R_j^\omega \leq 2\Regret^\omega(T).
\end{align*}
\end{lemma}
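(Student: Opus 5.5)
The plan is a pointwise counting argument followed by taking expectations. Fix $\omega$ and a round $t$. By Assumption~\ref{assump:rho}, valid for the hard instance by Lemma~\ref{lem:main-lower-family}, we have $\Delta_t^\omega\geq 0$ for every $t$. Hence
\begin{align*}
\sum_{j\in[M]}R_j^\omega=\E_\omega\bigg[\sum_{t\in[T]}\Delta_t^\omega\, K_t\bigg],
\qquad
K_t:=\sum_{j\in[M]}\bm 1\{c_t=c_j \text{ or } p_t-c_t\in J_j\}.
\end{align*}
It then suffices to show that $K_t\leq 2$ holds deterministically for every $t$. Combined with $\Delta_t^\omega\geq0$ and the definition $\Regret^\omega(T)=\E_\omega[\sum_t\Delta_t^\omega]$, this gives the claim.

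To bound $K_t$, note that $\bm 1\{A\text{ or }B\}\leq \bm 1_A+\bm 1_B$, so
\begin{align*}
K_t\leq \sum_j\bm 1\{c_t=c_j\}+\sum_j\bm 1\{p_t-c_t\in J_j\}.
\end{align*}
The first sum is at most $1$ because the grid points $c_j=2jw$ are distinct. For the second sum, the intervals $J_j=[z_j-w/8,z_j+w/8]$ have centers $z_j=B/2-jw$ spaced exactly $w$ apart, while each has width $w/4<w$. They are therefore pairwise disjoint, so any fixed value $p_t-c_t$ lies in at most one of them, and the second sum is at most $1$ as well. Together these give $K_t\leq2$.

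There is no real obstacle here. The only points to check are the disjointness of the bump intervals $J_j$ (spacing $w$ versus width $w/4$) and the nonnegativity of the per-round regret $\Delta_t^\omega$, which holds because $p^\ast_\omega(c_t)$ maximizes $r_\omega(c_t,\cdot)$ over $[0,1]$ and $p_t\in[0,1]$. The argument is pathwise, so it holds for any policy and any realization, and no measurability issues arise.
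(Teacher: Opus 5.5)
Your proposal is correct and matches the paper's proof: both fix a round $t$, bound $\sum_{j}\bm 1\{c_t=c_j \text{ or } p_t-c_t\in J_j\}\le 2$ using the distinct grid contexts and the pairwise disjoint intervals $J_j$ (centers spaced by $w$, half-width $w/8$), and then take expectations. You also state explicitly that $\Delta_t^\omega\ge 0$, which holds because $p_\omega^\ast(c_t)$ maximizes $r_\omega(c_t,\cdot)$ over $[0,1]$ and $p_t\in[0,1]$; the paper uses this fact without stating it.
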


The aggregation characterized in Lemma \ref{lem:main-lower-agg} above is purely geometric. A single round can contribute to at most one context cell and at most one bump interval, so the sum of the local regret counters over $j$ can exceed the total regret by at most a factor of two.

\begin{lemma}\label{lem:main-lower-kl}
There exists a constant $C_{\mathrm{kl}}>0$ such that for each sign vector $\omega$ and each $j\in[M]$,
\begin{align*}
\KL(P_\omega,P_{\omega^{(j)}}) \leq C_{\mathrm{kl}} w^{2\beta-2}R_j^\omega + C_{\mathrm{kl}} T w^{4\beta-3},
\end{align*}
and symmetrically,
\begin{align*}
\KL(P_{\omega^{(j)}},P_\omega) \leq C_{\mathrm{kl}} w^{2\beta-2}R_j^{\omega^{(j)}} + C_{\mathrm{kl}} T w^{4\beta-3}.
\end{align*}
\end{lemma}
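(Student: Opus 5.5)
We will prove Lemma~\ref{lem:main-lower-kl} by combining the chain rule for KL divergence over the transcript with a per-round Bernoulli KL bound. Since contexts are drawn i.i.d.\ from the same uniform law under both environments and the policy's internal randomization is common, the chain rule gives
\begin{align*}
\KL(P_\omega,P_{\omega^{(j)}})=\E_\omega\Big[\sum_{t\in[T]}\kl\big(q_\omega(c_t,p_t),\,q_{\omega^{(j)}}(c_t,p_t)\big)\Big],
\end{align*}
where $\kl$ is the Bernoulli KL. The two tails differ only through the $j$th bump, i.e.\ $g_\omega-g_{\omega^{(j)}}=2\kappa w^\beta\omega_j\varphi((z-z_j)/w)$, which is supported in $J_j$. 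Hence the summand vanishes unless $p_t-c_t\in J_j$. On $J_j\subset I_{\rm glob}$ (interior strip) we have $g_0=1-z/B$ bounded away from $0$ and $1$, so $g_\omega$ stays in $[c,1-c]$ for small $\kappa$, and $\kl(a,b)\le (a-b)^2/(b(1-b))$ yields per-round KL at most $C\kappa^2w^{2\beta}\,\bm 1\{p_t-c_t\in J_j\}$.

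It therefore remains to bound $\E_\omega[\#\{t:p_t-c_t\in J_j\}]$, and we split these rounds into two types. First consider rounds with $p_t-c_t\in J_j$ that contribute to $R_j^\omega$ with regret at least of order $w^2$. Here we use that the contexts lie on the grid $c_i=2iw$ and that $p^\ast_\omega(c_i)=p_i^0+O(w^{\beta-1})$ by Lemma~\ref{lem:main-lower-family}. If $c_t=c_i$ with $i\ne j$, then $p_t-c_i\in J_j$ forces $p_t-p_i^0=z_j-z_i+O(w)$. Since $z_j-z_i=(i-j)w$, this gives $|p_t-p^\ast_\omega(c_t)|\ge w/2$ for $w$ small (because $w/8+O(w^{\beta-1})<w/2$), and Assumption~\ref{assump:rho} (uniform $\sigma_r$ from Lemma~\ref{lem:main-lower-family}) then gives $\Delta_t^\omega\ge c w^2$. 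Each such round therefore costs regret $\gtrsim w^2$ and is counted in $R_j^\omega$, since $p_t-c_t\in J_j$. Consequently their expected number is at most $C R_j^\omega w^{-2}$, which contributes $C w^{2\beta-2}R_j^\omega$ to the KL.

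Rounds with $c_t=c_j$ are the main obstacle: the price may then sit near $p_j^0$ with tiny regret. A naive count gives $T/M\asymp Tw$ visits, producing $Tw^{2\beta+1}$. This is not directly the claimed term $Tw^{4\beta-3}$, so the plan must handle it more carefully. First I would check whether the claimed $Tw^{4\beta-3}$ dominates: for $\beta\ge2$ we have $4\beta-3\le 2\beta+1$ iff $\beta\le2$, so in general it does not. Instead I would refine the per-round KL bound. Flipping bit $j$ changes $g$ by $2\kappa w^\beta\omega_j\varphi((z-z_j)/w)$, and since $\varphi(0)=0$ and $\varphi$ is smooth, $|\varphi(x)|\le C|x|$. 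Hence at price $p_t$ with $z=p_t-c_j$, the squared difference is at most $C w^{2\beta}\,((z-z_j)/w)^2=Cw^{2\beta-2}(p_t-p_j^0)^2$. By Lemma~\ref{lem:main-lower-family}, $|p_j^0-p_\omega^\ast(c_j)|\le a_2w^{\beta-1}$, so $(p_t-p_j^0)^2\le 2(p_t-p_\omega^\ast(c_j))^2+2a_2^2w^{2\beta-2}$. The quadratic growth lower bound converts the first term into $(4/\sigma_r)\Delta_t^\omega$, which is counted in $R_j^\omega$ since $c_t=c_j$. Summing over $t$ then gives a KL contribution of at most $C w^{2\beta-2}R_j^\omega+C w^{4\beta-4}\E_\omega[\#\{t:c_t=c_j\}]$, and the last count is $T/M\asymp Tw$, yielding $C Tw^{4\beta-3}$. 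The same $|\varphi(x)|\le C|x|$ refinement also simplifies the $i\ne j$ case, since $(z-z_j)^2\le C(p_t-p^\ast_\omega(c_t))^2$ there only up to an $O(w)$ shift. I will keep the separate $w^2$-gap argument for that case.

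The symmetric statement follows by exchanging the roles of $\omega$ and $\omega^{(j)}$, since the whole argument only uses properties shared by every member of the family.
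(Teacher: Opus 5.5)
Your proposal is correct and follows essentially the same route as the paper: the KL chain rule, restriction to rounds with $p_t-c_t\in J_j$, the $\gtrsim w^2$ regret gap for off-cell contexts $c_t=c_k\neq c_j$, and, for on-cell rounds, the bound $|\varphi(x)|\le C|x|$ combined with the decomposition through $p^\ast_\omega(c_j)$, which produces the $w^{2\beta-2}R_j^\omega+Tw^{4\beta-3}$ terms. One small correction: at $\beta=2$ the inequality $w/8+a_2w^{\beta-1}<w/2$ does not follow from $w$ being small. It follows because the shift constant $a_2\lesssim\kappa$ is small, and this is why the paper chooses $\kappa$ with $C_{\rm sh}\kappa\le 1/8$.
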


Lemma~\ref{lem:main-lower-kl} above captures the central tradeoff. If a policy pays little regret on cell $j$, the paired environments $\omega$ and $\omega^{(j)}$ remain statistically close. If it pays a lot of regret, the desired lower bound is already present. Either way, the policy cannot avoid a local cost.

\begin{lemma}\label{lem:main-lower-local}
There exists a constant $c_\star>0$ such that for each sign vector $\omega$ and each $j\in[M]$,
\begin{align*}
R_j^\omega + R_j^{\omega^{(j)}} \geq c_\star T w^{2\beta-1}.
\end{align*}
\end{lemma}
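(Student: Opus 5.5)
The plan is a two-point argument that uses the sign structure of Lemma~\ref{lem:main-lower-family} together with the KL control of Lemma~\ref{lem:main-lower-kl}. Fix $\omega$ and $j$. Let $N_j:=\sum_{t\in[T]}\bm 1\{c_t=c_j\}$ be the number of visits to cell $j$. Since contexts are uniform over the $M$ cells and exogenous, $\E[N_j]=T/M\asymp Tw$ under every environment.

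By Lemma~\ref{lem:main-lower-family}, the oracle prices at $c_j$ under $\omega$ and $\omega^{(j)}$ lie on opposite sides of $p_j^0$, each at distance at least $a_1w^{\beta-1}$. On a visit with $c_t=c_j$, consider the event $A_t:=\{p_t\geq p_j^0\}$ (or its complement, according to the sign of $\omega_j$). On this event, the posted price is at distance at least $a_1w^{\beta-1}$ from the oracle price of one of the two environments. The quadratic growth lower bound in Assumption~\ref{assump:rho}, which holds for the hard family with constant $\sigma_r$ by the normalization part of Lemma~\ref{lem:main-lower-family}, then gives one-step regret at least $c\,w^{2\beta-2}$ in that environment. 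Since the indicator in $R_j$ includes all rounds with $c_t=c_j$, this yields
\begin{align*}
R_j^\omega+R_j^{\omega^{(j)}}\geq c\,w^{2\beta-2}\sum_{t}\Big(\Prob_\omega(c_t=c_j,\,A_t^{\omega\text{-bad}})+\Prob_{\omega^{(j)}}(c_t=c_j,\,A_t^{\omega^{(j)}\text{-bad}})\Big).
\end{align*}
For each $t$, the two bad events are complementary given $c_t=c_j$. By the Bretagnolle--Huber inequality applied to the transcript law, $\Prob_\omega(E)+\Prob_{\omega^{(j)}}(E^c)\geq \tfrac12\exp(-\KL(P_\omega,P_{\omega^{(j)}}))$. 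The event $E$ here involves $c_t$, whose marginal law is the same in both environments, so one can condition on the context sequence or simply apply the inequality to the event $\{c_t=c_j\}\cap A_t$ and compare with $\Prob(c_t=c_j)=1/M$. Summing over $t$ gives
\[
R_j^\omega+R_j^{\omega^{(j)}}\geq c\,w^{2\beta-2}\,\frac{T}{M}\cdot\tfrac14 e^{-\KL(P_\omega,P_{\omega^{(j)}})}\gtrsim T w^{2\beta-1}e^{-\KL}.
\]

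The remaining step is a case split. If $R_j^\omega\geq Tw^{2\beta-1}$, the claim holds trivially with $c_\star\le1$. Otherwise, Lemma~\ref{lem:main-lower-kl} gives $\KL\leq C_{\rm kl}w^{2\beta-2}\cdot Tw^{2\beta-1}+C_{\rm kl}Tw^{4\beta-3}=2C_{\rm kl}Tw^{4\beta-3}=2C_{\rm kl}\gamma^{4\beta-3}$. This is at most $1$ once $\gamma$ is chosen small, and that is exactly where the scaling $w=\gamma T^{-1/(4\beta-3)}$ enters. The displayed bound then gives $R_j^\omega+R_j^{\omega^{(j)}}\geq c' Tw^{2\beta-1}$. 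Taking $c_\star=\min\{1,c'\}$ finishes the argument.

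I expect the main obstacle to be the conversion from a per-round testing error to regret in the second step. The danger is that a policy randomizes so that $p_t$ lies near $p_j^0$, and one has to check that every price on the wrong side still pays $w^{2\beta-2}$ in the corresponding environment. This follows because the gap to the oracle price is at least $a_1w^{\beta-1}$ regardless of where on that side $p_t$ falls. A second point needing care is the Bretagnolle--Huber step over the full transcript, where the event depends on time $t$. I would apply it round by round with the same global KL bound, which is valid because the data-processing inequality bounds the KL of the prefix transcript by the full KL.
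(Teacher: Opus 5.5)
Your proposal is correct, but it argues differently from the paper.

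\textbf{The paper's route.} It builds a single majority-vote decoder $\hat\omega_j$ from all visits to cell $j$. A Chernoff bound guarantees $N_j\gtrsim Tw$, and a decoding error is then converted into regret on cell $j$. Bretagnolle--Huber is applied once to the full transcript, in both directions. The proof closes by contradiction, using both KL bounds of Lemma~\ref{lem:main-lower-kl}.

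\textbf{Your route.} You test round by round. For $\omega_j=+1$, you lower-bound $\Prob_\omega(c_t=c_j,\,p_t<p_j^0)+\Prob_{\omega^{(j)}}(c_t=c_j,\,p_t\geq p_j^0)$ by $\frac{1}{2M}e^{-\KL(P_\omega,P_{\omega^{(j)}})}$ and sum over $t$. This gives the clean intermediate bound $R_j^\omega+R_j^{\omega^{(j)}}\gtrsim Tw^{2\beta-1}e^{-\KL(P_\omega,P_{\omega^{(j)}})}$. It needs only $\E[N_j]=T/M$, with no concentration of $N_j$, and only the one-sided KL bound. A short case split on $R_j^\omega$ replaces the contradiction.

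\textbf{What your route costs: the conditioning step, which you should state precisely.} Bretagnolle--Huber applied to the event $\{c_t=c_j\}\cap A_t$ itself is vacuous. Its complement already has $P_{\omega^{(j)}}$-probability at least $1-1/M$, so you must apply the inequality to the laws conditional on $c_t=c_j$.

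Your stated justification, that the prefix KL is at most the full KL by data processing, is not enough on its own. Conditioning on an event of probability $1/M$ can inflate KL by up to a factor $M$. What closes the step is that $c_t$ is independent of the history $H_{t-1}$, since contexts are i.i.d.\ uniform and exogenous. Given $c_t=c_j$, the prefix $(H_{t-1},p_t)$ is therefore the unconditional law of $H_{t-1}$ composed with the same policy kernel in both environments. Its KL equals that of $H_{t-1}$, which is at most $\KL(P_\omega,P_{\omega^{(j)}})$. Conditioning on the whole context sequence also works, via Jensen, because the prefix KL then depends only on $c_{1:t-1}$ and is independent of $c_t$.

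The paper's decoder avoids this issue entirely, because its test event is a single event of the full transcript.
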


Lemma~\ref{lem:main-lower-local} above is the local two-point lower bound. It follows by combining Lemma~\ref{lem:main-lower-kl} with a decoder argument and the Bretagnolle--Huber inequality. We are now ready to prove Theorem~\ref{thm:lower} below.

\textit{Proof of Theorem~\ref{thm:lower}}. 
Let us fix an arbitrary policy $\pi$. For each $j\in[M]$, an application of Lemma~\ref{lem:main-lower-local} gives that 
\begin{align*}
R_j^\omega + R_j^{\omega^{(j)}} \geq c_\star T w^{2\beta-1}, \quad \forall \omega\in\{-1,+1\}^M.
\end{align*}
Summing over $\omega$ and using the fact that $\omega\mapsto \omega^{(j)}$ is a bijection of the hypercube, we can deduce that 
\begin{align*}
\frac{1}{2^M}\sum_{\omega} R_j^\omega \geq \frac{c_\star}{2} T w^{2\beta-1}.
\end{align*}
Further, summing this inequality over $j\in[M]$ leads to 
\begin{align*}
\frac{1}{2^M}\sum_{\omega}\sum_{j \in [M]} R_j^\omega \geq \frac{c_\star}{2} M T w^{2\beta-1}.
\end{align*}
Then with the aid of Lemma~\ref{lem:main-lower-agg}, we can show that 
\begin{align*}
\frac{1}{2^M}\sum_{\omega}\Regret^\omega(T) \geq \frac{c_\star}{4} M T w^{2\beta-1}.
\end{align*}

Note that since $M\asymp 1/w$, the right-hand side of the expression above is of order $T w^{2\beta-2}$. Consequently, substituting
\begin{align*}
w=\gamma T^{-1/(4\beta-3)}
\end{align*}
yields that 
\begin{align*}
T w^{2\beta-2}=\gamma^{2\beta-2}T^{\frac{2\beta-1}{4\beta-3}}.
\end{align*}
Therefore, for every policy $\pi$, at least one centered hard instance indexed by some $\omega\in\{-1,+1\}^M$ has regret at least $c_\beta T^{\frac{2\beta-1}{4\beta-3}}$. By Lemma~\ref{lem:main-lower-family}, that centered hard instance is lower-bound normalized in the sense of Definition~\ref{def:lb-normalized-instance}. Here, the actual centered tail is $\bar g_\omega$; the proof employs the auxiliary function $g_\omega$ only as a local-coordinate representative of the same revenues. Since $\pi$ is arbitrary, the desired conclusion is established. This concludes the proof of Theorem~\ref{thm:lower}.

\subsection{Proof of Lemma~\ref{lem:main-lower-family}}

We split the proof into four technical lemmas. The first lemma controls the size of the perturbations, the second one analyzes the geometry of the perturbed revenue curves, the third one identifies the sign-sensitive shift of the oracle price at each grid context, and the fourth one verifies the lower-bound normalization conditions.

\begin{lemma}\label{lem:lower-family-helper-derivs}
Let $s:=\floor{\beta}$ and $\alpha:=\beta-s\in[0,1)$.
Then there exist constants $C_k<\infty$ for $k=0,\dots,s$, and if $\alpha>0$, a constant
$C_{s,\alpha}<\infty$, depending only on $\varphi$, such that for each
$\omega\in\{-1,+1\}^M$, we have that 
\begin{align}
\pnorm{g_\omega-g_0}{\infty} &\leq C_0\kappa w^\beta, \label{eq:lower-deriv-0}\\
\pnorm{g_\omega'-g_0'}{\infty} &\leq C_1\kappa w^{\beta-1}, \label{eq:lower-deriv-1}\\
\pnorm{g_\omega''-g_0''}{\infty} &\leq C_2\kappa w^{\beta-2}, \label{eq:lower-deriv-2}\\
\pnorm{g_\omega^{(k)}-g_0^{(k)}}{\infty} &\leq C_k\kappa w^{\beta-k},
\quad k=0,\dots,s. \label{eq:lower-deriv-k}
\end{align}
If $\alpha>0$, with
\begin{align*}
[f]_{C^\alpha(K)} := \sup_{\substack{x,y\in K\\x\neq y}} \frac{\abs{f(x)-f(y)}}{\abs{x-y}^\alpha}
\end{align*}
we also have that 
\begin{align}
[g_\omega^{(s)}-g_0^{(s)}]_{C^\alpha(K)} \leq C_{s,\alpha}\kappa w^{\beta-s-\alpha} = C_{s,\alpha}\kappa
\quad\text{on every compact interval }K\subset\mathbb R.
\label{eq:lower-deriv-holder}
\end{align}
\end{lemma}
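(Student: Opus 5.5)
The plan is to work with the perturbation $D_\omega:=g_\omega-g_0=\kappa w^\beta\sum_{j\in[M]}\omega_j\varphi((\cdot-z_j)/w)$ and exploit the disjointness of the bump supports. The centers satisfy $z_j=B/2-jw$, so consecutive centers are $w$ apart. Each bump $\varphi((\cdot-z_j)/w)$ is supported in $J_j=[z_j-w/8,z_j+w/8]$, so the supports are pairwise disjoint and in fact separated by gaps of length at least $3w/4$. Hence at every point $z$ at most one summand is nonzero. Every estimate then reduces to a single rescaled bump $\kappa w^\beta\varphi((z-z_j)/w)$, and the sign $\omega_j$ is irrelevant in absolute value.

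For the sup-norm bounds \eqref{eq:lower-deriv-0}--\eqref{eq:lower-deriv-k}, I would use the chain rule. For $k\le s$,
\begin{align*}
D_\omega^{(k)}(z)=\kappa w^{\beta-k}\sum_{j}\omega_j\varphi^{(k)}\Big(\frac{z-z_j}{w}\Big).
\end{align*}
Disjointness of supports gives $\pnorm{D_\omega^{(k)}}{\infty}\le \kappa w^{\beta-k}\pnorm{\varphi^{(k)}}{\infty}$, so we may take $C_k:=\pnorm{\varphi^{(k)}}{\infty}$, which is finite since $\varphi\in C_0^\infty$. The three displayed cases $k=0,1,2$ are instances of this, and $k=2$ is covered because $\beta\ge2$ forces $s\ge2$. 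These constants depend only on $\varphi$.

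The H\"older seminorm \eqref{eq:lower-deriv-holder} is the only step needing care, since $x$ and $y$ may lie in different bumps. Write $F(z):=\sum_j\omega_j\varphi^{(s)}((z-z_j)/w)$, so that $D_\omega^{(s)}=\kappa w^{\beta-s}F=\kappa w^{\alpha}F$, and split into two cases.
\begin{itemize}
\item \emph{Case $|x-y|\le w/2$.} The segment $[x,y]$ meets at most one bump support, because the gaps between supports have length $3w/4$. By the mean value theorem,
\begin{align*}
|F(x)-F(y)|\le \pnorm{\varphi^{(s+1)}}{\infty}\,\frac{|x-y|}{w}\le \pnorm{\varphi^{(s+1)}}{\infty}\Big(\frac{|x-y|}{w}\Big)^{\alpha}\Big(\frac{1}{2}\Big)^{1-\alpha}.
\end{align*}
The second inequality uses $|x-y|/w\le 1/2$ and $\alpha<1$.
\item \emph{Case $|x-y|>w/2$.} Use the trivial bound
\begin{align*}
|F(x)-F(y)|\le 2\pnorm{\varphi^{(s)}}{\infty}\le 2^{1+\alpha}\pnorm{\varphi^{(s)}}{\infty}\Big(\frac{|x-y|}{w}\Big)^{\alpha}.
\end{align*}
\end{itemize}
In both cases $|D_\omega^{(s)}(x)-D_\omega^{(s)}(y)|\le C_{s,\alpha}\kappa w^{\alpha}w^{-\alpha}|x-y|^\alpha=C_{s,\alpha}\kappa|x-y|^\alpha$. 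Here $C_{s,\alpha}$ depends only on $\varphi$ and $\alpha$, and the bound is uniform in $\omega$, $M$, $w$, and the compact interval $K$.

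I expect no real obstacle beyond this cross-bump case split. The one point to check is that the supports are indeed separated by a positive multiple of $w$, which is guaranteed by the choice $\varphi\in C_0^\infty([-1/8,1/8])$ together with spacing $w$ between the $z_j$.
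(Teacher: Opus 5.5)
Your proof is correct and follows essentially the paper's argument: the chain rule plus disjointness of the bump supports gives the sup-norm bounds with $C_k=\pnorm{\varphi^{(k)}}{\infty}$, and a near/far case split gives the H\"older seminorm, using the trivial bound $2\pnorm{\varphi^{(s)}}{\infty}$ in the far case. The only cosmetic difference is where you split. You split at $\abs{x-y}=w/2$ and use the Lipschitz bound from $\varphi^{(s+1)}$ in the near case. The paper instead splits according to whether $x$ and $y$ lie in distinct bump supports (which forces $\abs{x-y}\geq 3w/4$), and uses $[\varphi^{(s)}]_{C^\alpha}$ when only one bump contributes.
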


\textit{Proof.}
Let us define 
\begin{align*}
h_\omega(z):=g_\omega(z)-g_0(z) = \kappa w^\beta\sum_{j\in[M]}\omega_j \varphi\Big(\frac{z-z_j}{w}\Big).
\end{align*}
Since the supports of the translated bumps
\begin{align*}
\varphi\Big(\frac{\cdot-z_j}{w}\Big), \quad j\in[M],
\end{align*}
are pairwise disjoint, at each point of $\R$ at most one term in the sum is nonzero. Then it holds that for each integer $k\geq 0$,
\begin{align*}
h_\omega^{(k)}(z) = \kappa w^{\beta-k}\sum_{j\in[M]}\omega_j \varphi^{(k)}\Big(\frac{z-z_j}{w}\Big),
\end{align*}
and thus 
\begin{align*}
\pnorm{h_\omega^{(k)}}{\infty} \leq \kappa w^{\beta-k}\pnorm{\varphi^{(k)}}{\infty}.
\end{align*}
This establishes \eqref{eq:lower-deriv-0}--\eqref{eq:lower-deriv-k}.

We next assume that $\alpha>0$ and fix a compact interval $K\subset\mathbb R$. Fix $x\neq y$ in $K$.
Assume first that there do not exist two distinct indices $i\neq j$ such that
\begin{align*}
x\in[z_i-w/8,z_i+w/8], \quad y\in[z_j-w/8,z_j+w/8].
\end{align*}
Then either both points belong to the support of the same translated bump, or at most one of them lies in a bump support. For either case, $h_\omega^{(s)}(x)-h_\omega^{(s)}(y)$ is contributed by a single translated copy of $\varphi^{(s)}$, so we have that 
\begin{align*}
\frac{\abs{h_\omega^{(s)}(x)-h_\omega^{(s)}(y)}}{\abs{x-y}^\alpha} \leq \kappa w^{\beta-s-\alpha} [\varphi^{(s)}]_{C^\alpha(\R)}.
\end{align*}

If instead $x$ and $y$ lie in the supports of two different bumps, the support centers will be $w$ apart and each support has radius $w/8$, so it holds that 
\begin{align*}
\abs{x-y}\geq \frac{3w}{4}.
\end{align*}
Using the sup-norm bound already proved, we can obtain that 
\begin{align*}
\frac{\abs{h_\omega^{(s)}(x)-h_\omega^{(s)}(y)}}{\abs{x-y}^\alpha} &\leq \frac{2\kappa w^{\beta-s}\pnorm{\varphi^{(s)}}{\infty}}{(3w/4)^\alpha} \lesssim \kappa w^{\beta-s-\alpha}.
\end{align*}
Therefore, combining the two cases above leads to \eqref{eq:lower-deriv-holder}. This completes the proof of Lemma~\ref{lem:lower-family-helper-derivs}.

\begin{lemma}\label{lem:lower-family-helper-geometry}
For all sufficiently small $\gamma,\kappa>0$ and all sufficiently large $T$, the following statements hold uniformly over $\omega\in\{-1,+1\}^M$.

1) For each $u\in[-1/32,1/32]$, the revenue curve $p\mapsto r_\omega(u,p)$ has a unique global maximizer $p_\omega^\ast(u)$ in $I_{\mathrm{sign}}\subset I_{\mathrm{curv}}$, and there exist constants $\sigma_r,L_r>0$, depending only on the normalized problem parameters, such that
\begin{align*}
\frac{\sigma_r}{2}\abs{p-p_\omega^\ast(u)}^2 \leq r_\omega\big(u,p_\omega^\ast(u)\big)-r_\omega(u,p) \leq \frac{L_r}{2}\abs{p-p_\omega^\ast(u)}^2
\end{align*}
for each $p\in[0,1]$.

2) If
\begin{align*}
p_0^\ast(u):=\frac{B+u}{2},
\end{align*}
there exists a numerical constant $C_{\mathrm{sh}}<\infty$, independent of $\kappa,w,T$, such that
\begin{align*}
\sup_{u\in[-1/32,1/32]}\abs{p_\omega^\ast(u)-p_0^\ast(u)}\leq C_{\mathrm{sh}}\kappa w^{\beta-1}.
\end{align*}
\end{lemma}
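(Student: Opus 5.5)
The plan is to treat $r_\omega$ as a small perturbation of the baseline revenue $r_0(u,p):=p\,g_0(p-u)$. I will first establish the geometry of $r_0$ exactly, and then transfer it to $r_\omega$ using the derivative bounds of Lemma~\ref{lem:lower-family-helper-derivs}. Throughout, write $h_\omega:=g_\omega-g_0$, so that
\begin{align*}
r_\omega(u,p)-r_0(u,p)=p\,h_\omega(p-u).
\end{align*}

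\emph{Step 1: the exact quadratic regime.} Fix $u\in[-1/32,1/32]$. For $p\in I_{\rm curv}$ we have $p-u\in[11/32,5/8]=I_{\rm lin}\subset I_{\rm glob}$. By Lemma~\ref{lem:smooth-baseline-tail}(c) this gives
\begin{align*}
r_0(u,p)=p\big(1-(p-u)/B\big),
\end{align*}
an exact concave quadratic with $\partial_p^2 r_0=-2/B$ and unique stationary point $p_0^\ast(u)=(B+u)/2$. This point lies in $[15/32,1/2]$, which is well inside $I_{\rm sign}=[7/16,17/32]$ with a margin of order one.

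\emph{Step 2: the global gap for the baseline.} For $p\in[0,1]\setminus I_{\rm curv}$, compare first with the truncated-linear revenue $r_{\rm uni}(u,p):=p\,g_{\rm uni}(p-u)$. This function is the same quadratic on the range where $0\le p-u\le B$, and it is even smaller (or zero) elsewhere. Since $p$ is at distance at least a fixed constant from $p_0^\ast(u)$, we get
\begin{align*}
r_{\rm uni}(u,p)\le r_0(u,p_0^\ast(u))-c_{\rm gap}
\end{align*}
for a numerical $c_{\rm gap}>0$. Lemma~\ref{lem:smooth-baseline-tail}(d) gives $|r_0-r_{\rm uni}|\le\varepsilon_0$, so choosing $\varepsilon_0\le c_{\rm gap}/4$ yields a gap of at least $c_{\rm gap}/2$ outside $I_{\rm curv}$. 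I expect this step to be the main obstacle. The difficulty is making sure the nonlinearity of $g_0$ near $0$ and $B$ cannot create a competing maximum. It is resolved by this uniform sup-norm comparison, with the strip constants checked so that $I_{\rm curv}$ is strictly inside the region where $g_0$ is exactly linear.

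\emph{Step 3: transfer to $r_\omega$.} By \eqref{eq:lower-deriv-0}--\eqref{eq:lower-deriv-2}, together with $\beta\ge2$ and $w\le1$, we have
\begin{align*}
\sup|r_\omega-r_0|&\le C_0\kappa w^\beta,\\
\sup|\partial_p r_\omega-\partial_p r_0|&\le C\kappa w^{\beta-1},\\
\sup|\partial_p^2 r_\omega-\partial_p^2 r_0|&\le C\kappa w^{\beta-2}\le C\kappa .
\end{align*}
The last bound uses $\partial_p^2[p\,h(p-u)]=2h'+p\,h''$. For $\kappa$ small, $\partial_p^2 r_\omega\le -1/B$ on $I_{\rm curv}$, so $r_\omega(u,\cdot)$ is strictly concave there. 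Moreover $\partial_p r_\omega(u,p_0^\ast(u))=O(\kappa w^{\beta-1})$, because $\partial_p r_0$ vanishes at $p_0^\ast(u)$. The mean value theorem with curvature at least $1/B$ then gives a unique zero of $\partial_p r_\omega(u,\cdot)$ in $I_{\rm curv}$, satisfying
\begin{align*}
|p_\omega^\ast(u)-p_0^\ast(u)|\le B\,C\kappa w^{\beta-1}=:C_{\rm sh}\kappa w^{\beta-1}.
\end{align*}
This is claim 2). For $\kappa$ small this shift is below the order-one margin from Step 1, so $p_\omega^\ast(u)\in I_{\rm sign}$. Since $2C_0\kappa w^\beta<c_{\rm gap}/4$, the gap outside $I_{\rm curv}$ survives, so this interior point is the unique global maximizer.

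\emph{Step 4: quadratic growth.} For $p\in I_{\rm curv}$, strict concavity and $|\partial_p^2 r_\omega|\le 2/B+C\kappa$ give both bounds directly through a second-order Taylor expansion around the stationary point $p_\omega^\ast(u)$. For $p\notin I_{\rm curv}$, the lower bound follows from the gap $c_{\rm gap}/4\ge (c_{\rm gap}/4)\,|p-p_\omega^\ast(u)|^2$, since $|p-p_\omega^\ast(u)|\le 1$. The upper bound follows from Taylor's theorem with $\partial_p r_\omega(u,p_\omega^\ast(u))=0$ and a uniform bound on $|\partial_p^2 r_\omega|$ over $[0,1]$, which comes from $g_0\in C^\infty$ plus Lemma~\ref{lem:lower-family-helper-derivs}. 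Taking $\sigma_r:=\min\{1/B,\,c_{\rm gap}/2\}$ and $L_r$ equal to that uniform second-derivative bound completes claim 1).
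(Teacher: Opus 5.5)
Your proposal is correct and follows essentially the same route as the paper. It uses a sup-norm comparison of $r_0$ with $r_{\rm uni}$ (with $\varepsilon_0$ below the outside gap) to rule out competing maxima outside $I_{\rm curv}$, transfers this to $r_\omega$ via Lemma~\ref{lem:lower-family-helper-derivs}, gets uniqueness and local quadratic growth from curvature bounded away from zero on $I_{\rm curv}$, and obtains the shift bound from a mean-value argument at $p_0^\ast(u)$. The differences are cosmetic: you locate the stationary point directly from the small derivative at $p_0^\ast(u)$ rather than checking the signs of $\partial_p r_\omega$ at $7/16$ and $17/32$, and you take $L_r$ outside $I_{\rm curv}$ from a global second-derivative bound on $g_\omega$ rather than the paper's trivial bound using $\abs{p-p_\omega^\ast(u)}\geq 1/16$.
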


\textit{Proof.}
Let us define
\begin{align*}
r_{\mathrm{uni}}(u,p):=p g_{\mathrm{uni}}(p-u), \quad u\in[-1/32,1/32], \quad p\in[0,1].
\end{align*}
A direct inspection of the three regions $p-u<0$, $0\leq p-u\leq B$, and $p-u>B$ shows that the unique global maximizer of $p\mapsto r_{\mathrm{uni}}(u,p)$ is exactly
\begin{align*}
p_0^\ast(u)=\frac{B+u}{2}.
\end{align*}
Since $p_0^\ast(u)\in[15/32,1/2]\subset\mathrm{int}(I_{\mathrm{curv}})$, it follows from the compactness that 
\begin{align}
\Delta_{\mathrm{out}}^{\mathrm{uni}} := \inf_{u\in[-1/32,1/32], p\in[0,1]\setminus I_{\mathrm{curv}}} \Big(r_{\mathrm{uni}}\big(u,p_0^\ast(u)\big)-r_{\mathrm{uni}}(u,p)\Big) > 0.
\label{eq:lower-outside-gap-uni}
\end{align}
We choose the smoothing tolerance in the construction of $g_0$ so that
\begin{align}
\varepsilon_0\leq \frac{1}{8}\Delta_{\mathrm{out}}^{\mathrm{uni}}.
\label{eq:lower-eps0-choice}
\end{align}

We now fix $u\in[-1/32,1/32]$ and $p\in I_{\mathrm{curv}}$. Then we have that $p-u\in I_{\mathrm{lin}}$, so from the definition of $g_0$, it holds that 
\begin{align*}
g_0'(p-u)=-\frac{1}{B}, \quad g_0''(p-u)=0, \quad r_{0,pp}(u,p)=-\frac{2}{B}.
\end{align*}
By resorting to Lemma~\ref{lem:lower-family-helper-derivs}, we can deduce that 
\begin{align*}
    \abs{r_{\omega,pp}(u,p)-r_{0,pp}(u,p)} \leq 2\pnorm{g_\omega'-g_0'}{\infty}+\pnorm{g_\omega''-g_0''}{\infty} \leq 2C_1\kappa w^{\beta-1}+C_2\kappa w^{\beta-2} \leq C\kappa.
\end{align*}
Then choosing $\kappa$ sufficiently small gives that 
\begin{align}
-\frac{4}{B} \leq r_{\omega,pp}(u,p) \leq -\frac{1}{B}, \quad \forall u\in[-1/32,1/32],  \ \forall p\in I_{\mathrm{curv}}.
\label{eq:lower-curv-window}
\end{align}

For the endpoint derivatives, note that if $p=7/16$ or $p=17/32$ and $u\in[-1/32,1/32]$, we have again that $p-u\in I_{\mathrm{lin}}$, so it holds that 
\begin{align*}
r_{0,p}(u,p)=g_0(p-u)+pg_0'(p-u)=1+\frac{u}{B}-\frac{2p}{B}.
\end{align*}
Consequently, we can obtain that 
\begin{align*}
\inf_{u\in[-1/32,1/32]}r_{0,p}\Big(u,\frac{7}{16}\Big)=\frac{2}{31}, \quad \sup_{u\in[-1/32,1/32]}r_{0,p}\Big(u,\frac{17}{32}\Big)=-\frac{2}{31}.
\end{align*}
An application of Lemma~\ref{lem:lower-family-helper-derivs} leads to 
\begin{align*}
\abs{r_{\omega,p}(u,p)-r_{0,p}(u,p)} \leq \pnorm{g_\omega-g_0}{\infty}+\pnorm{g_\omega'-g_0'}{\infty} \leq C'\kappa.
\end{align*}
After shrinking $\kappa$ if necessary, there exists some $c_{\partial}>0$ such that
\begin{align*}
r_{\omega,p}\Big(u,\frac{7}{16}\Big)\geq c_{\partial}, \quad r_{\omega,p}\Big(u,\frac{17}{32}\Big)\leq -c_{\partial}
\end{align*}
uniformly over $u\in[-1/32,1/32]$. Together with \eqref{eq:lower-curv-window}, this implies that $p\mapsto r_{\omega,p}(u,p)$ is strictly decreasing on $I_{\mathrm{curv}}$ and changes sign across $I_{\mathrm{sign}}\subset I_{\mathrm{curv}}$. Hence, there is a unique local maximizer
\begin{align*}
p_\omega^\ast(u)\in\Big(\frac{7}{16},\frac{17}{32}\Big)\subset I_{\mathrm{sign}}.
\end{align*}

We next show that this local maximizer is in fact global. It follows from $p_0^\ast(u)-u\in I_{\mathrm{lin}}$ that 
\begin{align*}
r_0\big(u,p_0^\ast(u)\big)=r_{\mathrm{uni}}\big(u,p_0^\ast(u)\big).
\end{align*}
Moreover, since $p\leq 1$ and $\pnorm{g_0-g_{\mathrm{uni}}}{\infty}\leq \varepsilon_0$, it holds that 
\begin{align*}
\sup_{u\in[-1/32,1/32], p\in[0,1]}\abs{r_0(u,p)-r_{\mathrm{uni}}(u,p)}\leq \varepsilon_0.
\end{align*}
Consequently, for each $p\in[0,1]\setminus I_{\mathrm{curv}}$, we have that 
\begin{align*}
    r_0\big(u,p_0^\ast(u)\big)-r_0(u,p) \geq r_{\mathrm{uni}}\big(u,p_0^\ast(u)\big)-r_{\mathrm{uni}}(u,p)-\varepsilon_0 \geq \Delta_{\mathrm{out}}^{\mathrm{uni}}-\varepsilon_0 \geq \frac{7}{8}\Delta_{\mathrm{out}}^{\mathrm{uni}}.
\end{align*}
Inside $I_{\mathrm{curv}}$, we have that $p-u\in I_{\mathrm{lin}}$, and thus $r_0(u,p)=r_{\mathrm{uni}}(u,p)$; on this interval, the truncated-linear revenue is strictly concave with unique maximizer $p_0^\ast(u)$. Together with the outside gap above, this shows that $p_0^\ast(u)$ is the unique global maximizer of the smooth baseline revenue $r_0(u,\cdot)$ on $[0,1]$.

Combining Lemma~\ref{lem:lower-family-helper-derivs} and the bound $p\leq 1$ yields that 
\begin{align*}
\sup_{u\in[-1/32,1/32], p\in[0,1]}\abs{r_\omega(u,p)-r_0(u,p)}\leq C_0\kappa w^\beta.
\end{align*}
Let us choose $\kappa$ sufficiently small and then $T$ sufficiently large so that
\begin{align*}
C_0\kappa w^\beta\leq \frac{1}{8}\Delta_{\mathrm{out}}^{\mathrm{uni}}.
\end{align*}
Since $p_\omega^\ast(u)$ maximizes $r_\omega(u,\cdot)$ over $I_{\mathrm{curv}}$ and $p_0^\ast(u)\in I_{\mathrm{sign}}\subset I_{\mathrm{curv}}$, it holds that 
\begin{align*}
r_\omega\big(u,p_\omega^\ast(u)\big)\geq r_\omega\big(u,p_0^\ast(u)\big).
\end{align*}
Then for each $p\in[0,1]\setminus I_{\mathrm{curv}}$, we have that 
\begin{align*}
    r_\omega\big(u,p_\omega^\ast(u)\big)-r_\omega(u,p) \geq r_\omega\big(u,p_0^\ast(u)\big)-r_\omega(u,p) \geq r_0\big(u,p_0^\ast(u)\big)-r_0(u,p)-2C_0\kappa w^\beta \geq \frac{5}{8}\Delta_{\mathrm{out}}^{\mathrm{uni}}.
\end{align*}
Hence, $p_\omega^\ast(u)$ is the unique global maximizer of $r_\omega(u,\cdot)$ on $[0,1]$. Denote by 
\begin{align*}
\Delta_{\mathrm{out}}:=\frac{1}{2}\Delta_{\mathrm{out}}^{\mathrm{uni}}.
\end{align*}
Then for all sufficiently large $T$, it holds that 
\begin{align}
\label{eq:lower-outside-gap-omega}
r_\omega\big(u,p_\omega^\ast(u)\big)-r_\omega(u,p) \geq \Delta_{\mathrm{out}} \quad \forall p\in[0,1]\setminus I_{\mathrm{curv}}.
\end{align}

For $p\in I_{\mathrm{curv}}$, the double-integral argument along with \eqref{eq:lower-curv-window} gives that 
\begin{align}
\frac{1}{2B}\abs{p-p_\omega^\ast(u)}^2 \leq r_\omega\big(u,p_\omega^\ast(u)\big)-r_\omega(u,p) \leq \frac{2}{B}\abs{p-p_\omega^\ast(u)}^2.
\label{eq:lower-local-quad}
\end{align}
If $p\notin I_{\mathrm{curv}}$, we have that  $p_\omega^\ast(u)\in I_{\mathrm{sign}}$ and
\begin{align*}
\dist{I_{\mathrm{sign}}}{[0,1]\setminus I_{\mathrm{curv}}}=\frac{1}{16},
\end{align*}
so it holds that $\abs{p-p_\omega^\ast(u)}\geq 1/16$. Using \eqref{eq:lower-outside-gap-omega} along with the trivial bounds
\begin{align*}
r_\omega\big(u,p_\omega^\ast(u)\big)-r_\omega(u,p)\leq 1, \quad \abs{p-p_\omega^\ast(u)}\leq 1,
\end{align*}
we can deduce that 
\begin{align*}
\Delta_{\mathrm{out}}\abs{p-p_\omega^\ast(u)}^2 \leq r_\omega\big(u,p_\omega^\ast(u)\big)-r_\omega(u,p) \leq 256\abs{p-p_\omega^\ast(u)}^2.
\end{align*}
Combining this with \eqref{eq:lower-local-quad} establishes the global quadratic growth bounds with
\begin{align*}
\sigma_r:=\min\left\{\frac{1}{B},2\Delta_{\mathrm{out}}\right\}, \quad L_r:=\max\left\{\frac{4}{B},512\right\}.
\end{align*}

Finally, let us define 
\begin{align*}
F_\omega(u,p):=r_{\omega,p}(u,p)=g_\omega(p-u)+pg_\omega'(p-u).
\end{align*}
Since $F_0\big(u,p_0^\ast(u)\big)=0$ and $p_0^\ast(u)-u\in I_{\mathrm{lin}}$, Lemma~\ref{lem:lower-family-helper-derivs} gives that 
\begin{align*}
\sup_{u\in[-1/32,1/32]}\abs{F_\omega\big(u,p_0^\ast(u)\big)}\leq C\kappa w^{\beta-1}.
\end{align*}
Applying the mean-value theorem between $p_0^\ast(u)$ and $p_\omega^\ast(u)$ and using \eqref{eq:lower-curv-window} then yield that 
\begin{align*}
\sup_{u\in[-1/32,1/32]}\abs{p_\omega^\ast(u)-p_0^\ast(u)}\leq C_{\mathrm{sh}}\kappa w^{\beta-1}
\end{align*}
for a numerical constant $C_{\mathrm{sh}}<\infty$.
This concludes the proof of Lemma~\ref{lem:lower-family-helper-geometry}.

\begin{lemma}\label{lem:lower-family-helper-shift}
For all sufficiently small $\gamma,\kappa>0$ and all sufficiently large $T$, there exist constants $a_1,a_2>0$ such that for each $\omega\in\{-1,+1\}^M$ and each $j\in[M]$,
\begin{align*}
a_1w^{\beta-1} \leq \omega_j\big(p_\omega^\ast(c_j)-p_j^0\big) \leq a_2w^{\beta-1}.
\end{align*}
\end{lemma}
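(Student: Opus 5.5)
We locate $p_\omega^\ast(c_j)$ as the zero of $F_\omega(c_j,p)=g_\omega(p-c_j)+p\,g_\omega'(p-c_j)$ on $I_{\mathrm{curv}}$ and compare it to the baseline zero $p_j^0$. Lemma~\ref{lem:lower-family-helper-geometry} already gives $\abs{p_\omega^\ast(c_j)-p_j^0}\leq C_{\mathrm{sh}}\kappa w^{\beta-1}$, which supplies the upper bound with $a_2:=C_{\mathrm{sh}}\kappa$. The real task is therefore the lower bound together with the sign.

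The first step is to evaluate $F_\omega$ at the baseline price $p_j^0$, so that $p_j^0-c_j=z_j$. Only the $j$th bump is nonzero near $z_j$, since the supports are disjoint and centers are $w$ apart. Using $\varphi(0)=0$ and $\varphi'(0)=1$, together with $F_0(c_j,p_j^0)=0$ (because $z_j\in I_{\mathrm{lin}}$), we get
\begin{align*}
F_\omega(c_j,p_j^0)=\kappa w^{\beta}\omega_j\varphi(0)+p_j^0\,\kappa w^{\beta-1}\omega_j\varphi'(0)=\omega_j\kappa p_j^0 w^{\beta-1}.
\end{align*}
Since $p_j^0\in[15/32,1/2]$, this quantity has sign $\omega_j$ and magnitude at least $\tfrac{15}{32}\kappa w^{\beta-1}$.

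The second step converts this derivative value into a price shift. By the mean value theorem,
\begin{align*}
0=F_\omega(c_j,p_\omega^\ast(c_j))=F_\omega(c_j,p_j^0)+\partial_pF_\omega(c_j,\bar p)\,\big(p_\omega^\ast(c_j)-p_j^0\big)
\end{align*}
for some $\bar p$ between the two prices. Both prices lie in $I_{\mathrm{curv}}$, so \eqref{eq:lower-curv-window} gives $-4/B\leq \partial_pF_\omega=r_{\omega,pp}\leq -1/B$. Solving the identity yields
\begin{align*}
p_\omega^\ast(c_j)-p_j^0=\frac{\omega_j\kappa p_j^0 w^{\beta-1}}{\abs{\partial_pF_\omega(c_j,\bar p)}}.
\end{align*}
Hence $\omega_j\big(p_\omega^\ast(c_j)-p_j^0\big)\in\big[\tfrac{15B}{128}\kappa w^{\beta-1},\,\tfrac{B}{2}\kappa w^{\beta-1}\big]$, which gives admissible constants $a_1,a_2$.

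The main obstacle is confirming that no other bump, and nothing from the region where $g_0$ is not linear, contaminates the evaluation at $z_j$. This needs $z_j\in I_{\mathrm{lin}}$ uniformly in $j$. Because $z_j=B/2-jw$ with $jw\leq 1/64$, we have $z_j\in[B/2-1/64,B/2]\subset I_{\mathrm{lin}}$, and the bump supports of radius $w/8$ are disjoint. Finally, the curvature window \eqref{eq:lower-curv-window} requires the same smallness of $\kappa$ already imposed in Lemma~\ref{lem:lower-family-helper-geometry}, so the lemma holds under the same conditions on $\kappa$, $\gamma$ and $T$.
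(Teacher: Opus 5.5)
Your proposal is correct and follows essentially the same route as the paper: you evaluate $F_\omega(c_j,p_j^0)=\kappa p_j^0\omega_j w^{\beta-1}$ using that only the $j$th bump is active at $z_j\in I_{\mathrm{lin}}$, and then apply the mean-value theorem with the curvature window \eqref{eq:lower-curv-window} on $I_{\mathrm{curv}}$ to convert this into a signed price shift of order $w^{\beta-1}$. Your explicit constants $a_1=\tfrac{15B}{128}\kappa$ and $a_2=\tfrac{B}{2}\kappa$ are valid. The preliminary appeal to Lemma~\ref{lem:lower-family-helper-geometry} for the upper bound is redundant, since the mean-value step already gives both bounds.
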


\textit{Proof.}
Recall that
\begin{align*}
p_j^0=\frac{B+c_j}{2}, \quad z_j=p_j^0-c_j=\frac{B}{2}-jw.
\end{align*}
Note that at point $z_j$, only the $j$th bump is active, and $z_j\in I_{\mathrm{lin}}$. Since $\varphi(0)=0$ and $\varphi'(0)=1$, it holds that 
\begin{align*}
    F_\omega(c_j,p_j^0) = \big(g_\omega(z_j)-g_0(z_j)\big)+p_j^0\big(g_\omega'(z_j)-g_0'(z_j)\big) = \kappa w^\beta\omega_j\varphi(0)+p_j^0\kappa w^{\beta-1}\omega_j\varphi'(0) = \kappa p_j^0\omega_j w^{\beta-1}.
\end{align*}
It follows from $p_j^0\in[B/2,B/2+1/64]$ that the quantity above has sign $\omega_j$ and magnitude comparable to $w^{\beta-1}$. By invoking Lemma~\ref{lem:lower-family-helper-geometry}, we see that $p_\omega^\ast(c_j)\in I_{\mathrm{sign}}\subset I_{\mathrm{curv}}$ and $r_{\omega,pp}$ is bounded between $-4/B$ and $-1/B$ on $I_{\mathrm{curv}}$. Then applying the mean-value theorem to $p\mapsto F_\omega(c_j,p)$ between $p_j^0$ and $p_\omega^\ast(c_j)$, we can obtain that 
\begin{align*}
p_\omega^\ast(c_j)-p_j^0=-\frac{F_\omega(c_j,p_j^0)}{r_{\omega,pp}(c_j,\xi_j)}
\end{align*}
for some $\xi_j$ lying between these two points. Thus, the stated bounds follow immediately. This completes the proof of Lemma~\ref{lem:lower-family-helper-shift}.

\begin{lemma}\label{lem:lower-family-helper-membership}
For all sufficiently small $\gamma,\kappa>0$ and all sufficiently large $T$, each corresponding centered hard instance is lower-bound normalized in the sense of Definition~\ref{def:lb-normalized-instance}.
\end{lemma}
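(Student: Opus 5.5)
The plan is to check the conditions of Definition~\ref{def:lb-normalized-instance} one by one for the centered instance with tail $\bar g_\omega$ and contexts $\mc_j=\mu_0+c_j$.

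\textbf{Step 1: $g_\omega$ is a valid tail.} This is the main obstacle, because monotonicity needs a curvature-free argument. On each bump interval $J_j\subset I_{\rm lin}$ we have $g_0'=-1/B$. By \eqref{eq:lower-deriv-1},
\[
g_\omega'\leq -1/B+C_1\kappa w^{\beta-1}<0
\]
for $\kappa$ small, since $w\leq 1$. Off the bumps $g_\omega=g_0$, which is nonincreasing by Lemma~\ref{lem:smooth-baseline-tail}. Hence $g_\omega$ is nonincreasing, equals $1$ on $(-\infty,0]$ and $0$ on $[B,\infty)$, and lies in $[0,1]$. To place the bumps inside $I_{\rm lin}$, note that $z_j=B/2-jw$ ranges over $[B/2-1/64,\,B/2)$, which is contained in the interior of $I_{\rm lin}$ with room $w/8$. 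Also $-g_\omega'\ge 0$ is a density on $[0,B]$, so $X$ is supported in $[0,B]$. The zero-integral identity makes its mean $\mu_0$ independent of $\omega$. The centered noise $X-\mu_0$ then has mean zero, and $\mu_0\in(0,B)$.

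\textbf{Step 2: normalization of the instance.} We take $d=1$, $p_{\max}=1$, $\theta_\ast=1$ and $\mu_\ast(\mc)=\mc$. The contexts $\mu_0+c_j$ lie in $[\mu_0,\mu_0+1/64]\subset\cU_{\rm lb}$, and this interval sits in $[0,1]$ once $\mu_0+1/32\le1$. This holds because $\mu_0=\int_0^B g_\omega\approx B/2$ up to $\varepsilon_0$. The realized valuation is $\mu_0+c_j+X-\mu_0=c_j+X\in[0,B+1/64]\subset[0,1]$.

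\textbf{Step 3: Assumption~\ref{assump:holder}.} We take $V$ as in Assumption~\ref{assump:bounded} (here $V\le2$). The derivatives of $g_0$ up to order $\lfloor\beta\rfloor$ are bounded constants, since $g_0\in C^\infty$ with fixed construction. The perturbation contributes at most $C_k\kappa w^{\beta-k}\le C_k\kappa$ by \eqref{eq:lower-deriv-k}. The H\"older seminorm of the top derivative is bounded by \eqref{eq:lower-deriv-holder}. The Taylor-remainder form of Assumption~\ref{assump:holder} then follows by the standard integral remainder with a constant $L_g$ independent of $T$. The shift $z\mapsto z+\mu_0$ preserves all of these bounds.

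\textbf{Step 4: Assumption~\ref{assump:rho} and $\rho_0\ge1/16$.} Lemma~\ref{lem:lower-family-helper-geometry} gives uniqueness, the interior location $p^\ast_\omega(u)\in I_{\rm sign}\subset(0,1)$, and global quadratic growth with $T$-independent $\sigma_r,L_r$. This holds for all local $u\in[-1/32,1/32]\supset$ the index range, which is exactly $\cU_{\rm lb}$ after centering, since revenue is translation-identical. For the concavity radius, $[p^\ast-1/16,p^\ast+1/16]\subset I_{\rm curv}$ because $\mathrm{dist}(I_{\rm sign},I_{\rm curv}^c)=1/16$. On that interval \eqref{eq:lower-curv-window} gives $-r_{pp}\ge 1/B$. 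We want this to be at least $\sigma_r/2$; this holds because $\sigma_r\le 1/B$. If Lemma~\ref{lem:rho-exists} is read as defining $\rho_0$ as the largest valid radius, this shows $\rho_0\ge1/16$; otherwise we exhibit $1/16$ directly.

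The remaining constants are bounded above and below by numerical constants independent of $T$, which concludes the verification.
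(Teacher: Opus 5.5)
Your proposal is correct and follows essentially the same route as the paper. You show $g_\omega$ is a valid tail via the derivative bound on the bumps; deriving the range $[0,1]$ from monotonicity and the endpoint values is a slightly cleaner variant of the paper's ``bounded away from $0$ and $1$ on $I_{\mathrm{lin}}$'' step. You then transfer the H\"older bounds of Lemma~\ref{lem:lower-family-helper-derivs} under translation, and use Lemma~\ref{lem:lower-family-helper-geometry} with $\dist{I_{\mathrm{sign}}}{[0,1]\setminus I_{\mathrm{curv}}}=1/16$ to get Assumption~\ref{assump:rho} and $\rho_0\geq 1/16$.

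There is one harmless arithmetic slip: $c_j=2jw\leq 2Mw\leq 1/32$, not $1/64$. So the contexts lie in $[\mu_0,\mu_0+1/32]=\cU_{\rm lb}$ and the valuations in $[0,B+1/32]=[0,1]$, and none of your conclusions change. Also, $\mu_0\leq B$ already gives $\mu_0+1/32\leq 1$ without the approximation $\mu_0\approx B/2$.
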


\textit{Proof.}
We work in the local coordinate $c_j=2jw$. Since $M=\floor{1/(64w)}$, it holds that 
\begin{align*}
0<c_j\leq 2Mw\leq C_{\rm loc}, \quad \forall j\in[M].
\end{align*}
The actual centered instance has scalar contexts $\mc_j=\mu_0+c_j$, parameter $\theta_\ast=1$, and centered noise tail $\bar g_\omega(z)=g_\omega(z+\mu_0)$. The translated scalar-index interval is
\begin{align*}
    \cU_{\rm lb}:=[\mu_0,
    \mu_0+C_{\rm loc}],
\end{align*}
so Assumption~\ref{assump:bounded} holds on this interval. Since the auxiliary noise law is supported on $[0,B]$, its mean satisfies that $0\leq \mu_0\leq B$, and thus $0\leq \mc_j\leq B+C_{\rm loc}=1$. Consequently, the actual context space is contained in $\mathbb{B}^1_2(1)$, and Assumption~\ref{assump:linear} holds with $d=1$, $C_\theta=1$, and $\theta_\ast=1$.
Each bump is supported inside $[z_j-w/8,z_j+w/8]$, and all bump supports lie strictly inside $I_{\mathrm{lin}}$ for all sufficiently large $T$. Since $g_0\in C^\infty(\mathbb R)$ is constant outside a compact interval and the translated bumps have disjoint supports, Lemma~\ref{lem:lower-family-helper-derivs} implies uniform H\"older bounds for the auxiliary tails $g_\omega$ on each compact interval relevant to the construction when $\beta$ is noninteger, and uniform bounds on $g_\omega^{(\beta)}$ when $\beta$ is integer. 

For the integer case, the Taylor-remainder formulation in Assumption~\ref{assump:holder} follows from the integral remainder formula and this uniform highest-derivative bound. The centered tails $\bar g_\omega$ are translates of $g_\omega$, so they inherit the same smoothness bounds on the compact price-index gap interval generated by $\cU_{\rm lb}$ and $[0,1]$. Hence, Assumption~\ref{assump:holder} holds uniformly over the hard family.
The functions $g_\omega$ are nonincreasing and take values in $[0,1]$. Outside the bump supports, this follows from the construction of $g_0$. On the bump supports, all points lie in $I_{\mathrm{lin}}$, where $g_0$ is bounded away from both $0$ and $1$ by a numerical margin, while Lemma~\ref{lem:lower-family-helper-derivs} gives that $\|g_\omega-g_0\|_\infty\leq C\kappa w^\beta$; choosing $\kappa$ small keeps $g_\omega$ in $[0,1]$ there. Also, on the bump supports $g_0'=-1/B$ and Lemma~\ref{lem:lower-family-helper-derivs} gives $\|g_\omega'-g_0'\|_\infty\leq C\kappa w^{\beta-1}$, which is smaller than $1/(2B)$ after shrinking $\kappa$, so $g_\omega$ remains nonincreasing. Thus, $F_\omega:=1-g_\omega$ is a distribution function of an auxiliary law $X$ supported on $[0,B]$.
The realized valuation in the actual centered instance can be written as
\begin{align*}
    v_t=(\mu_0+c_j)+(X_t-\mu_0)=c_j+X_t,
\end{align*}
and therefore $v_t\in[2w,C_{\rm loc}+B]=[2w,1]\subset[0,1]$ almost surely. This verifies the bounded-realized-valuation normalization used for uniform-price pilot identification. The centering shift does not change revenue geometry since $p\bar g_\omega(p-\mc_j)=p g_\omega(p-c_j)$.

It remains to verify the structural revenue geometry on $\cU_{\rm lb}$, not only at the grid points. For an actual scalar index $u=\mu_0+x\in\cU_{\rm lb}$ with $x\in[0,C_{\rm loc}]$, the centered-instance revenue is
\begin{align*}
    p\,\bar g_\omega(p-u)=p\,g_\omega(p-x)=r_\omega(x,p).
\end{align*}
Since $[0,C_{\rm loc}]\subset[-1/32,1/32]$, Lemma~\ref{lem:lower-family-helper-geometry} applies uniformly to each such $x$. It gives a unique interior maximizer and global quadratic-growth bounds over $p\in[0,1]$ with constants independent of $\omega,T$, so Assumption~\ref{assump:rho} holds on $\cU_{\rm lb}$. The endpoint-derivative margins in the proof of Lemma~\ref{lem:lower-family-helper-geometry} in fact place the maximizer in the open interval
\begin{align*}
    p_\omega^\ast(x)\in(7/16,17/32)\subset I_{\mathrm{sign}}.
\end{align*}
Consequently, if $\abs{p-p_\omega^\ast(x)}\leq1/16$, we have that $p\in I_{\mathrm{curv}}$. On this whole curvature window, \eqref{eq:lower-curv-window} gives that $-r_{\omega,pp}(x,p)\geq 1/B$, which is no smaller than the lower quadratic-growth constant used above. Consequently, the local-concavity radius in Lemma~\ref{lem:rho-exists} may be taken to be at least $1/16$ for every member of the hard family. Therefore, each centered hard instance satisfies Assumption~\ref{assump:rho} with a local-concavity radius at least $1/16$.
This concludes the proof of Lemma~\ref{lem:lower-family-helper-membership}.

We are now ready to prove Lemma~\ref{lem:main-lower-family} below.

\textit{Proof of Lemma~\ref{lem:main-lower-family}}. 
Note that Lemma~\ref{lem:lower-family-helper-membership} establishes that the perturbed family satisfies the lower-bound normalization conditions in Definition~\ref{def:lb-normalized-instance}, and Lemma~\ref{lem:lower-family-helper-shift} gives the stated sign-sensitive displacement of the oracle prices at the grid contexts $c_j$. This is exactly the desired conclusion, 
which completes the proof of Lemma~\ref{lem:main-lower-family}.

\subsection{Proof of Lemma~\ref{lem:main-lower-agg}}

By definition, it holds that 
\begin{align*}
R_j^\omega = \E_\omega \bigg[ \sum_{t \in [T]} \Delta_t^\omega \bm 1\{c_t=c_j \text{ or } p_t-c_t\in J_j\} \bigg].
\end{align*}
Summing over $j$ and using the linearity of expectation give that 
\begin{align*}
\sum_{j \in [M]} R_j^\omega = \E_\omega \bigg[ \sum_{t \in [T]} \Delta_t^\omega \sum_{j \in [M]} \bm 1\{c_t=c_j \text{ or } p_t-c_t\in J_j\} \bigg].
\end{align*}
Let us fix a round $t$. Exactly one of the events $\{c_t=c_j\}$ can occur since the contexts take values in the discrete set $\{c_1,\dots,c_M\}$. Further, the intervals $J_1,\dots,J_M$ are pairwise disjoint, because their centers are spaced by $w$ while their radius is only $w/8$. Hence, at most one of the events $\{p_t-c_t\in J_j\}$ can occur. Then it follows that 
\begin{align*}
\sum_{j \in [M]} \bm 1\{c_t=c_j \text{or} p_t-c_t\in J_j\}\leq 2.
\end{align*}
Therefore, substituting this pointwise inequality into the previous expression yields that 
\begin{align*}
\sum_{j \in [M]} R_j^\omega \leq 2\E_\omega \bigg[\sum_{t \in [T]} \Delta_t^\omega\bigg] = 2 \Regret^\omega(T).
\end{align*}
This concludes the proof of Lemma~\ref{lem:main-lower-agg}.

\subsection{Proof of Lemma~\ref{lem:main-lower-kl}}

Let us fix a sign vector $\omega$ and an index $j\in[M]$, and compare the environments $\omega$ and $\omega^{(j)}$. The context law is the same under both environments; only the Bernoulli purchase probabilities differ. We write
\begin{align*}
q_\nu(c,p):=g_\nu(p-c), \quad \nu\in\{\omega,\omega^{(j)}\}.
\end{align*}
Denote by 
\begin{align*}
\mathcal G_{t-1}:=\sigma(c_1,p_1,y_1,\dots,c_{t-1},p_{t-1},y_{t-1},c_t,p_t),
\end{align*}
where $c_t=\mc_t-\mu_0$ is the local coordinate. Since $\mu_0$ and this transformation are common to all paired environments, using $c_t$ instead of $\mc_t$ is an equivalent representation of the transcript. If the policy uses internal randomization, we augment the transcript by its random seeds, or equivalently condition on them. These seeds have the same law under $P_\omega$ and $P_{\omega^{(j)}}$, and contribute zero KL. After this harmless augmentation, the posted price $p_t$ is $\mathcal G_{t-1}$-measurable. An application of the chain rule for the Kullback--Leibler (KL) divergence in adaptive experiments leads to 
\begin{align}\label{eq:B6-new}
\KL(P_\omega,P_{\omega^{(j)}}) = \sum_{t \in [T]} \E_\omega \bigg[ \KL\big(\Ber(q_\omega(c_t,p_t)),\Ber(q_{\omega^{(j)}}(c_t,p_t))\big) \bigg].
\end{align}
Note that the two environments differ only through the $j$th bump. Hence, if $p_t-c_t\notin J_j$, we have that 
$q_\omega(c_t,p_t)=q_{\omega^{(j)}}(c_t,p_t),$
and the per-round KL contribution is zero. It therefore suffices to study rounds with
$p_t-c_t\in J_j.$

We first record a uniform quadratic bound for the Bernoulli KL on the relevant range. Since
\begin{align*}
z_j=\frac{B}{2}-jw, \quad 1\leq j\leq M\leq \frac{1}{64w},
\end{align*}
it holds that 
\begin{align*}
\frac{B}{2}-\frac{1}{64}\leq z_j\leq \frac{B}{2}-w.
\end{align*}
Since $J_j=\left[z_j-\frac{w}{8},z_j+\frac{w}{8}\right],$
all these intervals are contained in the fixed compact interval
\begin{align*}
I_{\mathrm{rel}}:=\left[\frac{B}{2}-\frac{1}{32},\frac{B}{2}+\frac{1}{32}\right]\subset I_{\mathrm{lin}}
\end{align*}
for all sufficiently large $T$. Thus, whenever $p_t-c_t\in J_j$, the baseline mean
$q_0(c_t,p_t)=g_0(p_t-c_t)$
belongs to a compact subinterval of $(0,1)$. Since the perturbation magnitude is $\cO(\kappa w^\beta)$ uniformly, by choosing $\kappa$ sufficiently small we can ensure that there exists a constant $\underline q_0>0$ such that
\begin{align*}
q_\omega(c,p), q_{\omega^{(j)}}(c,p)\in [\underline q_0,1-\underline q_0] \quad \text{whenever } p-c\in J_j.
\end{align*}
Consequently, there exists a constant $\bar C_{\mathrm{kl}}<\infty$ such that
\begin{align*}
\KL\big(\Ber(q),\Ber(q')\big)\leq \bar C_{\mathrm{kl}}(q-q')^2
\end{align*}
whenever $q,q'\in[\underline q_0,1-\underline q_0]$.

We now fix a round $t$ and examine two separate cases.

\noindent
\emph{Case A: $c_t=c_j$ and $p_t-c_t\in J_j$.} In this case, it holds that 
\begin{align*}
\abs{p_t-p_j^0}=\abs{(p_t-c_j)-z_j}\leq \frac{w}{8}.
\end{align*}
Since the two environments differ only in the sign of the $j$th bump, we have that 
\begin{align*}
q_\omega(c_j,p_t)-q_{\omega^{(j)}}(c_j,p_t) = 2\kappa w^\beta\varphi\Big(\frac{p_t-p_j^0}{w}\Big).
\end{align*}
Using $\varphi(0)=0$ and the mean-value theorem, we can deduce that 
\begin{align*}
\abs{\varphi\Big(\frac{p_t-p_j^0}{w}\Big)} \leq \pnorm{\varphi'}{\infty} \frac{\abs{p_t-p_j^0}}{w}.
\end{align*}
Then it follows that 
\begin{align*}
\abs{q_\omega(c_j,p_t)-q_{\omega^{(j)}}(c_j,p_t)} \lesssim w^{\beta-1}\abs{p_t-p_j^0},
\end{align*}
and thus
\begin{align}\label{eq:B7-new}
\kl\big(\Ber(q_\omega(c_j,p_t)),\Ber(q_{\omega^{(j)}}(c_j,p_t))\big) \lesssim w^{2\beta-2}(p_t-p_j^0)^2.
\end{align}

We next decompose
\begin{align*}
(p_t-p_j^0)^2 \leq 2\big(p_t-p_\omega^\ast(c_j)\big)^2 + 2\big(p_\omega^\ast(c_j)-p_j^0\big)^2.
\end{align*}
In view of Assumption~\ref{assump:rho}-(2), we have that 
\begin{align*}
\big(p_t-p_\omega^\ast(c_j)\big)^2 \leq \frac{2}{\sigma_r}\Delta_t^\omega,
\end{align*}
and thus by Lemma~\ref{lem:main-lower-family},
\begin{align*}
\big(p_\omega^\ast(c_j)-p_j^0\big)^2\leq a_2^2 w^{2\beta-2}.
\end{align*}
Hence, substituting these bounds into \eqref{eq:B7-new} yields that 
\begin{align}\label{eq:B8-new}
\KL\big(\Ber(q_\omega(c_j,p_t)),\Ber(q_{\omega^{(j)}}(c_j,p_t))\big) \leq C w^{2\beta-2}\Delta_t^\omega + C w^{4\beta-4}.
\end{align}

\noindent
\emph{Case B: $c_t=c_k\neq c_j$ and $p_t-c_t\in J_j$.}
In this case, the $j$th bump can still affect the Bernoulli mean, but the price is necessarily far from the oracle price at context $c_k$. Indeed, it holds that 
\begin{align*}
p_0^\ast(c_k)-c_k=z_k, \quad \abs{z_k-z_j}\geq w \quad (k\neq j).
\end{align*}
Since $p_t-c_k\in J_j$, we have that 
\begin{align*}
\abs{(p_t-c_k)-z_k}\geq \abs{z_j-z_k}-\frac{w}{8}\geq \frac{7w}{8}.
\end{align*}
By resorting to Lemma~\ref{lem:lower-family-helper-geometry}, we can show that 
\begin{align*}
\abs{\big(p_\omega^\ast(c_k)-c_k\big)-z_k} = \abs{p_\omega^\ast(c_k)-p_0^\ast(c_k)} \leq C_{\mathrm{sh}}\kappa w^{\beta-1}.
\end{align*}
After choosing $\kappa$ sufficiently small so that $C_{\mathrm{sh}}\kappa\leq 1/8$, and using $\beta\geq2$ so that $w^{\beta-1}\leq w$ for all large $T$, we can obtain that 
\begin{align*}
\abs{p_t-p_\omega^\ast(c_k)}\geq \frac{7w}{8}-\frac{w}{8}=\frac{3w}{4}.
\end{align*}

Assumption~\ref{assump:rho}-(2) further entails that 
\begin{align}\label{eq:B9-new}
\Delta_t^\omega\geq \frac{\sigma_r}{2}\left(\frac{3w}{4}\right)^2 \geq c w^2.
\end{align}
On the other hand, it holds that 
\begin{align*}
\abs{q_\omega(c_k,p_t)-q_{\omega^{(j)}}(c_k,p_t)} \leq 2\kappa w^\beta\pnorm{\varphi}{\infty},
\end{align*}
so we have that 
\begin{align}
\label{eq:B10-new}
\KL\big(\Ber(q_\omega(c_k,p_t)),\Ber(q_{\omega^{(j)}}(c_k,p_t))\big) \lesssim w^{2\beta} \lesssim w^{2\beta-2}\Delta_t^\omega,
\end{align}
where the last step above has utilized \eqref{eq:B9-new}.

Combining \eqref{eq:B8-new} and \eqref{eq:B10-new}, we can deduce the pointwise bound
\begin{align*}
\KL\big(\Ber(q_\omega(c_t,p_t)),\Ber(q_{\omega^{(j)}}(c_t,p_t))\big) \leq C w^{2\beta-2}\Delta_t^\omega\bm 1\{c_t=c_j \text{or} p_t-c_t\in J_j\}  + C w^{4\beta-4}\bm 1\{c_t=c_j\}.
\end{align*}
Substituting this inequality into \eqref{eq:B6-new} yields that 
\begin{align*}
\KL(P_\omega,P_{\omega^{(j)}}) \leq C w^{2\beta-2}R_j^\omega + C w^{4\beta-4}\E_\omega[N_j].
\end{align*}
Since $\E_\omega[N_j]=T/M\leq 128Tw$ for all sufficiently large $T$, we can obtain that
\begin{align*}
\KL(P_\omega,P_{\omega^{(j)}}) \leq C_{\mathrm{kl}} w^{2\beta-2}R_j^\omega + C_{\mathrm{kl}} T w^{4\beta-3}.
\end{align*}
The proof of the symmetric bound for $\KL(P_{\omega^{(j)}},P_\omega)$ is identical. This completes the proof of Lemma~\ref{lem:main-lower-kl}.

\subsection{Proof of Lemma~\ref{lem:main-lower-local}}

We start with establishing the decoder bound used in the two-point argument.

\paragraph{Step 1: a decoder for the local bit.}
Let us fix $j\in[M]$ and define
$N_j:=\sum_{t \in [T]} \bm 1\{c_t=c_j\}.$
Since the contexts are i.i.d. and uniform on $\{c_1,\dots,c_M\}$, it holds that 
\begin{align*}
N_j\sim \mathrm{Bin}\Big(T,\frac{1}{M}\Big), \quad \E[N_j]=\frac{T}{M}.
\end{align*}
In light of $M=\floor{1/(64w)}$, for all sufficiently large $T$ we have that $M\leq 1/(32w)$ and thus 
\begin{align*}
\E[N_j]=\frac{T}{M}\geq 32Tw.
\end{align*}
A standard Chernoff bound therefore implies that there exist some constants $c_0,c_1>0$ such that
\begin{align}\label{eq:B5-new}
\Prob(N_j\leq c_0Tw)\leq e^{-c_1Tw}.
\end{align}

We now define the \textit{decoder}. Consider only rounds with local coordinate $c_t=c_j$. Among those rounds, count how often the posted price lies to the right of the baseline threshold $p_j^0$. We set
\begin{align*}
\hat\omega_j:= \begin{cases}
+1 & \text{ if at least half of these prices satisfy that } p_t\geq p_j^0, \\
-1 & \text{ otherwise.}
\end{cases}
\end{align*}
When $N_j=0$, the ``at least half'' convention decodes $+1$; this tie convention is irrelevant on the event $N_j\geq c_0Tw$. This decoder uses \textit{only} the observed transcript and the common centering constant $\mu_0$, which is the same for all environments in the hard family.

First assume that the true sign is $\omega_j=+1$. If the decoder errs, among the $N_j$ rounds with $c_t=c_j$, at least $N_j/2$ rounds must satisfy that $p_t<p_j^0$. An application of Lemma~\ref{lem:main-lower-family} gives that 
$p_\omega^\ast(c_j)\geq p_j^0+a_1w^{\beta-1},$
so on each such round, we have that 
\begin{align*}
\abs{p_t-p_\omega^\ast(c_j)}\geq a_1 w^{\beta-1}.
\end{align*}
Assumption~\ref{assump:rho}-(2), which holds on the hard family in view of Lemma~\ref{lem:main-lower-family}, further yields the pointwise regret lower bound
\begin{align*}
\Delta_t^\omega \geq \frac{\sigma_r}{2}a_1^2 w^{2\beta-2} \quad \text{whenever } c_t=c_j, \,  p_t<p_j^0.
\end{align*}
Consequently, on event
\begin{align*}
\Big\{\hat\omega_j\neq \omega_j\Big\}\cap\Big\{N_j\geq c_0Tw\Big\},
\end{align*}
we have that 
\begin{align*}
\sum_{t \in [T]} \Delta_t^\omega\bm 1\{c_t=c_j\} \geq \frac{N_j}{2}\cdot \frac{\sigma_r}{2}a_1^2 w^{2\beta-2} \geq c T w^{2\beta-1}
\end{align*}
for some constant $c>0$.

The same argument is applicable when the true sign is $\omega_j=-1$. Hence, regardless of the sign of $\omega_j$, it holds that 
\begin{align*}
\sum_{t \in [T]} \Delta_t^\omega\bm 1\{c_t=c_j\} \geq c T w^{2\beta-1} \bm 1\Big\{\hat\omega_j\neq \omega_j, N_j\geq c_0Tw\Big\}.
\end{align*}
Taking expectations and using the definition of $R_j^\omega$, we can deduce that 
\begin{align*}
    R_j^\omega \geq \E_\omega\bigg[\sum_{t \in [T]} \Delta_t^\omega\bm 1\{c_t=c_j\}\bigg] \geq c T w^{2\beta-1}\Prob_\omega\Big(\hat\omega_j\neq \omega_j, N_j\geq c_0Tw\Big).
\end{align*}
Rearranging and using \eqref{eq:B5-new} yield that 
\begin{align*}
    \Prob_\omega(\hat\omega_j\neq \omega_j) \leq \Prob_\omega(N_j<c_0Tw)+\Prob_\omega\Big(\hat\omega_j\neq \omega_j, N_j\geq c_0Tw\Big) \leq e^{-c_1Tw}+C_{\mathrm{dec}}\frac{R_j^\omega}{Tw^{2\beta-1}}
\end{align*}
for a suitable constant $C_{\mathrm{dec}}>0$. The same argument with $\omega$ replaced by $\omega^{(j)}$ establishes the second claim.

\paragraph{Step 2: the Bretagnolle--Huber  inequality and the local lower bound.}
Let us fix $\omega$ and $j$. For brevity, denote by $P:=P_\omega$ and $Q:=P_{\omega^{(j)}}$. Let $E:=\{\hat\omega_j\neq \omega_j\}$. Then it holds that 
$P(E)=\Prob_\omega(\hat\omega_j\neq \omega_j)=:E_1.$
Under $Q$, the true bit is $\omega_j^{(j)}=-\omega_j$, so the complement event $E^c=\{\hat\omega_j=\omega_j\}$ is exactly the event that the decoder makes an error under $Q$. Consequently, we have that 
$Q(E^c)=\Prob_{\omega^{(j)}}(\hat\omega_j\neq \omega_j^{(j)})=:E_2.$

The decoder bound established in Step~1 above yields that 
\begin{align*}
E_1 \leq e^{-c_1Tw} + C_{\mathrm{dec}}\frac{R_j^\omega}{Tw^{2\beta-1}}, \quad E_2 \leq e^{-c_1Tw}+ C_{\mathrm{dec}}\frac{R_j^{\omega^{(j)}}}{Tw^{2\beta-1}}.
\end{align*}
Summing them gives that 
\begin{align}\label{eq:B11-new}
E_1+E_2 \leq 2e^{-c_1Tw} + C_{\mathrm{dec}}\frac{R_j^\omega+R_j^{\omega^{(j)}}}{Tw^{2\beta-1}}.
\end{align}

The Bretagnolle--Huber inequality (see, e.g., \citet[Theorem 14.2]{lattimore2020bandit}) states that for any event $A$,
\begin{align*}
P(A)+Q(A^c)\geq \frac{1}{2} e^{-\KL(P,Q)}.
\end{align*}
Applying this inequality with $A=E$ leads to 
\begin{align*}
E_1+E_2\geq \frac{1}{2} e^{-\KL(P,Q)}.
\end{align*}
An application of the same inequality with the roles of $P,Q$ reversed and with event $A=E^c$ gives that 
\begin{align*}
E_1+E_2\geq \frac{1}{2} e^{-\KL(Q,P)}.
\end{align*}
Taking the geometric mean of the two lower bounds above, we can deduce that 
\begin{align}\label{eq:B12-new}
E_1+E_2 \geq \frac{1}{2} \exp\Big(-\frac{\KL(P,Q)+\KL(Q,P)}{2}\Big).
\end{align}

Assume, for contradiction, that
$R_j^\omega+R_j^{\omega^{(j)}}\leq \alpha T w^{2\beta-1}$
for some constant $\alpha>0$. Then an application of Lemma~\ref{lem:main-lower-kl} shows that 
\begin{align}
    \label{eq:B13-new} \KL(P,Q)+\KL(Q,P) \leq C_{\mathrm{kl}} w^{2\beta-2}\big(R_j^\omega+R_j^{\omega^{(j)}}\big)+2C_{\mathrm{kl}}T w^{4\beta-3} \leq C_{\mathrm{kl}}(\alpha+2)T w^{4\beta-3} = C_{\mathrm{kl}}(\alpha+2)\gamma^{4\beta-3},
\end{align}
where the last identity above has used $w=\gamma T^{-1/(4\beta-3)}$. We now choose 
\begin{align*}
\alpha_0:=\frac{1}{16C_{\mathrm{dec}}}
\end{align*}
and $\gamma>0$ sufficiently small so that
\begin{align*}
C_{\mathrm{kl}}(\alpha_0+2)\gamma^{4\beta-3}\leq \log 4.
\end{align*}
Under the contradictory assumption with $\alpha=\alpha_0$, inequality \eqref{eq:B13-new} and the lower bound \eqref{eq:B12-new} entail that 
\begin{align}\label{eq:B14-new}
E_1+E_2\geq \frac{1}{2} e^{-\log 4/2}=\frac{1}{4}.
\end{align}

On the other hand, it follows from \eqref{eq:B11-new} that
\begin{align*}
E_1+E_2 \leq 2e^{-c_1Tw}+C_{\mathrm{dec}}\alpha_0 = 2e^{-c_1Tw}+\frac{1}{16}.
\end{align*}
Since 
$Tw=\gamma T^{\frac{4\beta-4}{4\beta-3}}\to\infty,$
for all sufficiently large $T$ we have that 
\begin{align*}
2e^{-c_1Tw}\leq 1/16.
\end{align*}
Consequently, we can deduce that 
\begin{align*}
E_1+E_2\leq \frac{1}{8},
\end{align*}
which contradicts \eqref{eq:B14-new}. Therefore, the contradictory assumption made above is in fact impossible, and thus we can obtain that
\begin{align*}
R_j^\omega+R_j^{\omega^{(j)}}>\alpha_0 T w^{2\beta-1}, \quad \forall \omega, \, \forall j.
\end{align*}
This establishes the desired conclusion with $c_\star:=\alpha_0$, which concludes the proof of Lemma~\ref{lem:main-lower-local}.
\end{APPENDICES}

\end{document}